\newcommand\iprod[2]{\langle #1, #2 \rangle}
\theoremstyle{plain}
\newtheorem{theorem}{Theorem}[section]
\newtheorem{lemma}[theorem]{Lemma}
\newtheorem{corollary}[theorem]{Corollary}
\theoremstyle{definition}
\newtheorem{definition}[theorem]{Definition}
\theoremstyle{remark}
\Crefname{section}{Section}{Sections}
\Crefname{table}{Table}{Tables}
\Crefname{figure}{Figure}{Figures}
\DeclareRobustCommand\onedot{\futurelet\@let@token\@onedot}
\def\@onedot{\ifx\@let@token.\else.\null\fi\xspace}
\def\eg{\emph{e.g}\onedot} 
\def\ie{\emph{i.e}\onedot}
\newcommand\suggestchange[2]{\ifthenelse{{\equal{#1}{}}}{}{\sethlcolor{red!20}\hl{\textbf{{[old]}} #1}} \ifthenelse{{\equal{#2}{}}}{\sethlcolor{green!20}\hl{\textbf{[new]} (TBD)}}{\sethlcolor{green!20}\hl{\textbf{[new]} #2}}}
\def\eqref#1{equation~\ref{#1}}
\def\1{\bm{1}}
\DeclareMathAlphabet{\mathsfit}{\encodingdefault}{\sfdefault}{m}{sl}
\SetMathAlphabet{\mathsfit}{bold}{\encodingdefault}{\sfdefault}{bx}{n}
\newcommand{\E}{\mathbb{E}}
\newcommand{\R}{\mathbb{R}}
\newcommand\expec[2]{\mathbb{E}_{#1}{\left\lbrack#2\right\rbrack}}
\newcommand{\ip}[2]{\langle #1,#2\rangle}
\newcommand{\ex}[2]{\underset{#1}{\mathbb{E}}\left[ #2 \right]}
\newcommand{\Id}{\mathbbm{1}}
\def\R{\mathbb{R}}
\newcommand{\norm}[1]{\left\|#1\right\|}
\newcommand*{\defeq}{\stackrel{\text{def}}{=}}
\newcommand\eqdef{\ensuremath{\stackrel{\rm def}{=}}} %
\newcommand{\compsam}{\mathfrak{C}_\mathfrak{s}}
\newcommand{\compnet}{\mathfrak{C}_\epsilon}
\newcommand{\Z}{\mathcal{Z}}
\newcommand{\Y}{\mathcal{Y}}
\newcommand{\C}{\mathcal{C}}
\newcommand{\D}{\mathcal{D}}
\newcommand{\calH}{\mathcal{H}}
\newcommand{\initw}{W^{(0)}}
\newcommand{\init}[1]{#1^{(0)}}
\newcommand{\wt}[1]{\widetilde{#1}}
\newcommand{\wttheta}{\widetilde{\Theta}}
\newcommand{\iterthalf}[1]{#1^{(t + 1/2)}}
\newcommand{\itert}[1]{#1^{(t)}}
\newcommand{\poly}{\textsf{\textup{poly}}}
\newcommand{\opt}{\textsf{\textup{OPT}}}
\newcommand{\refw}{W^{\text{\textreferencemark}}}
\title{Critical Influence of Overparameterization on \\Sharpness-aware Minimization}
\renewcommand\footnotemark{}
\author{Sungbin Shin$^1$*\thanks{*\,Equal contribution.} $\ \ \ \ $ Dongyeop Lee$^1$* $\ \ \ \ $ Maksym Andriushchenko$^2$  $\ \ \ \ $ Namhoon Lee$^1$\\
POSTECH$^1$, EPFL$^2$\\
\texttt{\{ssbin4, dylee23, namhoonlee\}@postech.ac.kr}\\
\texttt{maksym.andriushchenko@epfl.ch}
}
\begin{document}

\maketitle

\begin{abstract}
Sharpness-Aware Minimization (SAM) has attracted considerable attention for its effectiveness in improving generalization in deep neural network training by explicitly minimizing sharpness in the loss landscape.
Its success, however, relies on the assumption that there exists sufficient variability of flatness in the solution space—a condition commonly facilitated by overparameterization.
Yet, the interaction between SAM and overparameterization has not been thoroughly investigated, leaving a gap in understanding precisely how overparameterization affects SAM.
Thus, in this work, we analyze SAM under varying degrees of overparameterization, presenting both empirical and theoretical findings that reveal its critical influence on SAM's effectiveness.
First, we conduct extensive numerical experiments across diverse domains, demonstrating that SAM consistently benefits from overparameterization.
Next, we attribute this phenomenon to the interplay between the enlarged solution space and increased implicit bias resulting from overparameterization.
Furthermore, we show that this effect is particularly pronounced in practical settings involving label noise and sparsity, and yet, sufficient regularization is necessary.
Last but not least, we provide other theoretical insights into how overparameterization helps SAM achieve minima with more uniform Hessian moments compared to SGD, and much faster convergence at a linear rate.
\end{abstract}

\section{Introduction} \label{sec:intro}

Optimization algorithms, though primarily designed to minimize training loss, have increasingly been recognized for their role in implicitly regularizing machine learning models, with some optimizers leading to stronger generalization than others \citep{keskar2017on, wilson2017marginal, ji2020gradient, andriushchenko2023sgd}.
This has motivated extensive efforts to uncover the underlying mechanisms and incorporate these insights into the design of more effective optimizers \citep{izmailov2018averaging,foret2020sharpness,orvieto2022anticorrelated, zhao2022penalizing}.

One prominent line of research examines the relationship between the sharpness of the loss landscape and generalization error, with flatter minima generally associated with improved generalization performance \citep{hochreiter1997flat, keskar2017on, jiang2019fantastic}.
This observation has motivated the development of optimization techniques aimed at encouraging convergence to such flat regions \citep{izmailov2018averaging, chaudhari2017entropysgd, foret2020sharpness, orvieto2022anticorrelated, zhao2022penalizing}.
Notably, SAM \citep{foret2020sharpness} has drawn significant interest for its ability to promote flatter minima and enhance generalization beyond what is typically achieved with standard optimizers \citep{bahri2022sharpness, chenvision, qu2022generalized}.

However, it relies on a seemingly implicit assumption: that the loss landscape provides sufficient variability in flatness for SAM to exploit.
Recent perspectives suggest that overparameterization may be precisely what gives rise to such conditions, as it enlarges the solution space and potentially enables solutions with greater variation in local geometry, such as sharpness \citep{ma2023recent}.
If so, overparameterization might not be merely optional but essential: without it, SAM might fail to produce similar benefits.

This line of reasoning motivates us to conduct a closer examination into the effects of overparameterization on sharpness-aware minimization (SAM) \citep{foret2020sharpness}, with an eye toward understanding not just whether SAM is effective but also under what conditions and why.
Specifically, we conduct extensive experiments to precisely measure the impact of overparameterization across a diverse set of tasks, ranging from standard tasks in computer vision and natural language processing, to molecular property prediction, and further, to video games in reinforcement learning.
To gain further insight into the results, we perform detailed investigations into the interactions between overparameterization and SAM through visual inspection of the solution space on a simple regression setting as well as analyzing the influence of overparameterization on the implicit bias of SAM.
Furthermore, we study how overparameterization influences SAM under various conditions, including label noise, sparsity, and regularization.
Last but not least, we explore other implications of overparameterization on SAM through theoretical analyses, including the characteristics of the attained minima and the convergence rate.

Our key contributions and findings are summarized as follows.

\vspace{-0.2em}
\begin{description}[font={\tiny$\bullet$}~~\normalfont, leftmargin=5.9em, labelsep=1em]
    \item [\cref{sec:experiments-main}.]
    We perform extensive experiments across eight workloads of datasets and models at varying scales, spanning synthetic, vision, language, chemistry, and game domains.
    We observe that overparameterization consistently improves the generalization benefit of SAM\footnote{By ``generalization benefit'', we mean the improvement made by SAM over SGD in validation accuracy.}.
    This phenomenon is general and previously unknown\footnote{While evidence of the similar observation can be found in the literature \citep{chenvision}, no prior work has conducted experiments or confirmed this phenomenon at any scale comparable to ours.}.
    \item [\cref{sec:understanding}.]
    We propose hypotheses to understand this general phenomenon, positing that two factors may be at play:
    (i) overparameterization first increases the number of simpler and flatter solution candidates, and 
    (ii) it also increases the implicit bias of SAM.
    These are verified with standard experiments in both synthetic and realistic settings.
    \item [\cref{sec:practical}.]
    We present the merits and caveats of overparameterization in employing SAM in practice: (i) the benefit of overparameterization for SAM is more pronounced under label noise and sparsity, while (ii) sufficient regularization is needed.
    This can serve as a useful guidance for practitioners.
    \item [\cref{sec:theory}.]
    We develop theoretical analyses\footnote{We note that these are not intended to directly support \cref{sec:experiments-main} and \ref{sec:understanding}, which we discuss in \cref{sec:discussion}.} on linear stability and convergence:
    under overparameterization, (i) linearly stable minima for SAM are flatter and have more uniformly distributed Hessian moments compared to SGD, and (ii) a stochastic SAM can converge at a linear rate.
    These are also numerically verified.
    \item [Overall.\hspace{.7em}]
    We discover that overparameterization has \emph{critical} influences on SAM.
    Both empirical performance and theoretical aspects of SAM all improve with overparameterization.
    In other words, SAM may not take its advantage over SGD without overparameterization.
\end{description}  

\newpage

\section{Background} 
\label{sect:2_Background}

Let us consider the general unconstrained optimization problem:
\vspace{0.5em}
\begin{equation} \label{eq:optimization}
    \min_x f(x)
\end{equation}
\vspace{0.5em}
where $f: \R^d \rightarrow \R$ is the objective function to minimize, and $x \in \R^d$ is the optimization variable.
Based on recent studies that indicate a strong correlation between the sharpness of $f$ at a minimum and its generalization error \citep{keskar2017on,dziugaite2017computing,jiang2019fantastic}, \citet{foret2020sharpness} suggest to turn (\ref{eq:optimization}) into a min-max problem of the following form
\begin{equation} \label{eq:sam-objective}
    \min_x \max_{\lVert\epsilon\rVert_2\leq\rho} f(x + \epsilon)
\end{equation}
where $\epsilon$ and $\rho$ denote some perturbation added to $x$ and its norm bound, respectively.
Thus, the goal is now to seek $x$ that minimizes $f$ in its $\epsilon$-neighborhood, such that the objective landscape becomes locally flat.
Taking the first-order Talyor approximation of $f$ at $x$ and solving for optimal $\epsilon^\star$ gives the following update rule for SAM:

\begin{equation} \label{eq:sam-step}
    x_{t+1} = x_{t} - \eta \nabla f \left(x_t+\rho\frac{\nabla f(x_t)}{\lVert\nabla f(x_t)\rVert_2}\right).
\end{equation}

SAM has been shown to be effective for improving generalization performance compared against SGD \citep{chenvision, kaddour2022flat, bahri2022sharpness}, and subsequent works have analyzed various aspects of SAM under different settings including its convergence rates \citep{andriushchenko2022towards, mi2022make, si2023practical} and implicit bias \citep{compagnoni2023sde, wensharpness,andriushchenko2024sharpness}.

Meanwhile, a considerable amount of evidence has indicated the benefit of overparameterization for training neural networks.
Besides the empirical success witnessed across different domains \citep{kaplan2020scaling,radford2021learning,dehghani2023scaling}, overparameterization turns all local minima into global ones in theory enabling local methods to succeed under non-convex settings \citep{kawaguchi2016deep,du2019gradient}.
Researchers have also proved the power of overparameterization to enable much faster convergence \citep{ma2018power,vaswani2019fast,meng2020fast} and better generalization \citep{allen2019learning,brutzkus2019larger}.
To our knowledge, however, previous work has mostly focused on non-sharpness-aware optimizers, and the effects of overparameterization on SAM has been left rather unattended despite its contemporary significance to large-scale training trends and widespread usage in practice.

\section{Key observation: SAM improves with overparameterization}

\label{sec:experiments-main}

\begin{table*}[!th]
    \centering
    \resizebox{0.9\linewidth}{!}{
    \begin{tabular}{c c c c c c c}
      \toprule
      Workload \# & Domain & Task & Dataset & Architecture & Model \\
      \midrule
      $1$ &  Synthetic         & Regression   & Synthetic & MLP & Two-layer MLP \\ 
      $2$ & Vision         & Image classification   & MNIST & MLP & LeNet-300-100 \\    
      $3$ & Vision  & Image classification & CIFAR-10 & CNN & ResNet-18\\ 
      $4$ & Vision  & Image classification & ImageNet & CNN & ResNet-50\\ 
      $5$ & Language  & PoS tagging & Universal Dependencies & Transformer & Encoder-only Transformer\\ 
      $6$ & Language  & Sentiment classification & SST-2 & RNN & LSTM\\ 
      $7$ & Chemistry  & Graph property prediction & ogbg-molpcba & GNN & GCN\\ 
      $8$ & Game  & Proximal policy optimization & Atari Breakout & CNN & Five-layer CNN\\ 
      \bottomrule
  \end{tabular}
  }
  \caption{
    Summary of evaluation workloads.
    They cover eight different datasets spanning five domains and six tasks at varying scale, and include eight neural network models of five different architecture types.
    For each workload, we test up to ten different models of varying degrees of parameterization.
  }
  \label{tab:workloads}
\end{table*}

\begin{figure*}[!th]
\vspace{-0.4em}
    \centering
  \begin{subfigure}{0.9\linewidth}
      \centering
      \includegraphics[width=0.24\linewidth]{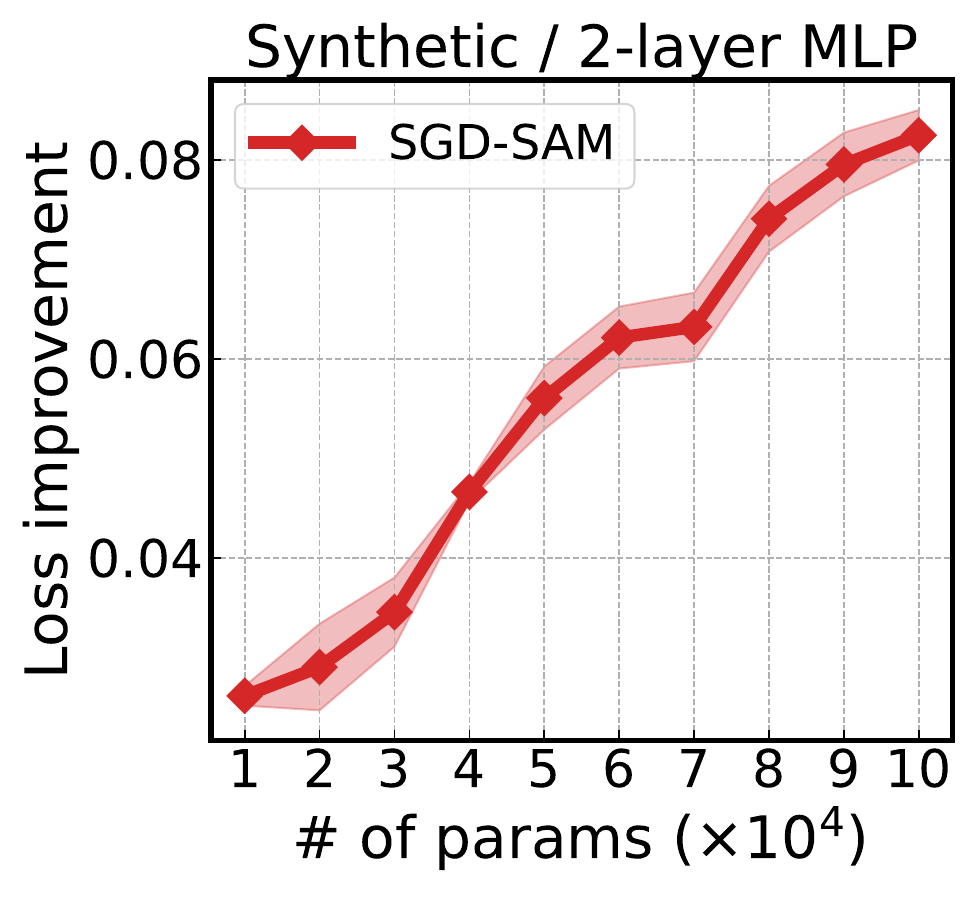}
      \includegraphics[width=0.24\linewidth]{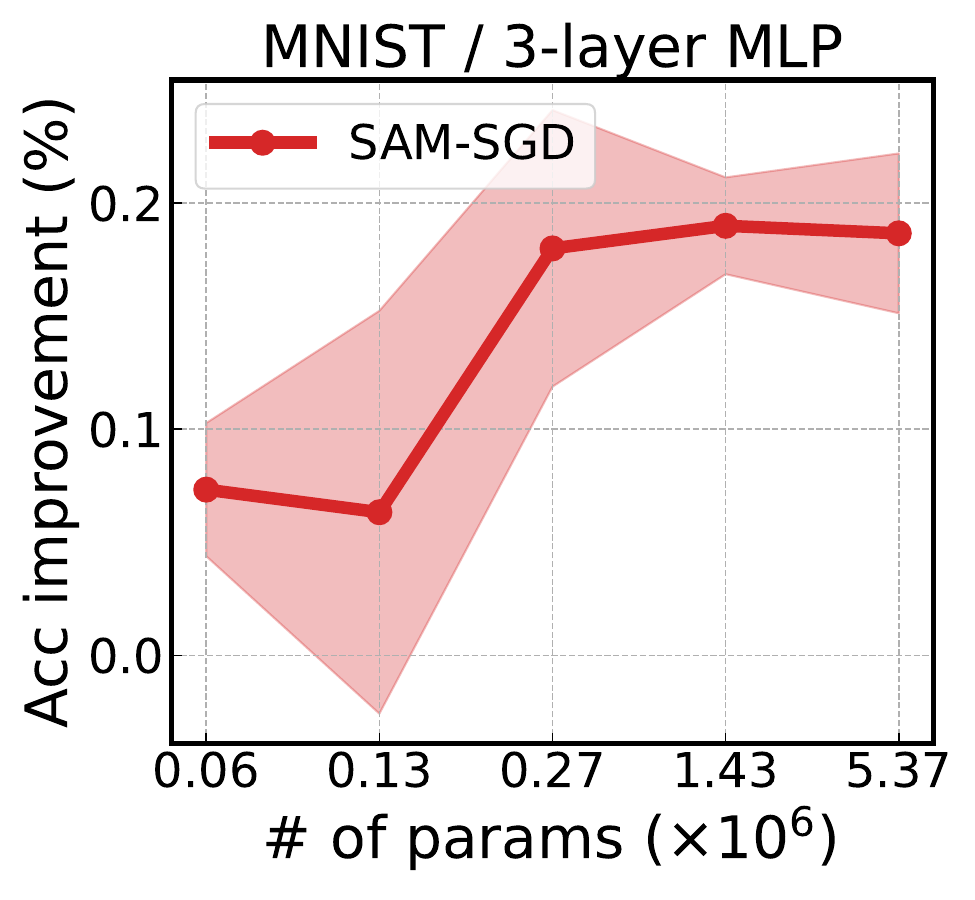}
      \includegraphics[width=0.24\linewidth]{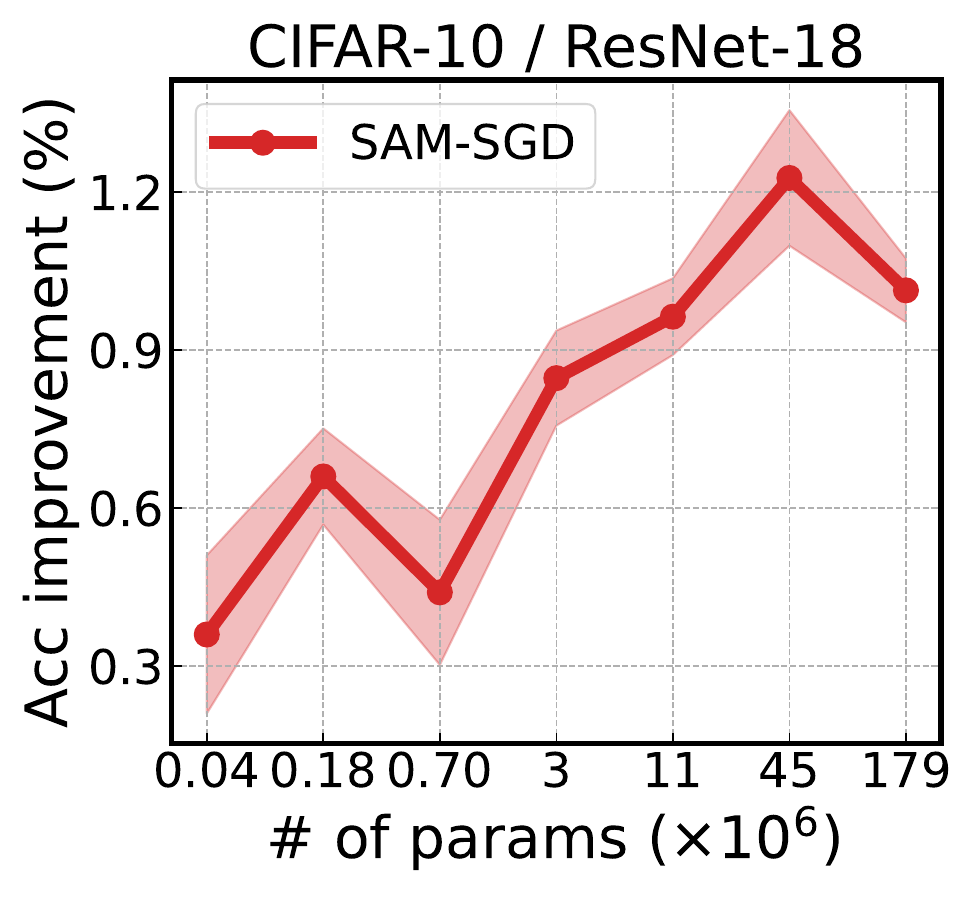}
      \includegraphics[width=0.24\linewidth]{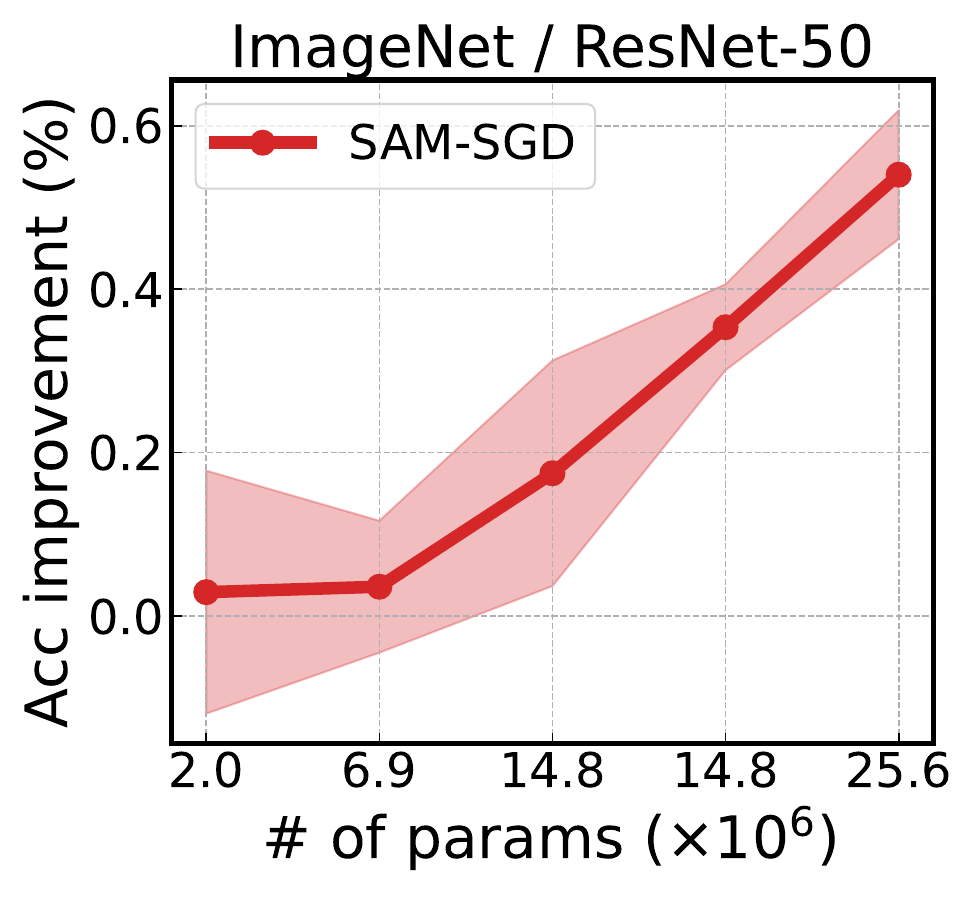}
      \includegraphics[width=0.24\linewidth]{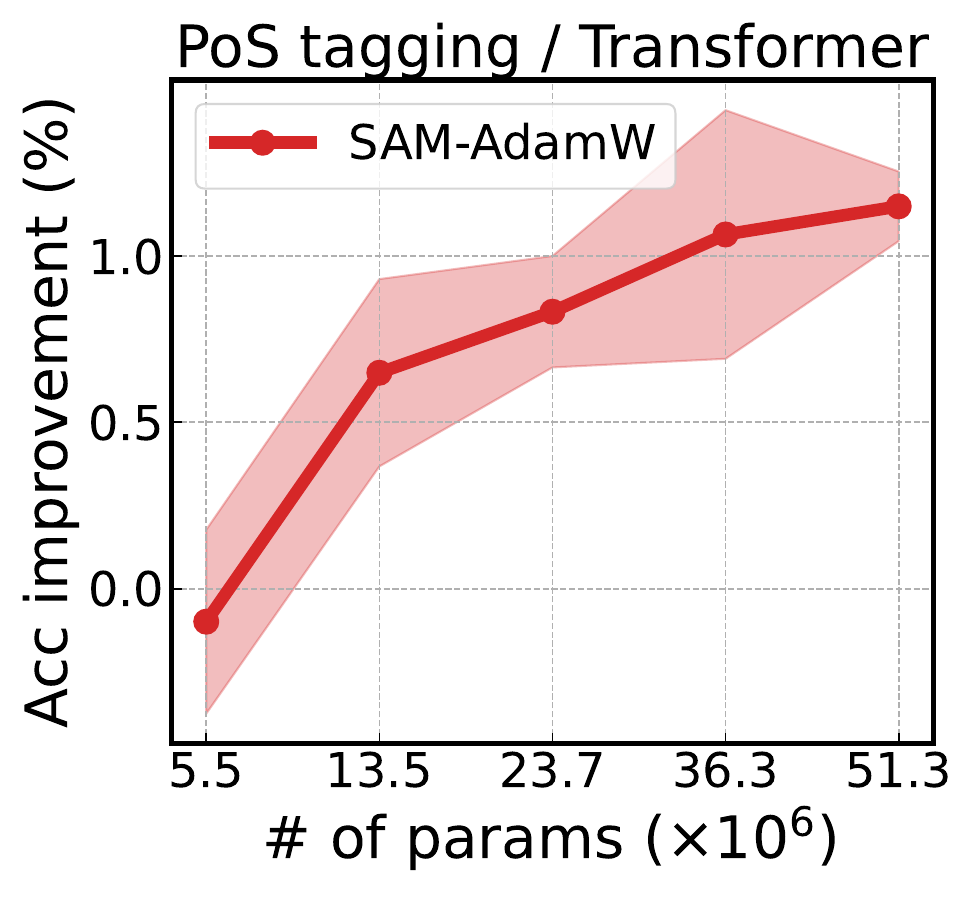}
      \includegraphics[width=0.24\linewidth]{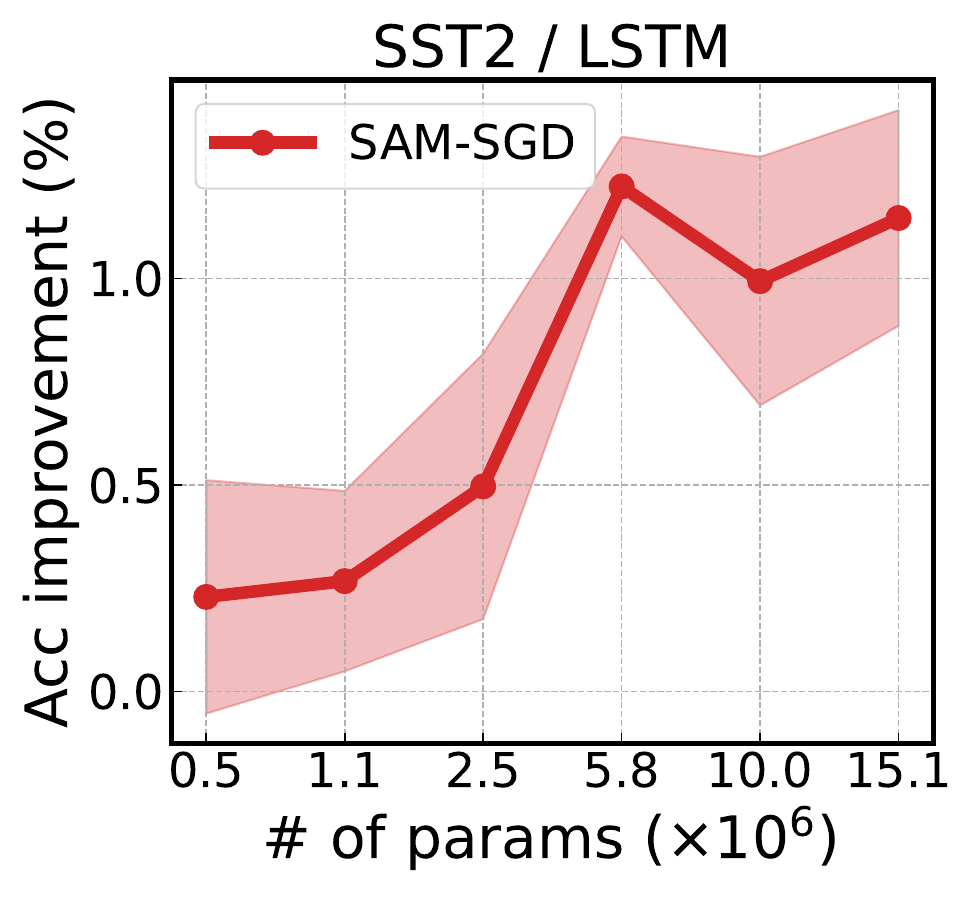}
      \includegraphics[width=0.24\linewidth]{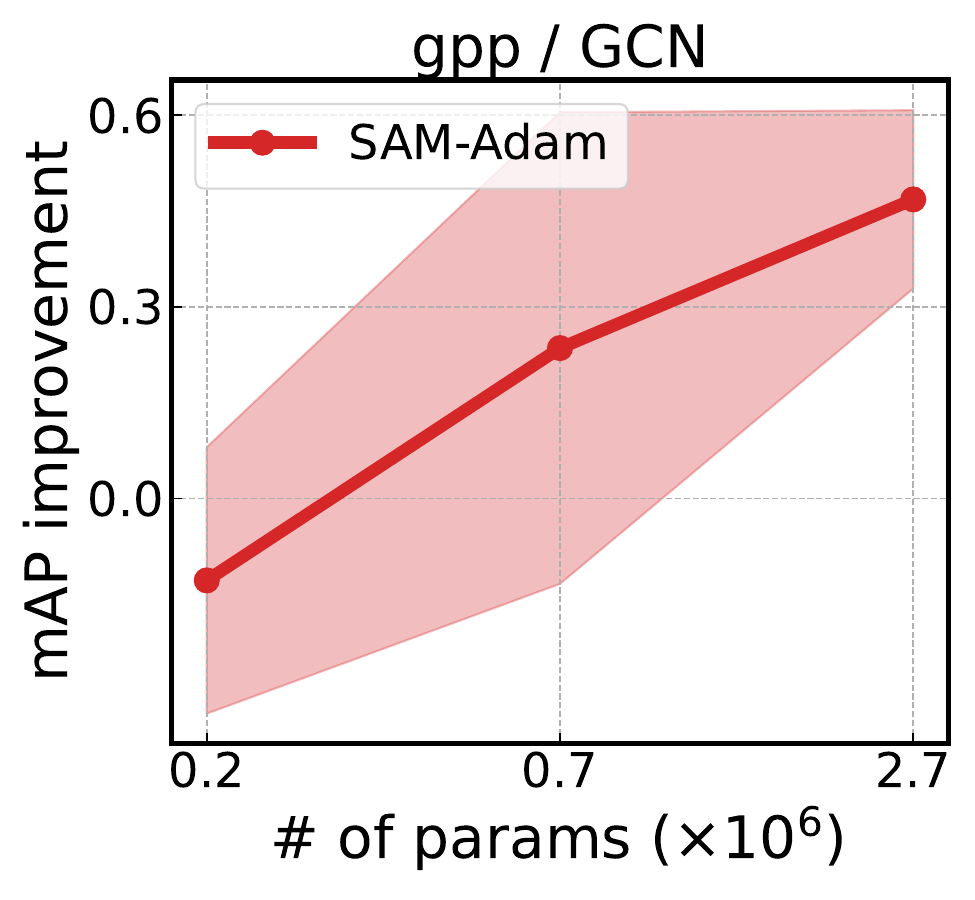}
      \includegraphics[width=0.24\linewidth]{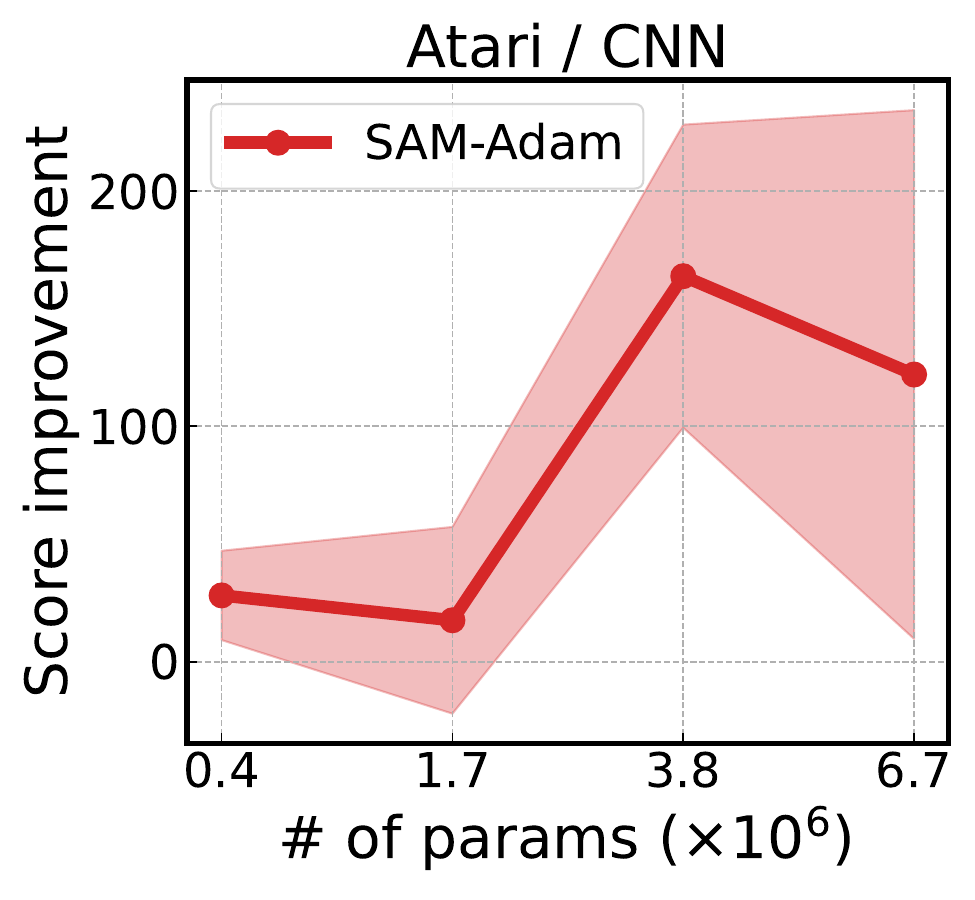}
  \end{subfigure}
    \vspace{-0.4em}
  \caption{
    Improvement in validation metrics by SAM.
    The generalization benefit of SAM tends to increase as the model becomes more overparameterized.
    We present the full results, including the absolute metrics for SAM and baseline optimizers in \cref{fig:result_overparam_acc_abs_app} of \cref{app:add-overparam_results}.
  }
  \label{fig:result_overparam_acc}
    \vspace{-0.7em}
\end{figure*}

SAM is introduced to find flat minima and thereby improve generalization performance in practice.
In this work, we are interested in whether and how this improvement is affected by overparameterization.
In order to understand any potential relationship between SAM and overparameterization, we first focus on precisely measuring the effect of overparameterization.
More specifically, we conduct a wide range of deep learning experiments (see \cref{tab:workloads} for the summary of all tested workloads), and observe how the generalization improvement made by SAM changes as with more parameters.

As a result, we find a strong and consistent trend that SAM improves with overparameterization in all tested cases (see \cref{fig:result_overparam_acc}).
To elaborate, initially, SAM does \emph{not} work much better than the non-sharpness-aware baseline optimizer (\ie, SGD or Adam family depending on the default choice) when the model is at a relatively low number of parameters;
it only starts to improve with more parameters and makes a clear distinction at a very large number of parameters.
We emphasize that this holds true for a wide variety of architectures (MLP, CNN, RNN, GCN, Transformer) and datasets of different domains (Synthetic, Vision, Language, Chemistry, Game) under a rigorous hyperparameter search (see \cref{app:exp_details} for the full experiment details).

This result possibly indicates that SAM is more effective, when (and possibly only when) applied to overparameterized models.
On the other hand, the increased generalization performance of SAM with more parameters renders a promising avenue, given that the modern neural network models are often heavily overparameterized \citep{zhang2022opt,dehghani2023scaling}.
We note that some evidence of the similar positive influence of overparameterization for SAM can be derived in the literature \citep{chenvision}, however, no prior work has conducted experiments or confirmed this phenomenon at any scale comparable to ours.\footnote{
As an additional result, we provide a theoretical analysis of the effect of overparameterization decreasing the test error of SAM in \cref{sec:theory-gen-main,app:sam-genbound,app:sam-genbound}.
Precisely, however, this result only mean for SAM and is not to be confused with the relative improvement against SGD as shown in \cref{sec:experiments-main}.
}

\section{Understanding why SAM improves with overparameterization}
\label{sec:understanding}

Then why does overparameterization particularly favor SAM over non-sharpness-aware optimizers?
We address this question in this section to better understand the effect of overparameterization on SAM.
Precisely, we posit that it is potentially due to the complementarity between overparameterization enlarging the solution space and the implicit bias of SAM driving toward flat minima;
\ie, once there are more diverse solutions available (including both sharp and flat minima) by overparameterization, optimizers intrinsically biased toward flat solutions (such as SAM) will more likely find such solutions than unbiased optimizers (such as SGD).
We support this reasonable hypothesis by demonstrating the followings: 
(i) SAM finds simpler and flatter solutions than SGD with the enlarged solution space (\cref{sec:understanding-solution}), and 
(ii) the implicit bias of SAM becomes stronger with overparameterization (\cref{sec:understanding-implicitbias}); 
both of these take place only when the model is overparameterized.

\subsection{Enlarged solution space allows SAM to find simpler and flatter solutions}
\label{sec:understanding-solution}

\begin{figure*}[!t]
  \centering
  \hspace{-2em}
  \includegraphics[width=0.24\linewidth]{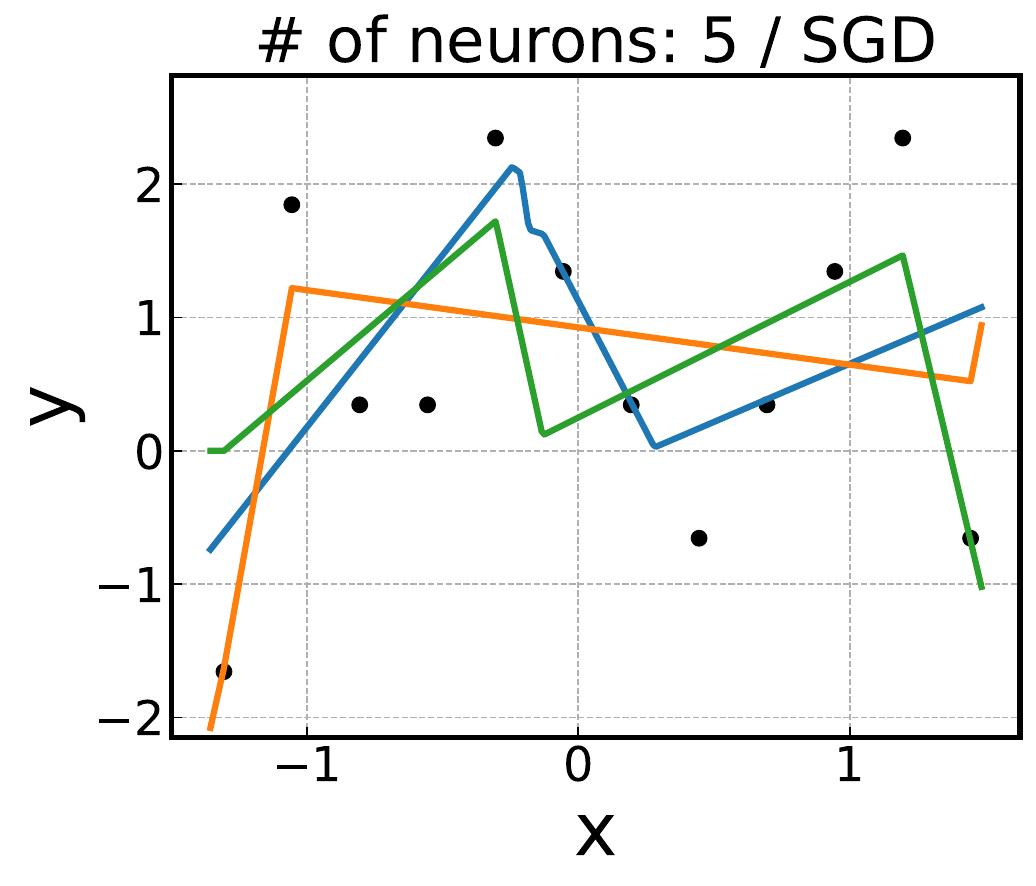}
  \includegraphics[width=0.24\linewidth]{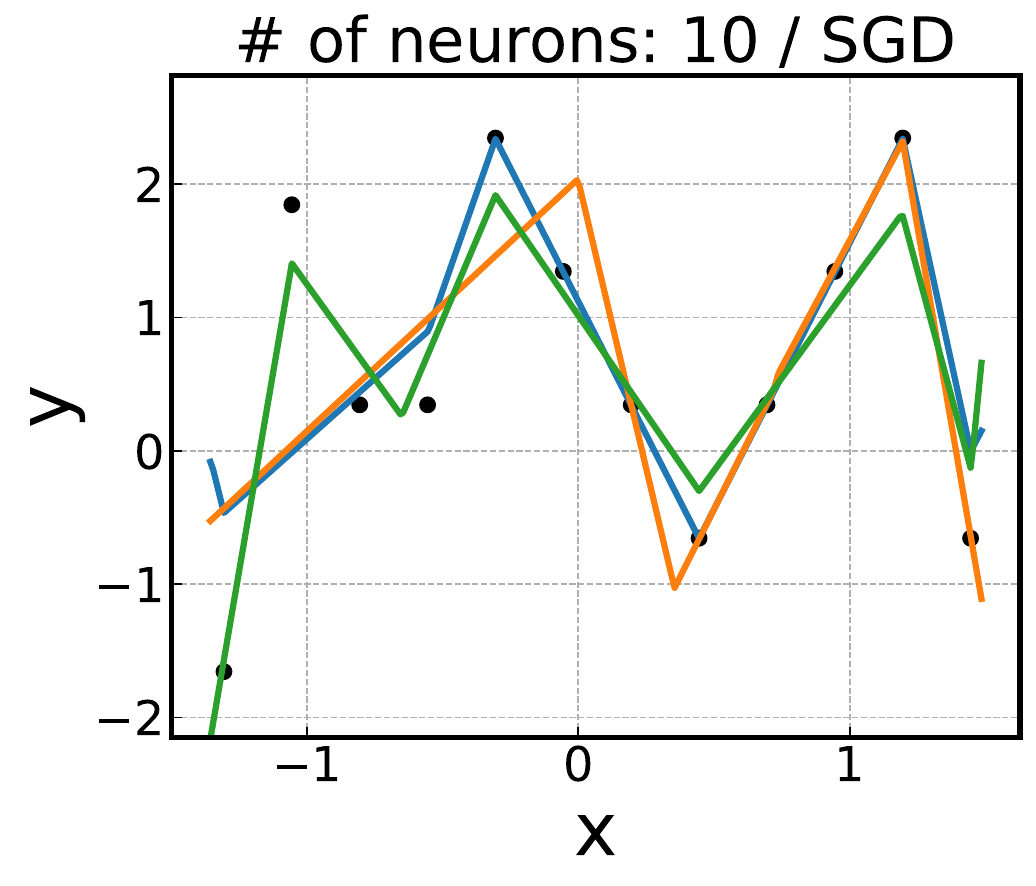}
  \includegraphics[width=0.24\linewidth]{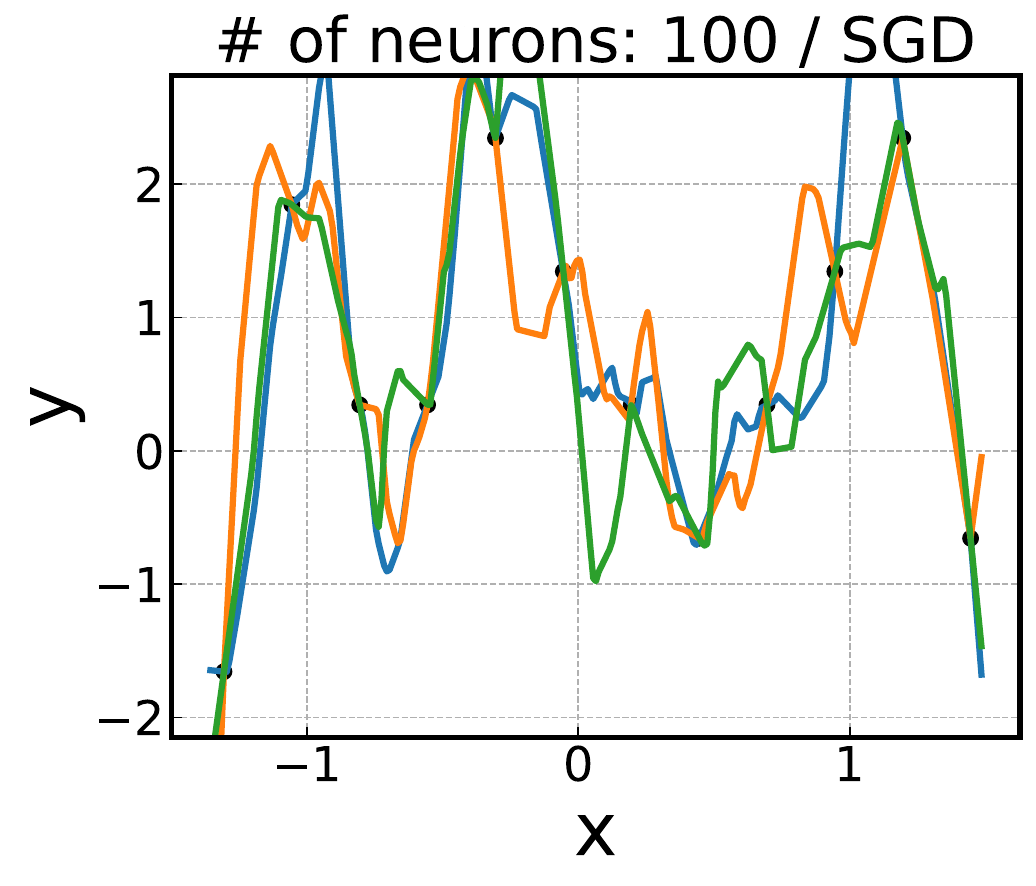}
  \includegraphics[width=0.24\linewidth]{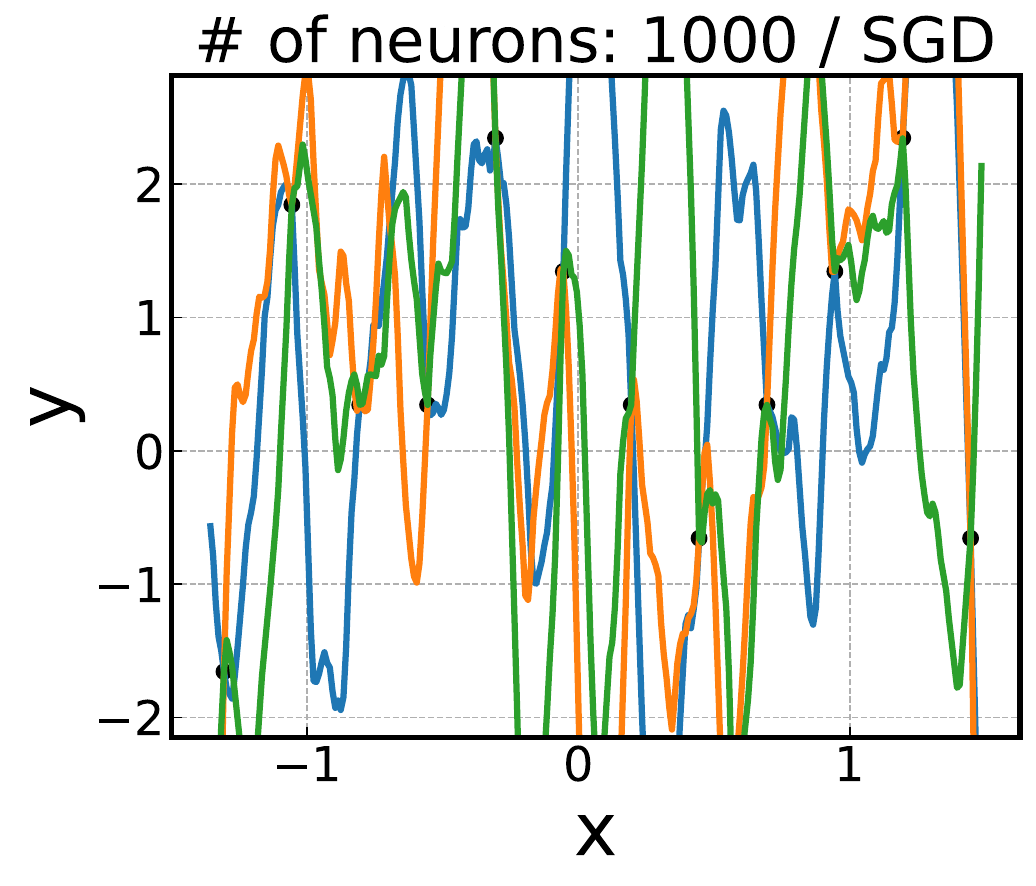}\\
  \hspace{-2em}
  \includegraphics[width=0.24\linewidth]{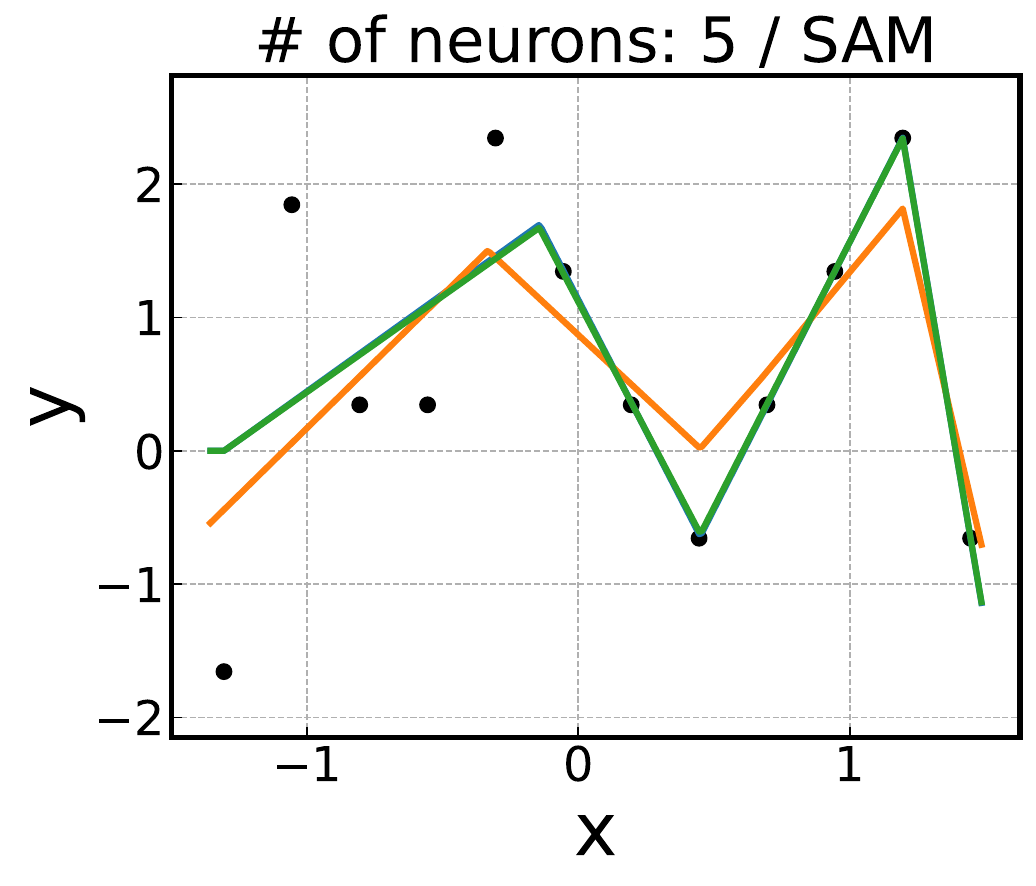}
  \includegraphics[width=0.24\linewidth]{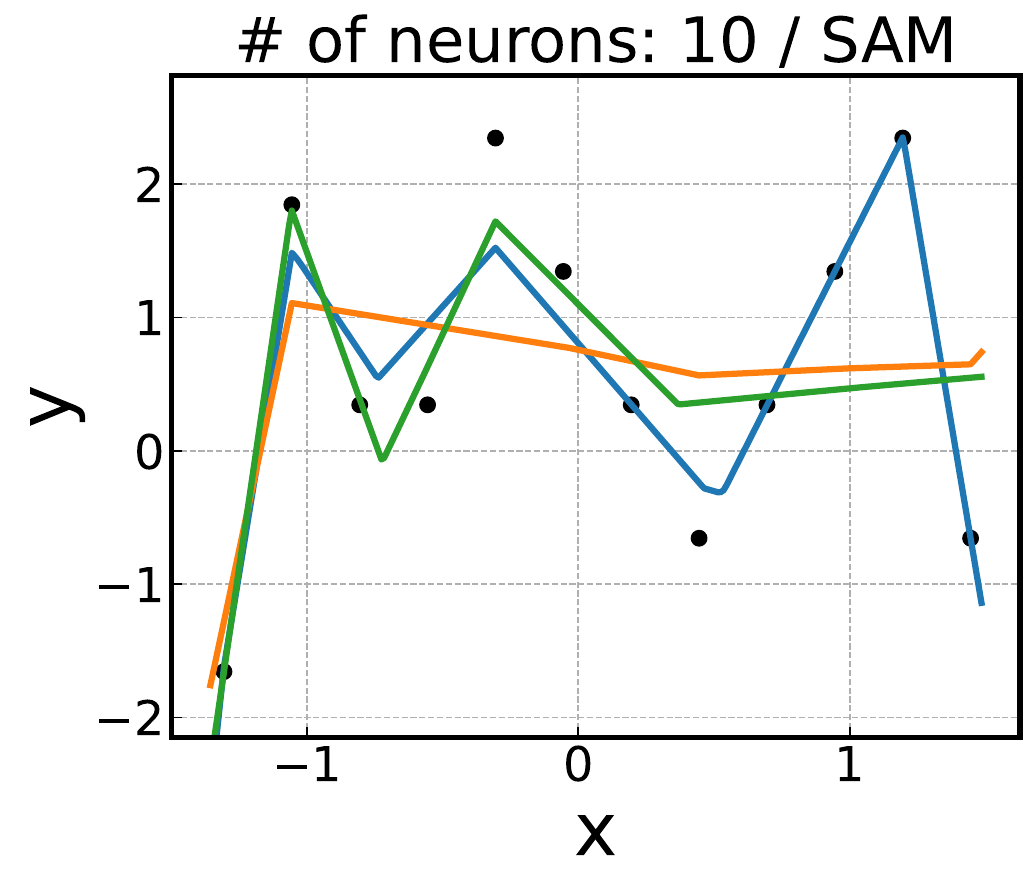}
  \includegraphics[width=0.24\linewidth]{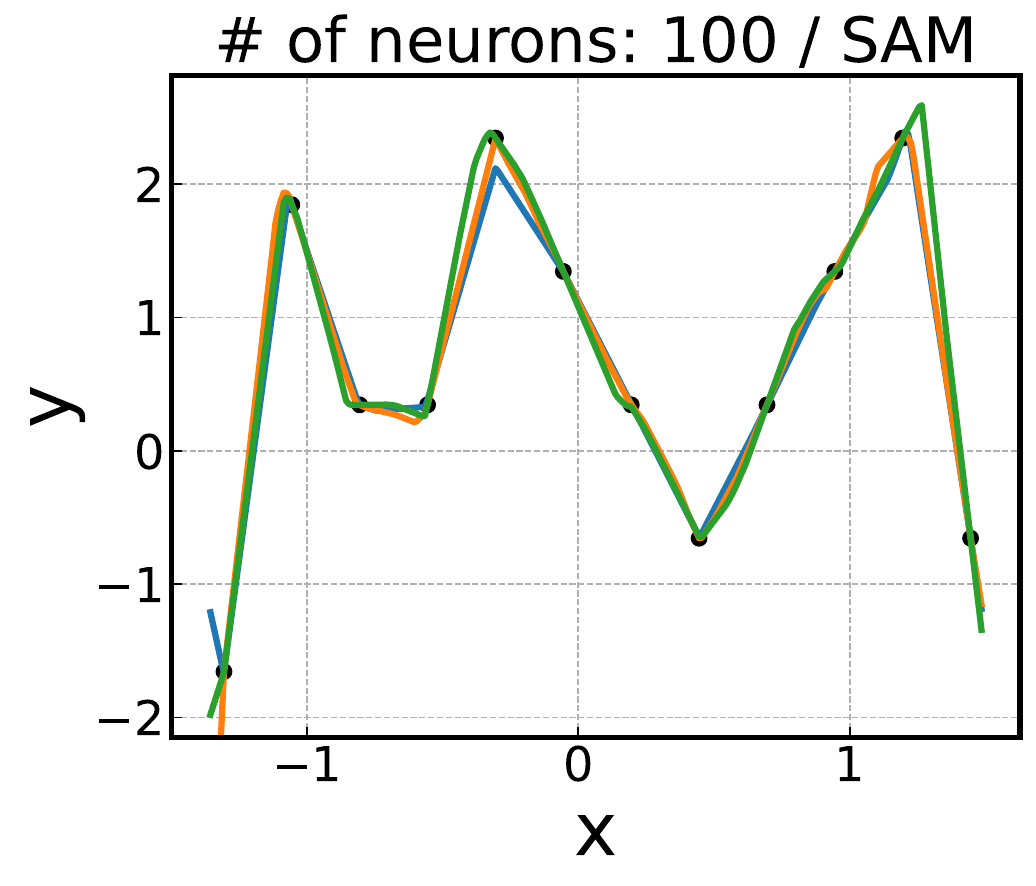}
  \includegraphics[width=0.24\linewidth]{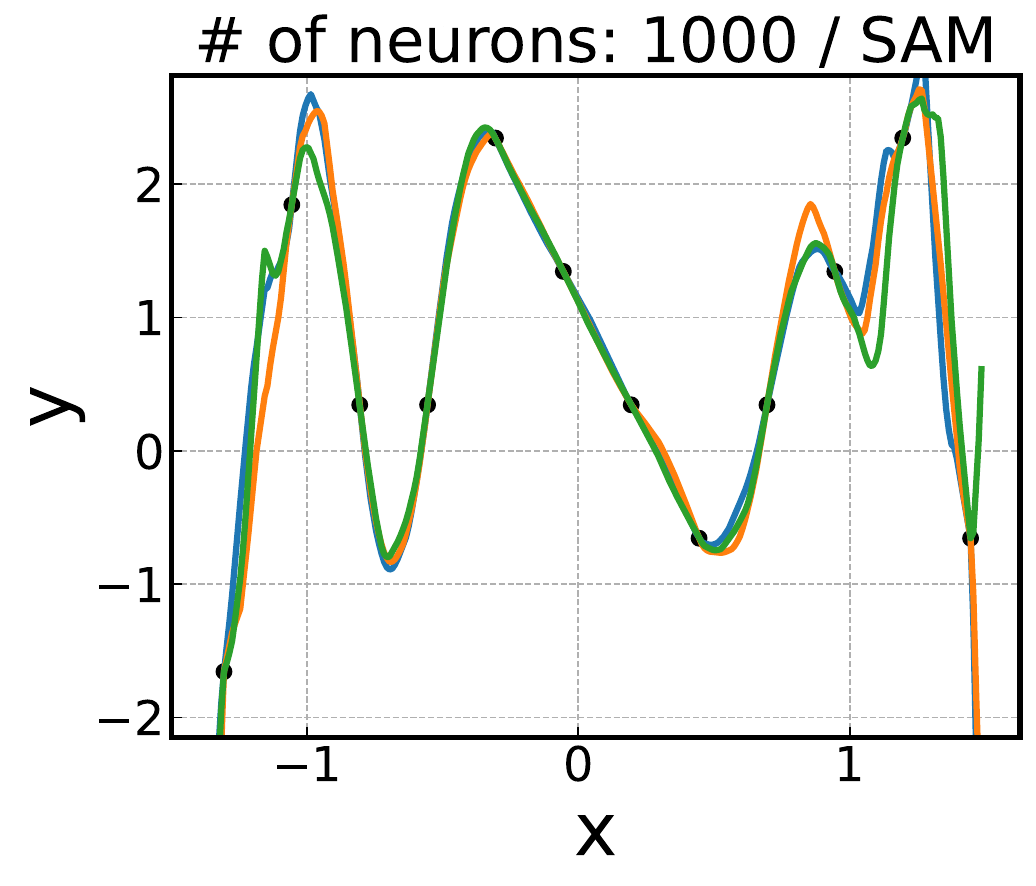}
  \caption{
  Solutions found by SGD (top) and SAM (bottom). 
  Both optimizers find similar solutions for under/moderately-parameterized models, whereas the solutions found by SAM are much simpler with less variance compared to those by SGD for overparameterized models. 
  Here, different colors correspond to different random seeds.
  }
  \label{fig:onehidden-function}
\end{figure*}

To corroborate our hypothesis, we start with a simple experiment where we train one-hidden-layer ReLU networks using SAM and SGD following \citet{andriushchenko2022towards};
we use $5$, $10$, $100$, and $1000$ hidden neurons for underparameterized to highly overparameterized cases;
we run three random seeds and compare solutions obtained by SAM and SGD in \cref{fig:onehidden-function}.

First, we find that the solutions found by SAM are not differentiated much from those of SGD when the model has no more than $10$ neurons.
Looking closely into the case of $10$ neurons, they all seem to be roughly $4$ to $6$ degrees of piecewise linear functions, \ie, the number of line segments for each solution is less than $10$, which is the maximum possible joints that this model can have in theory.
On the other hand, in the case of $100$ to $1000$ neurons, one can easily see that the solutions found by SAM are much simpler (and thus more likely to generalize) compared to those by SGD.

Next, we also track the optimization trajectories of both SAM and SGD.
The trajectories are plotted along PCA directions calculated from the converged minima following \citet{li2018visualizing}.
The results are illustrated in \cref{fig:onehidden-trajectories}.
We find that both SAM and SGD reach solutions in a similar basin when the model is under/moderately parameterized, whereas in the overparameterized case, they reach different solutions, \ie, SAM reaches a flatter solution, even though they all start from the same initial point.

These results support the idea that SAM has some implicit bias that drives itself toward a certain type of solutions (\eg, simple and flat) as previously shown in prior work \citep{andriushchenko2022towards,compagnoni2023sde,wensharpness}.
More importantly, however, these results newly reveal that \emph{overparameterization is a critical factor in facilitating this implicit behavior of SAM};
without it the space of potential solutions decreases, and SAM may not take effect.

\begin{figure*}[!t]
  \centering
  \includegraphics[width=0.24\linewidth]{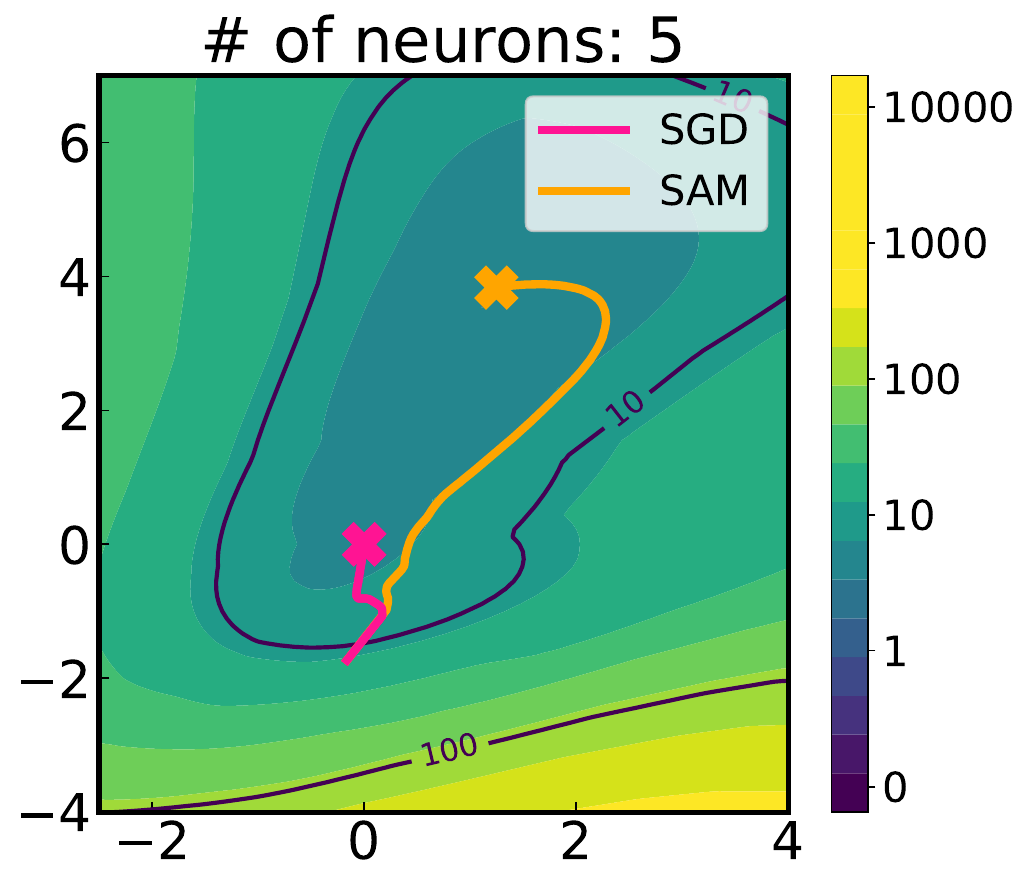}
  \includegraphics[width=0.24\linewidth]{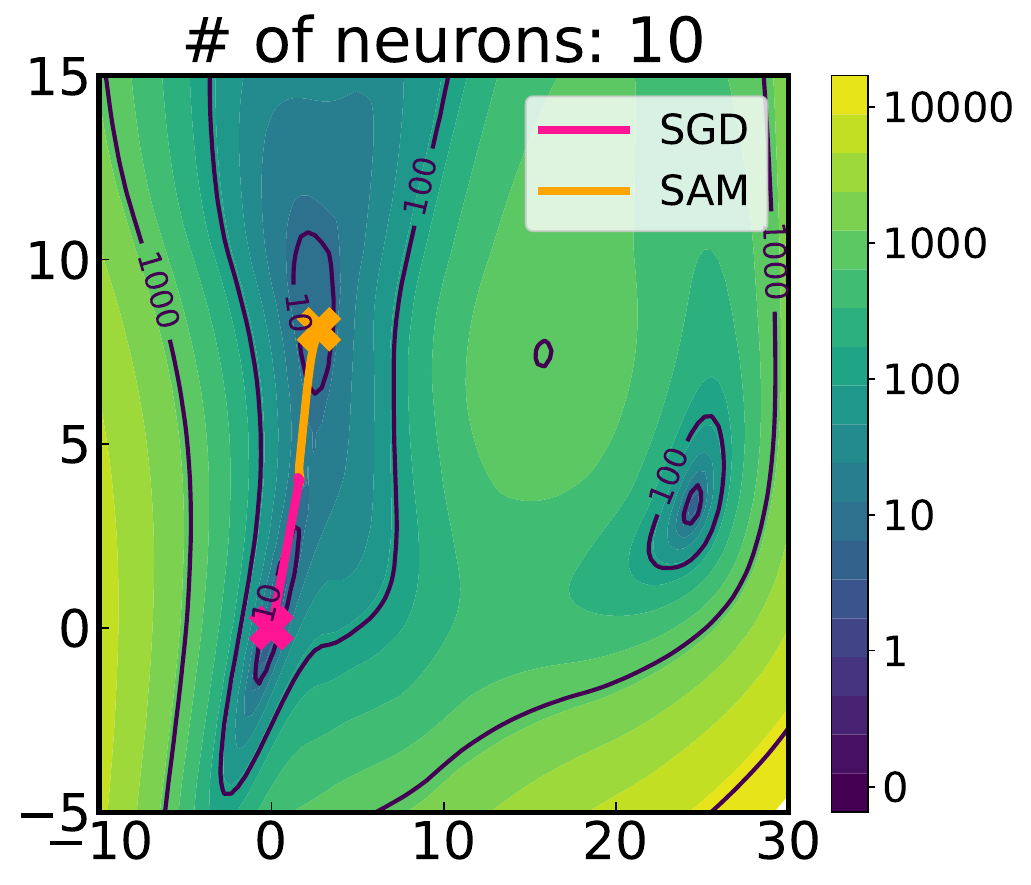}
  \includegraphics[width=0.24\linewidth]{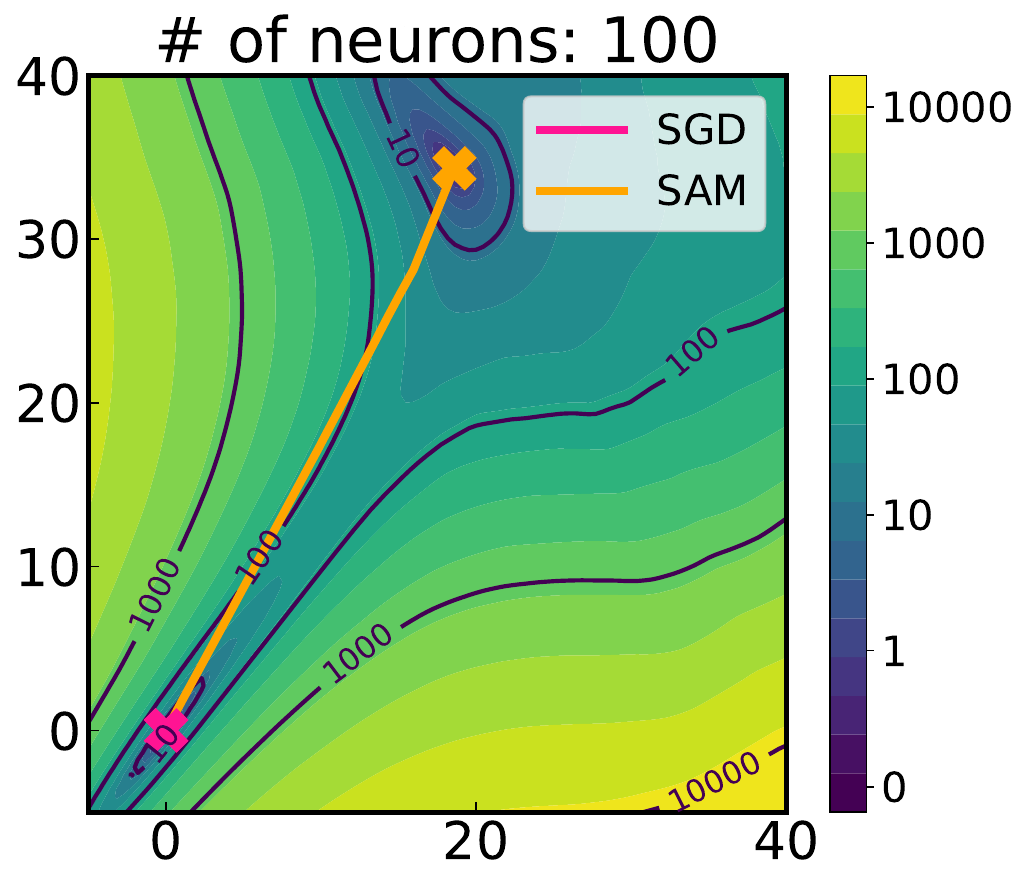}
  \includegraphics[width=0.24\linewidth]{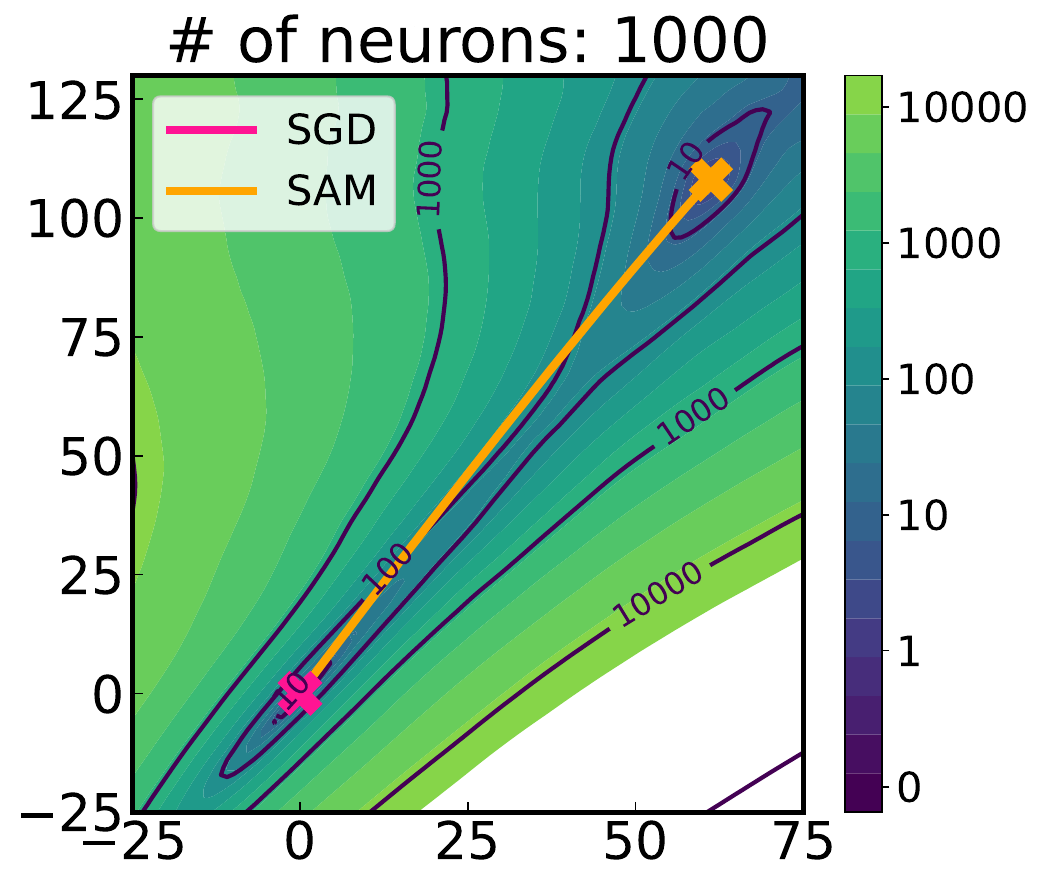}
  \caption{
  Optimization trajectories of SGD and SAM starting from the same initial point.
  SGD and SAM reach solutions in a similar basin for under/moderately-parameterized models, whereas they reach different solutions for overparameterized models, \ie, flatter region for SAM.
  }
  \label{fig:onehidden-trajectories}
\end{figure*}

\subsection{Implicit bias of SAM increases with overparameterization}
\label{sec:understanding-implicitbias}

While overparameterization can secure favorable conditions for SAM, it is not to be confused with guaranteeing the implicit bias of SAM taking effect.
In fact, we can further relate the implicit bias of SAM to the perturbation bound $\rho$ to bridge this gap.
Specifically, SAM can be interpreted as SGD on an implicitly regularized loss based on SDE (stochastic differential equation) modeling \citep{compagnoni2023sde}:
\begin{equation}
    \tilde{f}(x) \coloneqq f(x) + \rho \E\|\nabla f_\gamma (x)\|_2
\end{equation}
where $\gamma$ refers to some stochasticity.
This indicates that SAM becomes more regularized (\ie, the implicit bias is amplified) when $\rho$ increases.\footnote{This holds as long as $\rho$ is not too large, by which it might overshadow minimizing $f$ and implicitly bias the optimizer toward stationary points such as saddles and maxima.
Note that it reduces to standard SGD when $\rho=0$.
}

\begin{figure}[!t]
    \centering
    \centering
    \includegraphics[width=0.16\linewidth]{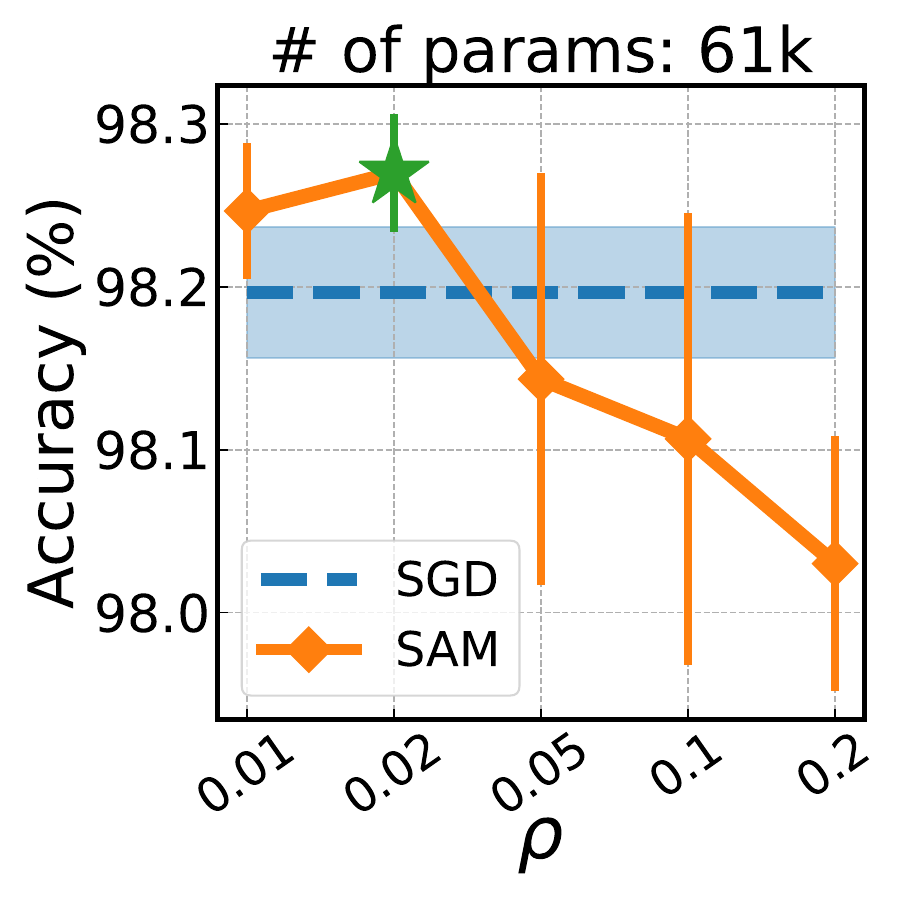}
    \includegraphics[width=0.16\linewidth]{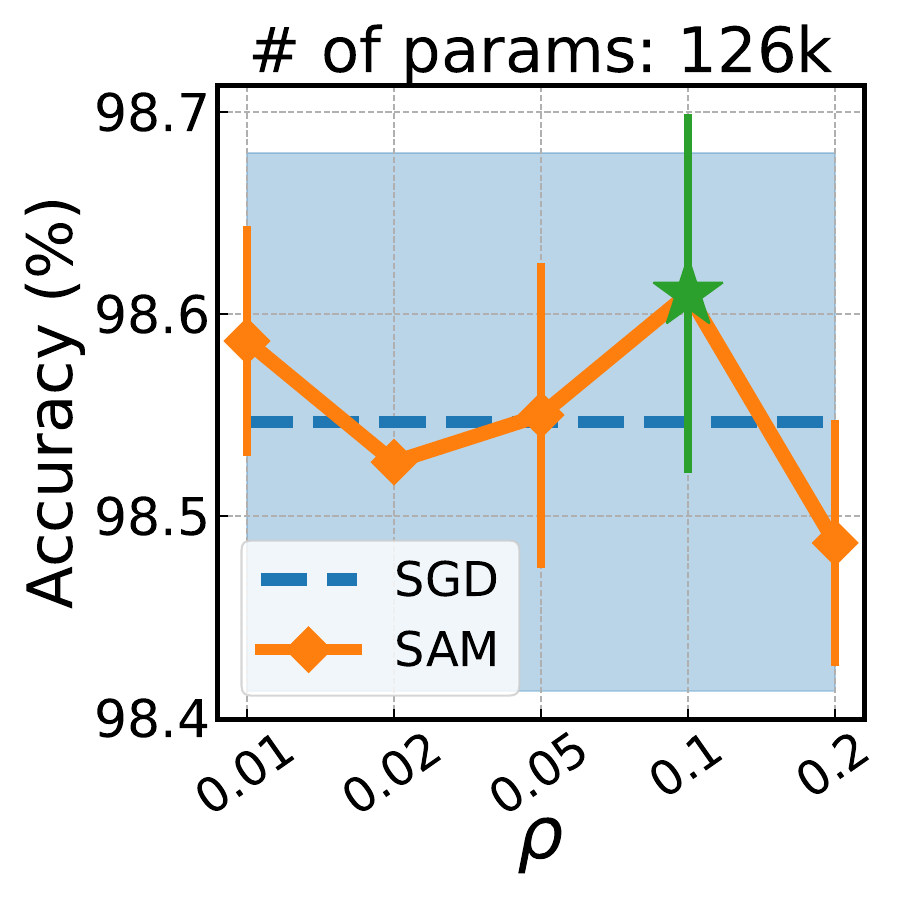}
    \includegraphics[width=0.16\linewidth]{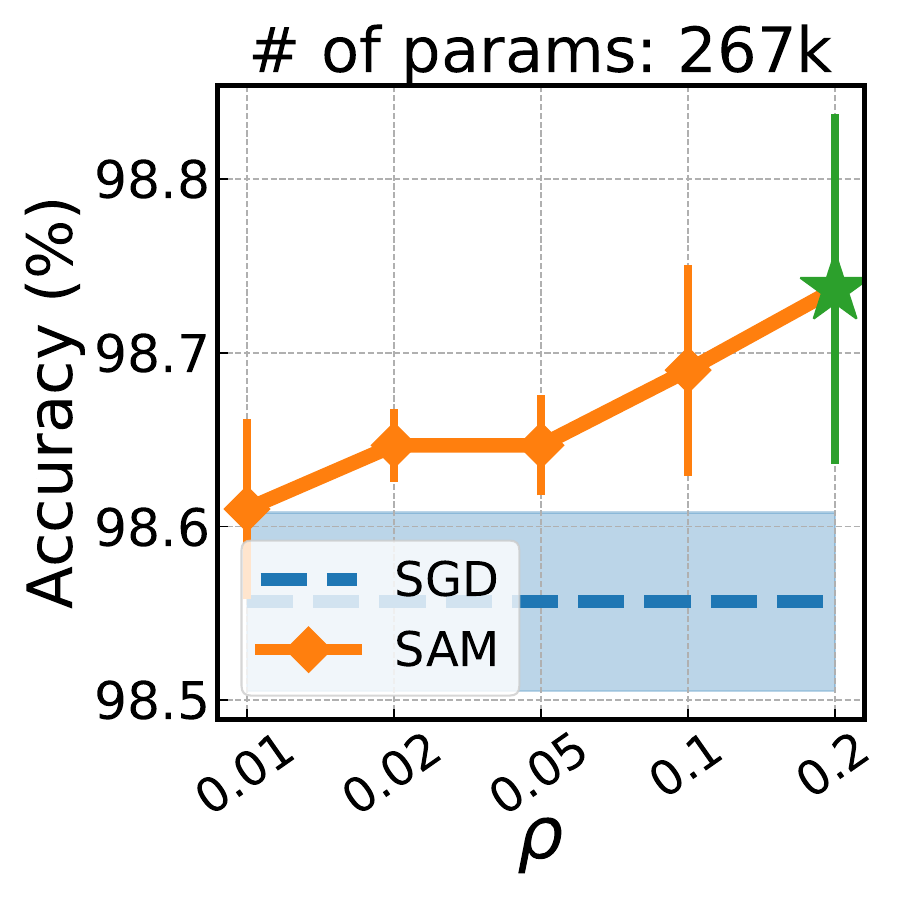}
    \includegraphics[width=0.16\linewidth]{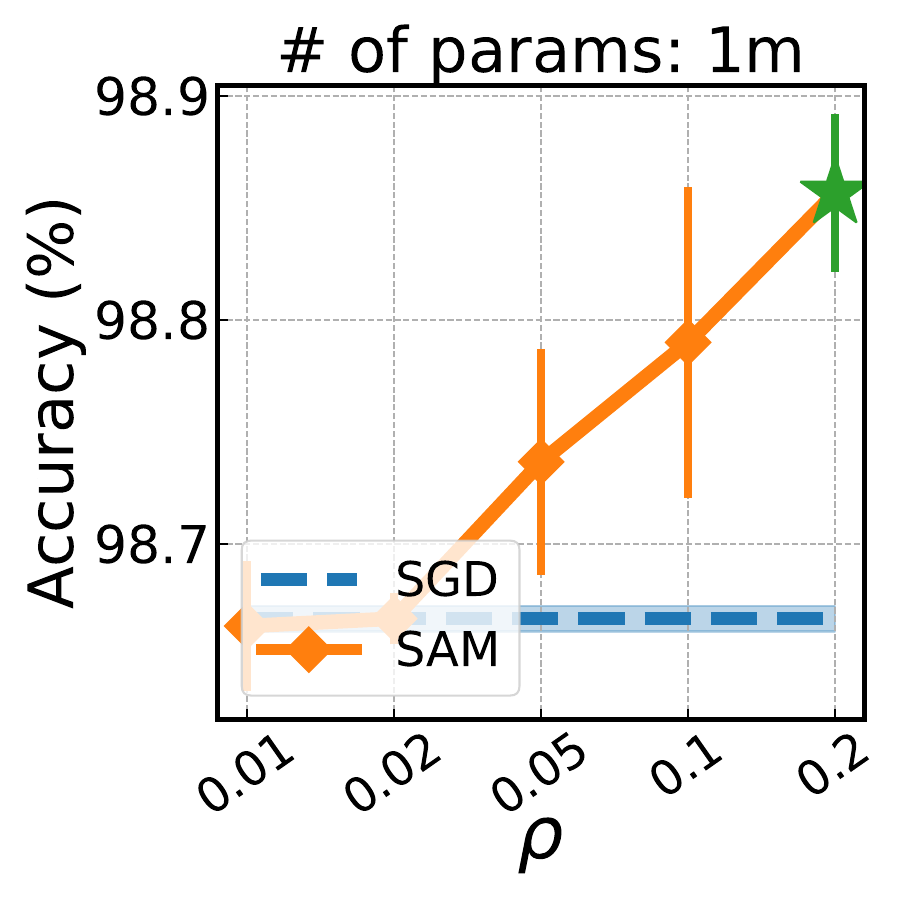}
    \includegraphics[width=0.16\linewidth]{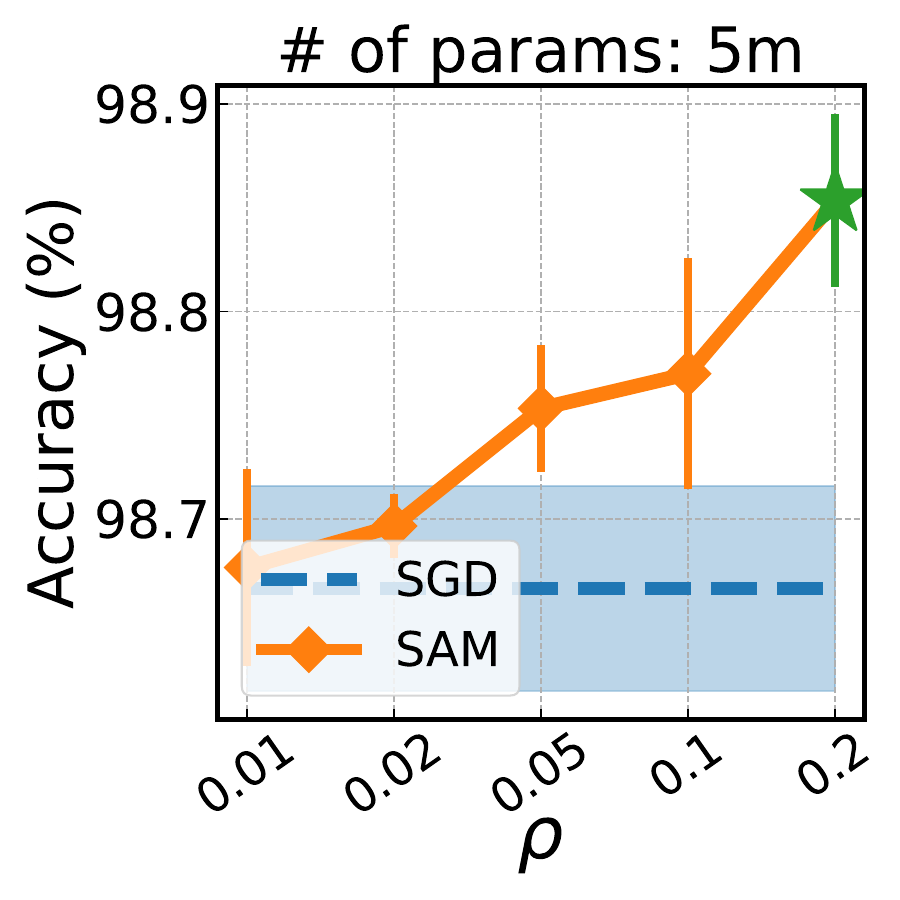}
    \includegraphics[width=0.16\linewidth]{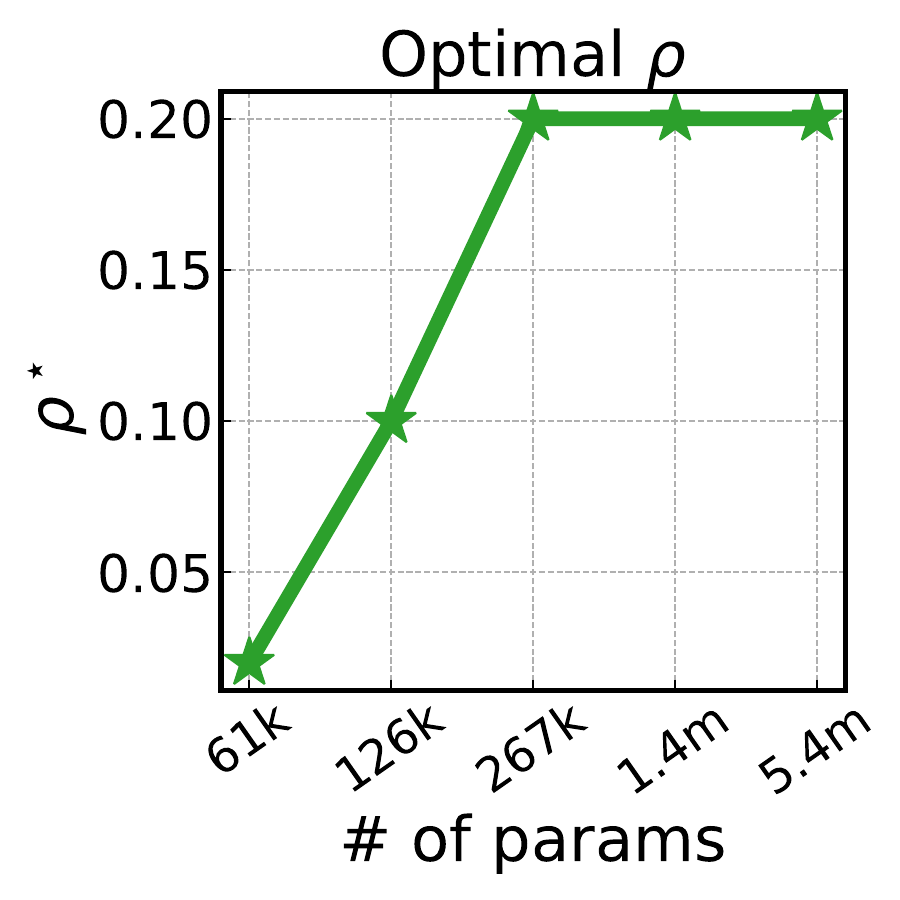}
    
    \includegraphics[width=0.16\linewidth]{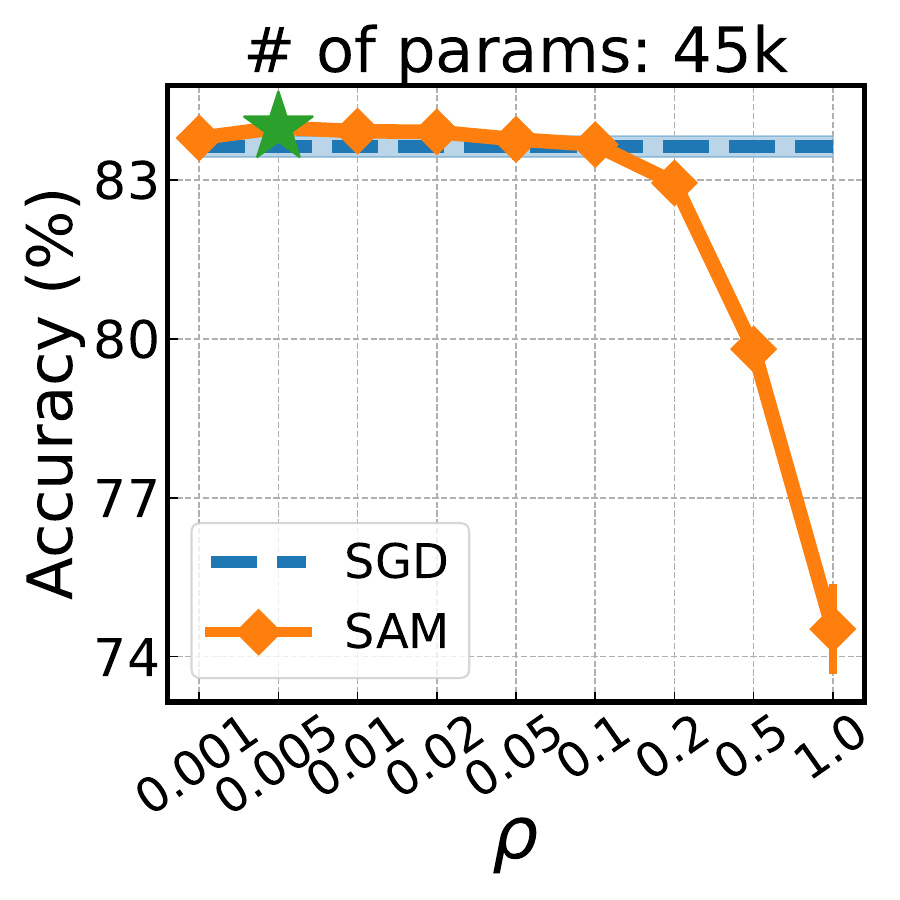}
    \includegraphics[width=0.16\linewidth]{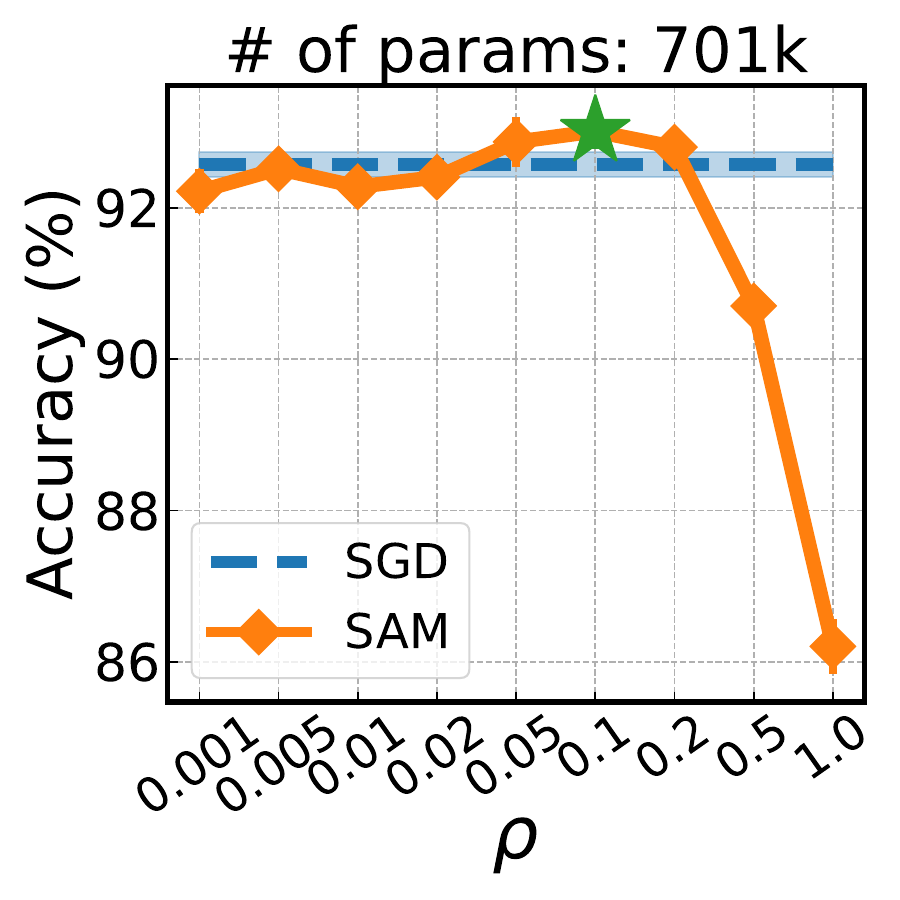}
    \includegraphics[width=0.16\linewidth]{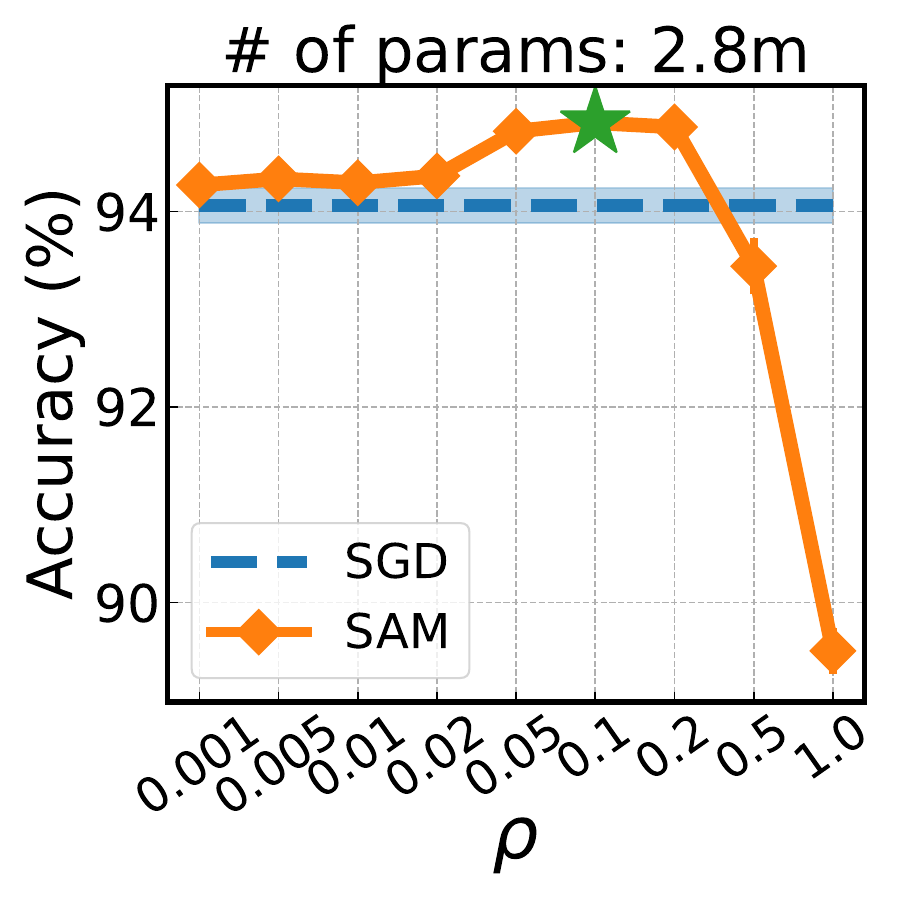}
    \includegraphics[width=0.16\linewidth]{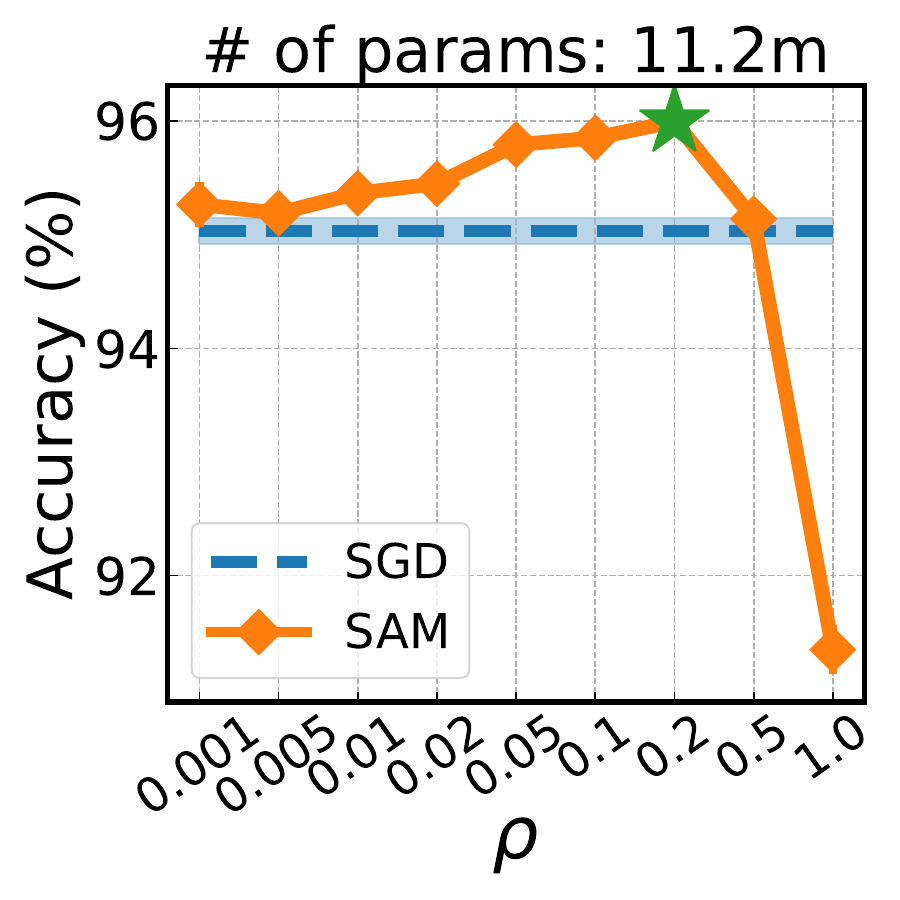}
    \includegraphics[width=0.16\linewidth]{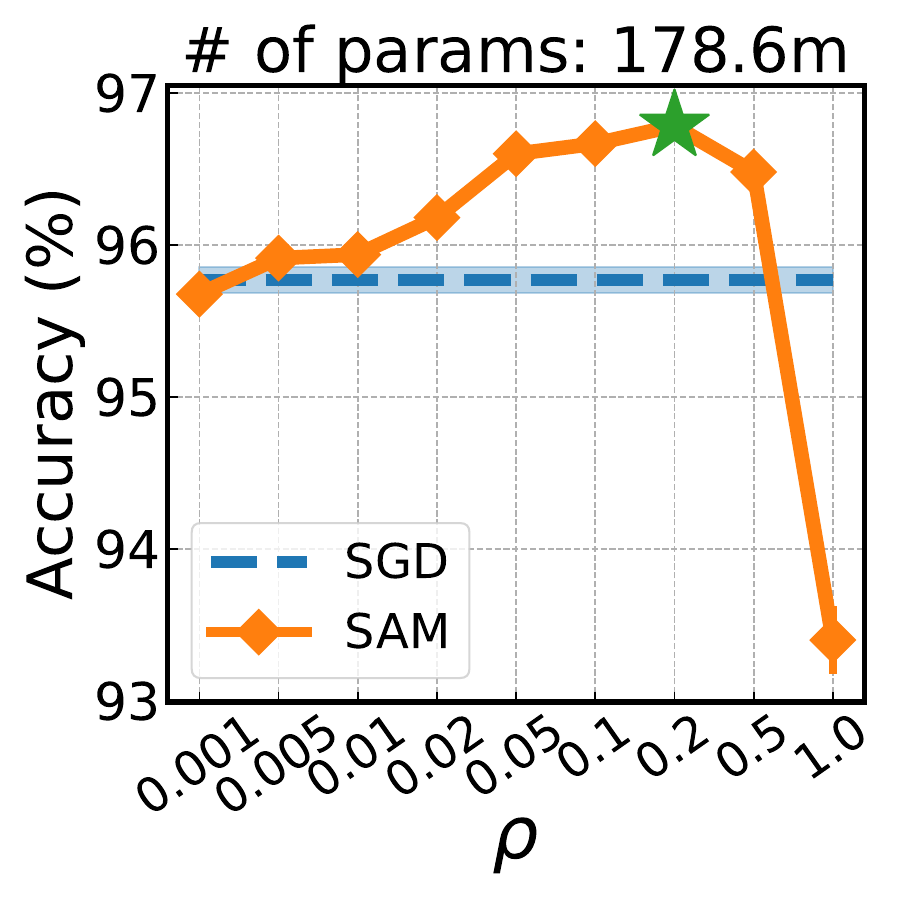}
    \includegraphics[width=0.162\linewidth]{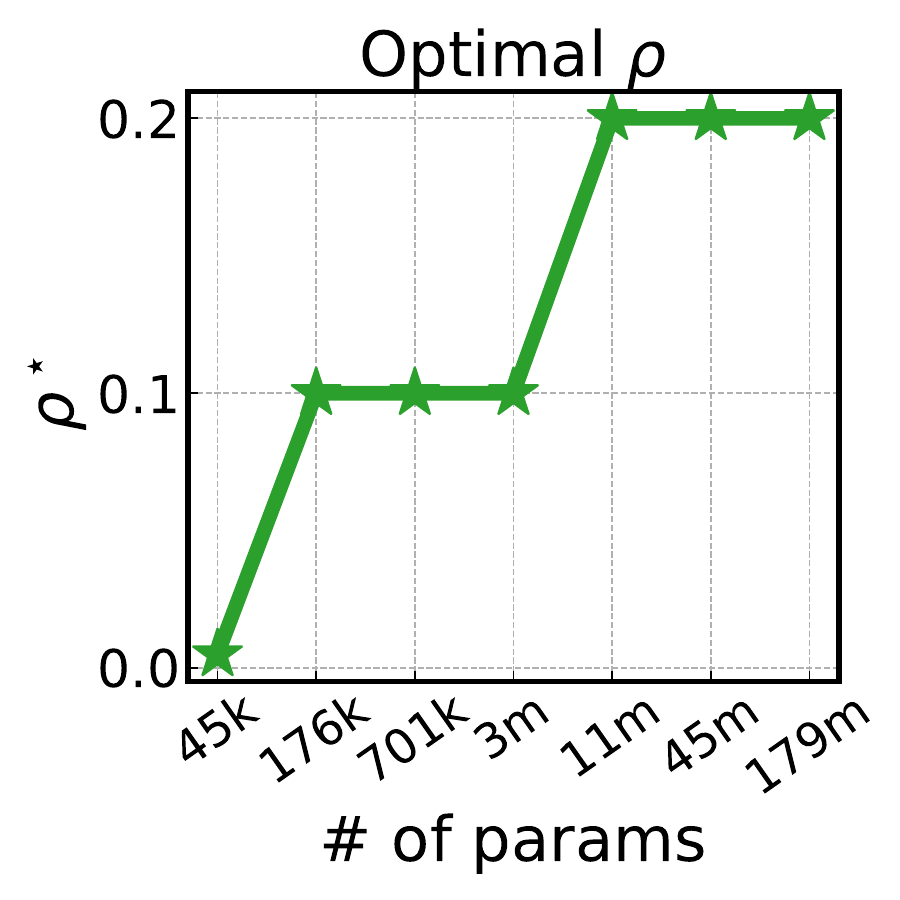}
  \caption{
    Validation accuracy versus $\rho$ for $3$-layer MLP trained on MNIST (top) and ResNet-18 trained on CIFAR-10 (bottom).
    $\rho^\star$ is located to be higher with more parameters.
  }
  \label{fig:result_overparam_each_rho}
\end{figure}

Our interest thus lies in seeing whether overparameterization has any effect on increasing $\rho$.
Since if that is the case, it indeed means that overparameterization puts more regularization on SAM.
We verify this by finding the empirically optimal perturbation bound $\rho^\star$ that yields the best generalization performance as we change the degree of overparameterization.
Specifically, we take a standard deep learning task and perform an extensive grid search to find $\rho^\star$.
The result is displayed in \cref{fig:result_overparam_each_rho}.

Indeed, it is observed that $\rho^\star$ tends to increase as the number of parameters increases;
\ie, seeing from left to right, $\rho$ value that yields highest accuracy (marked as green star \textcolor{ForestGreen}{$\star$}) tends to increase.
We confirm that this trend is consistently observed for various other workloads (See \cref{fig:result_overparam_each_rho_mnist,fig:result_overparam_each_rho_cifar,fig:result_overparam_each_rho_imagenet,fig:result_overparam_each_rho_sst} of \cref{app:add-rho} for more results).
This result is certainly encouraging since it supports that \emph{the generalization benefit of SAM via implicit regularization can indeed increase by overparameterization}.

Additionally, we can develop a conceptual account of why $\rho^\star$ increases with overparameterization.
First, if we consider the expected effect of perturbation $\epsilon \in\mathbb{R}^d$ of size $\rho$ on individual parameters simply as $\mathbb{E}_k[\epsilon_k^2]=||\epsilon||_2^2/d=\rho^2/d$, we can see that $\mathbb{E}_k[\epsilon_k^2] \rightarrow 0$ as $d\rightarrow\infty$, which implies that SAM would eventually have almost no effect on each parameter as the model scales unless $\rho$ is also increased.

Also, the Lipschitz bound on the gradients reveals that $\left\lVert \nabla f\left(x+\epsilon\right) - \nabla f(x)\right\rVert_2 \leq \beta \left\lVert x+\epsilon - x\right\rVert_2 = \beta \rho$, indicating that the SAM gradient becomes more similar to the original gradient as the model gets smoother (\ie, smaller smoothness constant $\beta$) with increasing size, requiring larger perturbation bound to achieve similar levels of perturbation effect.
These hold under the assumption that overparameterization makes the model smoother, which we empirically confirm in \cref{fig:result_empirical_lipschitz}.

\section{Further merits and caveats of overparameterization}
\vspace{-0.5em}
\label{sec:practical}

In this section, we present further merits and some caveats of overparameterization.
Specifically, we show that the overparameterization benefit of SAM continues to exist and becomes more evident under label noise or sparsity.
We also discover that sufficient regularization is required to attain the benefit.
These results could serve as a guidance to employ SAM in practice.

\paragraph{Overparameterization secures the robustness of SAM to label noise}
\label{sec:experiments-main-label_noise}

In practice, deep learning models are often trained on noisy data \citep{song2022learning}.
To examine whether the overparameterization benefit for SAM continues to exist in this scenario, we introduce some label noise to training data \citep{angluin1988learning,natarajan2013learning} and see how SAM responds.
The results are reported in \cref{fig:resut_labelnoise_noiserate}.
Overall, we find SAM benefits from overparameterization significantly more than SGD in the presence of label noise.
Precisely, the accuracy improvement made by SAM keeps on increasing as the model has more parameters, whereas the improvement over SGD is marginal for less parameterized models.
Notably, this trend is more pronounced with a higher noise level;
\eg, it rises from $5\%$ to nearly $50\%$ at the highest noise rate.
Notably, it is previously known that SAM is robust to label noise compared to SGD \citep{foret2020sharpness,baek2024sam, huang2023robust, zou2024towards, kim2023fantastic}, and yet, this result newly reveals that overparameterization plays a profound role in securing the robustness of SAM.

\begin{figure}[!t]
      \centering
      \begin{subfigure}{0.19\linewidth}
          \includegraphics[width=\linewidth]{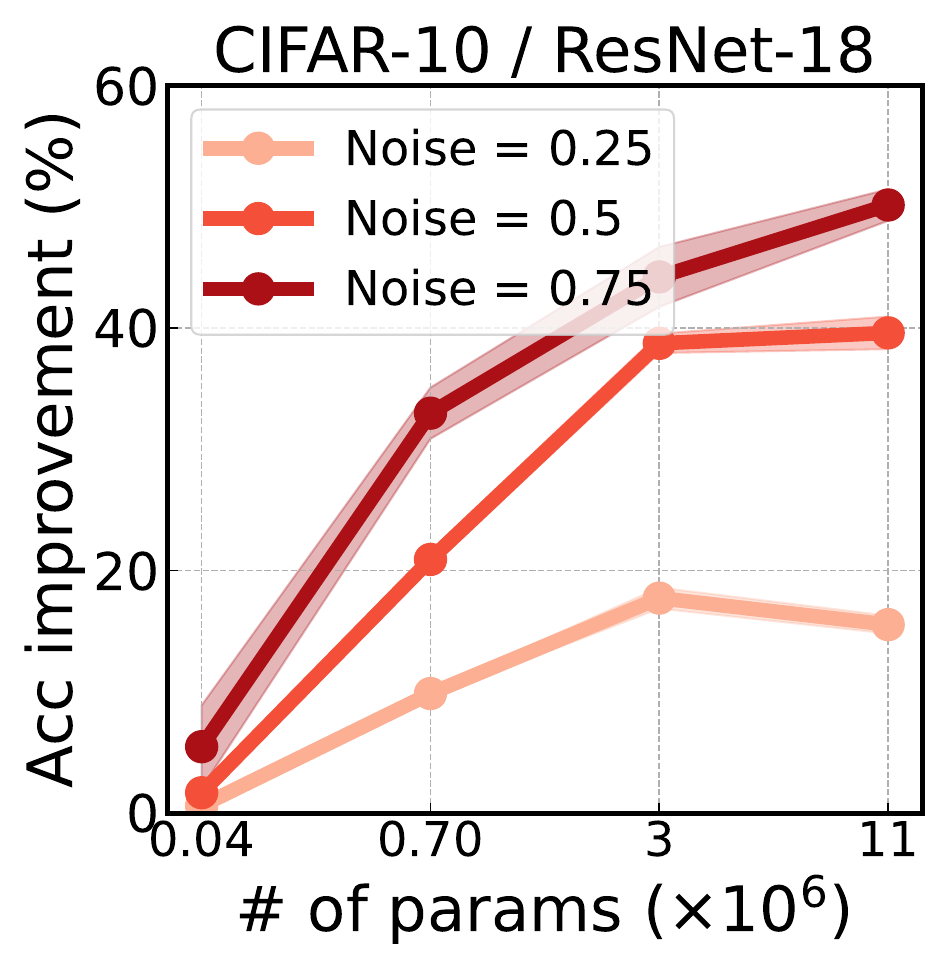}
          \caption{Label noise}
          \label{fig:resut_labelnoise_noiserate}
      \end{subfigure}
      \hspace{1em}
      \begin{subfigure}{0.199\linewidth}
      \includegraphics[width=\linewidth]{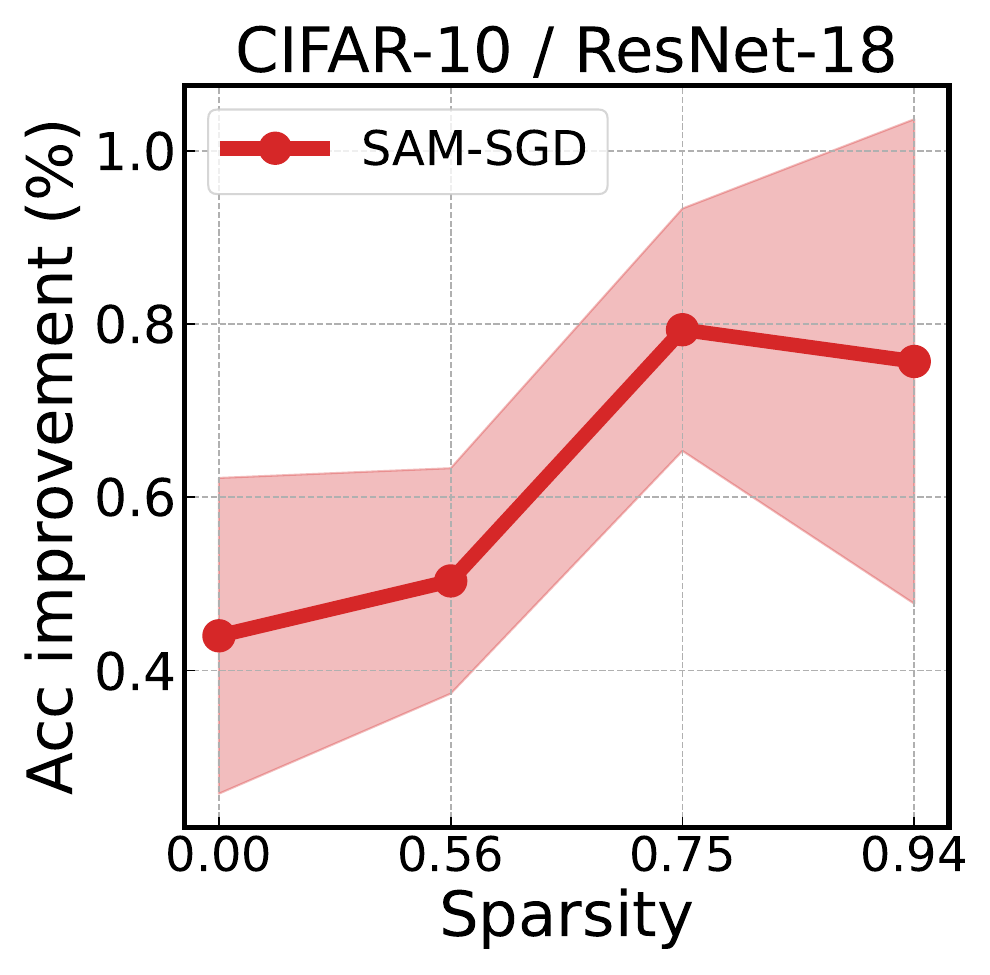}
          \caption{Sparsity}
          \label{fig:result_largesparse_cifar_resnet_dense16}
      \end{subfigure}
      \hspace{1em}
      \begin{subfigure}{0.19\linewidth}
          \includegraphics[width=\linewidth]{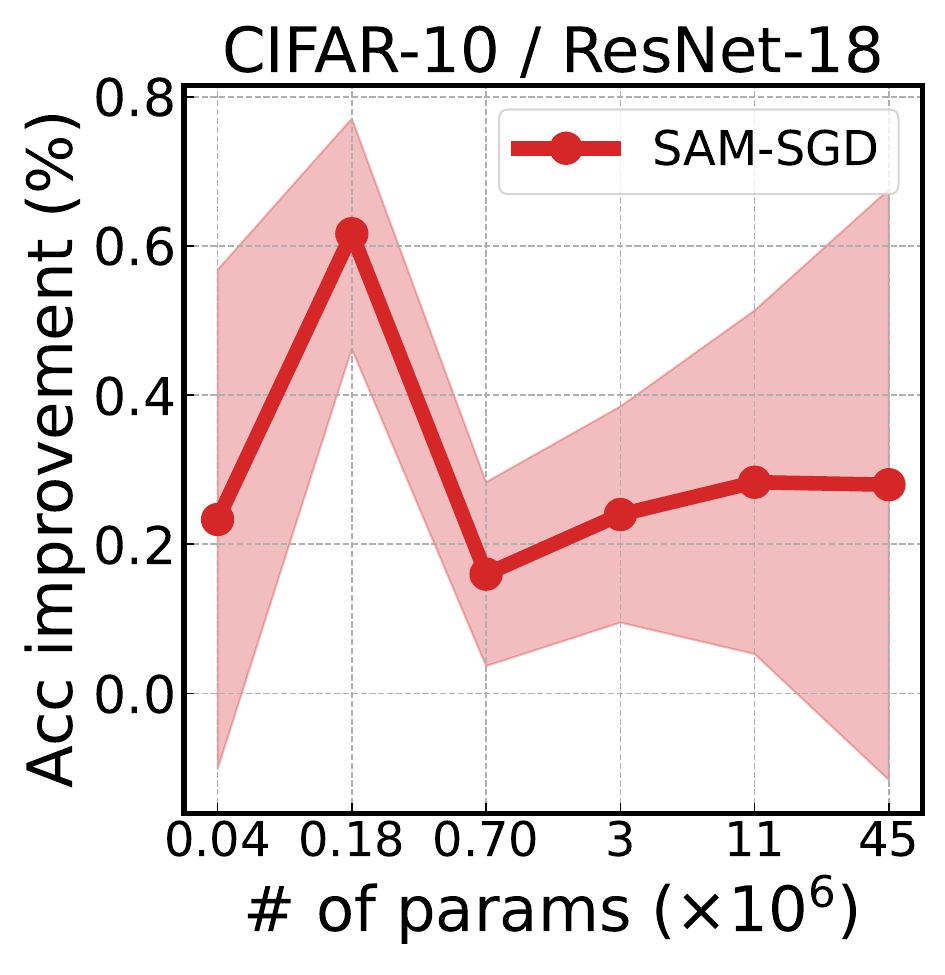}
            \caption{w/o weight decay}
            \label{fig:result_overparam_diff_resnet_wo_wd}
      \end{subfigure}
      \begin{subfigure}{0.164\linewidth}
          \includegraphics[width=\linewidth,trim={1em 0 1em 0},clip]{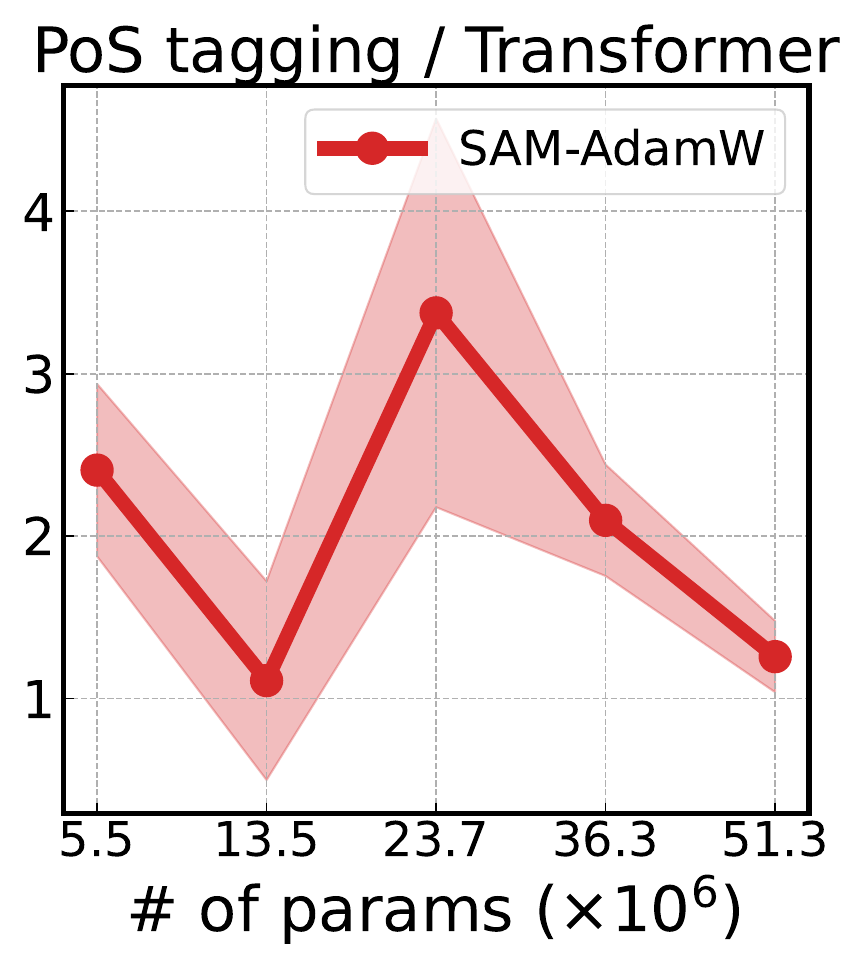}
            \caption{w/o early stop.}
            \label{fig:result_overparam_diff_pos_wo_es}
      \end{subfigure}
      \begin{subfigure}{0.167\linewidth}
          \includegraphics[width=\linewidth,trim={1.3em 0 0 0},clip]{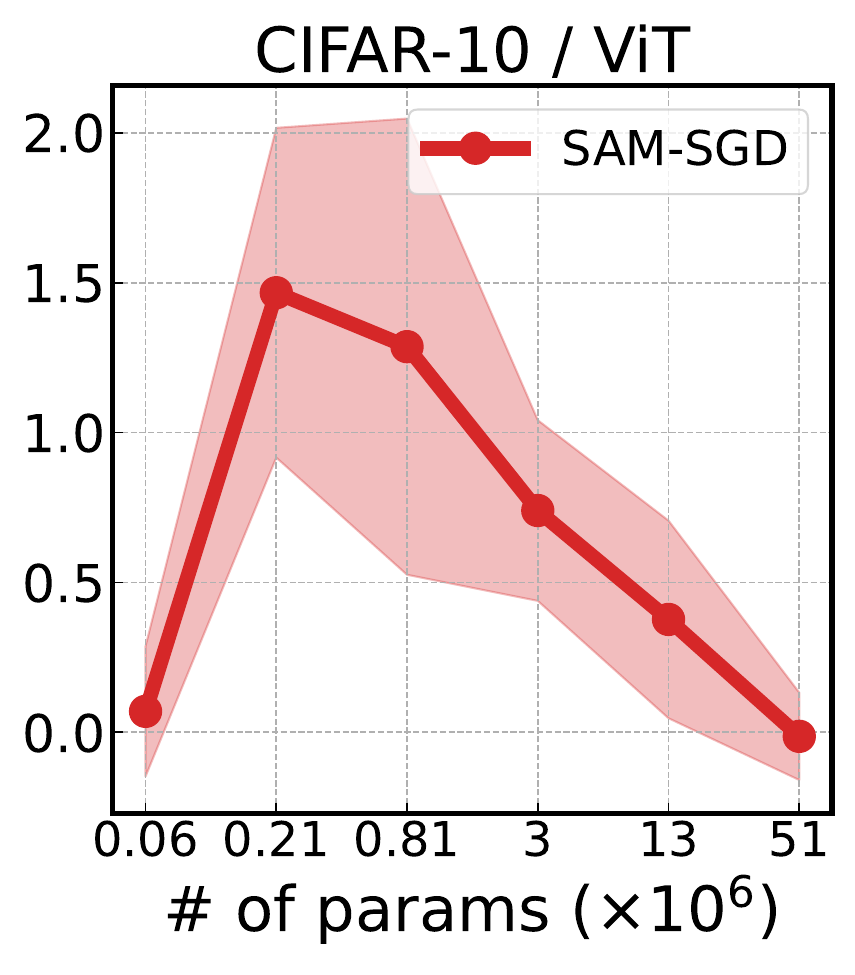}
            \caption{w/o induc. bias}
            \label{fig:result_overparam_diff_cifar_vit}
      \end{subfigure}
      \caption{
        Effect of (a) label noise, (b) sparsity, and (c-e) regularization on SAM.
        (a) The benefit of SAM is more pronounced with a higher noise level.
        (b) The improvement by SAM tends to increase in large sparse models compared to their small dense counterparts.
        (c-e) SAM does not always benefit from overparameterization without sufficient regularization.
        See \cref{fig:result_labelnoise,fig:result_largesparse,fig:result_largesparse_rho,fig:result_overparam_overfit} in \cref{app:add_practical} for more results.
      }
\end{figure}

\paragraph{SAM benefits from sparse overparameterization.}
\label{sec:experiments-main-large_sparse}

There has been a recent interest in employing sparsity to train large models to alleviate the computation and memory costs \citep{hoefler2021sparsity,mishra2021accelerating}.
To test the effect of overparameterization on SAM under this setting, we introduce a varying degree of sparsity to an overparameterized model at initialization \citep{leesnip} such that the number of parameters matches the original dense model.
The results are reported in \cref{fig:result_largesparse_cifar_resnet_dense16}.
We observe that the generalization improvement tends to increase as the model becomes more sparsely overparameterized;
more precisely, the average accuracy improvement increases from $0.4\%$ in the small dense model to around $0.8\%$ in the large sparse model.
This result suggests that one can consider taking sparsification more actively when employing SAM.

\paragraph{Sufficient regularization is needed to secure the benefit of overparameterization.}
\label{sec:experiments-main-regularization}

We also investigate whether the overparameterization benefit for SAM continues to exist when models are prone to overfitting due to insufficient regularization \citep{ying2019overview}.
Specifically, we evaluate three cases:
(a) without weight decay,
(b) without early stopping, and
(c) without sufficient inductive bias.\footnote{We train ViTs that are not pre-trained on a massive dataset, which is known to lack inductive biases inherent to CNNs and thus more prone to overfitting \citep{lee2021vision,chen2022transmix}.}
The results are reported in \cref{fig:result_overparam_diff_resnet_wo_wd,fig:result_overparam_diff_pos_wo_es,fig:result_overparam_diff_cifar_vit}.
We observe that the generalization improvement does not increase by simply adding more parameters.
The results indicate that some level of regularization is required in practice to attain the overparameterization benefit for SAM.

\section{Other effects of overparameterization: Theoretical aspects} \label{sec:theory}

Thus far, we focused on empirically exploring how increasing number of parameters influences SAM, and discovered critical improvements in its generalization benefits.
However, existing theoretical analyses on overparameterization also hint at other types of positive influences on different aspects of SAM such as convergence \citep{ma2018power,vaswani2019fast} and implicit bias \citep{neyshabur2017implicit,zhang2017understanding}.
Despite this, we find that there is little work on explicitly verifying whether these influences extend to SAM, however.

To fill this gap, we develop theoretical analyses of the effect of overparameterization on SAM\footnote{We use an unnormalized version of SAM: $x_{t+1} = x_{t} - \eta \nabla f \left(x_t+\rho\nabla f(x_t)\right)$, an empirically similar variant of SAM often adopted to simplify proofs \citep{andriushchenko2022towards, compagnoni2023sde}.} in this section.
Specifically, we show that (i) linearly stable minima for SAM have more uniform Hessian moments compared to SGD (\cref{sec:sam-stability}), and (ii) SAM can converge much faster (\cref{sec:convergence}), all when the model is overparameterized.

To characterize overparameterization, we adopt a widely accepted definition: a model is overparameterized if it possesses more parameters than necessary to fit the entire training data or achieve zero training loss \citep{ma2018power, belkin2018understand, belkin2019reconciling, neyshabur2019role, nakkiran2020deep, nakkiran2020optimal}---that is, any model capable of interpolation.
We formalize this via the following \emph{interpolation} assumption:
\begin{definition} \label{def:interpolation}
    (Interpolation) Let $f(x)=\sum^n_{i=1} f_i(x)$. There exists $x^\star$ s.t. $f_i(x^\star)=0$ and $\nabla f_i(x^\star)=0$ for $i=1,\hdots,n$.
\end{definition}
Crucially, this implies that there exists a fixed point $x^\star$ for stochastic gradient-based optimizers, which comes as an important property in the following two sections.

We leave a clear note here that the aim of these analyses is to complement, rather than directly support \cref{sec:experiments-main,sec:understanding}, by outlining theoretically guaranteed benefits of overparameterization on SAM.
We discuss more about the limitations later in \cref{sec:discussion}.

\subsection{SAM escapes sharp minima with non-uniform Hessian}
\label{sec:sam-stability}

Here we demonstrate that SAM escapes until it encounters minima of a certain level of flatness and uniform Hessian moments that are stricter compared to SGD.
To this end, we employ linear stability analysis \citep{wu2018sgd, wu2022alignment}, which aims to derive specific conditions a minimum should satisfy in order for a given optimizer to remain stable and not escape from it.

We first define linear stability as follows:
\begin{definition} \label{def:linear-stability}
    (Linear stability)
    Consider a general iterative first-order optimizer $x_{t+1} = x_t - G(x_t)$.
    A minimizer $x^\star$ is called linearly stable if there exists a constant $C$ such that
    $$\mathbb{E}[\| \tilde{x}_t-x^\star\|^2] \leq C \| \tilde{x}_0-x^\star\|^2$$
    for all $t > 0$ under the linearized dynamic near $x^\star$: $\tilde{x}_{t+1} = \tilde{x}_t -  \nabla G(x^\star) (\tilde{x}_t - x^\star)$, \ie, if it does not deviate far from $x^\star$ once arrived near a fixed point.
\end{definition}
Here, the linearized dynamic $\tilde{x}_t$ appears when the iterate $x_t$ approaches sufficiently near $x^\star$ such that the loss becomes approximately quadratic, with the existence of the fixed point $x^\star$ implied by the interpolation assumption in \cref{def:interpolation}.

With this, we provide the stability condition that minima should satisfy for a stochastic SAM to converge in the following theorem:
\begin{theorem} \label{thm:sam-stability} 
    Let us assume $x^\star=0$ without loss of generality.
    Then $x^\star$ is linearly stable for a stochastic SAM if the following is satisfied:
    \begin{equation} \label{eq:sam-stability-condition}
    \begin{split}
     \lambda_{\textup{max}} & \left((I - \eta H - \eta \rho H^2)^2 + \eta(\eta-2\rho) (M_2-H^2) \right.\\
     & \hspace{1em} \left.  + 2\eta^2\rho (M_3-H^3) + \eta^2\rho^2 (M_4 -H^4) \right) \leq 1
    \end{split}
    \end{equation}
    where $H = \frac{1}{n} \sum_{i=1}^n H_i$ and $M_k = \frac{1}{n}\sum_{i=1}^n H_i^k$ are the average Hessian and the $k$-th moment of the Hessian at $x^\star$ over $n$ training data.
    Subsequently as a necessary condition of (\ref{eq:sam-stability-condition}) it follows that
    \begin{align} \label{eq:stability-necessary}
        \begin{split}
            & 0 \leq a (1 + \rho a) \leq \frac{2}{\eta}, \quad 0 \leq s_2^2 \leq \frac{1}{\eta (\eta - 2 \rho)}, \quad \\
            & 0 \leq s_3^3 \leq \frac{1}{2 \eta^2 \rho}, \quad 0 \leq s_4^4 \leq \frac{1}{\eta^2 \rho^2},
        \end{split}
    \end{align}
    where $a = \lambda_{\text{max}}(H), s_k = \lambda_{\text{max}}((M_k - H^k)^{1/k})$ are the sharpness and the non-uniformity of the Hessian measured with the $k$-th moment, respectively.
\end{theorem}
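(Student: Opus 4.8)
The plan is to reduce linear stability to a single spectral condition by linearizing the stochastic (unnormalized) SAM update around the minimizer. Writing the per-sample SAM gradient as $G_i(x) = \nabla f_i(x + \rho \nabla f_i(x))$ and using that, in the overparameterized/interpolation regime, each per-sample loss is minimized at $x^\star$ so that $\nabla f_i(x^\star)=0$ and $\nabla f_i(x) \approx H_i(x-x^\star)$, I would Taylor-expand to first order. The perturbed iterate is $x + \rho \nabla f_i(x) \approx (I + \rho H_i)(x-x^\star)$, whence $G_i(x) \approx (H_i + \rho H_i^2)(x-x^\star)$. With $x^\star=0$ this map is genuinely linear, which is exactly why the analysis is done for the unnormalized variant: the normalized direction $\nabla f_i/\|\nabla f_i\|$ is only positively homogeneous near the fixed point, so its Jacobian does not exist there. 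Including the step size (folded into $\nabla G$ in \cref{def:linear-stability}), one step under sample $i$ is $\tilde x_{t+1} = A_i \tilde x_t$ with the symmetric matrix $A_i = I - \eta H_i - \eta\rho H_i^2$.

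Next I would pass to the second moment. Since the index is drawn uniformly and independently, $\E[\|\tilde x_{t+1}\|^2 \mid \tilde x_t] = \tilde x_t^\top S\, \tilde x_t$ with $S = \frac1n\sum_{i=1}^n A_i^\top A_i = \frac1n\sum_i A_i^2$, so $\E\|\tilde x_t\|^2 \le \lambda_{\max}(S)^t \|\tilde x_0\|^2$; the bound in \cref{def:linear-stability} then holds (with $C=1$) exactly when $\lambda_{\max}(S)\le 1$, while $\lambda_{\max}(S)>1$ blows up the top-eigenvector component. The rest toward \eqref{eq:sam-stability-condition} is algebra: expand $A_i^2 = I - 2\eta H_i - 2\eta\rho H_i^2 + \eta^2 H_i^2 + 2\eta^2\rho H_i^3 + \eta^2\rho^2 H_i^4$, average to replace $H_i^k$ by $M_k$ (and $H_i$ by $H$), and re-collect the mean-Hessian part into the perfect square $(I-\eta H - \eta\rho H^2)^2$; the leftover is precisely $\eta(\eta-2\rho)(M_2-H^2) + 2\eta^2\rho(M_3-H^3) + \eta^2\rho^2(M_4-H^4)$, because all the pure-$H$ contributions of orders $2$, $3$, $4$ cancel against the square.

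For the necessary conditions I would use monotonicity of $\lambda_{\max}$ under the Loewner order: if $S=\sum_j T_j$ with each $T_j\succeq 0$, then $S\succeq T_j$, so $\lambda_{\max}(S)\ge\lambda_{\max}(T_j)$, and $v^\top S v \le \lambda_{\max}(S)$ for any unit $v$. The perfect-square term is PSD outright, and the three remaining terms are PSD once $M_k\succeq H^k$ (equivalently $s_k^k=\lambda_{\max}(M_k-H^k)\ge 0$) together with nonnegative coefficients in the relevant regime ($\rho\ge 0$, $\eta\ge 2\rho$). Evaluating $v^\top S v\le 1$ at the top eigenvector of $H$, where the moment terms add only nonnegative amounts, leaves $(1-\eta a - \eta\rho a^2)^2\le 1$, i.e. $0\le a(1+\rho a)\le 2/\eta$ with $a=\lambda_{\max}(H)$; applying $\lambda_{\max}(S)\ge\lambda_{\max}(T_k)$ to the three moment terms yields $\eta(\eta-2\rho)s_2^2\le 1$, $2\eta^2\rho\,s_3^3\le 1$, and $\eta^2\rho^2 s_4^4\le 1$. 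These, with the lower bounds $s_k^k\ge 0$ just used, are exactly \eqref{eq:stability-necessary}.

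I expect the main obstacle to be justifying $M_k-H^k\succeq 0$ for $k=3,4$. For $k=2$ this is operator Jensen, $\frac1n\sum_i (H_i-H)^2\succeq 0$, since $t\mapsto t^2$ is operator convex; but $t^3$ and $t^4$ are not operator convex, so the matrix inequality can fail for non-commuting Hessians even when all $H_i\succeq 0$. I would therefore either invoke additional structure at the minimum (working in a simultaneous eigenbasis / a commuting or diagonal treatment of the $H_i$, where the scalar power-mean inequality $m_k\ge \bar h^k$ holds for every $k\ge 1$ on nonnegative spectra), or state the necessary conditions under that simplification. The only other delicate point, the linearization of the normalization in genuinely normalized SAM, is sidestepped by analyzing the unnormalized variant throughout.
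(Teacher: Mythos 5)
Your treatment of the sufficiency claim is, step for step, the paper's own proof in \cref{app:sam-stability}: the same linearized single-sample dynamics $x_{t+1} = x_t - \eta H_{\xi_t}(x_t + \rho H_{\xi_t} x_t)$ around $x^\star = 0$ (the paper's ``linearized stochastic SAM''), the same second-moment recursion $\E[\|x_{t+1}\|^2 \mid x_t] = x_t^\top S\, x_t$ with $S = \frac{1}{n}\sum_i (I - \eta H_i - \eta\rho H_i^2)^2$, the same regrouping of $S$ into the perfect square $(I - \eta H - \eta\rho H^2)^2$ plus the three moment-deviation terms, and the same recursive bound $\E\|x_t\|^2 \le \lambda_{\max}(S)^t \|x_0\|^2$, which yields (\ref{eq:sam-stability-condition}). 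One small caution: your aside that $\lambda_{\max}(S) > 1$ ``blows up the top-eigenvector component'' does not follow from this recursion, which only upper-bounds the second moment; but since \cref{thm:sam-stability} asserts only sufficiency, this costs you nothing.

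Where you genuinely go beyond the paper is on (\ref{eq:stability-necessary}): the appendix proof in fact stops at the spectral condition (the \qedsymbol{} appears immediately after it) and never writes out the passage to the four scalar bounds, implicitly treating it as routine in the style of \citet{wu2018sgd}. Your Loewner-order argument is the natural way to fill this in, and the obstruction you flag is real: concluding $\lambda_{\max}(S) \ge \lambda_{\max}(T_k)$ for each summand requires every other summand to be PSD, which for $k=2$ follows from $M_2 - H^2 = \frac{1}{n}\sum_i (H_i - H)^2 \succeq 0$, but for $k = 3, 4$ the matrix inequalities $M_3 \succeq H^3$ and $M_4 \succeq H^4$ can fail for non-commuting PSD Hessians, since $t^3$ and $t^4$ are not operator convex (and $M_2 \succeq H^2$ cannot be squared, as $t^2$ is not operator monotone). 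Note that the weaker lower bounds $s_k^k = \lambda_{\max}(M_k - H^k) \ge 0$ --- also needed for the $k$-th root in the definition of $s_k$ to make sense --- do survive whenever all $H_i \succeq 0$ (as at an interpolating minimum), by convexity of $A \mapsto \operatorname{tr} A^k$ on PSD matrices; but the upper bounds in (\ref{eq:stability-necessary}) genuinely need the full matrix inequalities, so your proposed remedy of assuming commuting (simultaneously diagonalizable) Hessians, or stating the necessary conditions under that structure, is what a complete derivation requires. The paper is silent on this point, so your proposal is not merely consistent with it but identifies and honestly scopes a gap the paper's write-up glosses over.
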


The detailed proof of the theorem is provided in \cref{app:sam-stability}.

Our result (\ref{eq:stability-necessary}) suggests that SAM requires less sharp minima and more uniformly distributed Hessian moments to achieve linear stability (provided that $\rho > 0$) compared to those of SGD \citep{wu2018sgd}, \ie, when $\rho\rightarrow 0$ in (\ref{eq:stability-necessary}).
While a similar result is shared by a concurrent work of \citet{behdin2023msam}, we further ensure that higher-order terms of Hessian moments are bounded, and interestingly, it becomes tighter for a larger $\rho$.
To corroborate our result, we measure the empirical sharpness and non-uniformity of Hessian.
The results are reported in \cref{fig:landscape,fig:stability_uniformity}.

\begin{figure*}[!t]
  \begin{subfigure}{0.32\linewidth}
      \centering
      \includegraphics[width=0.48\linewidth,trim={1.3cm 0cm 0.8cm 0cm},clip]{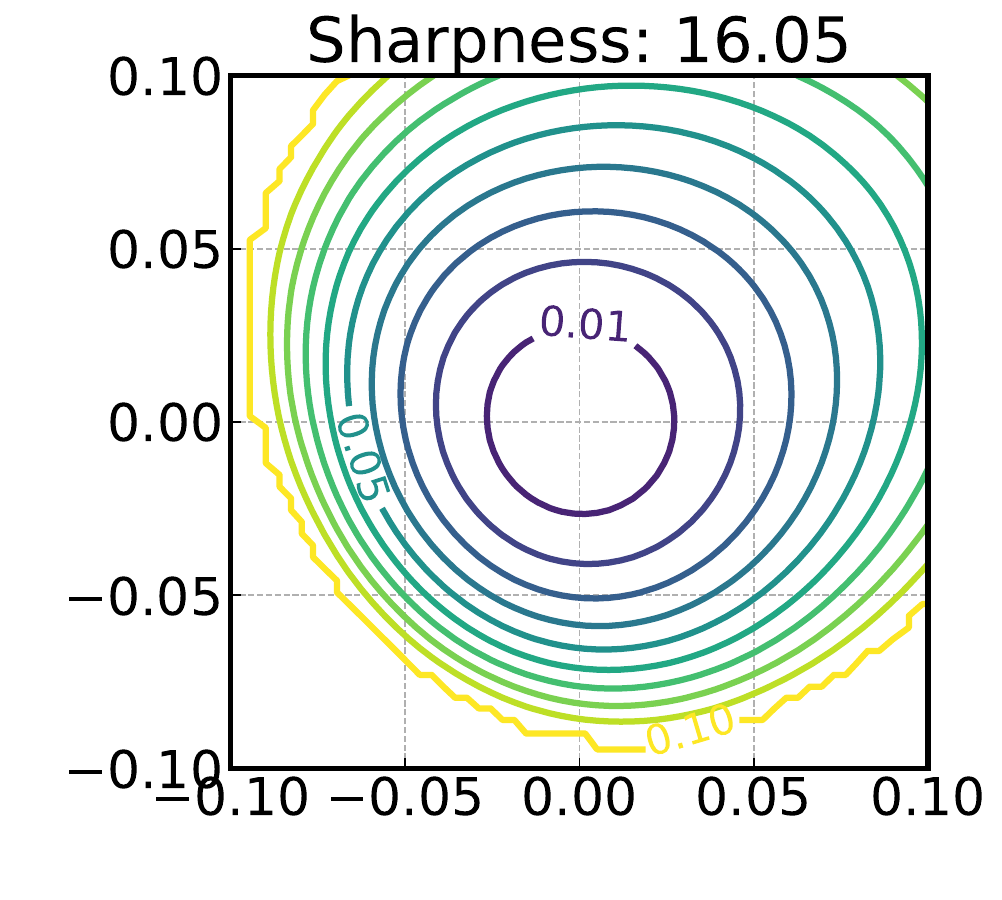}
      \includegraphics[width=0.48\linewidth,trim={1.3cm 0cm 0.8cm 0cm},clip]{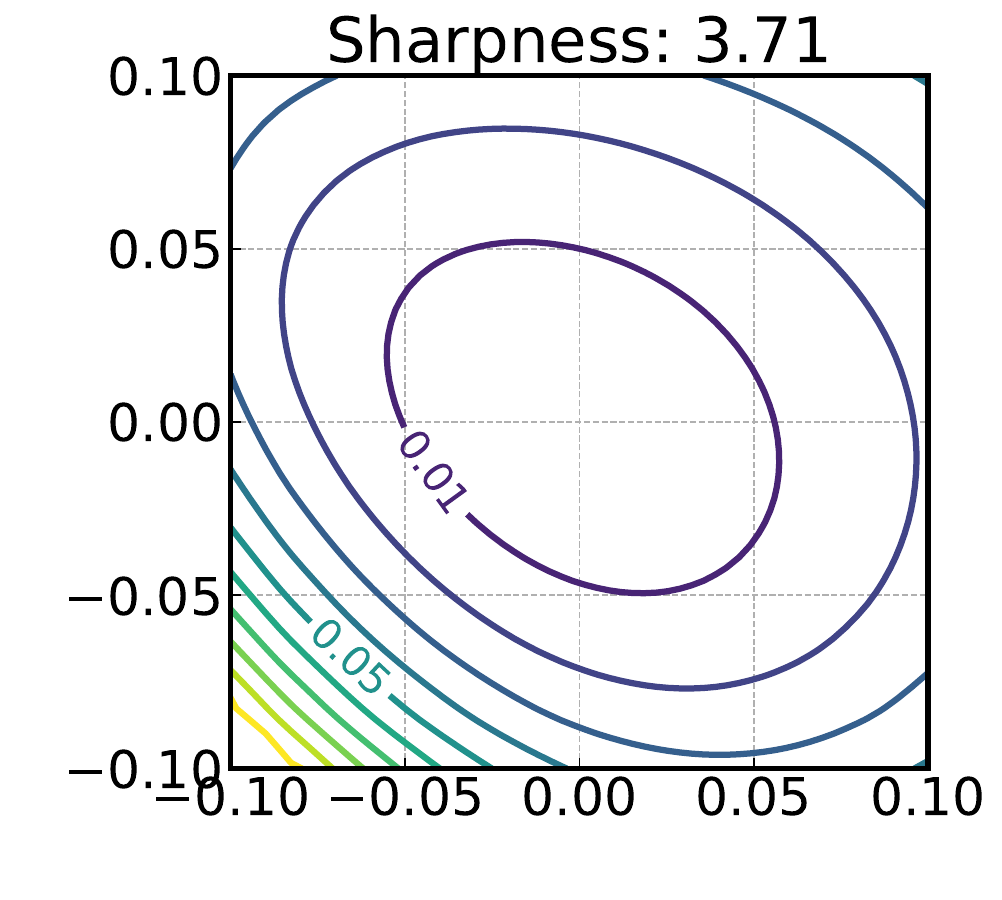}
      \caption{Landscape (SGD vs. SAM)}
      \label{fig:landscape}
  \end{subfigure}
  \hspace*{\fill}
  \begin{subfigure}{0.162\linewidth}
      \centering
      \includegraphics[width=\linewidth,trim={0.1cm 0.05cm 0.1cm 0},clip]{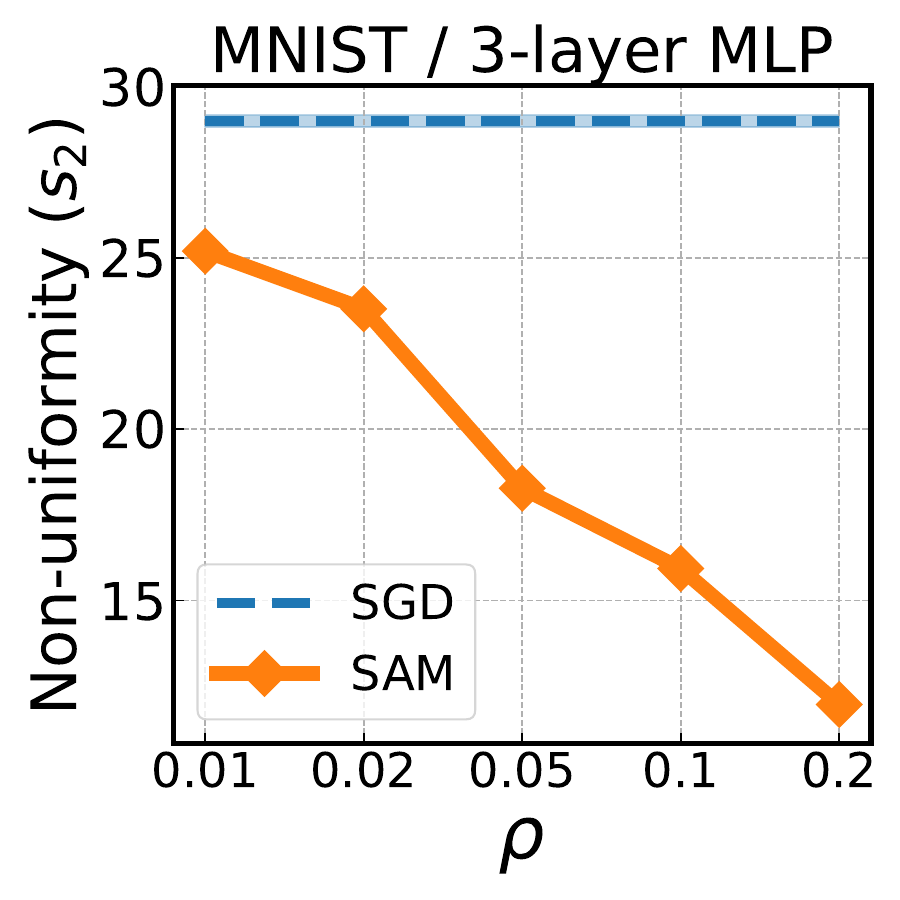}
      \vspace{-1.35em}
      \caption{Non-uniformity}
      \label{fig:stability_uniformity}
  \end{subfigure}
  \begin{subfigure}{0.5\linewidth}
      \centering
      \includegraphics[width=0.32\linewidth, trim={1em 1.2em 1em -3em}, clip]{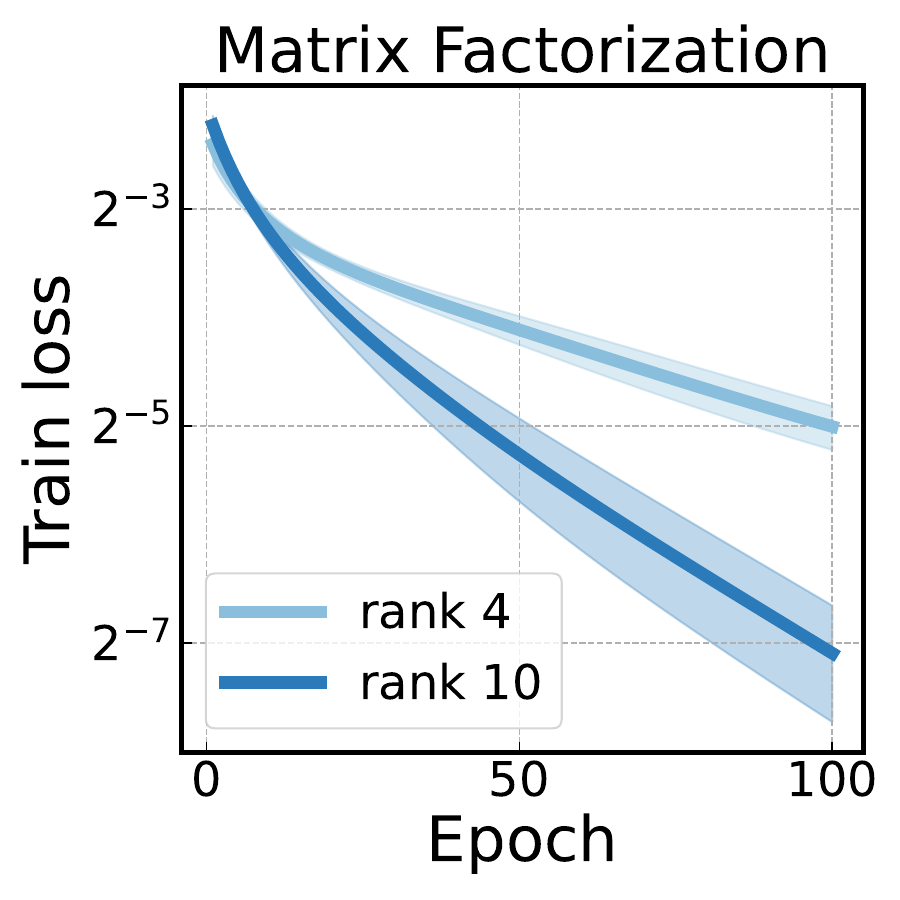}
      \includegraphics[width=0.32\linewidth, trim={1em 1.2em 1em 0}, clip]{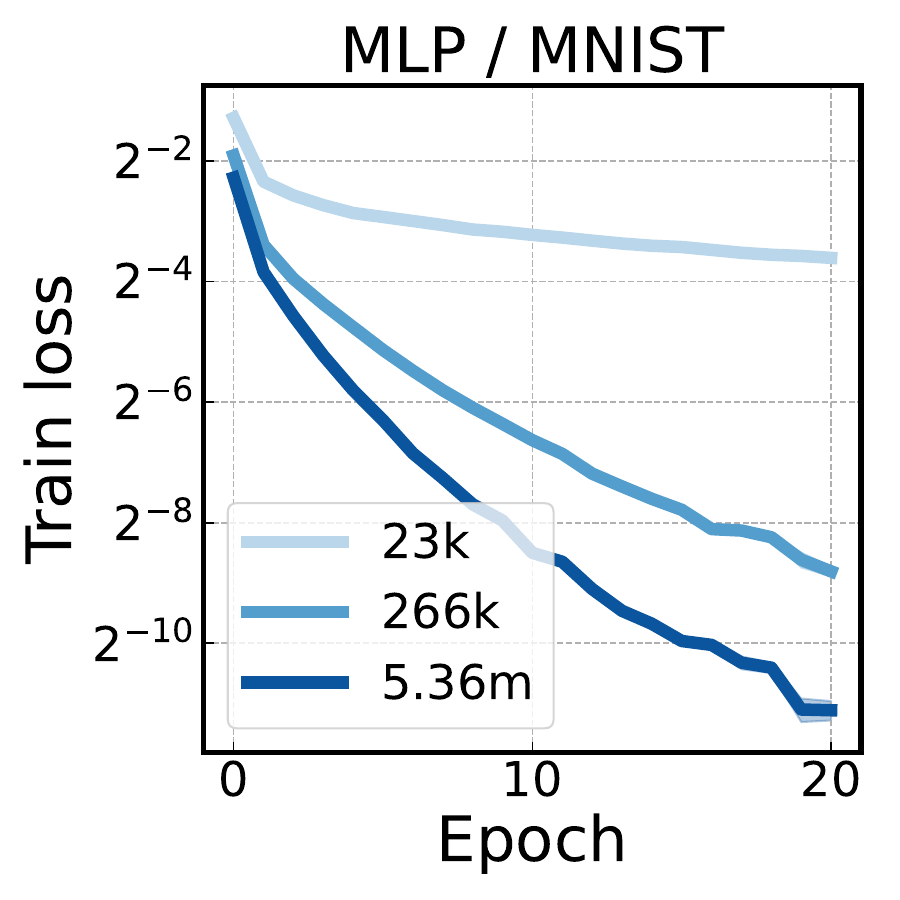}
      \includegraphics[width=0.32\linewidth, trim={1em 1.2em 1em 0}, clip]{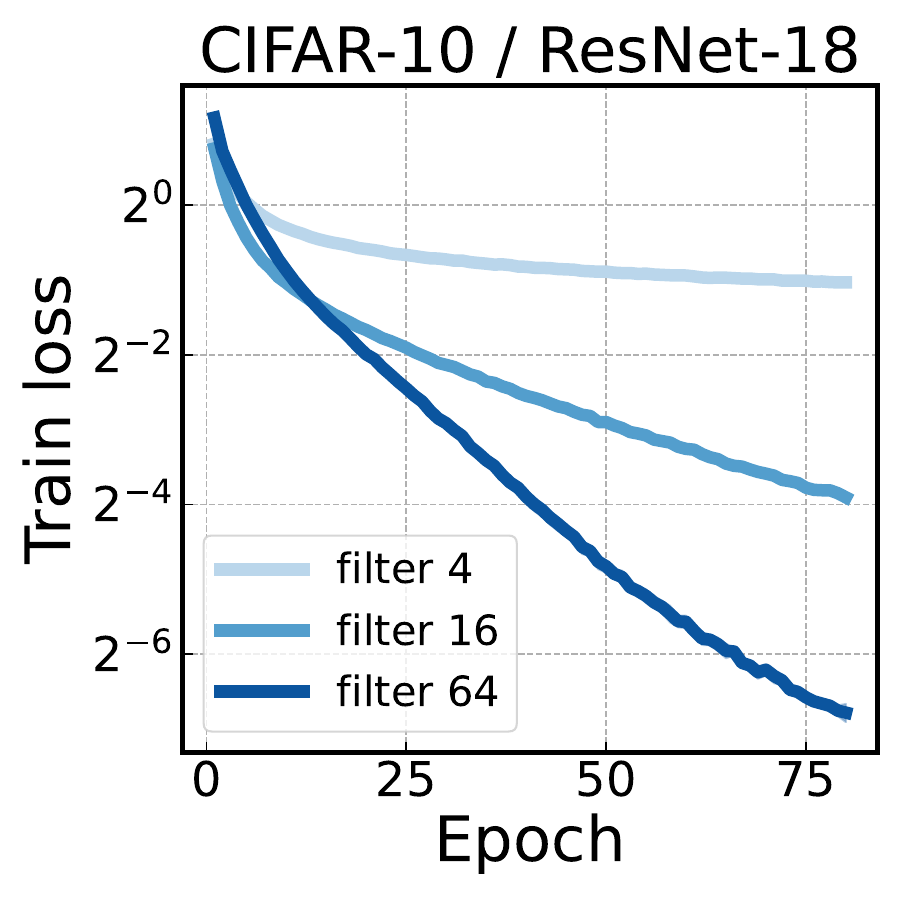}
      \caption{Convergence}
      \label{fig:convergence}
  \end{subfigure}
  \caption{
    (a) Loss landscapes of SGD (left) and SAM (right) along with the corresponding sharpness $a = \lambda_{max}(H)$. SAM converges to flatter minima with lower sharpness compared to SGD.
    (b) Non-uniformity of Hessian for SGD and SAM. SAM has a more uniform Hessian distribution than SGD.
    (c) Convergence properties of SAM. As model becomes overparameterized, SAM converges much faster and closer to a linear rate.
    See \cref{app:exp_details_theory} for the experiment details.}
  \label{fig:stability_experiments}
  \vspace{-1.0em}
\end{figure*}

\subsection{Stochastic SAM converges much faster with overparameterization}
\label{sec:convergence}

Prior works have revealed the power of overparameterization for stochastic optimization methods to accelerate convergence \citep{ma2018power,vaswani2019fast,meng2020fast}.
We prove that this benefit also extends to a stochastic SAM.

Besides the interpolation assumption we defined earlier in \cref{def:interpolation}, let us start by providing some assumptions used below.
\begin{definition}
    (Smoothness) $f$ is $\beta$-smooth if there exists  $\beta>0$ s.t. $\| \nabla f (x) -\nabla f (y) \| \leq \beta \| x-y \|$ for all $x,y  \in \mathbb R ^d$.
\end{definition}
\begin{definition}
    (Polyak-Lojasiewicz) $f$ is $\alpha$-PL if there exists $\alpha > 0$ s.t. $\| \nabla f (x) \|^2 \geq \alpha (f(x)-f(x^\star))$ for all $x  \in \mathbb R ^d$.
\end{definition}

The smoothness and the Polyak-Lojasiewicz (PL) assumptions are standard and used frequently in optimization \citep{gower2020variance, meng2020fast, nutini2022let, karimi2016linear}.
The smoothness assumption is satisfied for any neural network with smooth activation and loss function with bounded inputs \citep{andriushchenko2022towards}, and the PL condition is argued to be satisfied when the model is overparameterized \citep{belkin2021fit, liu2022loss}, which we empirically verify in \cref{fig:resut_empirical_pl} of \cref{app:emp-measure}.

Under these assumptions, we present the following convergence theorem of a stochastic SAM:

\begin{theorem}\label{thm:main-PL-stochSAM}
Suppose each $f_i$ is $\beta$-smooth, $f$ is $\lambda$-smooth and $\alpha$-PL, and interpolation holds.
For any $\rho\leq \frac{1}{(\beta/\alpha + 1/2)\beta}$, a stochastic SAM that runs for $t$ iterations with constant step size $\eta^\star \defeq \frac{\alpha-(\beta + \alpha/2)\beta\rho}{2\lambda\beta(\beta\rho+1)^2}$ gives the following convergence guarantee:
\begin{equation*}\label{eq:theorem-PL-stochastic-SAM}
\ex{x_t}{f(x_t)}\leq
\left(1 - \frac{\alpha-(\beta + \alpha/2)\beta\rho}{2} \, \eta^\star\right)^t\,f(x_0).
\end{equation*}
\end{theorem}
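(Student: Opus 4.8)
The plan is to follow the now-standard template for proving linear convergence under interpolation and the PL inequality (as in the analyses of plain SGD by \citet{ma2018power,vaswani2019fast}), but with the per-step stochastic direction replaced by the SAM ascent-then-descent gradient. Write $f=\frac1n\sum_i f_i$ so that a uniformly sampled index $i_t$ gives an unbiased estimate $\mathbb{E}_{i_t}[\nabla f_{i_t}(x)]=\nabla f(x)$, and denote the (un-normalized) stochastic SAM direction by $g_i(x)\defeq\nabla f_i(x+\rho\nabla f_i(x))$, so that the update reads $x_{t+1}=x_t-\eta\,g_{i_t}(x_t)$. The whole argument hinges on controlling how far $g_i$ deviates from the ordinary stochastic gradient $\nabla f_i$: since each $f_i$ is $\beta$-smooth and the perturbation has norm $\rho\|\nabla f_i(x)\|$, we obtain the crucial \emph{relative} bound $\|g_i(x)-\nabla f_i(x)\|\le\beta\rho\|\nabla f_i(x)\|$, and hence $\|g_i(x)\|\le(1+\beta\rho)\|\nabla f_i(x)\|$. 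Because this discrepancy is proportional to $\|\nabla f_i\|$ rather than a fixed $\beta\rho$, it vanishes at the interpolating minimizer, which is exactly what permits linear convergence all the way to zero loss.

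First I would invoke $\lambda$-smoothness of $f$ for the descent inequality $f(x_{t+1})\le f(x_t)-\eta\langle\nabla f(x_t),g_{i_t}(x_t)\rangle+\tfrac{\lambda\eta^2}{2}\|g_{i_t}(x_t)\|^2$ and take the conditional expectation over $i_t$. For the cross term I would split $g_i=\nabla f_i+(g_i-\nabla f_i)$, use $\mathbb{E}_i\langle\nabla f,\nabla f_i\rangle=\|\nabla f\|^2$, and control the remainder by Cauchy--Schwarz together with AM--GM, $\|\nabla f\|\,\|\nabla f_i\|\le\tfrac12(\|\nabla f\|^2+\|\nabla f_i\|^2)$, giving $\mathbb{E}_i\langle\nabla f,g_i\rangle\ge(1-\tfrac{\beta\rho}{2})\|\nabla f\|^2-\tfrac{\beta\rho}{2}\mathbb{E}_i\|\nabla f_i\|^2$. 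The two remaining quantities are handled by the structural assumptions: the self-bounding property of smooth nonnegative functions under interpolation, $\|\nabla f_i(x)\|^2\le 2\beta f_i(x)$, hence $\mathbb{E}_i\|\nabla f_i\|^2\le 2\beta f$, and the PL inequality $\|\nabla f\|^2\ge\alpha f$. Substituting these yields the clean lower bound $\mathbb{E}_i\langle\nabla f,g_i\rangle\ge\big[\alpha-(\beta+\tfrac{\alpha}{2})\beta\rho\big]f\defeq A\,f$, which already explains the constant appearing in the theorem. The second moment is bounded the same way, $\mathbb{E}_i\|g_i\|^2\le(1+\beta\rho)^2\,\mathbb{E}_i\|\nabla f_i\|^2\le 2\beta(1+\beta\rho)^2 f$, producing the factor $B\defeq\lambda\beta(1+\beta\rho)^2$.

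Combining the two estimates gives the one-step recursion $\mathbb{E}_{i_t}[f(x_{t+1})]\le(1-A\eta+B\eta^2)f(x_t)$. I would then choose $\eta$ to tame the quadratic: $1-A\eta+B\eta^2\le 1-\tfrac{A}{2}\eta$ precisely when $\eta\le A/(2B)$, and taking the boundary value $\eta^\star=A/(2B)=\frac{\alpha-(\beta+\alpha/2)\beta\rho}{2\lambda\beta(\beta\rho+1)^2}$ makes the factor equal to $1-\tfrac{A}{2}\eta^\star$, exactly as stated. The restriction $\rho\le\frac{1}{(\beta/\alpha+1/2)\beta}$ is precisely the condition $A\ge0$ guaranteeing a genuine contraction (and incidentally $1-\beta\rho/2>0$, which the cross-term step requires). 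A final application of the tower property and induction over $t$ delivers $\mathbb{E}[f(x_t)]\le(1-\tfrac{A}{2}\eta^\star)^t f(x_0)$.

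The main obstacle---and the place where the overparameterization/interpolation hypothesis does all the work---is securing the \emph{relative} gradient-discrepancy bound $\|g_i-\nabla f_i\|\le\beta\rho\|\nabla f_i\|$ and then propagating it so that every error term carries a factor of $\|\nabla f_i\|$ that the self-bounding lemma converts into $f$. This is why the analysis must use the un-normalized SAM perturbation $\rho\nabla f_i$: with the normalized perturbation of \eqref{eq:sam-step} the discrepancy is only $\le\beta\rho$, a constant that does not vanish at $x^\star$, so the recursion would retain an additive $O(\rho^2)$ term and one could prove convergence only to a neighborhood rather than the linear rate claimed here. Keeping careful track of this proportionality through both the cross-term and second-moment bounds is the delicate part; everything else is routine once the bound is in hand.
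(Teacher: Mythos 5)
Your proposal is correct and takes essentially the same approach as the paper: your relative discrepancy bound $\|g_i(x)-\nabla f_i(x)\|\le \beta\rho\,\|\nabla f_i(x)\|$ together with the Cauchy--Schwarz/AM--GM cross-term estimate reproduces exactly the paper's two lemmas (gradient alignment and the norm bound $\|\nabla f_i(x_{t+1/2})\|^2\le(\beta\rho+1)^2\|\nabla f_i(x_t)\|^2$), and your recursion $\mathbb{E}[f(x_{t+1})\mid x_t]\le(1-A\eta+B\eta^2)f(x_t)$ with $\eta^\star=A/(2B)$ is the paper's derivation specialized to the $B=1$ case of its mini-batch theorem. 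The only differences are cosmetic: the paper proves the alignment bound pointwise via a completed-square argument (equivalent to your Cauchy--Schwarz plus AM--GM) and carries the general mini-batch constant $\kappa_B$ throughout.
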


We provide the full proof in Appendix \ref{app:prooflinconv}, which also contains result for the more general case of a mini-batch SAM.

This result shows that with overparameterization, a stochastic SAM can converge as fast as the deterministic gradient method at a linear convergence rate, which is much faster than the well-known sublinear rate of $\mathcal{O}(1/t)$ for SAM \citep{andriushchenko2022towards}.
Also, our analysis suggests that convergence is guaranteed without the bounded variance assumption and diminishing step size under overparameterization, while without overparameterization, convergence does not hold \citep{andriushchenko2022towards}.
This suggests that overparameterization can significantly ease the convergence of SAM.
We corroborate our result empirically as well, by measuring how training proceeds with overparameterization in realistic settings.
The results are plotted in \cref{fig:convergence}.

\section{Conclusion} \label{sec:discussion}

In this work, we have disclosed the \textit{critical influence of overparameterization on SAM} from empirical and theoretical perspectives.
We started with an extensive evaluation to display a highly consistent trend that the generalization benefit of SAM increases with overparameterization, without which SAM may not take effect (\cref{sec:experiments-main}).
This led us to come up with a reasonable hypothesis to explain the benefit in terms of increased solution space and implicit bias (\cref{sec:understanding}).
In addition, we presented further merits and caveats of overparameterization in practice (\cref{sec:practical}).
Finally, we developed theoretical advantages of overparameterization for SAM on linear stability, convergence, and generalization (\cref{sec:theory}).
We believe these findings can bridge between overparameterization and SAM, which has been rather unattended in the literature as of yet.
Nevertheless, we discuss limitations, ideas for potential future work as well as practical implications of our results below.

\paragraph{Theoretical account of \cref{sec:experiments-main}}
The consistent trend observed in \cref{sec:experiments-main} certainly hints at the presence of a fundamental process underneath, and yet, our study does not offer a precise theory to support this phenomenon.
This is largely because modeling the generalization of SAM under varying degrees of overparameterization challenges the boundaries of existing theoretical frameworks currently available in the literature.
Nevertheless, drawing upon recent advancements in understanding overparameterization and generalization, we have developed plausible hypotheses to directly address this phenomenon (\cref{sec:understanding}). 
We also employed rigorous theoretical frameworks to examine the effects of overparameterization on various other aspects of SAM, reinforcing the general trend of overparameterization benefits (\cref{sec:theory}).
We believe these efforts offer valuable insights and preliminary foundations that could be instrumental in achieving a comprehensive theoretical account of \cref{sec:experiments-main} in the future.

\paragraph{Other sharpness minimization schemes}
Our theoretical results in \cref{sec:theory} are based on an unnormalized version of SAM.
This is largely driven by two reasons:
(i) it appears to render minimal practical difference from the original SAM, and more crucially,
(ii) it simplifies analyses as widely adopted in initial studies \citep{andriushchenko2022towards, compagnoni2023sde}.
However, more recently, works such as \citet{dai2023crucial, si2023practical} have highlighted the theoretical significance of the normalization step.
We plan to extend our analysis to better reflect the effect of normalization in future work.
Additionally, given that different sharpness minimization schemes can make a difference in the found minima and resulting performance \citep{kaddour2022flat, dauphin2024neglected}, extension of our analyses to other non-SAM sharpness minimization schemes \citep{izmailov2018averaging,orvieto2022anticorrelated} and studying how they compare to SAM under overparameterization would be a promising avenue for future work.
Nonetheless, we consider these results an initial exploration of the impact of overparameterization on SAM, setting the stage for future research.

\paragraph{More ablation study}

In addition to label noise, sparsity, and regularization from \cref{sec:practical}, we investigate the influence of other factors on the increased benefit of SAM in \cref{app:ablation}.
Specifically, in \cref{app:depth}, we explore the effect of increasing the depth instead of the width, where we find that the advantages differ across architectures with MLPs appearing to benefit more significantly than ResNets.
We suspect that this may result from the complex interplay of various intricate factors and decisions involved in increasing depth in modern architecture.
Also, in \cref{app:exp-linear}, motivated by recent studies suggesting that overparameterized models can behave like linearized models \citep{jacot2018neural, chizat2019lazy}, we test if the increased benefit of SAM is due to linearization.
Our observations show that SAM underperforms SGD in the linearized regimes by more than $-10\%$, indicating that overparameterization itself, rather than linearization,
is likely the key factor behind the increased effectiveness of SAM.

\paragraph{Potential to modern deep learning}

Our key observations in \cref{sec:experiments-main} indicate a great potential to use SAM in the modern landscape of large-scale training \citep{kaplan2020scaling,belkin2021fit}.
Also, our results in \cref{sec:practical} further highlight its potential in the current trend where foundation models are often trained with noisy data \citep{radford2021learning,schuhmann2022laion} or to employ sparsity \citep{frantar2023scaling,jiang2024mixtral}.
In this regard, we can possibly anticipate that the overparameterization benefit might hold even when training billion-scale foundation models \citep{zhang2022opt,dehghani2023scaling}, which we leave to explore as future work.
It would also be interesting to study how popular settings for training foundation models other than label noise or sparsity affect the benefit, such as quantization \citep{gholami2022survey}, dataset pruning \citep{agiollo2024approximating}, differential privacy \citep{yu2021differentially}, or human alignment \citep{ouyang2022training}.

\section*{Acknowledgement}
This work was partly supported by the Institute of Information \& communications Technology Planning \& Evaluation (IITP) grant funded by the Korean government (MSIT) (IITP-2019-0-01906, Artificial Intelligence Graduate School Program (POSTECH) and RS-2022-II220959, (part2) FewShot learning of Causal Inference in Vision and Language for Decision Making), the National Research Foundation of Korea (NRF) grant funded by the Korean government (MSIT) (2022R1F1A1064569, RS-2023-00210466, RS-2023-00265444).
Sungbin Shin was supported by Kwanjeong Educational Foundation Scholarship.
M.A. was supported by the Google Fellowship and Open Phil AI Fellowship.

\bibliography{main}

\newpage

\appendix

\section{Experimental details}

We present the experimental details of \cref{sec:experiments-main,sec:understanding,sec:practical,sec:theory}.
Most of the experiments are conducted with a single RTX3090 GPU with $24$GB VRAM while some experiments requiring larger memory are conducted with multiple RTX3090 GPUs.
The code to reproduce the results of this work is implemented with JAX \citep{jax2018github} and Flax \citep{flax2020github}, which is available at \url{https://github.com/LOG-postech/SAM-overparam}.

\subsection{Experiments for Section \ref{sec:experiments-main}}

\label{app:exp_details}

\begin{table*}[!th]
    \renewcommand{\arraystretch}{1.7}
    \centering
    \resizebox{\linewidth}{!}{
    \begin{tabular}{lllllllll}
      \toprule
      Workload & Epochs/steps & Learning rate / decay & Weight decay & Batch size & $\rho$ search & Base optimizer &  \\
      \midrule
      Synthetic         & 100 epochs   & $0.1$ / step & $0.0$ & $128$ & $\Big\{\substack{0.001, 0.01, 0.05, 0.07, 0.1, \\0.2, 0.3, 0.5, 0.7, 1.0, 2.0}\Big\}$ & SGD \\ 
      MNIST/MLP         & $100$ epochs   & $0.1$ / step & $0.0001$ & $128$ & $\{0.01, 0.02, 0.05, 0.1, 0.2\}$ & SGD with momentum $0.9$ \\ 
      CIFAR-10/ResNet-18  & $200$ epochs & $0.1$ / step & $0.0005$ & $128$ & $\Big\{\substack{0.001, 0.005, 0.01, 0.02, \\0.05, 0.1, 0.2, 0.5, 1.0}\Big\}$ & SGD with momentum $0.9$\\ 
      ImageNet/ResNet-50  & $90$ epochs & $0.1$ / cosine & $0.0001$ & $512$ & $\{0.01, 0.02, 0.05, 0.1, 0.2\}$ & SGD with momentum $0.9$\\ 
      PoS tagging  & $75000$ steps & $0.05$ / inverse sqrt & $0.1$ & $64$ & $\{0.01, 0.02, 0.05, 0.1, 0.2, 0.3, 0.5\}$ & AdamW ($\beta_1 = 0.9, \beta_2 = 0.98$) \\ 
      Sentiment classification  & $30$ epochs & $0.1$ / constant & $3e\text{-}6$ & $64$ & $\{0.01, 0.02, 0.05, 0.1, 0.2, 0.3, 0.5\}$ & SGD with momentum $0.8$ \\ 
      Graph property prediction  & $10^5$ steps & $0.001$ / constant & $0.0$ & $256$ & $\{0.01, 0.02, 0.05, 0.1, 0.2\}$ & Adam ($\beta_1 = 0.9, \beta_2 = 0.999$) \\ 
      Atari game  & $10^7$ steps & $2.5e\text{-}4$ / linear & $0.0$ & $256$ & $\{0.01, 0.02, 0.05, 0.1, 0.2\}$ & Adam ($\beta_1 = 0.9, \beta_2 = 0.999$) \\ 
      CIFAR-10/ViT  & $200$ epochs & $0.1$ / cosine & $0.0001$ & $128$ & $\{0.01, 0.02, 0.05, 0.1, 0.2\}$ & SGD with momentum $0.9$\\ 
      \bottomrule
  \end{tabular}
  }
  \caption{
    Hyperparameters for each workload.
  }
  \label{tab:hyperparams}
\end{table*}

\begin{table*}[!th]
    \renewcommand{\arraystretch}{1.4}
    \centering
    \resizebox{0.8\linewidth}{!}{
    \begin{tabular}{lll}
      \toprule
      Workload & Scaling factor & Values \\
      \midrule
      Synthetic         & \# of neurons   & $\{k*100 \vert 1 \leq k \leq 10\}$ \\ 
      MNIST/MLP         & \# of neurons   & $\{[300*p, 100*p] \vert p \in \{0.25, 0.5, 1, 4, 10\}\}$ \\ 
      CIFAR-10/ResNet-18  & \# of convolutional filters & $\{2^k \vert 2 \leq k \leq 8\}$ \\ 
      ImageNet/ResNet-50  & \# of convolutional filters & $\{16*k \vert 1 \leq k \leq 5\}$ \\ 
      PoS tagging  & dimension of hidden states & $\{128*k \vert 1 \leq k \leq 5\}$  \\ 
      Sentiment classification  & dimension of hidden states & $\{2^k \vert 5 \leq k \leq 9\}$  \\ 
      Graph property prediction  & \# of neurons & $\{2^k \vert 7 \leq k \leq 9\}$ \\ 
      Atari game  & \# of convolutional filters & $\{16*k \vert 1 \leq k \leq 4\}$ \\ 
      CIFAR-10/ViT  & dimension of hidden states  & $\{2^k \vert 5 \leq k \leq 10\}$ \\ 
      \bottomrule
  \end{tabular}
  }
  \caption{
    Model scaling factors and values for each workload.
  }
  \label{tab:scaling}
\end{table*}

For all the experiments in \cref{sec:experiments-main}, we run the experiments with the same configurations over three different random seeds. 
We visualize the average and standard error (\ie, std$/\sqrt{n_{\text{seed}}}$) as a line plot and a shaded region surrounding it.
Many of our experiments and the hyperparameter values are based on examples provided by Flax \citep{flax2020github} official repository.\footnote{\url{https://github.com/google/flax/tree/main/examples}}
The hyperparameter values and how the models are scaled for each workload are summarized in \cref{tab:hyperparams,tab:scaling}, respectively.
We present the additional details for individual workloads below.

\paragraph{Synthetic Regression / 2-layer MLP}

We follow the student-teacher setting from \citet{advani2020highdi} where the teacher is a randomly initialized $2$-layer ReLU network with $200$ neurons and the student is a $2$-layer ReLU network with a different number of neurons.
Each element for the input $x \in \R^{100}$ is sampled from a standard normal distribution while the target $y \in \R$ is calculated as the output of the teacher network added by Gaussian noise sampled from a standard normal distribution.
The models are trained on $20400$ training data, which is roughly the same as the number of parameters in the teacher model, and tested on the $5100$ data, which is a quarter of the number of the training data.

\paragraph{MNIST / 3-layer MLP}

We train LeNet-300-100 \citep{lecun1998gradient} for the MNIST \citep{lecun2010mnist}.
The learning rate decays by $0.1$ after $50\%$ and $75\%$ of the total epochs.
We scale the models while preserving the relative proportions of the number of neurons in each layer as $3:1$.

\paragraph{CIFAR-10 / ResNet-18}

We train ResNet-18 \citep{he2016deep} for the CIFAR-10 \citep{krizhevsky2009learning}.
We choose the hyperparameters as similar to \citet{andriushchenko2022towards}.
The learning rate decays by $0.1$ after $50\%$ and $75\%$ of the total epochs.

\paragraph{ImageNet / ResNet-50}

We train ResNet-50 \citep{he2016deep} for the ImageNet \citep{deng2009imagenet}.
We choose the hyperparameters as similar to \citet{du2021efficient} and use a linear warmup of $5000$ steps.
We additionally experiment with $\rho = 0.005$ for the two smallest models.

\paragraph{PoS tagging/ Transformer}

We train Encoder-only Transformer \citep{vaswani2017attention} for the Universal Dependencies \citep{nivre2016universal} -- Ancient Greek.
We use a linear warmup of $8000$ steps.
We evaluate the validation accuracy once every $1000$ step and report the best value except for the experiment in \cref{fig:result_overparam_diff_pos_wo_es}.
The dimension of MLP and the number of attention heads are scaled as $4\times$ and $1/64 \times$ of the dimension of the hidden states following the Flax example.

\paragraph{SST / LSTM}

We train LSTM \citep{hochreiter1997long} for SST2  \citep{socher2013recursive} where the task is a binary classification (positive/negative) of the movie reviews.
We evaluate the validation accuracy for every epoch and report the best value.
The embedding size is scaled as $300/256 \times$ of the dimension of hidden states following the Flax example.

\paragraph{Graph property prediction / GCN}

We train $2$-layer Graph Convolutional Networks \citep{kipf2016semi} for the ogbg-molpcba \citep{hu2020open}.
Here, the input is a graph of a molecule where nodes and edges each represent atoms and chemical bonds.
The task is a binary classification of whether a molecule inhibits HIV replication or not.

\paragraph{Atari game / CNN}

We train $5$-layer CNNs for the Atari Breakout-v5 game \citep{mnih2013playing}.
We train the Actor-Critic networks \citep{konda1999actor} with proximal policy optimization \citep{schulman2017proximal}.
We evaluate the validation score once every $100$ step and report the best value.
We also use gradient clipping of $0.5$ for all models.

\paragraph{CIFAR-10 / ViT}

For the experiment in \cref{fig:result_overparam_diff_cifar_vit}, we train $6$-layer Vision Transformers \citep{dosovitskiy2020image} for the CIFAR-10 \citep{krizhevsky2009learning} using the patch size of $4 \times 4$.
We scale the dimension of MLP and the number of attention heads as $2 \times$ and $1/32 \times$ of the dimension of hidden states.

\subsection{Experiments for Section \ref{sec:understanding}}

\label{app:exp_details_understanding}

For the experiments in \cref{fig:onehidden-function,fig:onehidden-trajectories}, we follow the setting in \citet{andriushchenko2022towards}.\footnote{\url{https://github.com/tml-epfl/understanding-sam/tree/main/one_layer_relu_nets}}
Specifically, we train one-hidden-layer ReLU networks where each data has input $x \in \R$ and target $y \in \R$.
Here, the networks are trained on the quadratic loss with mini-batch SGD or SAM with $\rho = 0.2$ where we randomly choose $6$ data points every iteration.
Additionally, the optimization trajectories in \cref{fig:onehidden-trajectories} are plotted following \citet{li2018visualizing}.\footnote{\url{https://github.com/tomgoldstein/loss-landscape}}
Specifically, the trajectories are plotted along the PCA directions calculated from inital point and the converged minima of SGD and SAM.

\subsection{Experiments for Section \ref{sec:theory}}

\label{app:exp_details_theory}

\paragraph{Linear stability}
For the experiments of \cref{fig:stability_uniformity,fig:landscape}, we follow the setting in \citet{wu2018sgd}.
Specifically, we set up $3$-layer MLP having $[3000, 1000]$ hidden neurons with squared loss, so that the local quadratic approximation becomes precise, and train the networks on MNIST.
We use $1000$ random samples to calculate the non-uniformity, and all models are trained to reach near zero loss.
The networks are trained with a constant learning rate of $0.1$ without weight decay or momentum.

\paragraph{Convergence -- Matrix Factorization}
For the matrix factorization experiment in \cref{fig:convergence}, we solve the following non-convex regression problem: $\min_{W_1, W_2} \E_{x\sim \mathcal{N}(0, I)}{\|W_2W_1x-Ax\|^2}$ where the objective function is smooth and satisfies the PL-condition \citep{loizou2021stochastic}. 
We choose $A\in\R^{10\times6}$ and generate 1000 training samples, which are used for training a rank $k$ linear network with two matrix factors $W_1\in\R^{k\times6}$ and $W_2\in\R^{10\times k}$. Here, interpolation is satisfied when rank $k=10$.
We train two linear networks with $k\in\{4, 10\}$ for $100$ epochs with a constant learning rate of $0.0005$ and compare the convergence speed.

\section{Absolute validation metric for Section \ref{sec:experiments-main}} \label{app:add-overparam_results}

We present the absolute validation metrics of SAM and SGD in \cref{fig:result_overparam_acc_abs_app}.
There is a consistent trend that SAM improves with overparameterization in all tested cases.

\begin{figure*}[!th]
  \centering
  \includegraphics[width=0.24\linewidth]{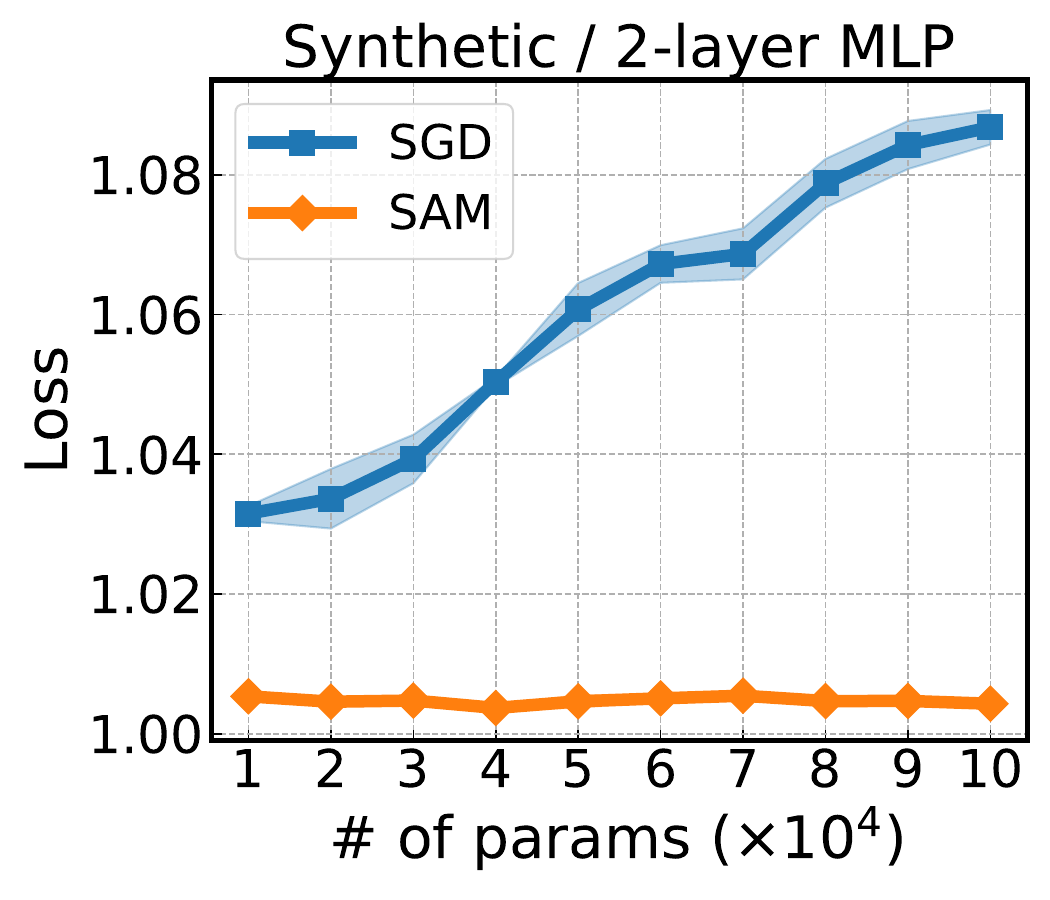}
  \includegraphics[width=0.24\linewidth]{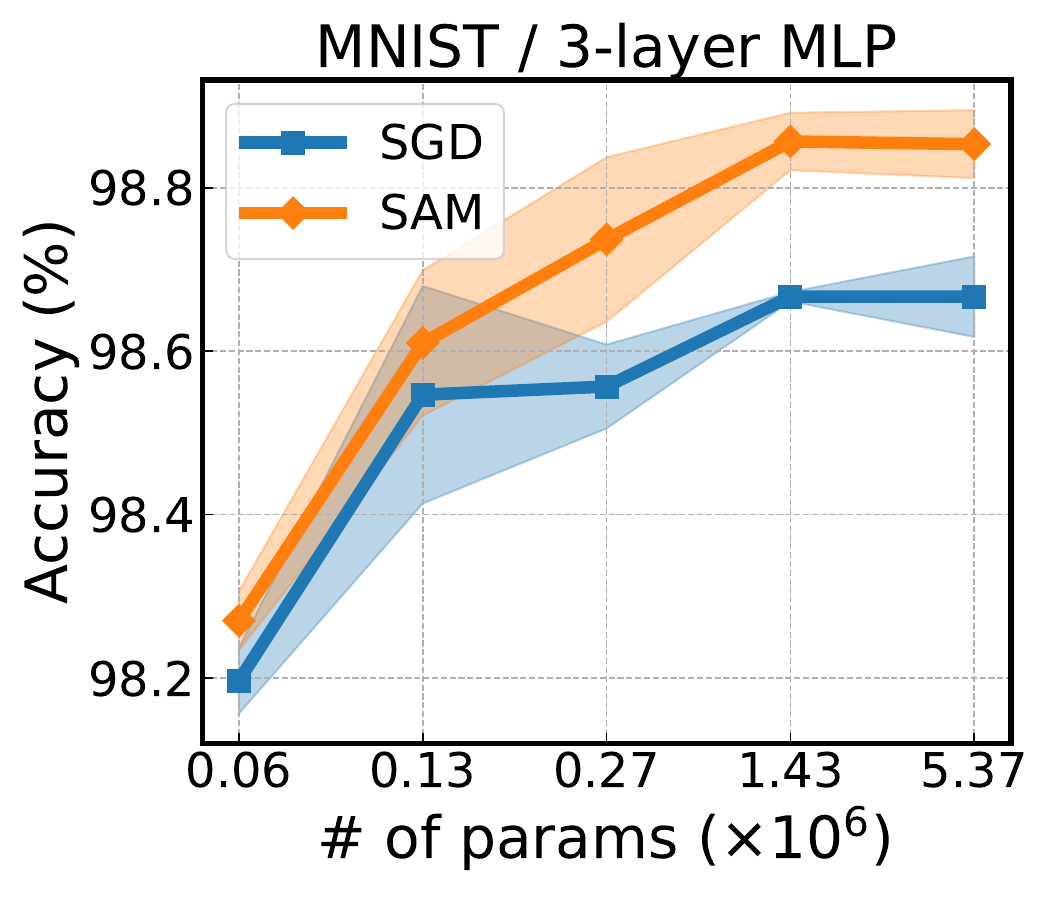}
  \includegraphics[width=0.24\linewidth]{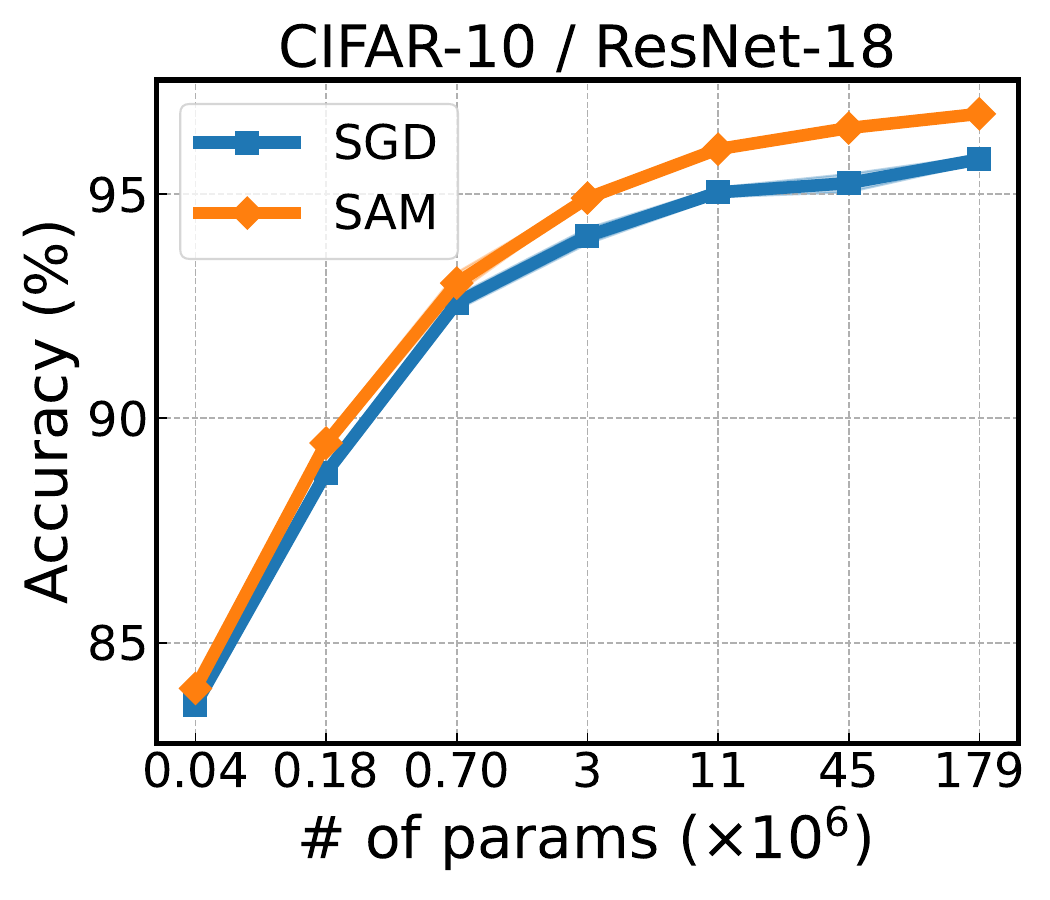}
  \includegraphics[width=0.24\linewidth,trim={0.4cm 0.4cm 0.4cm 0.4cm},clip]{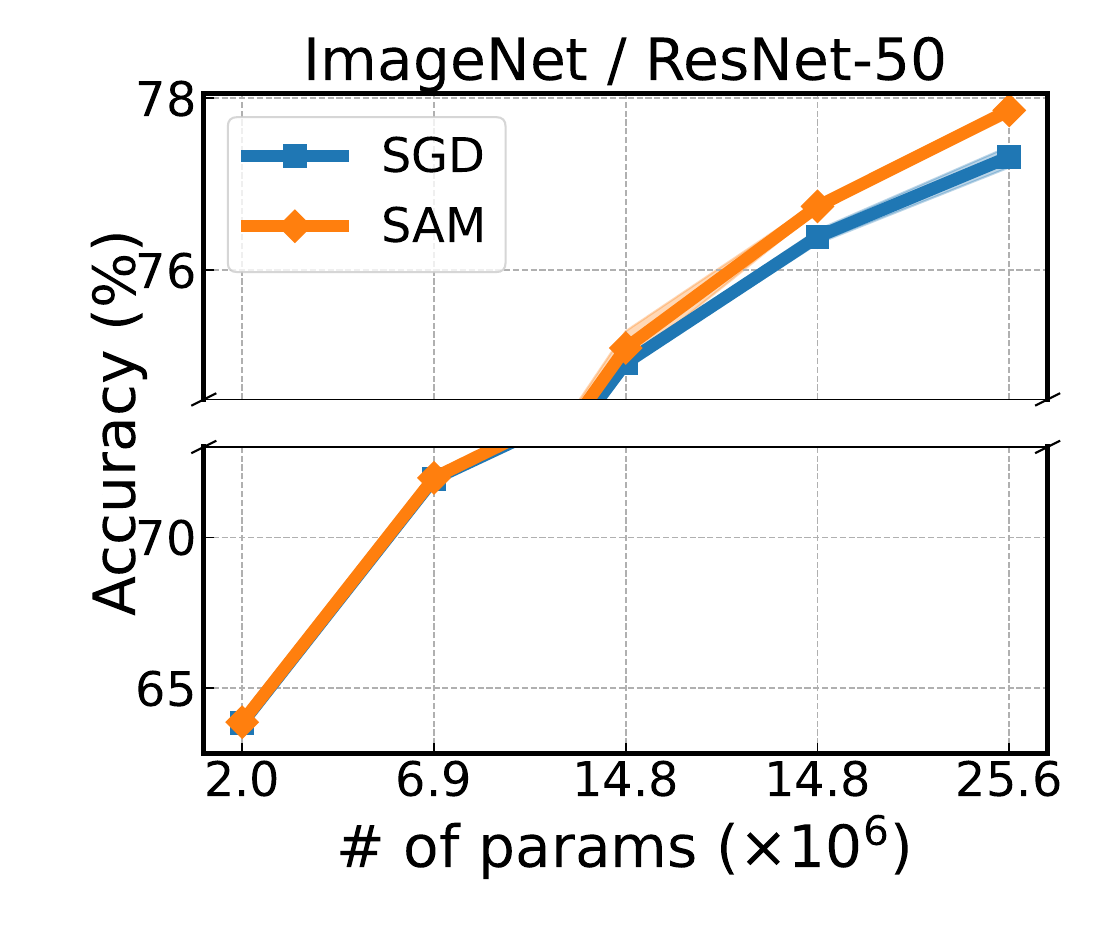}\\
  \includegraphics[width=0.24\linewidth]{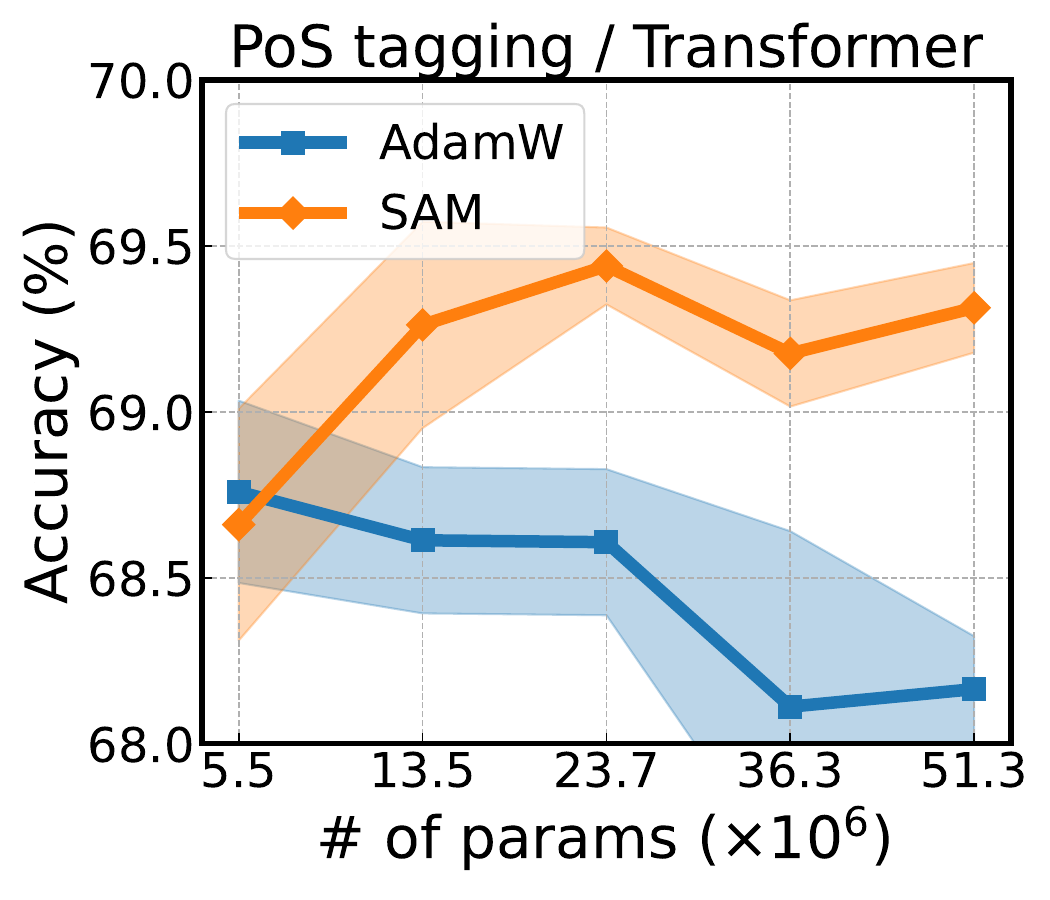}
  \includegraphics[width=0.24\linewidth]{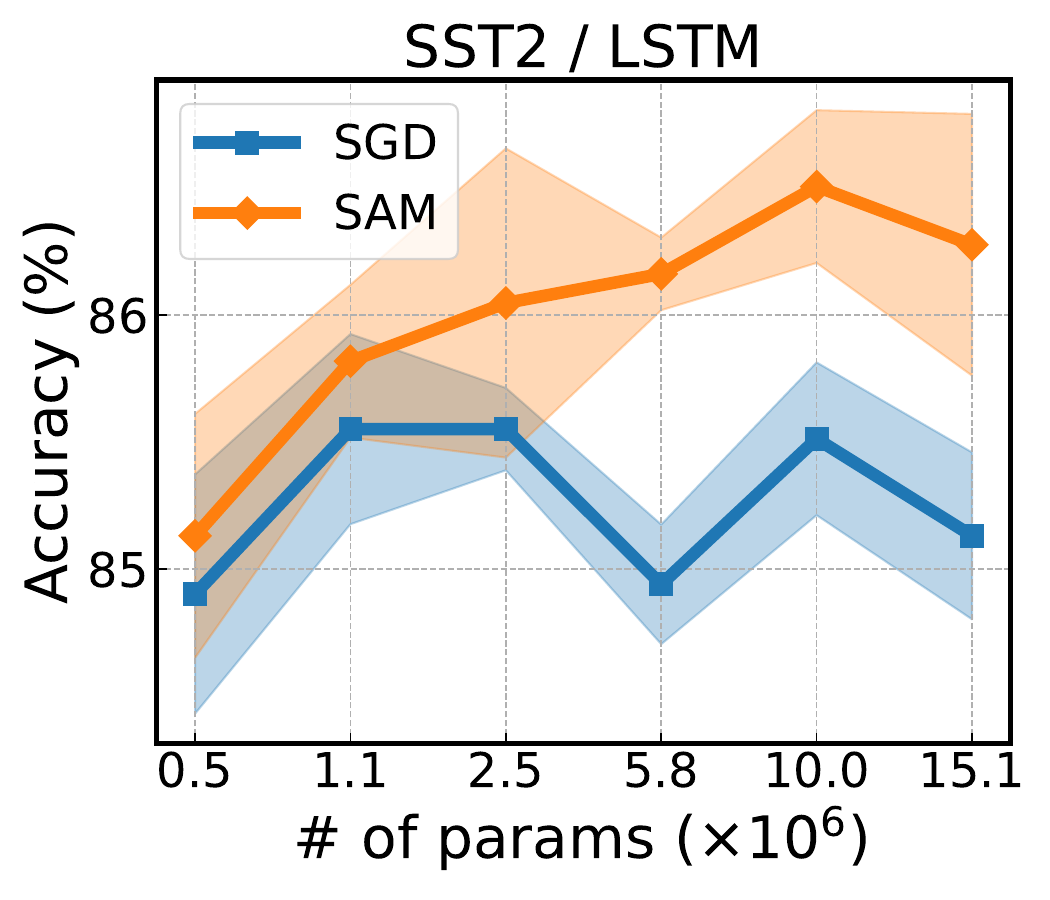}
  \includegraphics[width=0.24\linewidth]{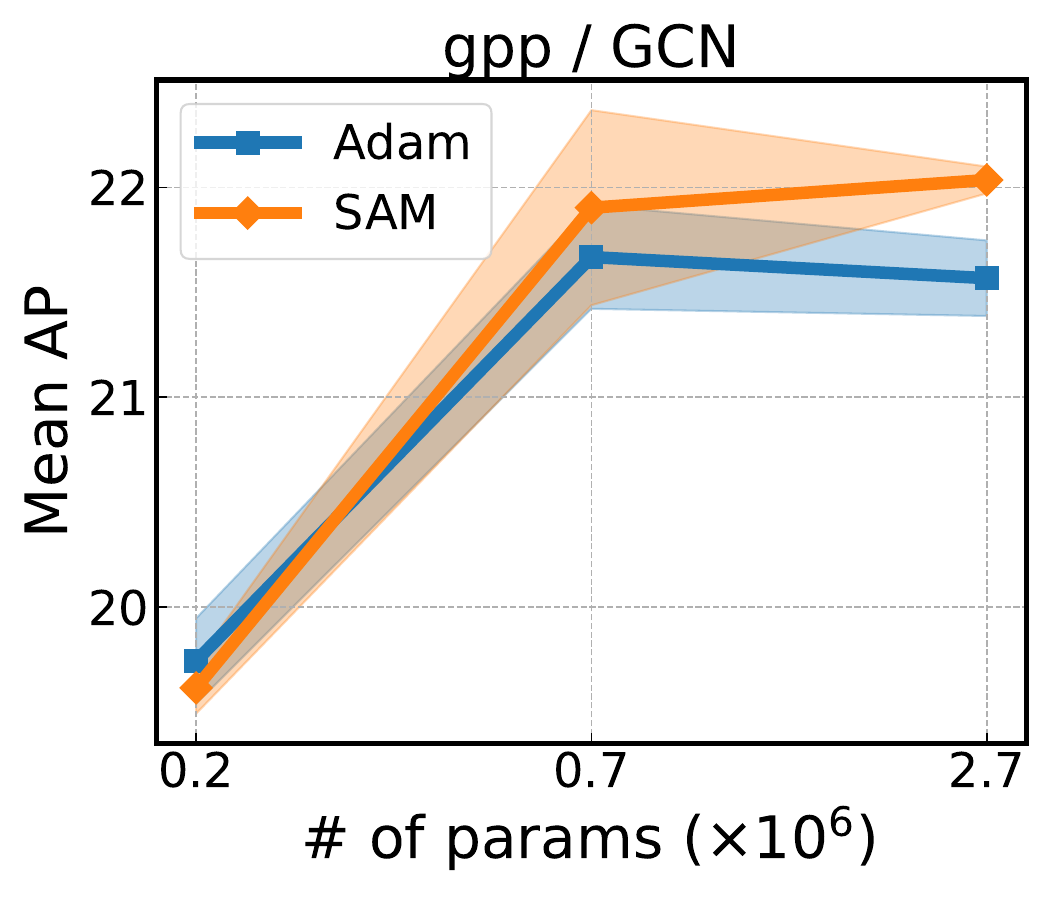}
  \includegraphics[width=0.24\linewidth]{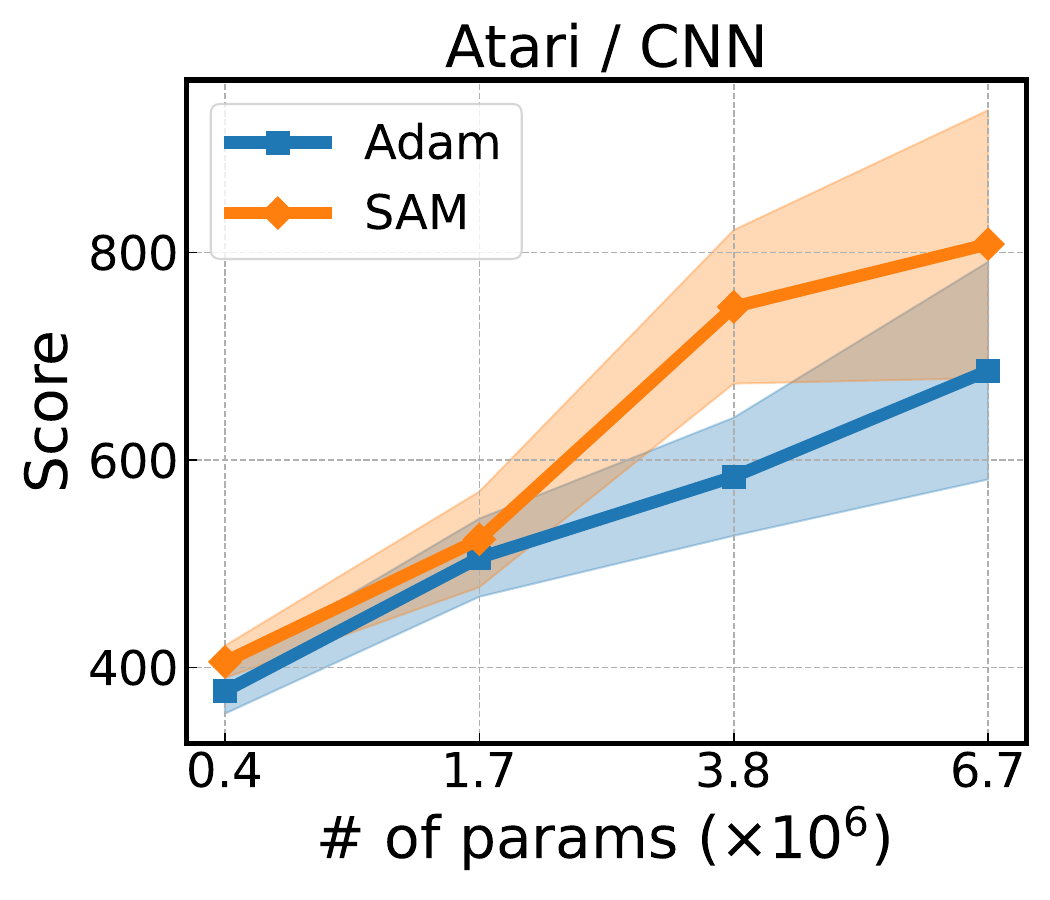}
  \caption{
    The absolute metrics for SAM and baseline optimizers.
    The generalization benefit of SAM tends to increase as the model becomes more overparameterized.
  }
  \label{fig:result_overparam_acc_abs_app}
\end{figure*}

\newpage

\section{Full results on optimal perturbation bound} \label{app:add-rho}
\vspace{-1em}

\begin{figure}[!ht]
    \centering
    \includegraphics[width=0.16\linewidth]{figures/mnist/rho/filter_75,_25__val_acc.pdf}
    \includegraphics[width=0.16\linewidth]{figures/mnist/rho/filter_150,_50__val_acc.pdf}
    \includegraphics[width=0.16\linewidth]{figures/mnist/rho/filter_300,_100__val_acc.pdf}
    \includegraphics[width=0.16\linewidth]{figures/mnist/rho/filter_1200,_400__val_acc.pdf}
    \includegraphics[width=0.16\linewidth]{figures/mnist/rho/filter_3000,_1000__val_acc.pdf}
    \includegraphics[width=0.16\linewidth]{figures/mnist/rho/best_rho.pdf}\\
    \vspace{-0.6em}
  \caption{
    Validation accuracy versus $\rho$ for MNIST and $3$-layer MLP.
  }
  \vspace{-1em}
  \label{fig:result_overparam_each_rho_mnist}
\end{figure}

\begin{figure}[!ht]
    \centering
    \includegraphics[width=0.16\linewidth]{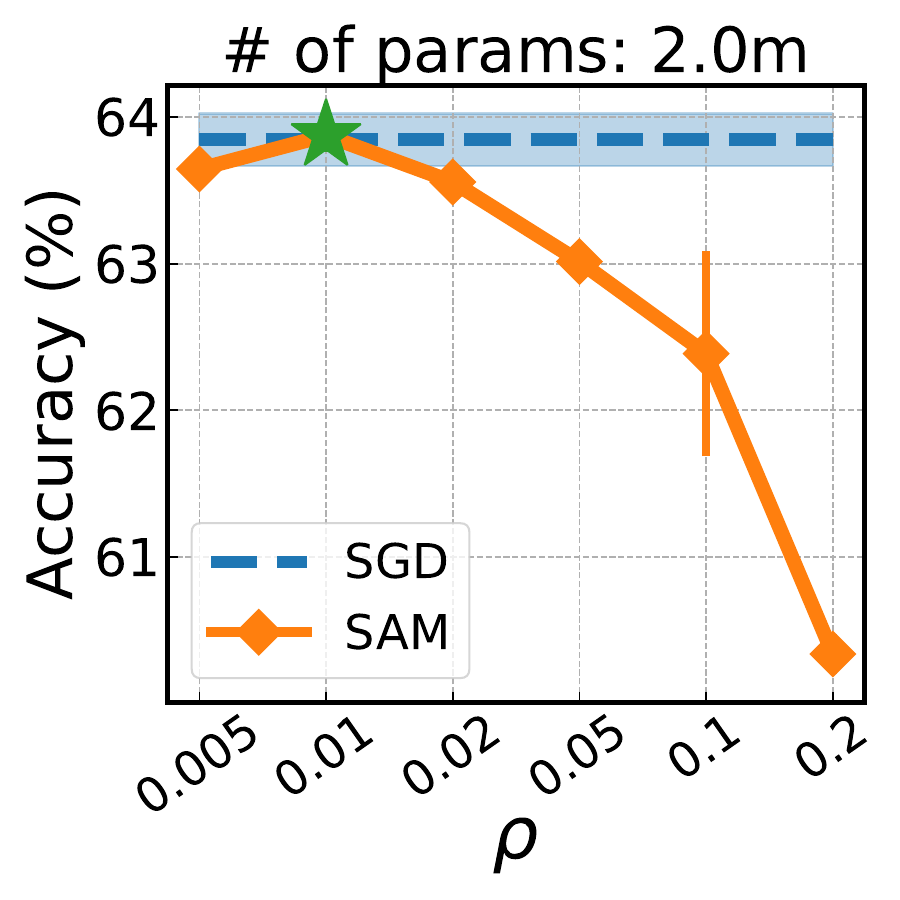}
    \includegraphics[width=0.16\linewidth]{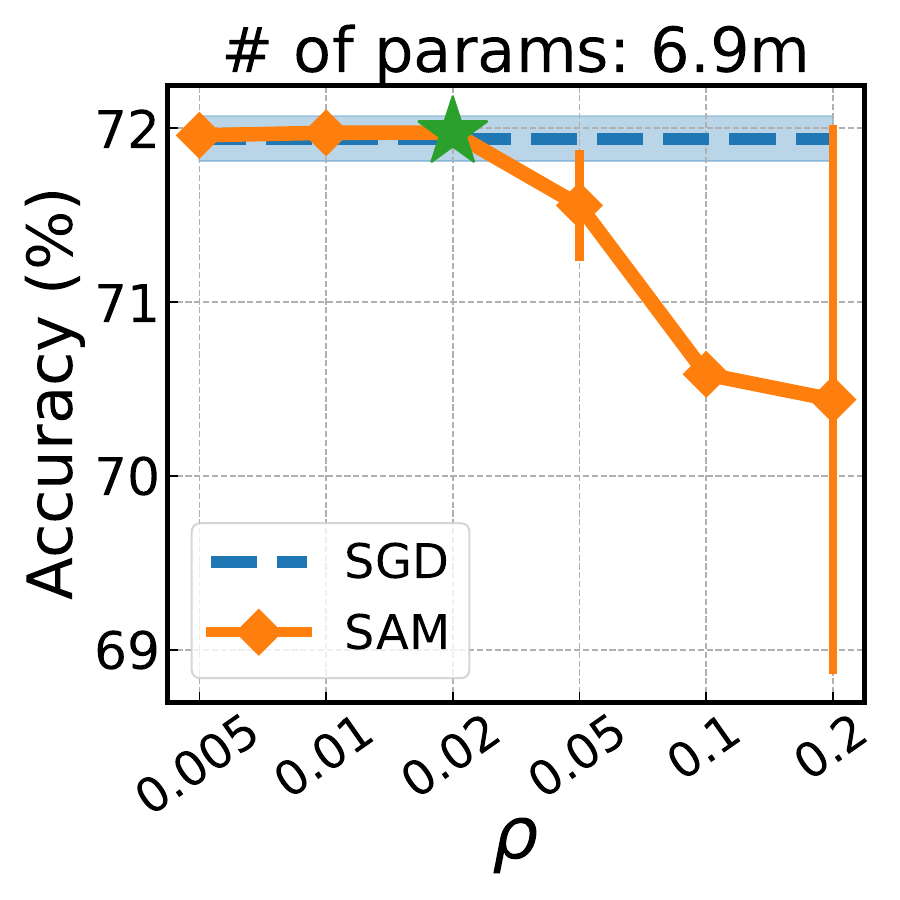}
    \includegraphics[width=0.16\linewidth]{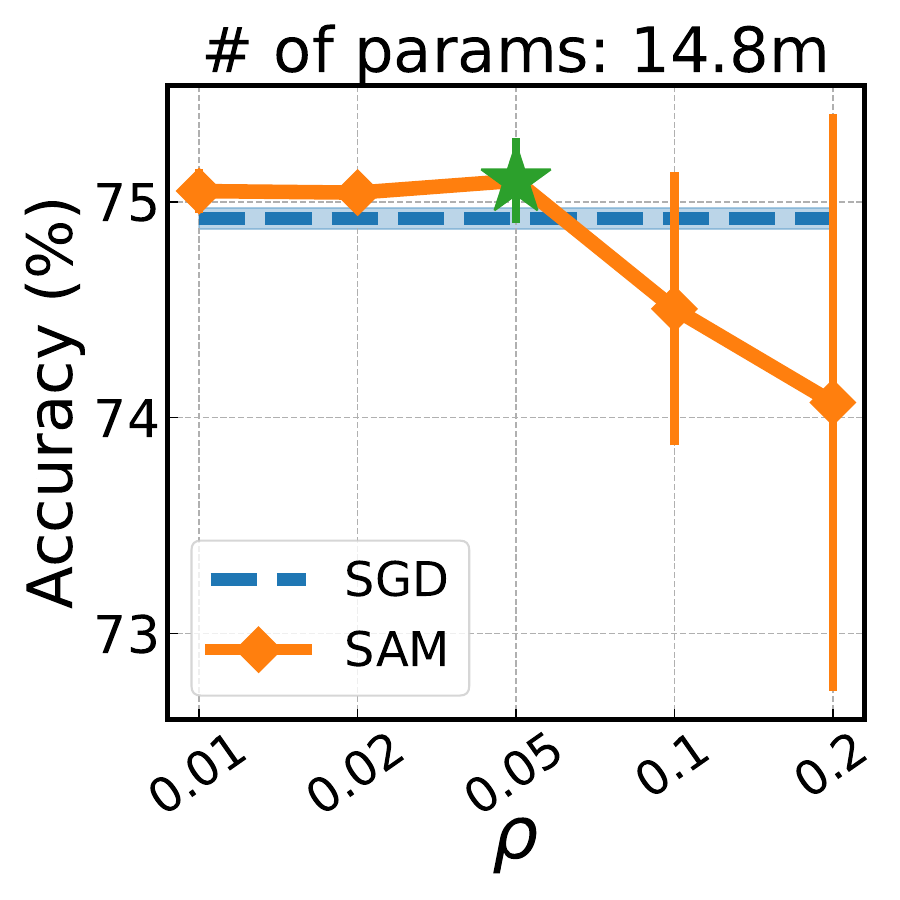}
    \includegraphics[width=0.16\linewidth]{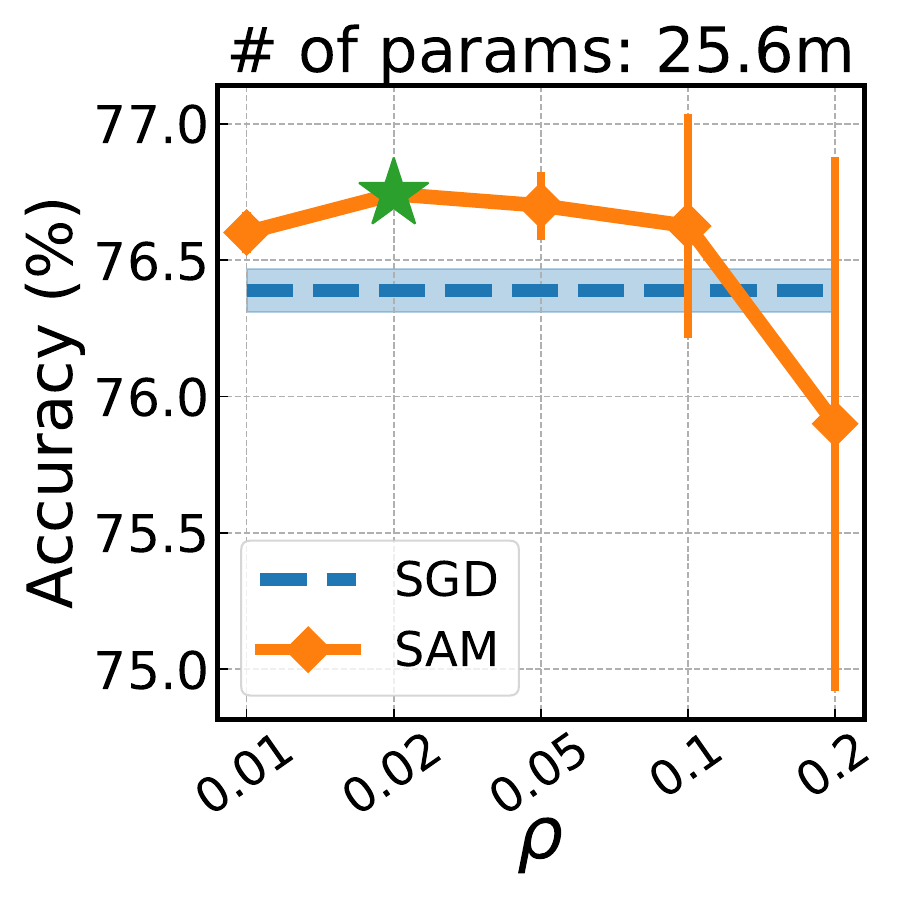}
    \includegraphics[width=0.16\linewidth]{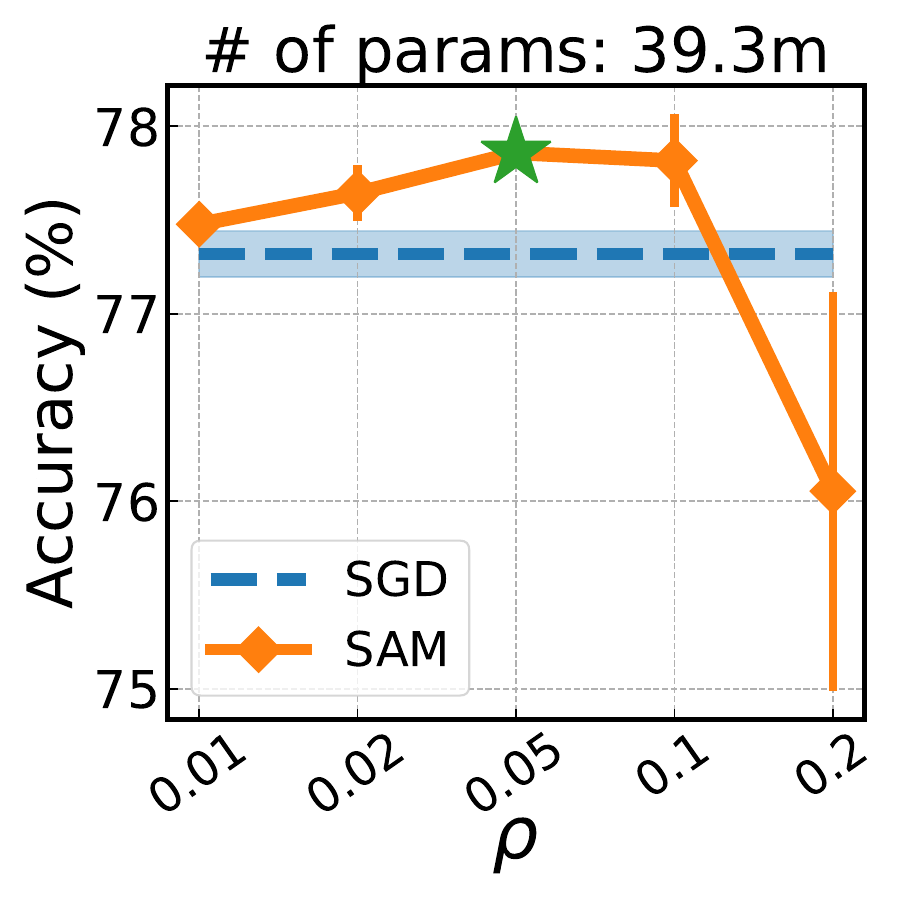}
    \includegraphics[width=0.16\linewidth]{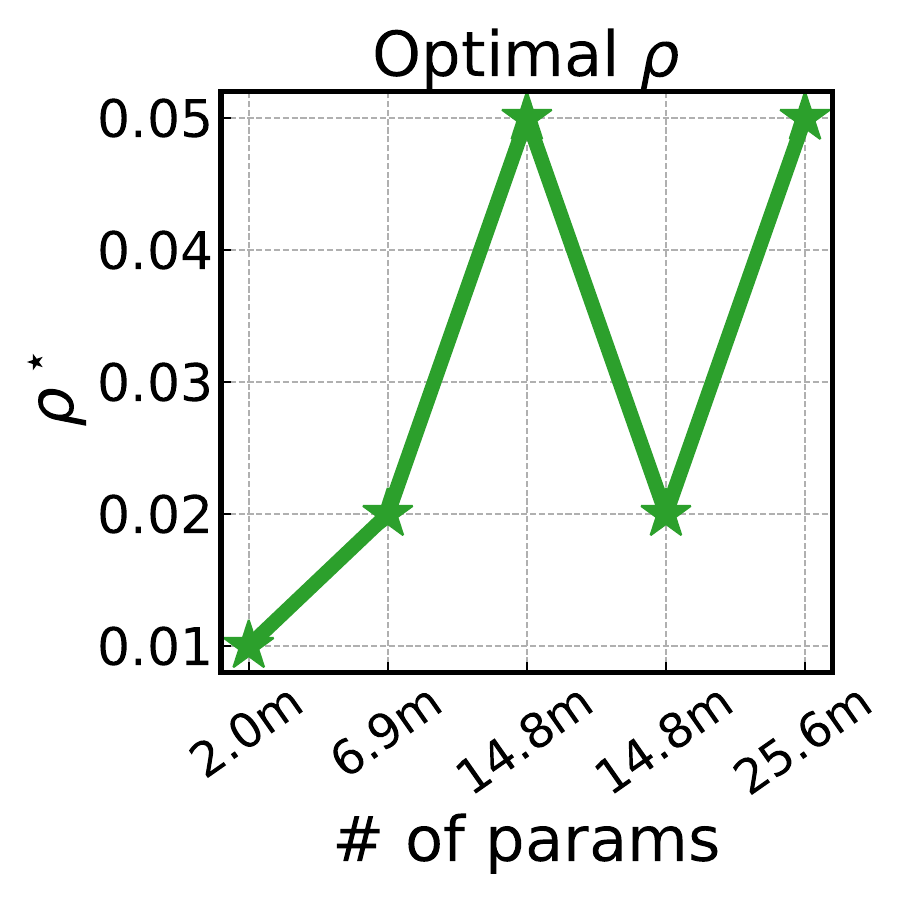}\\
    \vspace{-0.6em}
  \caption{
    Validation accuracy versus $\rho$ for ResNet-50 and ImageNet.
  }
  \vspace{-1em}
  \label{fig:result_overparam_each_rho_imagenet}
\end{figure}

\begin{figure}[!ht]
    \centering
    \includegraphics[width=0.16\linewidth]{figures/cifar/ResNet18/sam_rho/filter4_val_acc.pdf}
    \includegraphics[width=0.16\linewidth]{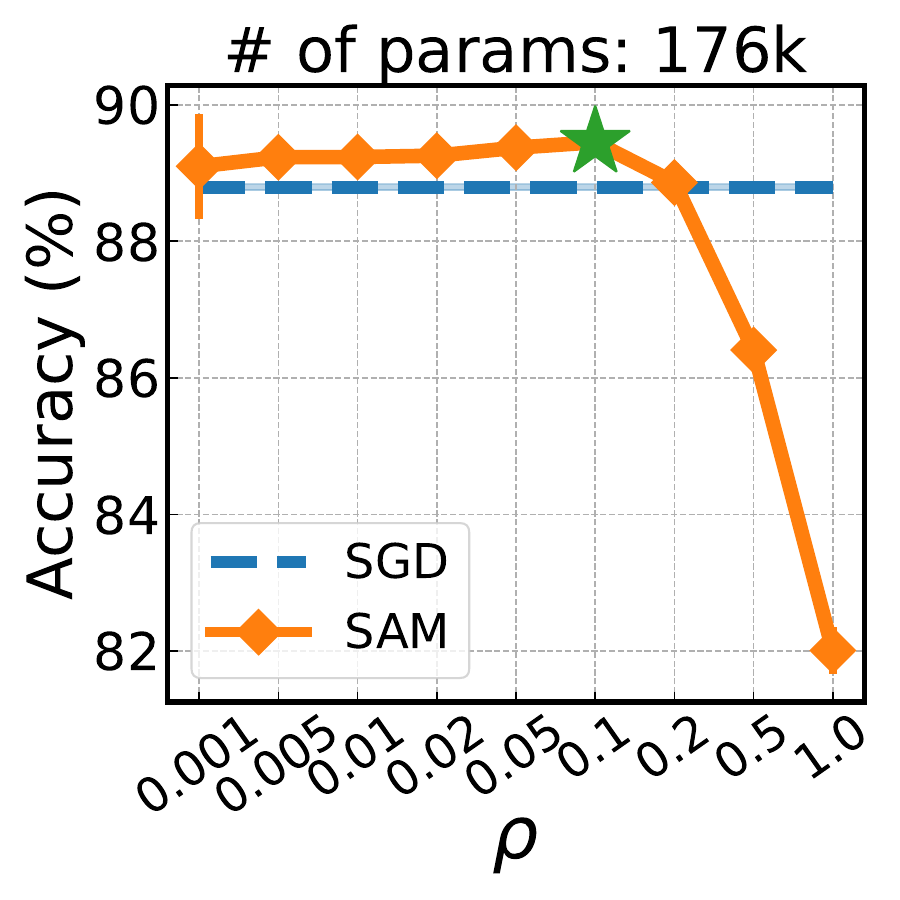}
    \includegraphics[width=0.16\linewidth]{figures/cifar/ResNet18/sam_rho/filter16_val_acc.pdf}
    \includegraphics[width=0.16\linewidth]{figures/cifar/ResNet18/sam_rho/filter32_val_acc.pdf} \\
    \vspace{-1em}
    \includegraphics[width=0.16\linewidth]{figures/cifar/ResNet18/sam_rho/filter64_val_acc.pdf}
    \includegraphics[width=0.16\linewidth]{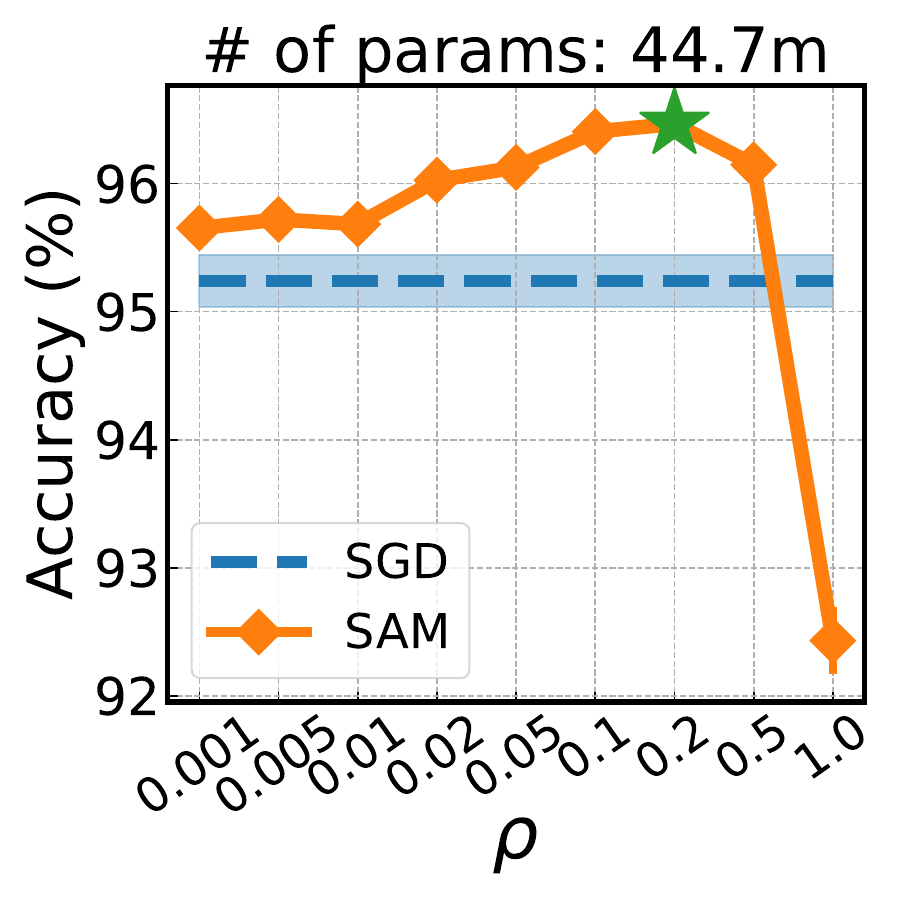}
    \includegraphics[width=0.16\linewidth]{figures/cifar/ResNet18/sam_rho/filter256_val_acc.pdf}
    \includegraphics[width=0.16\linewidth]{figures/cifar/ResNet18/sam_rho/best_rho.pdf}
  \caption{
    Validation accuracy versus $\rho$ for ResNet-18 and CIFAR-10.
  }
  \vspace{-1em}
  \label{fig:result_overparam_each_rho_cifar}
\end{figure}

\begin{figure}[!ht]
    \centering
    \includegraphics[width=0.16\linewidth]{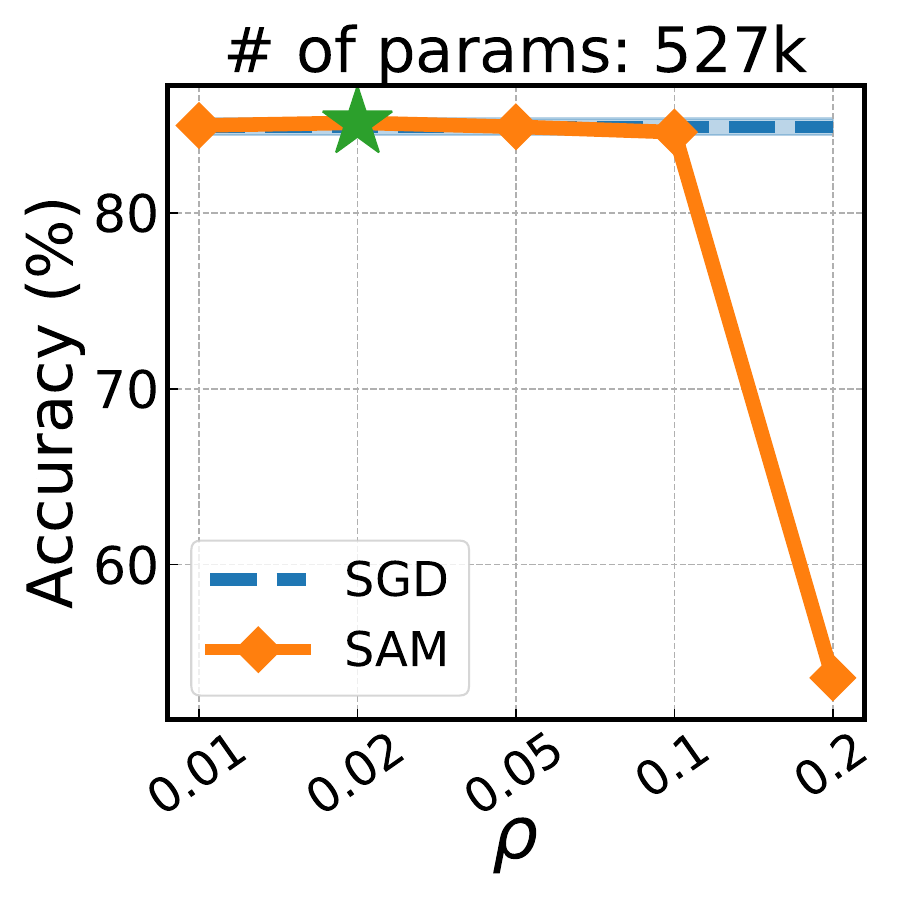}
    \includegraphics[width=0.16\linewidth]{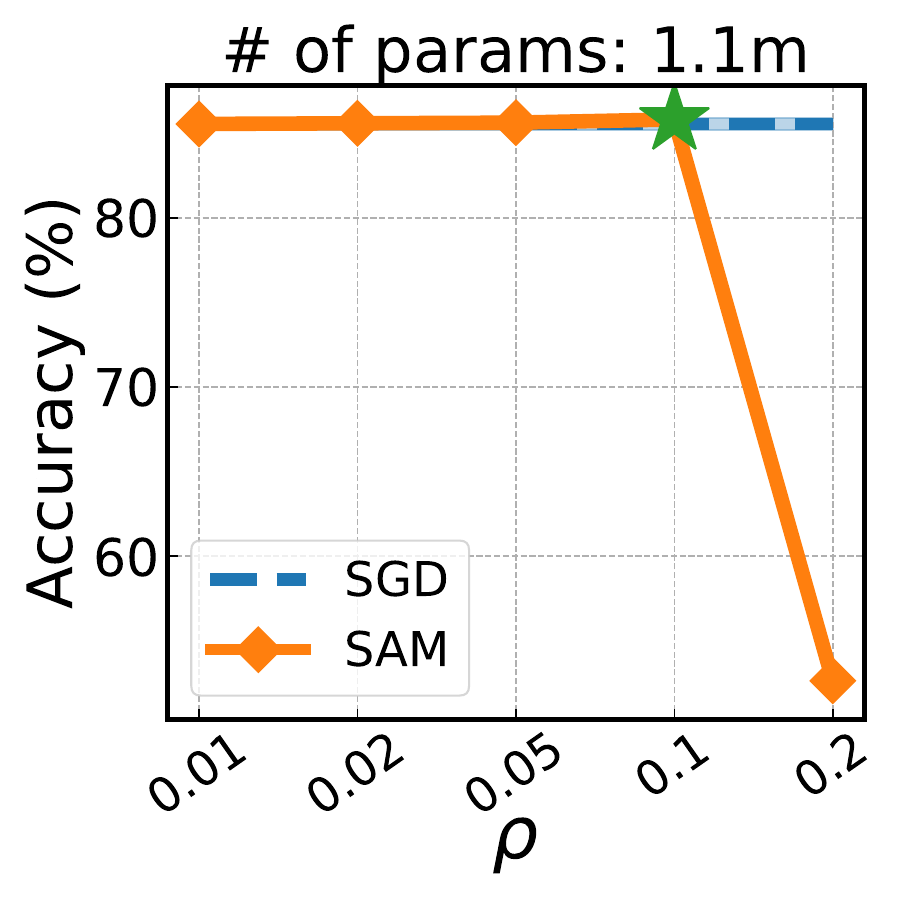}
    \includegraphics[width=0.16\linewidth]{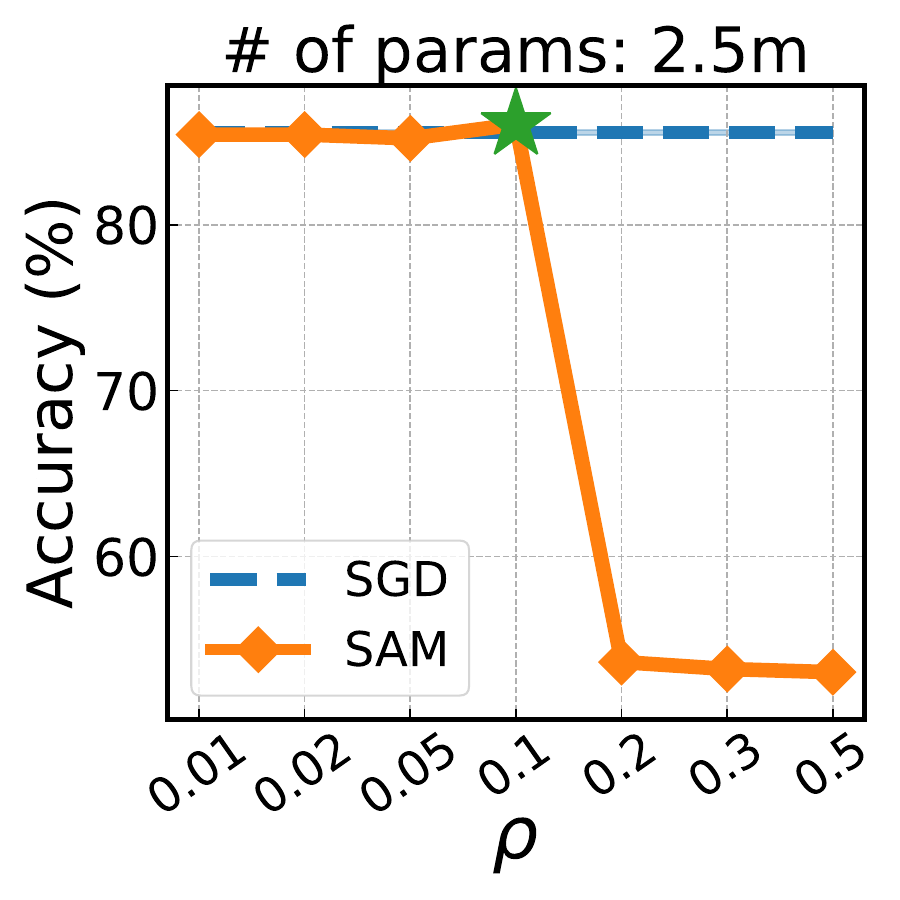} 
    \includegraphics[width=0.16\linewidth]{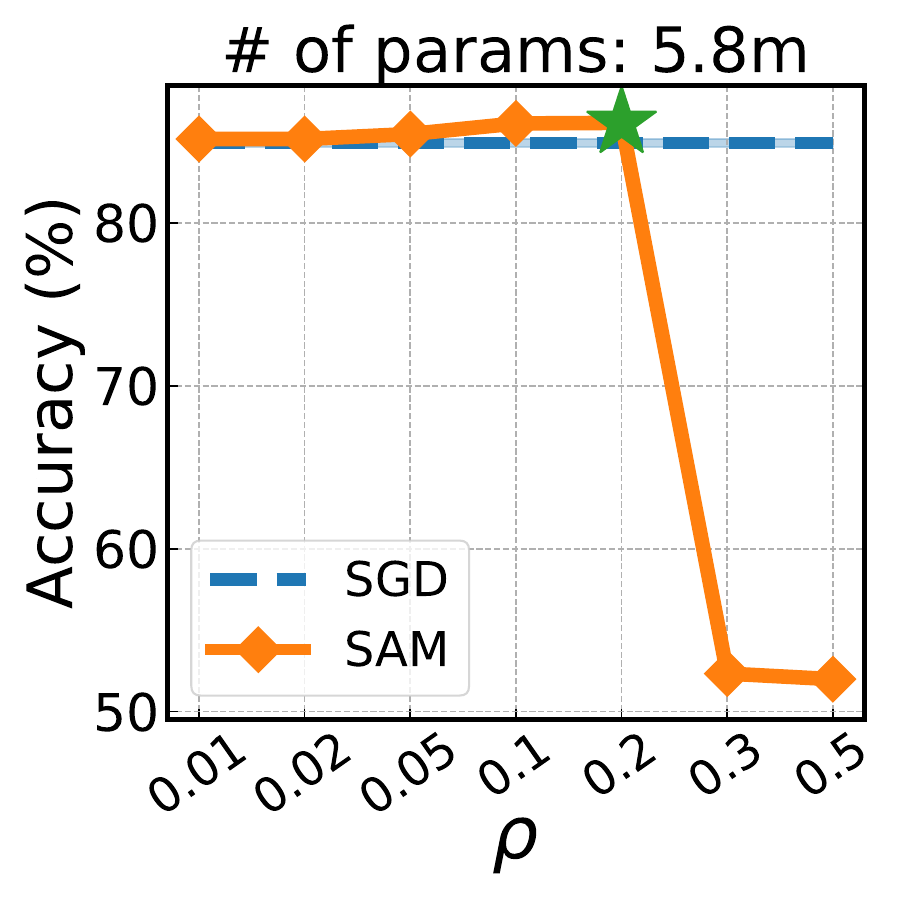} \\
    \vspace{-1em}
    \includegraphics[width=0.16\linewidth]{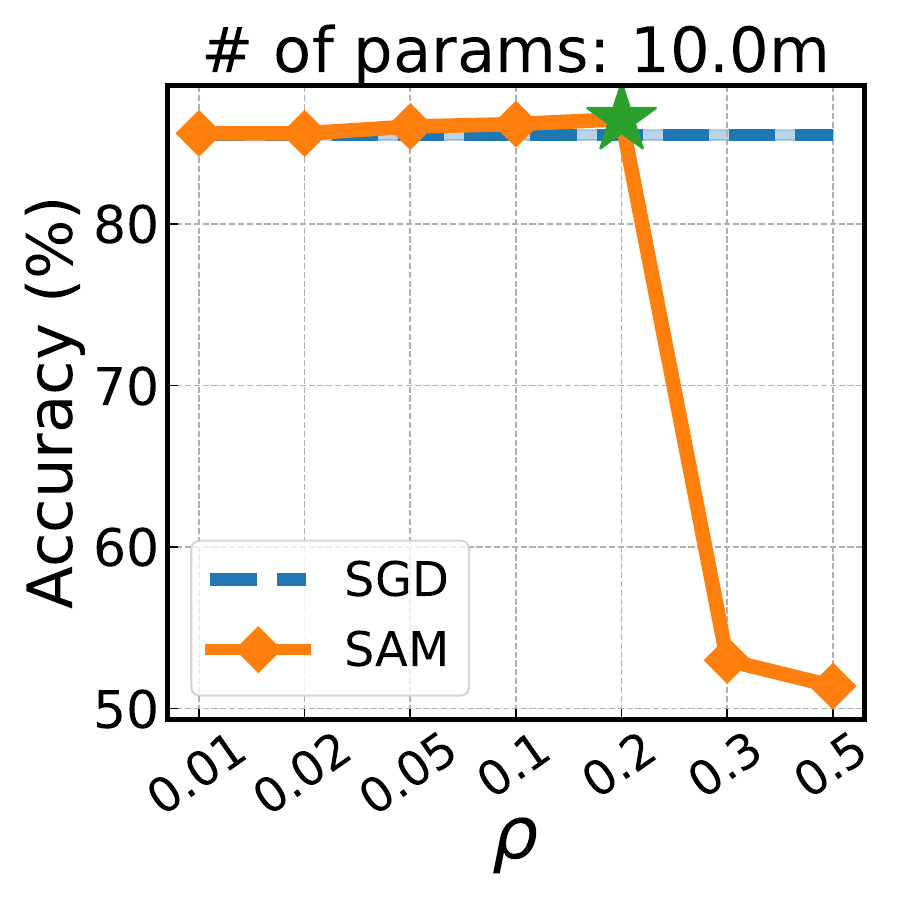}
    \includegraphics[width=0.16\linewidth]{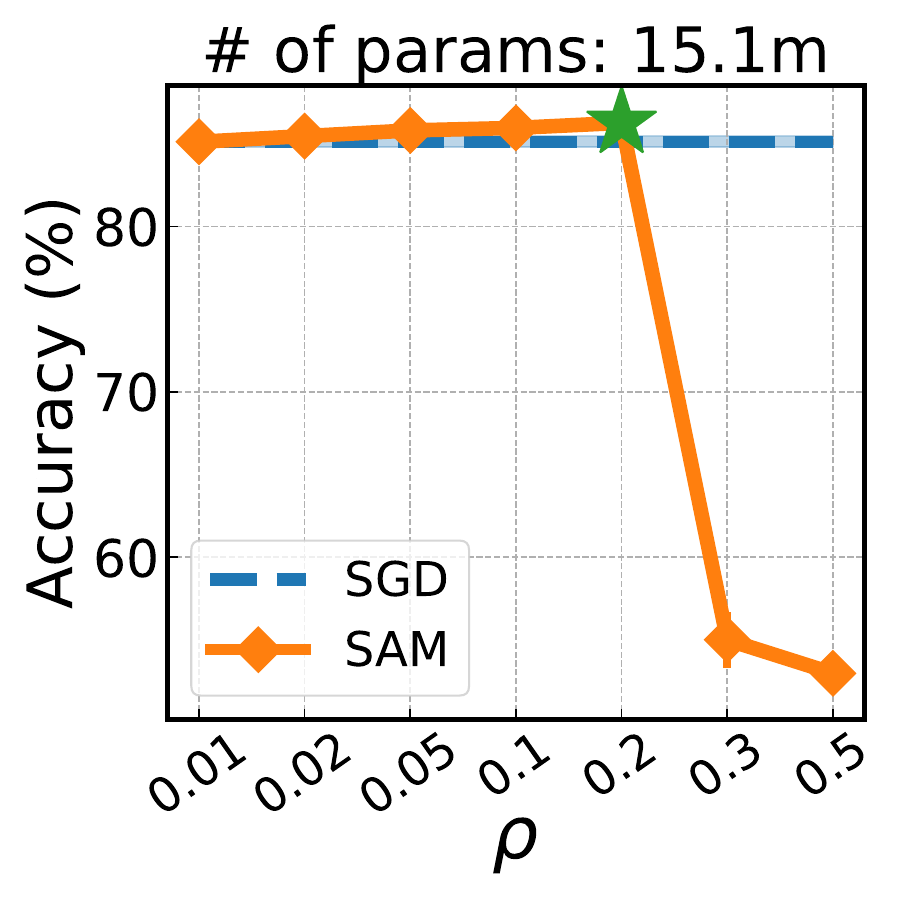}
    \includegraphics[width=0.16\linewidth]{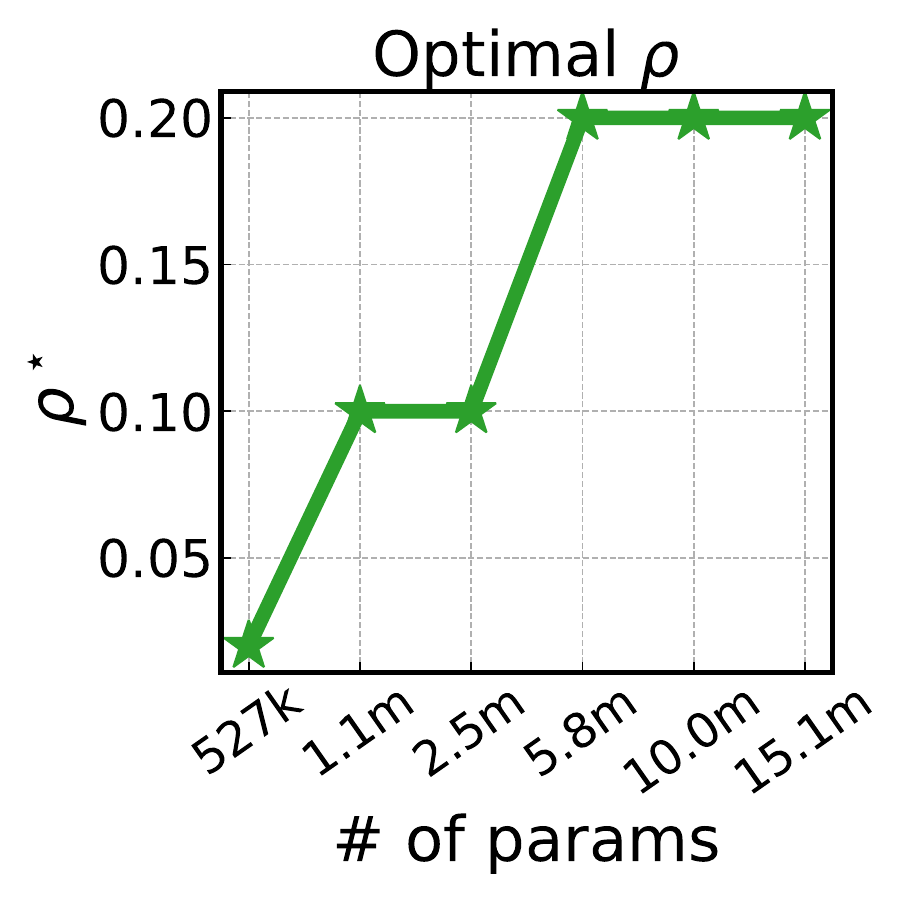}
  \caption{
    Validation accuracy versus $\rho$ for LSTM and SST2.
  }
  \label{fig:result_overparam_each_rho_sst}
\end{figure}

Extending from \cref{sec:understanding}, we plot the validation accuracy of SAM versus different values of $\rho$, along with their optimal value of $\rho$ for 3-layer-MLP/MNIST, ResNet-50/ImageNet, ResNet-18/CIFAR-10, and LSTM/SST2 in \cref{fig:result_overparam_each_rho_mnist,fig:result_overparam_each_rho_cifar,fig:result_overparam_each_rho_imagenet,fig:result_overparam_each_rho_sst}, respectively.
It is observed that $\rho^\star$ tends to increase as the model becomes more overparameterized;
on CIFAR-10 with ResNet18, the smallest model has $\rho^\star = 0.01$ while the largest three have $\rho^\star = 0.2$.

\newpage

\section{Additional results for Section \ref{sec:practical}} \label{app:add_practical}

\subsection{Label noise}

\begin{figure}[!th]
    \centering
  \begin{subfigure}{0.24\linewidth}
      \centering
      \includegraphics[width=\linewidth]{figures/labelnoise/Noise_rate.pdf}
      \vspace{-1.5em}
      \caption{Effect of label noise}
      \label{fig:resut_labelnoise_noiserate_app}
  \end{subfigure}
  \begin{subfigure}{0.24\linewidth}
      \centering
      \includegraphics[width=\linewidth]{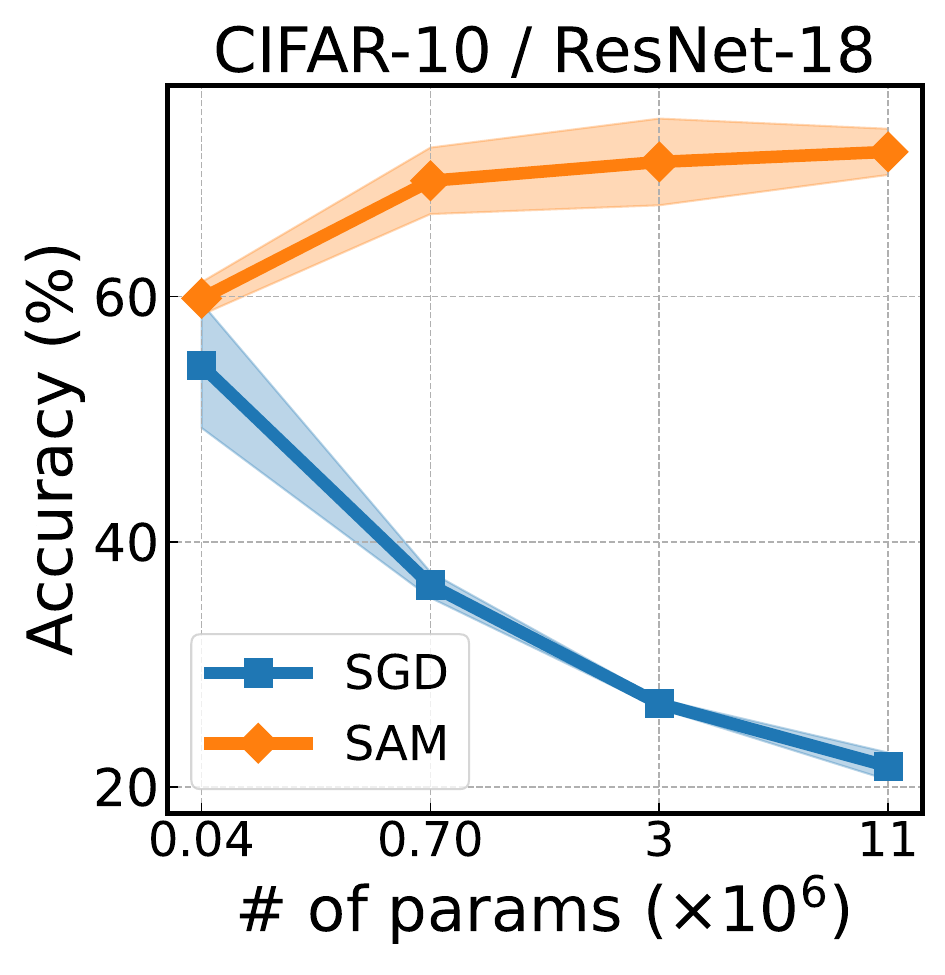}
      \vspace{-1.5em}
      \caption{Noise rate = $0.75$}
      \label{fig:resut_labelnoise_fixednoise_0.75}
  \end{subfigure}
  \begin{subfigure}{0.24\linewidth}
      \centering
      \includegraphics[width=\linewidth]{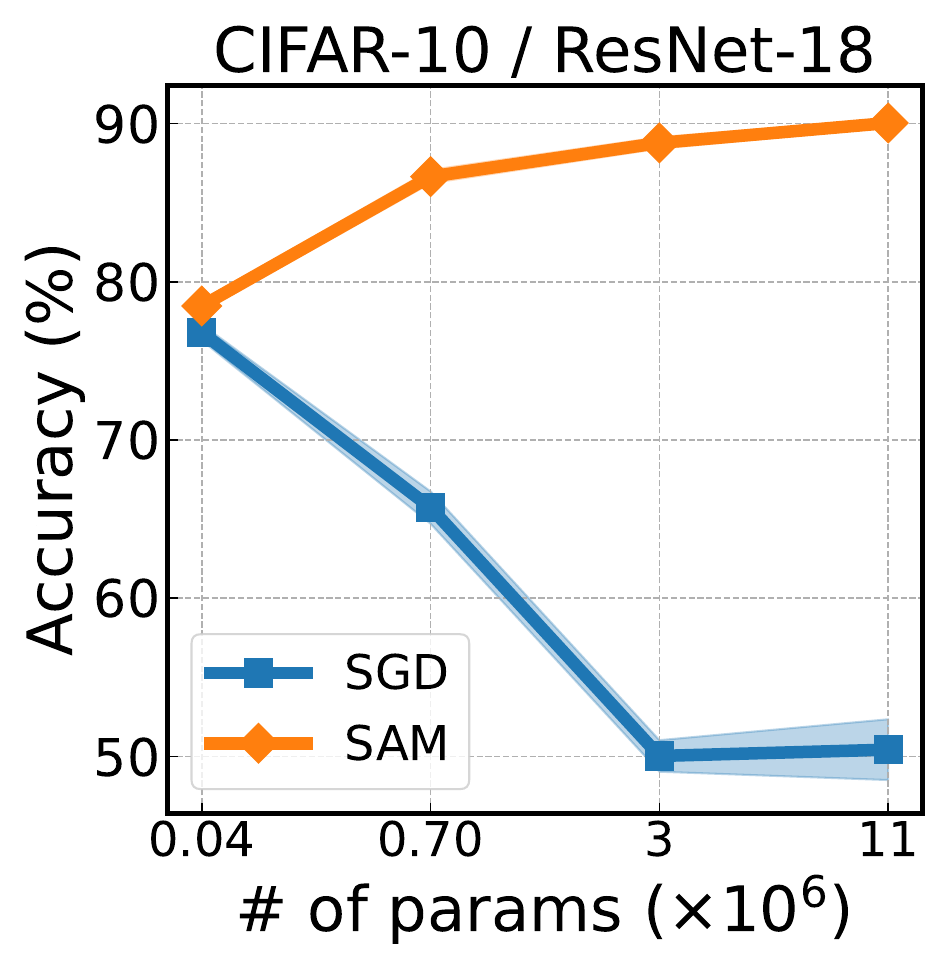}
      \vspace{-1.5em}
      \caption{Noise rate = $0.5$}
      \label{fig:resut_labelnoise_fixednoise_0.5}
  \end{subfigure}
  \begin{subfigure}{0.24\linewidth}
      \centering
      \includegraphics[width=\linewidth]{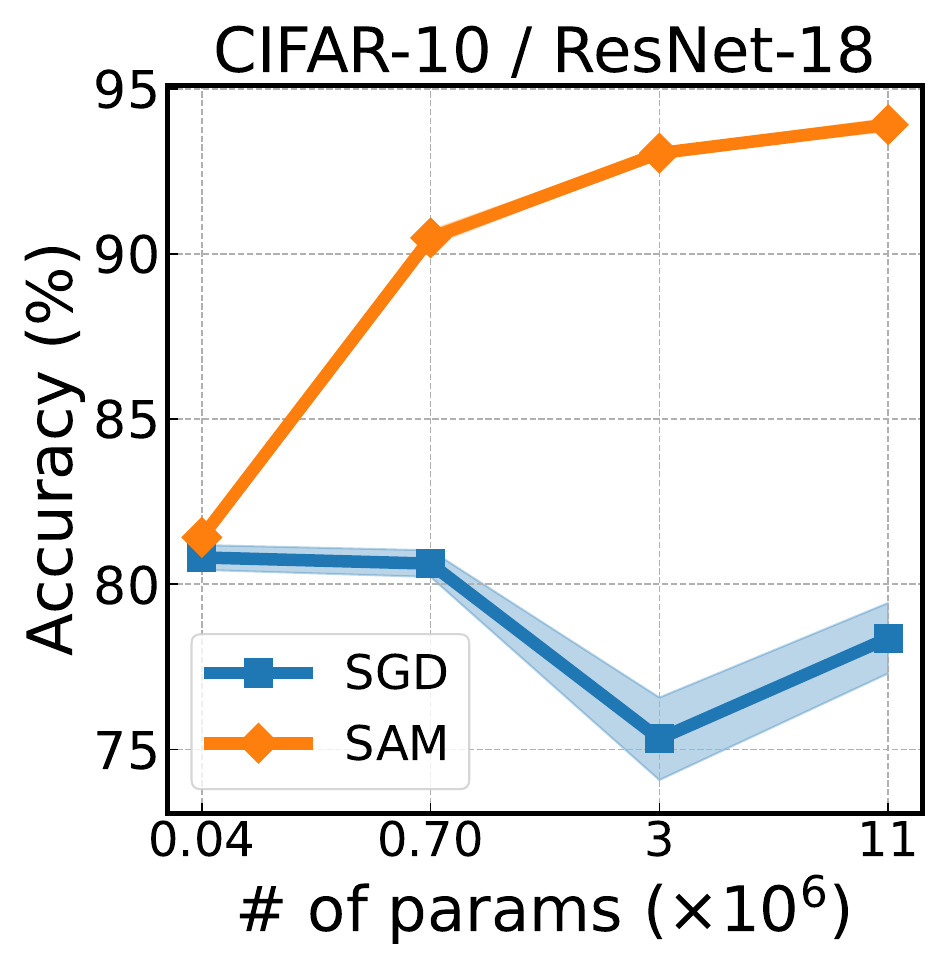}
      \vspace{-1.5em}
      \caption{Noise rate = $0.25$}
      \label{fig:resut_labelnoise_fixednoise_0.25}
  \end{subfigure}
  \caption{
    Effect of overparameterization on SAM under label noise for CIFAR-10 and ResNet-18.
    (a) SAM benefits a lot more from overparameterization than SGD; it is more pronounced with high noise level.
    (b-d) Under label noise, SGD tends to overfit as with more parameters unlike SAM.
  }
  \label{fig:result_labelnoise}
\end{figure}

More results on the effect of overparameterization on SAM under label noise are presented in \cref{fig:result_labelnoise}.
Overall, we find SAM benefits from overparameterization significantly more than SGD in the presence of label noise.
Precisely, the accuracy improvement made by SAM keeps on increasing as the model becomes more overparameterized, and this trend is more pronounced with higher noise levels;
e.g., it rises from $5\%$ to nearly $50\%$ at the highest noise rate.

\subsection{Sparse overparameterization}

\begin{figure}[!ht]
    \begin{subfigure}{0.48\linewidth}
      \centering
      \includegraphics[width=0.45\linewidth]{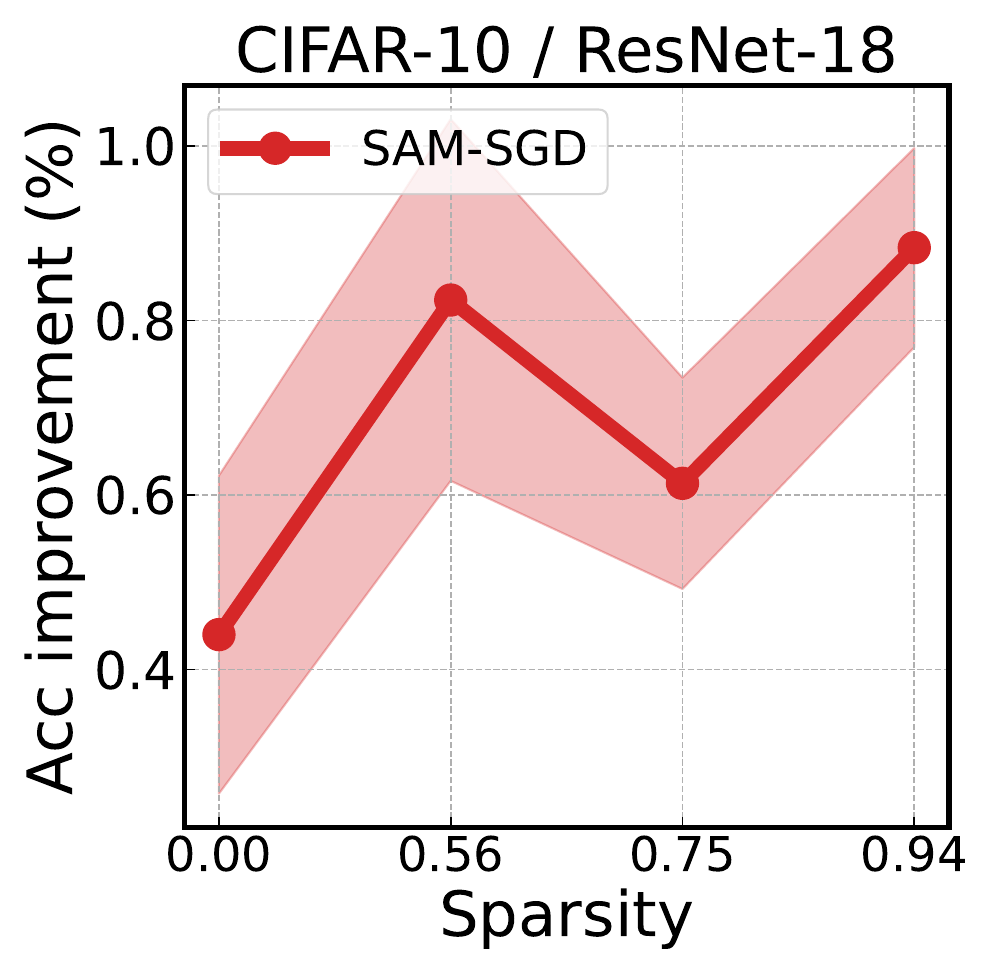}
      \includegraphics[width=0.45\linewidth]{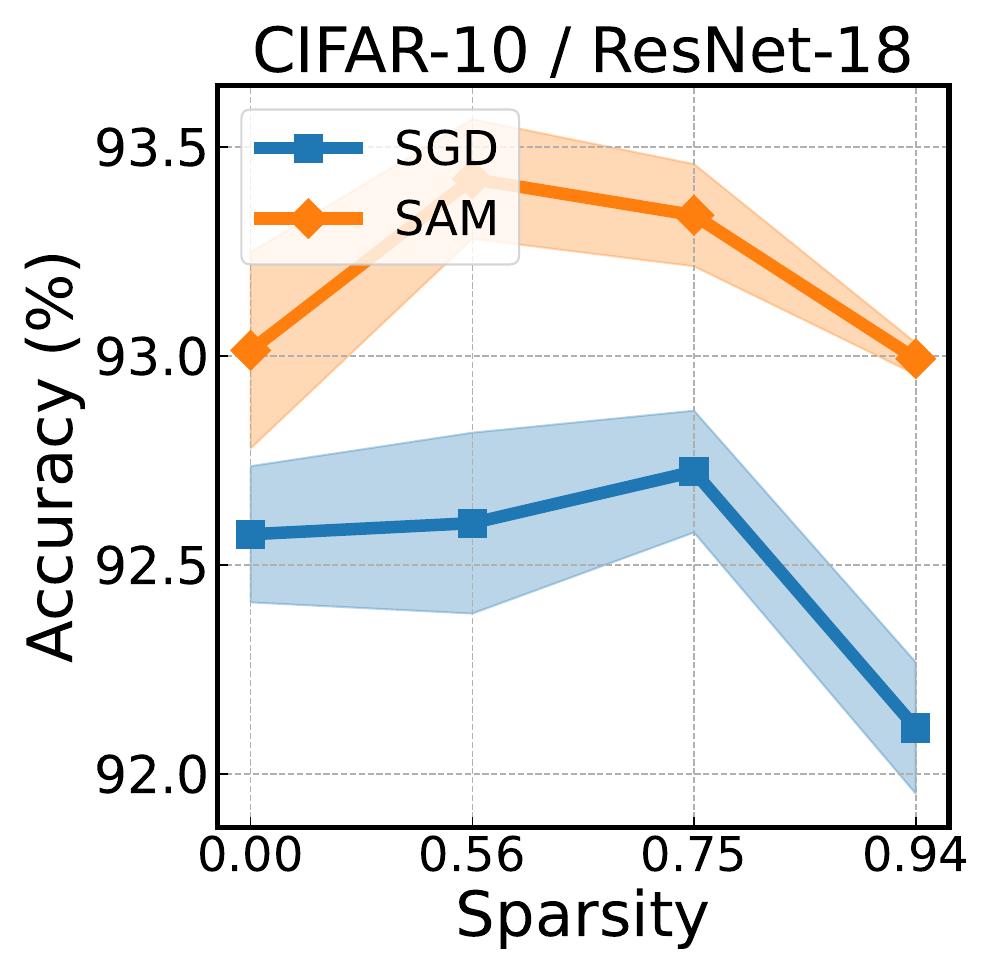}
      \centering
      \includegraphics[width=0.45\linewidth]{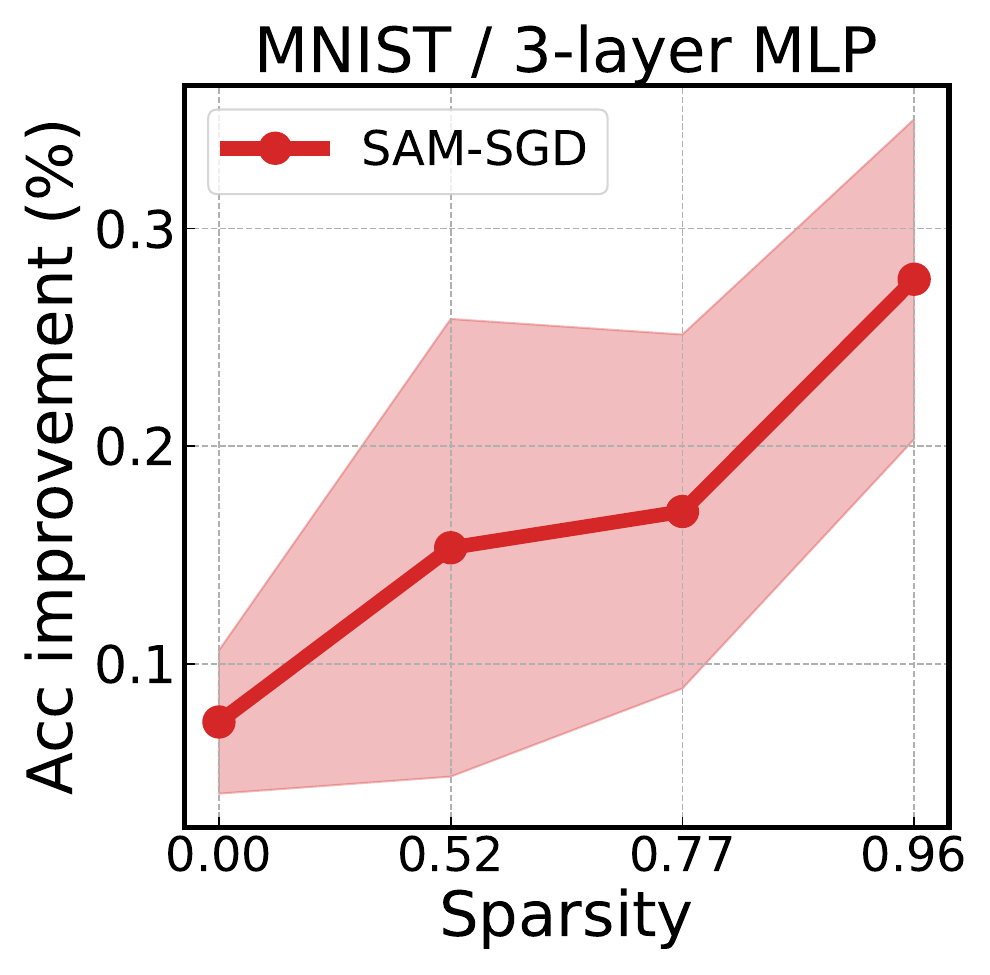}
      \includegraphics[width=0.45\linewidth]{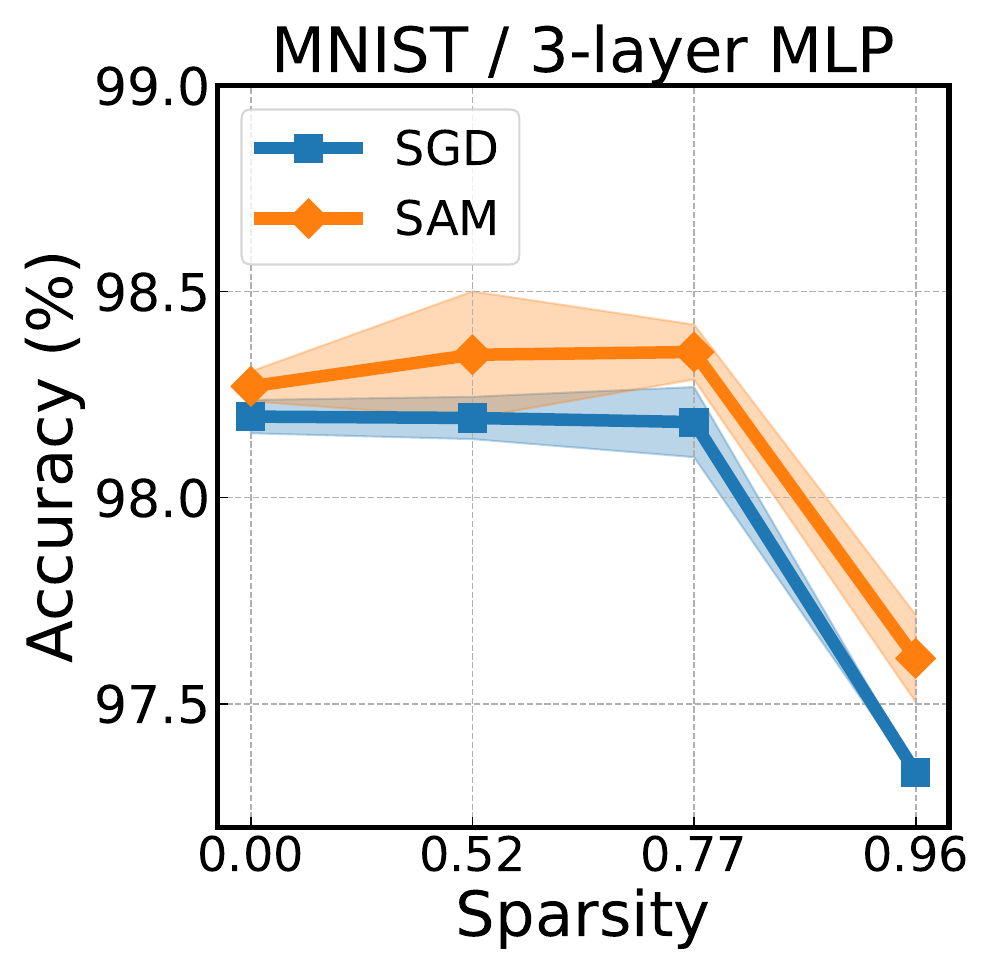}
      \caption{Random pruning}
      \label{fig:largesparse_random}
  \end{subfigure}
  \hspace*{\fill}
  \begin{subfigure}{0.48\linewidth}
      \centering
      \includegraphics[width=0.45\linewidth]{figures/cifar/ResNet18/largesparse_diff/largesparse_snip_dense16.pdf}
      \includegraphics[width=0.45\linewidth]{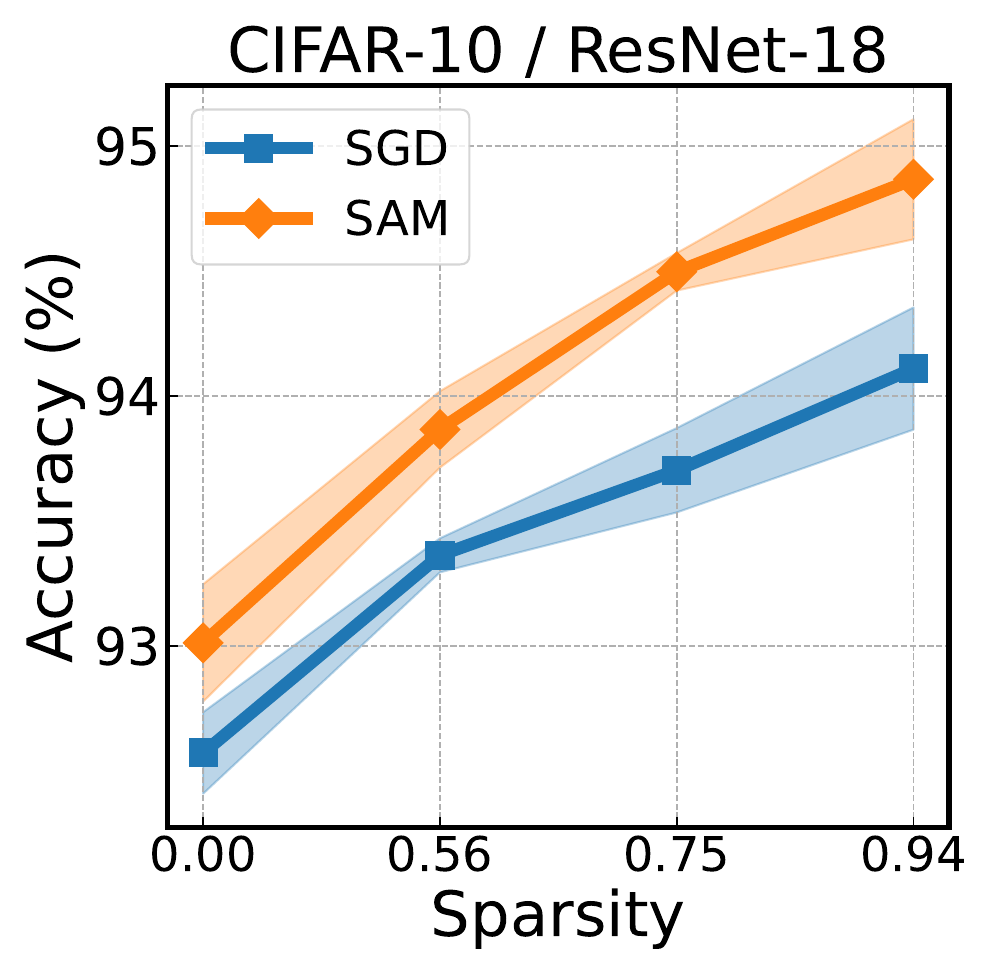}
      \includegraphics[width=0.45\linewidth]{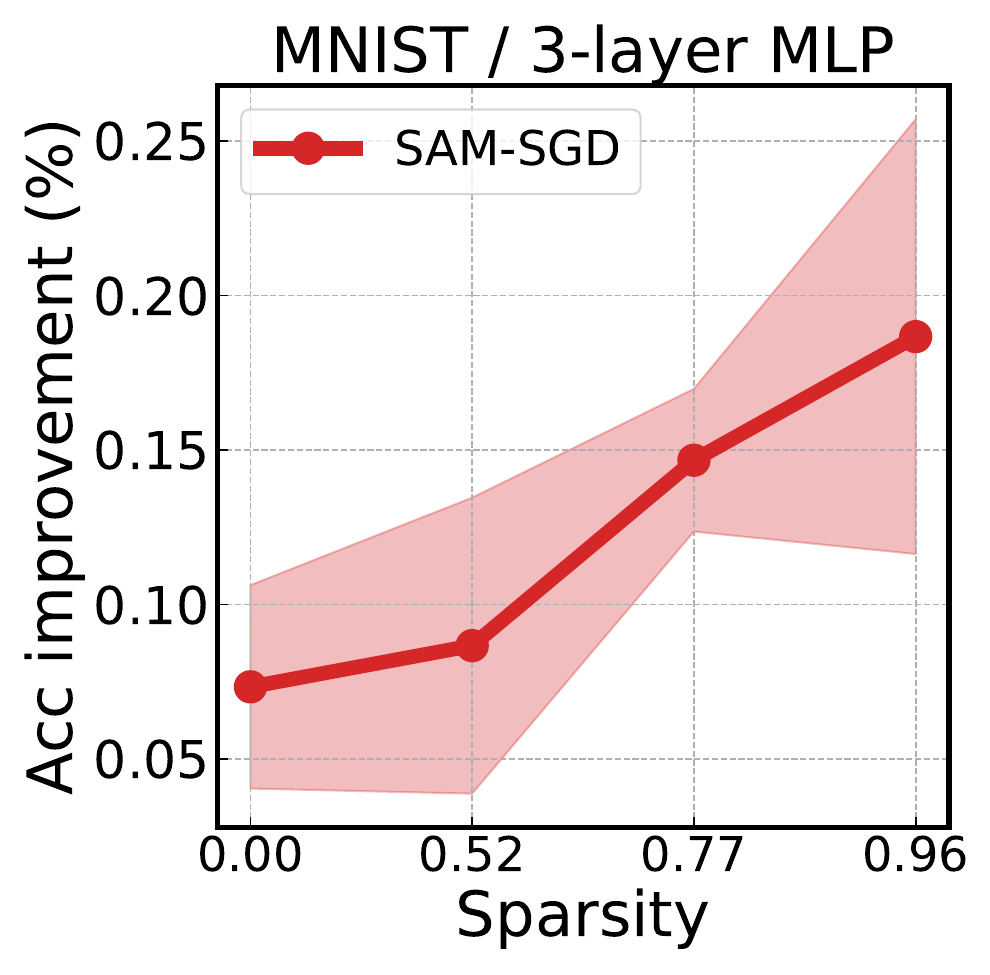}
      \includegraphics[width=0.45\linewidth]{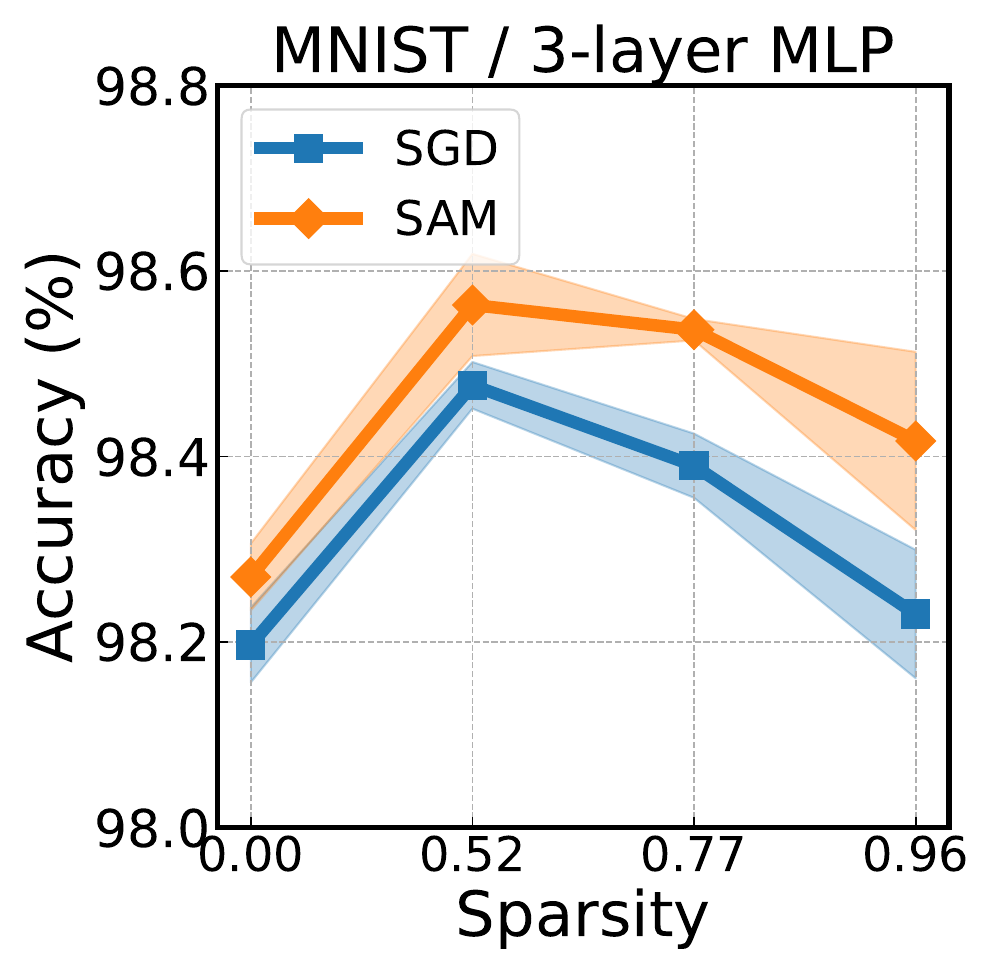}
      \caption{SNIP}
      \label{fig:result_largesparse_snip}
  \end{subfigure}
    \caption{
    Effect of sparsification on SAM for CIFAR-10/ResNet-18 and MNIST/3-layer MLP.
    Here, we set ResNet-18 and 3-layer MLP to have $701$k and $61$k parameters, respectively.
    The improvement tends to increase in large sparse models compared to their small dense counterparts.
  }
  \label{fig:result_largesparse}
\end{figure}

\begin{figure}[!th]
      \centering
      \begin{subfigure}{0.49\linewidth}
          \includegraphics[width=0.45\linewidth]{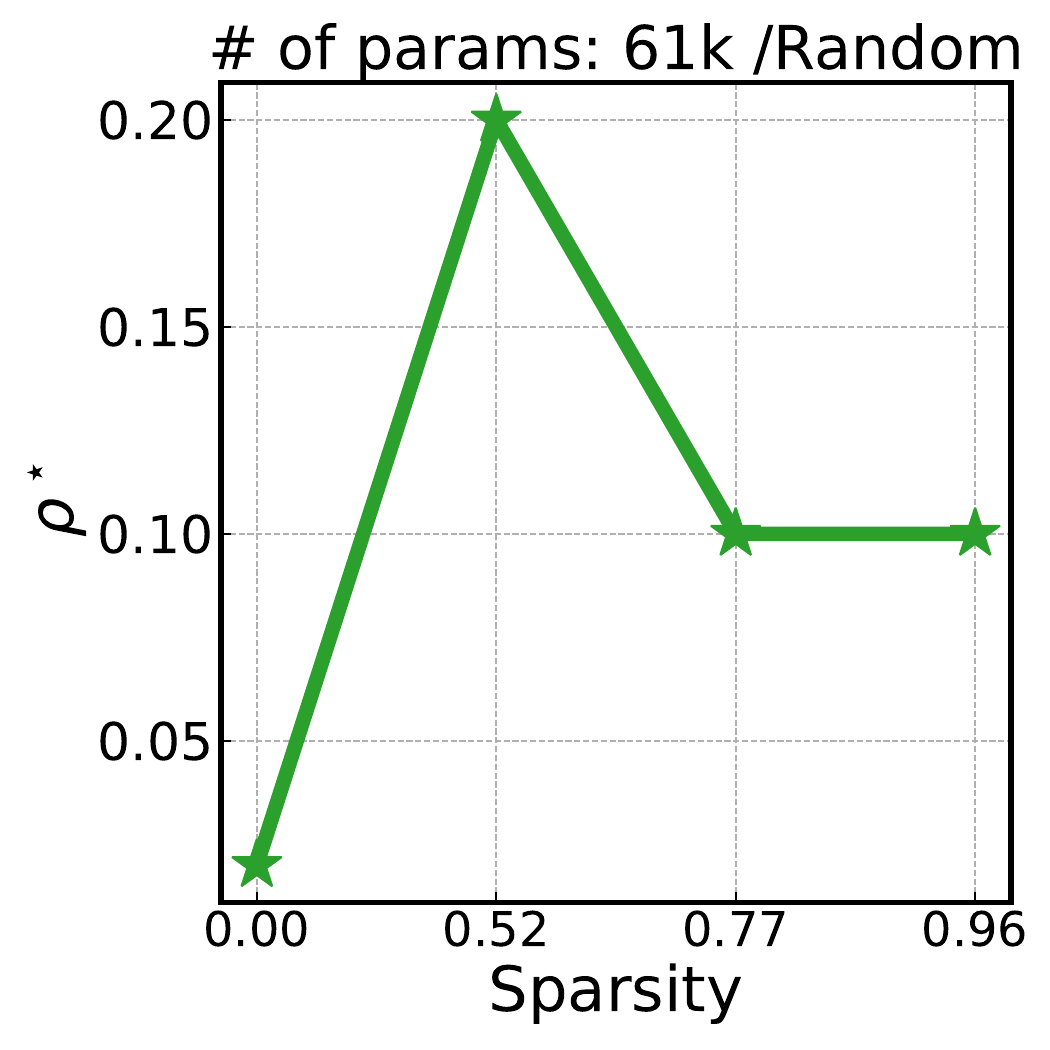}
          \includegraphics[width=0.45\linewidth]{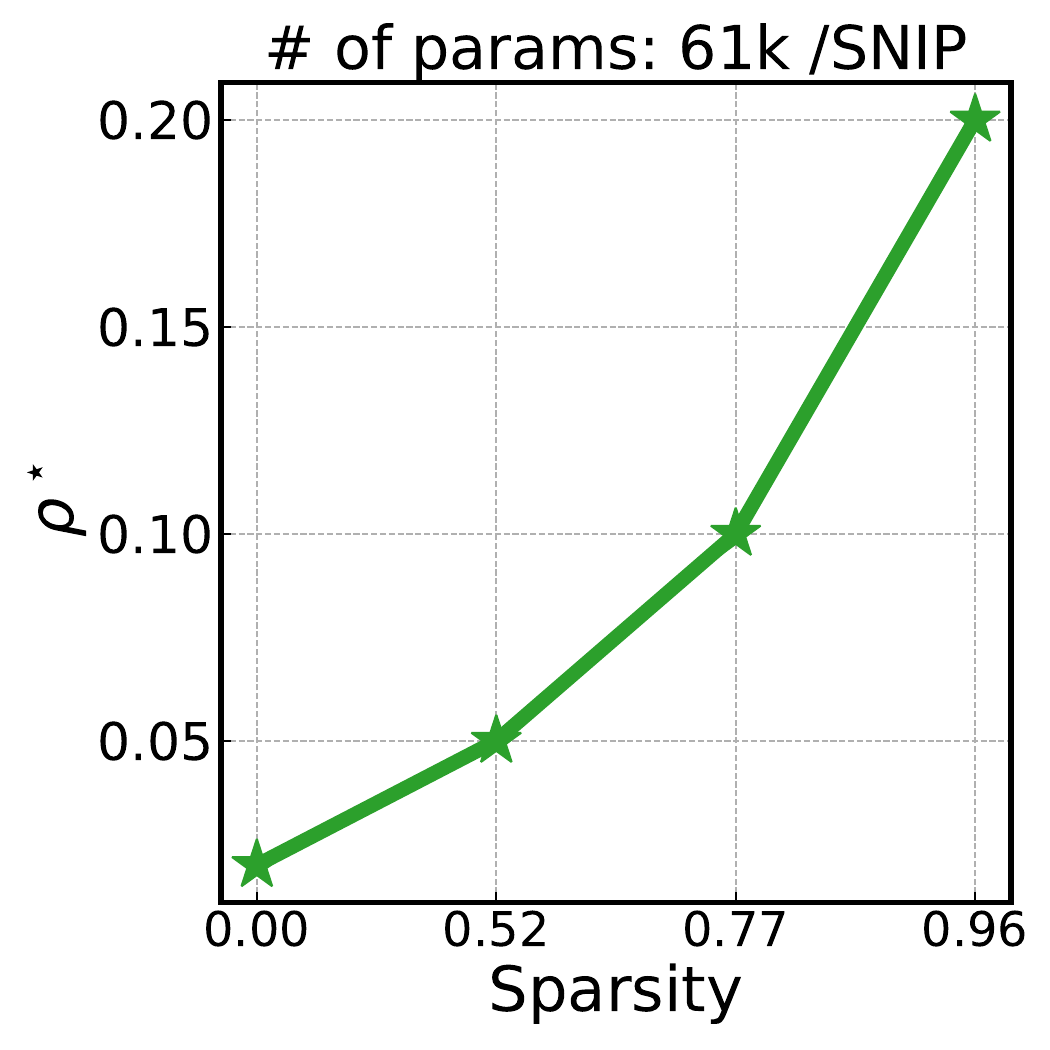}
          \caption{MNIST/MLP}
      \end{subfigure}    
      \begin{subfigure}{0.49\linewidth}
          \includegraphics[width=0.45\linewidth]{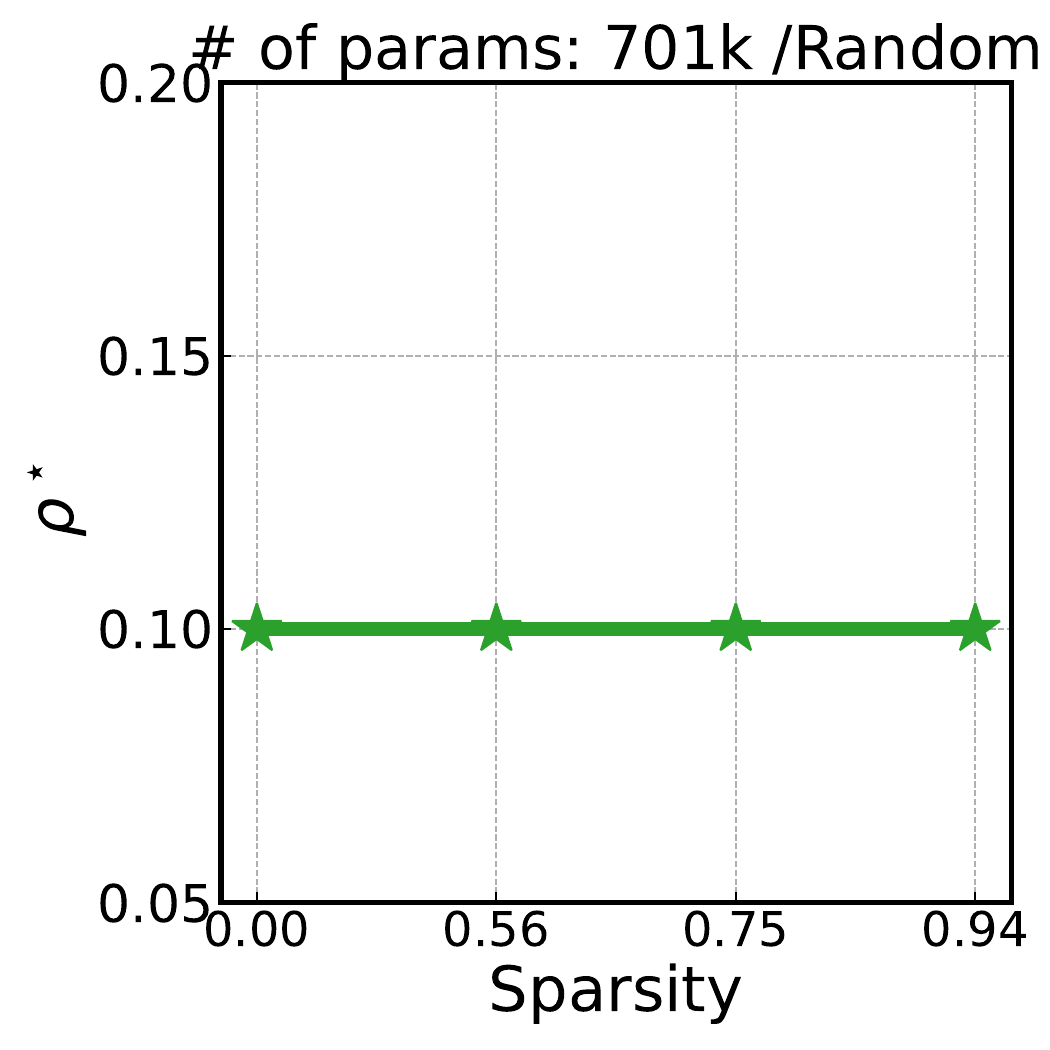}
          \includegraphics[width=0.45\linewidth]{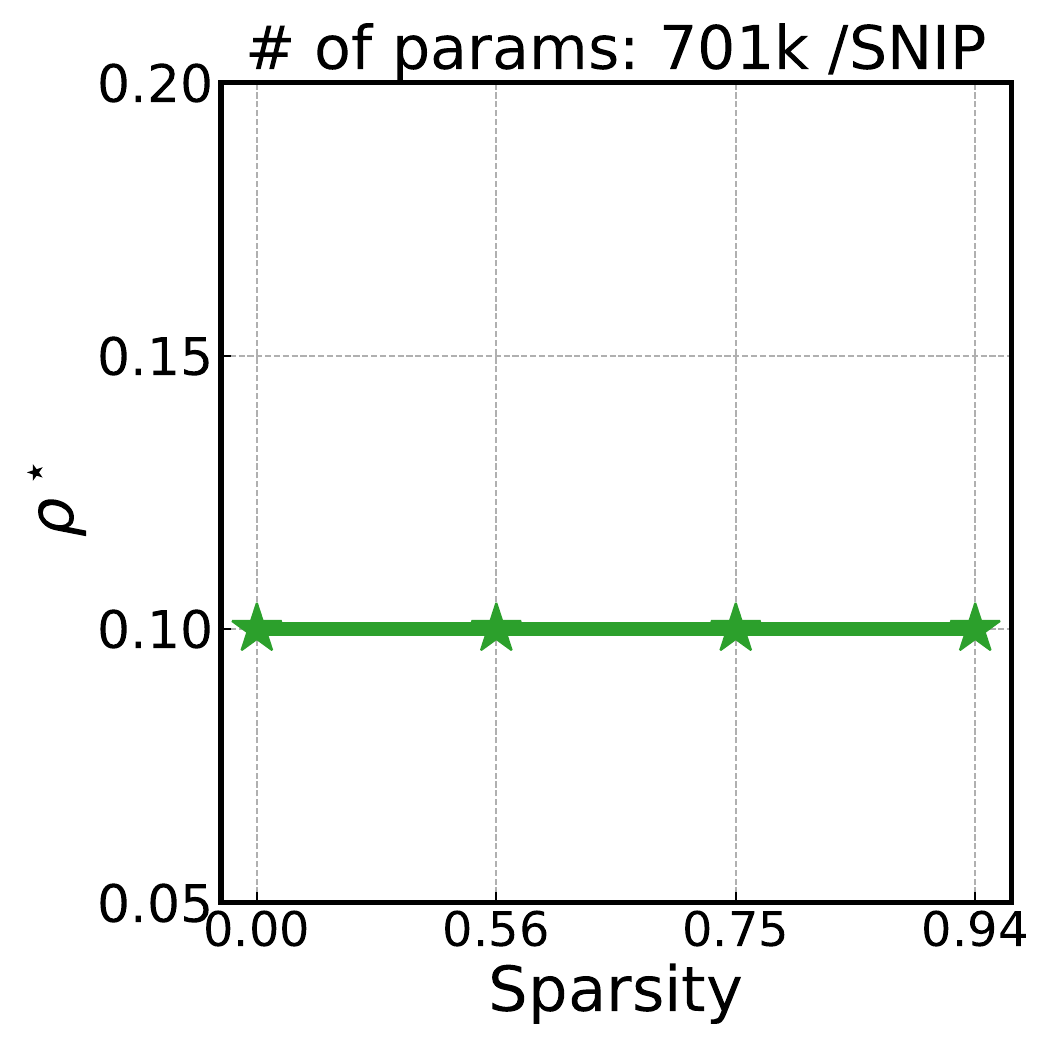}
          \caption{CIFAR-10/ResNet-18}
      \end{subfigure}      
  \caption{
    Effect of sparsification on $\rho^\star$.
    $\rho^\star$ can sometimes be different across different sparsity patterns despite having a similar number of parameters.
  }
  \label{fig:result_largesparse_rho}
\end{figure}

Additional results on the effect of sparsification on the generalization benefit of SAM are plotted in \cref{fig:result_largesparse}.
Here, we try two sparsification methods that do not require pretaining, random pruning, and SNIP \citep{leesnip}.
For both methods, we note that the generalization improvement by SAM tends to increase as the model becomes more sparsely overparameterized.

We also plot the effect of sparsification on $\rho^\star$ in \cref{fig:result_largesparse_rho}.
We find that $\rho^\star$ is sometimes different between small dense and large sparse models despite having a similar number of parameters;
for the MLP of $61$k parameters on MNIST, $\rho^\star$ changes over different sparsity levels and sparsification methods, but this does not generalize to the CIFAR-10 and ResNet-18.
This indicates that it is not just the parameter count that affects the behavior of SAM, but some other factors such as the pattern of parameterization also have an influence on how SAM shapes training.

\subsection{Regularization}

\begin{figure}[!th]
  \centering
  \begin{subfigure}{0.33\linewidth}
      \centering
      \includegraphics[width=0.48\linewidth]{figures/cifar/ResNet18/overparam_diff/overparamwd_0.0_.pdf}
      \includegraphics[width=0.48\linewidth]{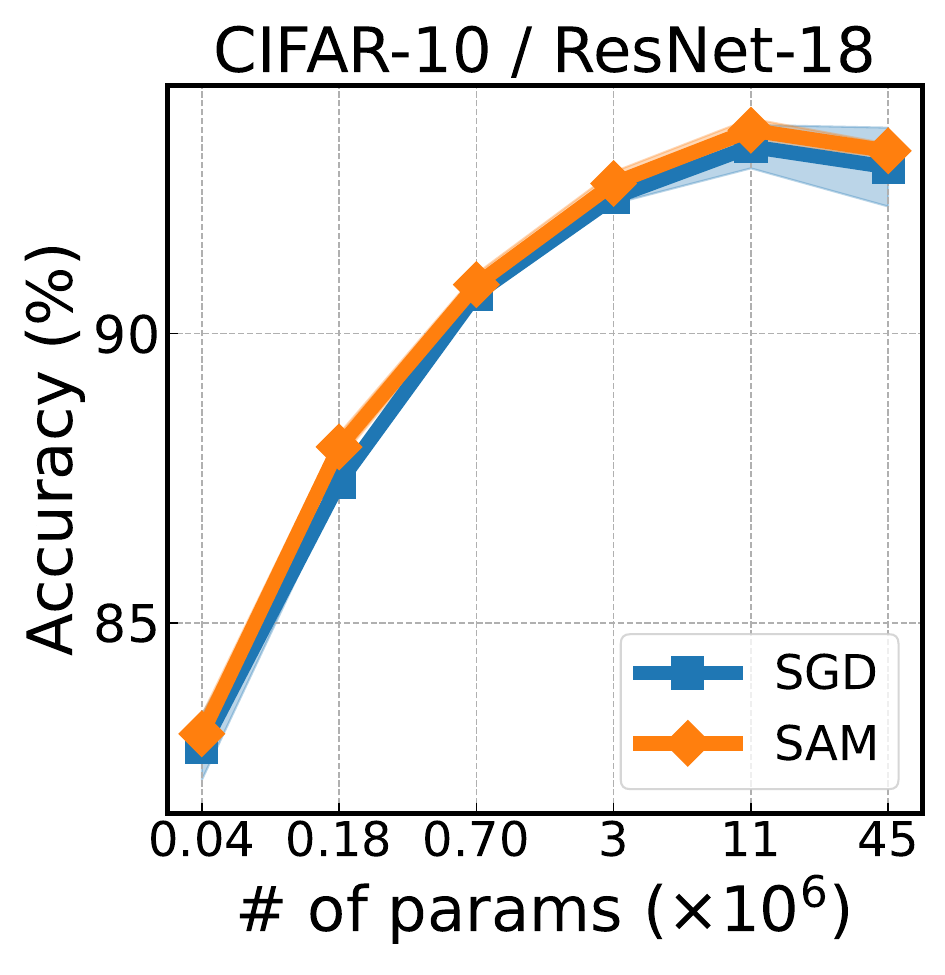}
      \caption{w/o weight decay}
      \label{fig:result_overparam_resnet_wo_wd}
  \end{subfigure}
  \begin{subfigure}{0.32\linewidth}
      \centering
      \includegraphics[width=0.45\linewidth]{figures/pos/overparam_diff/overparam.pdf}
      \includegraphics[width=0.49\linewidth]{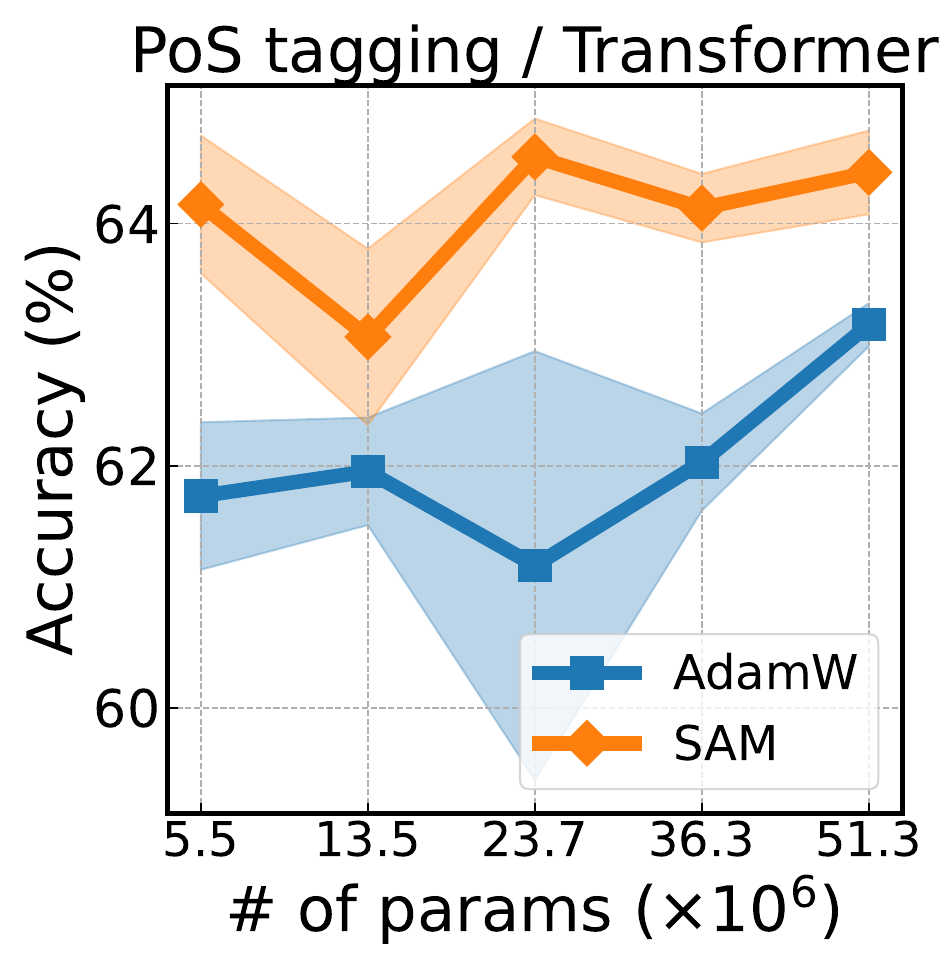}
      \caption{w/o early stop.}
      \label{fig:result_overparam_pos_wo_es}
  \end{subfigure}
  \begin{subfigure}{0.32\linewidth}
      \centering
      \includegraphics[width=0.45\linewidth]{figures/cifar/ViT/overparam_diff/overparam.pdf}
      \includegraphics[width=0.49\linewidth]{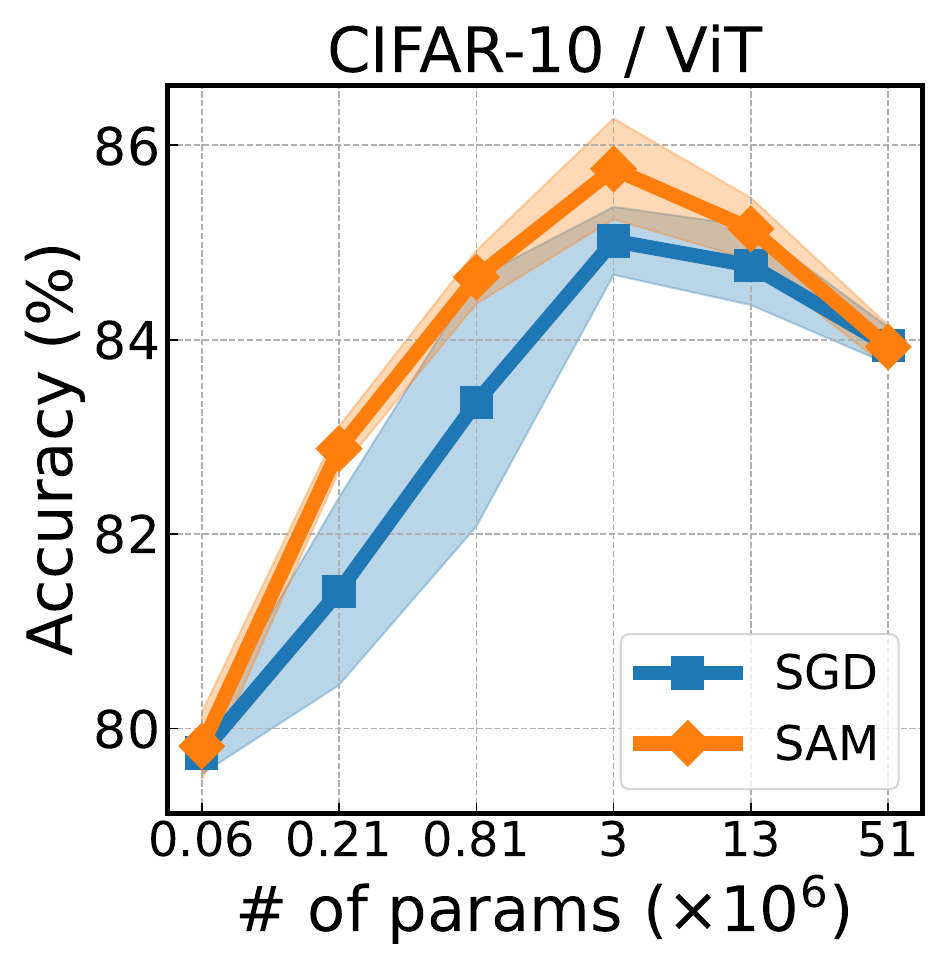}
      \caption{ViT}
      \label{fig:result_overparam_cifar_vit}
  \end{subfigure}
  \caption{
    Effect of overparameterization on SAM without regularization:
    (a) CIFAR-10/ResNet-18 without weight decay, (b) Transformer/PoS tagging without early stopping, and (c) ViT/CIFAR-10.
    SAM does not always benefit from overparameterization in these cases.
  }
  \label{fig:result_overparam_overfit}
\end{figure}

More results on the effect of regularization on SAM are presented in \cref{fig:result_overparam_overfit}.
We find that overparameterization does not increase the generalization benefit of SAM.
We suspect this is because the models are prone to overfitting in these cases and overparameterizing models may decrease the overall performance both for SGD and SAM;
for example in \cref{fig:result_overparam_cifar_vit}, the validation accuracy drops after $11.2$m parameters.

\newpage

\section{Ablation} \label{app:ablation}

\subsection{Effect of depth} \label{app:depth}

\begin{wrapfigure}{r}{0.45\linewidth}
  \vspace{-2em}
  \centering
  \includegraphics[width=0.49\linewidth]{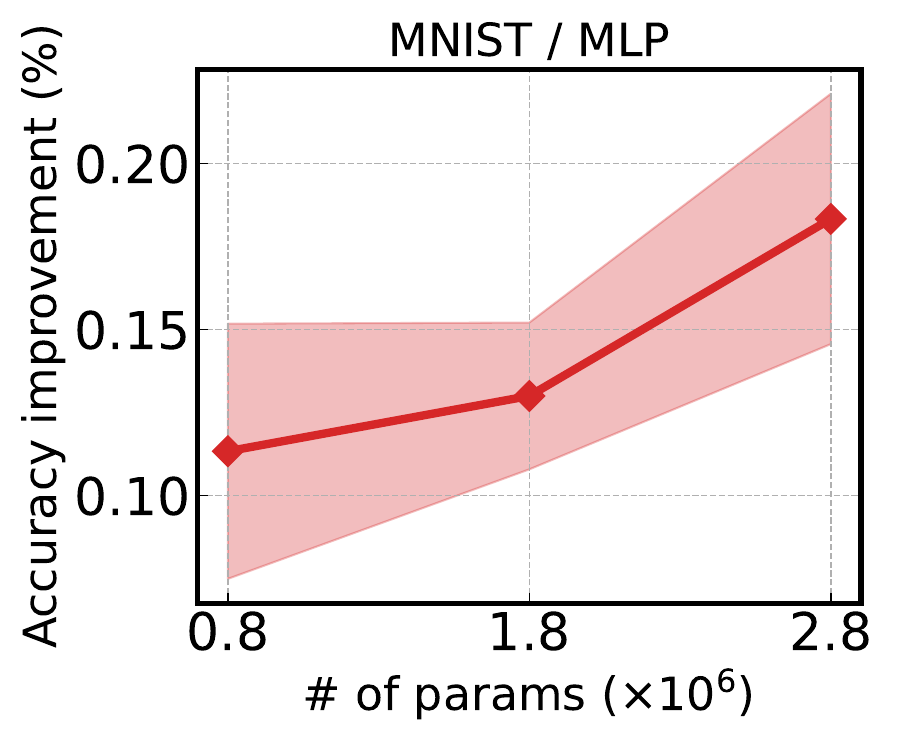} %
  \includegraphics[width=0.49\linewidth]{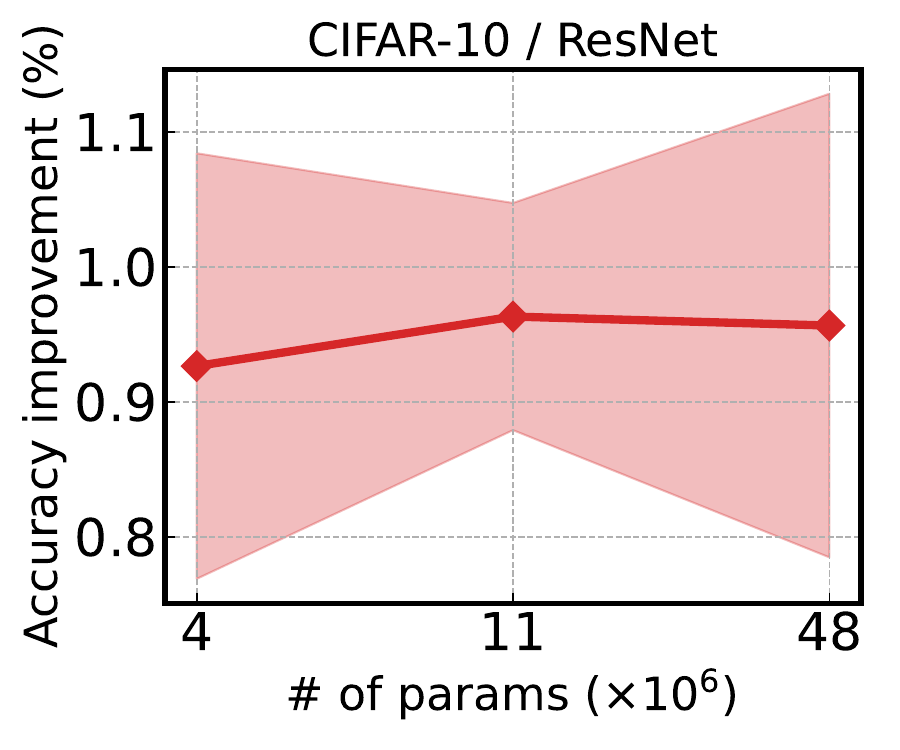} 
  \caption{
  Improvement in validation metrics by SAM over different model depths.
  Deeper models tend to yield higher validation accuracy improvements.
  Here we change the number of width-1000 hidden layers in MLP and resblock in each stage of ResNet-18 for MNIST and CIFAR-10, respectively. Benefits of SAM also improves with overparameterization in terms of depth, although the increase is not significant for ResNet.
  }
  \label{fig:depth}
\end{wrapfigure}

We experiment with changing the number of layers for MNIST/MLP and Cifar-10/ResNet-18.
Precisely, we change the number of width-1000 hidden layers in MLP and resblock in each stage of ResNet-18 for MNIST and CIFAR-10 respectively. 
The results are provided in \cref{fig:depth}. 
We find that SAM also improves with overparameterization for MLPs, while the increase is not significant for ResNets.
We suspect that this may result from the complex interplay of various intricate factors and decisions involved in increasing depth in modern architectures such as ResNets (\eg, deciding whether to increase the number of resblocks, layers within the resblock, width stages, or some combination of them), each affecting the training dynamics in distinct ways.
Further study into these factors would be an interesting direction to understand these influences more comprehensively.

\subsection{SAM vs. weight decay} \label{app:wd}

\begin{figure}[!th]
    \centering
    \includegraphics[width=0.19\linewidth]{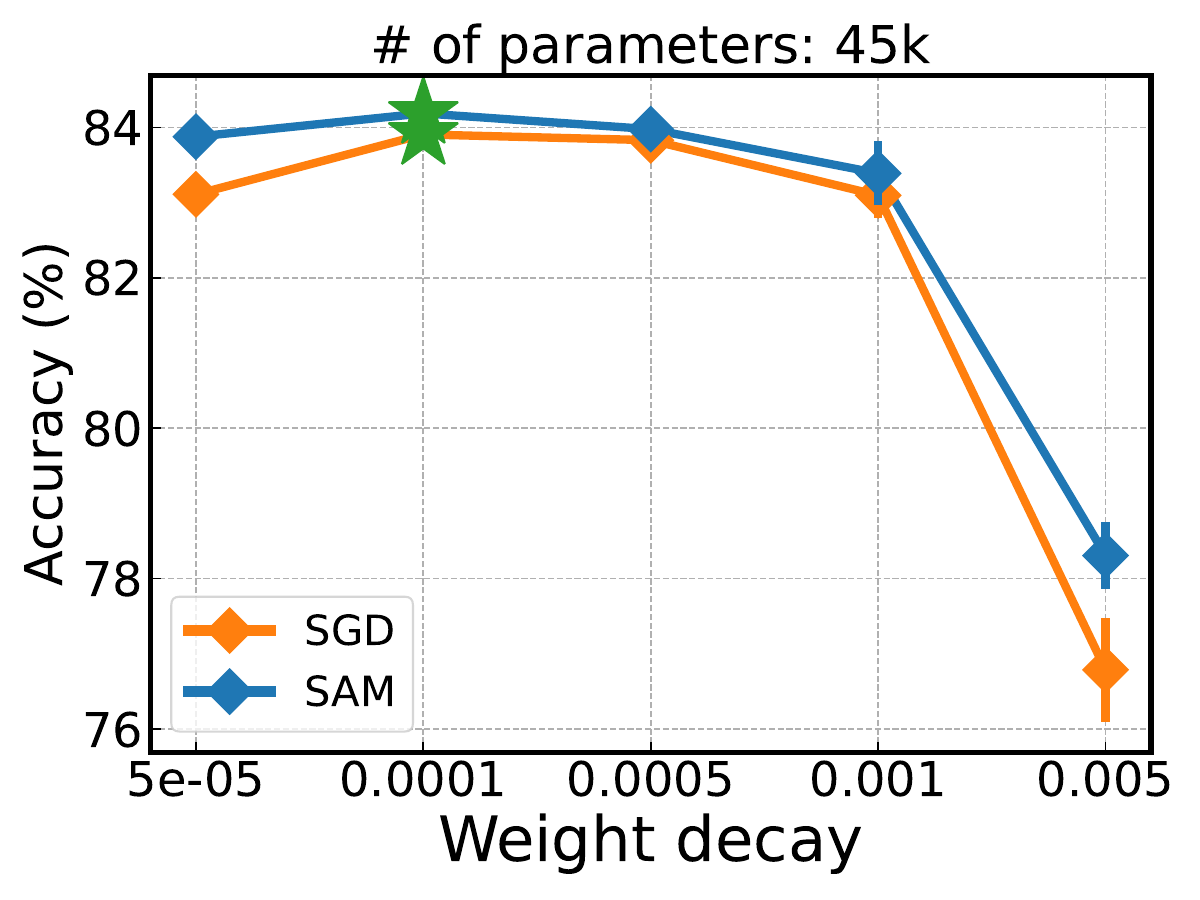}
    \includegraphics[width=0.19\linewidth]{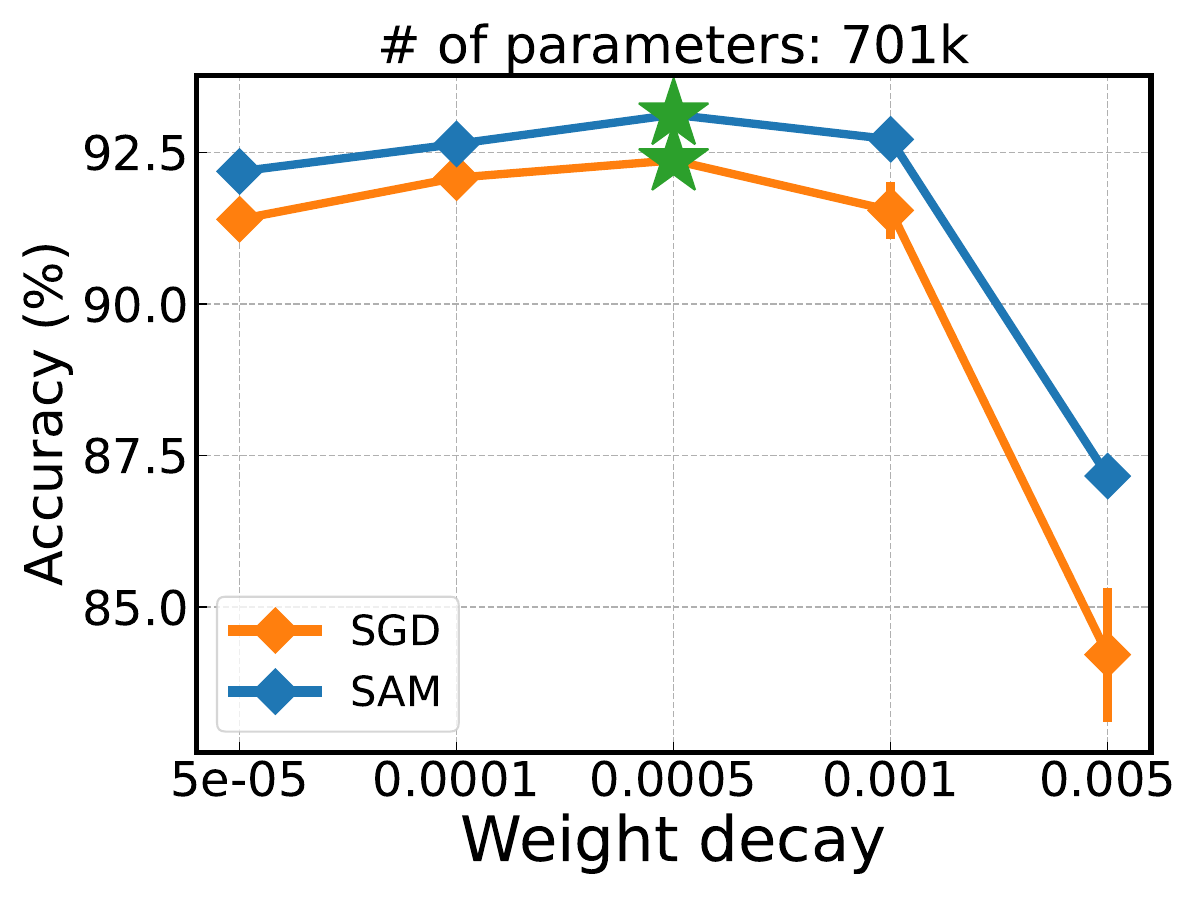}
    \includegraphics[width=0.19\linewidth]{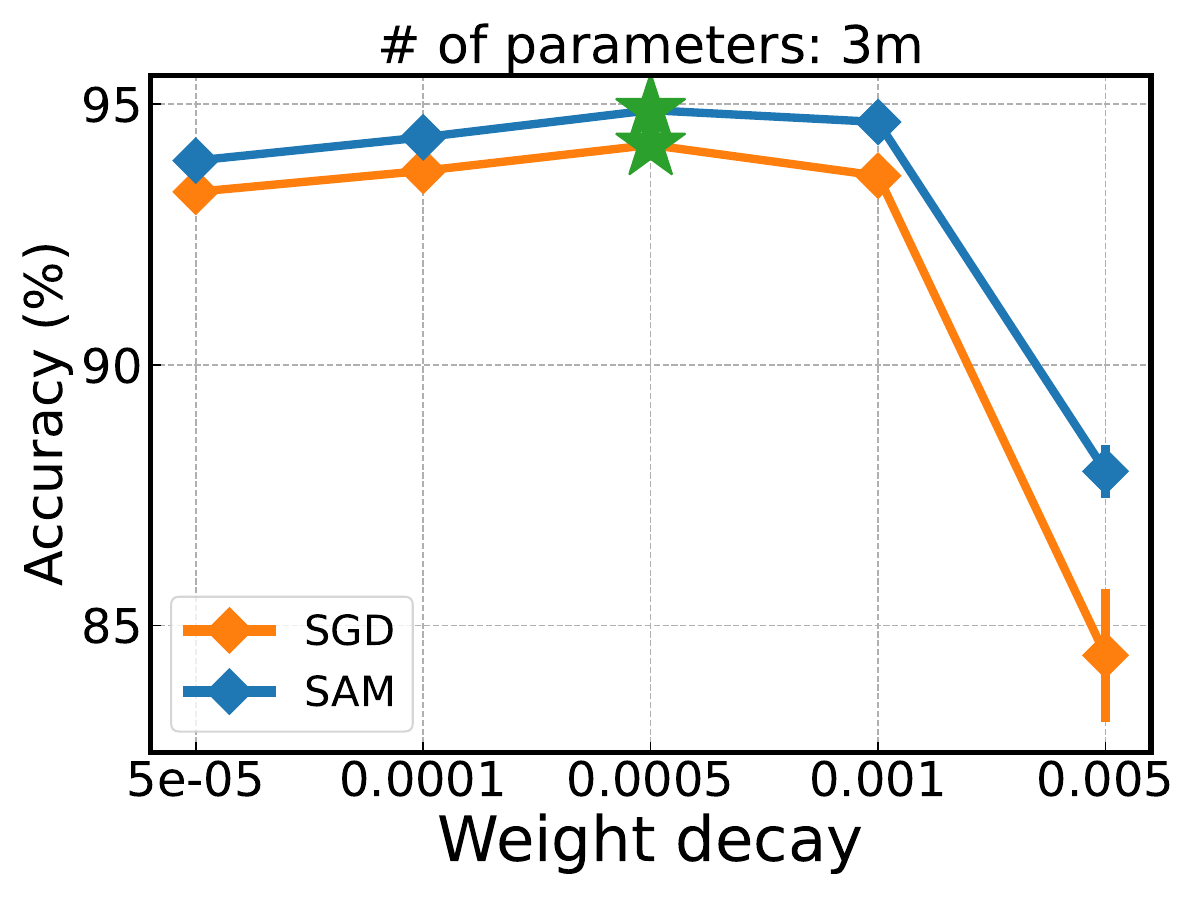}
    \includegraphics[width=0.19\linewidth]{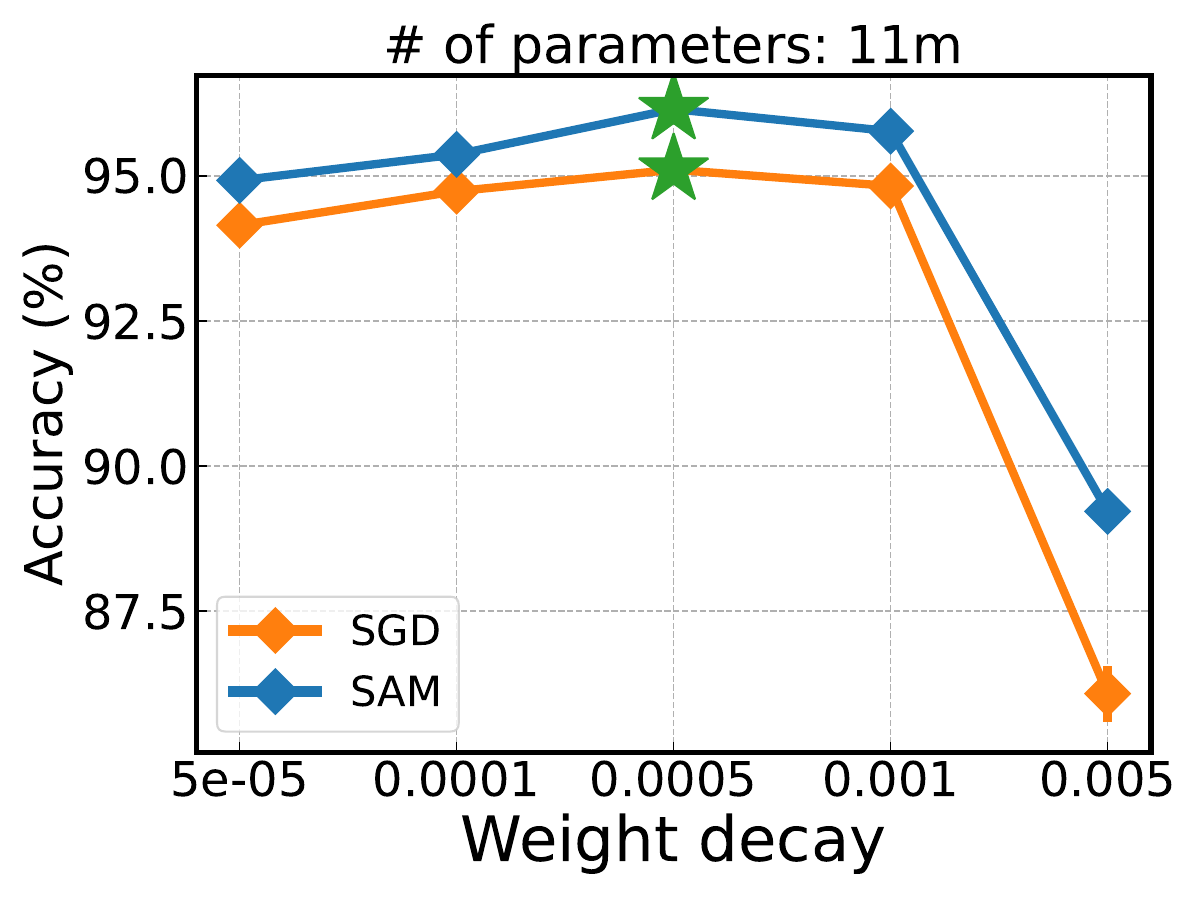}
    \hspace{0.15em}
    \includegraphics[width=0.19\linewidth]{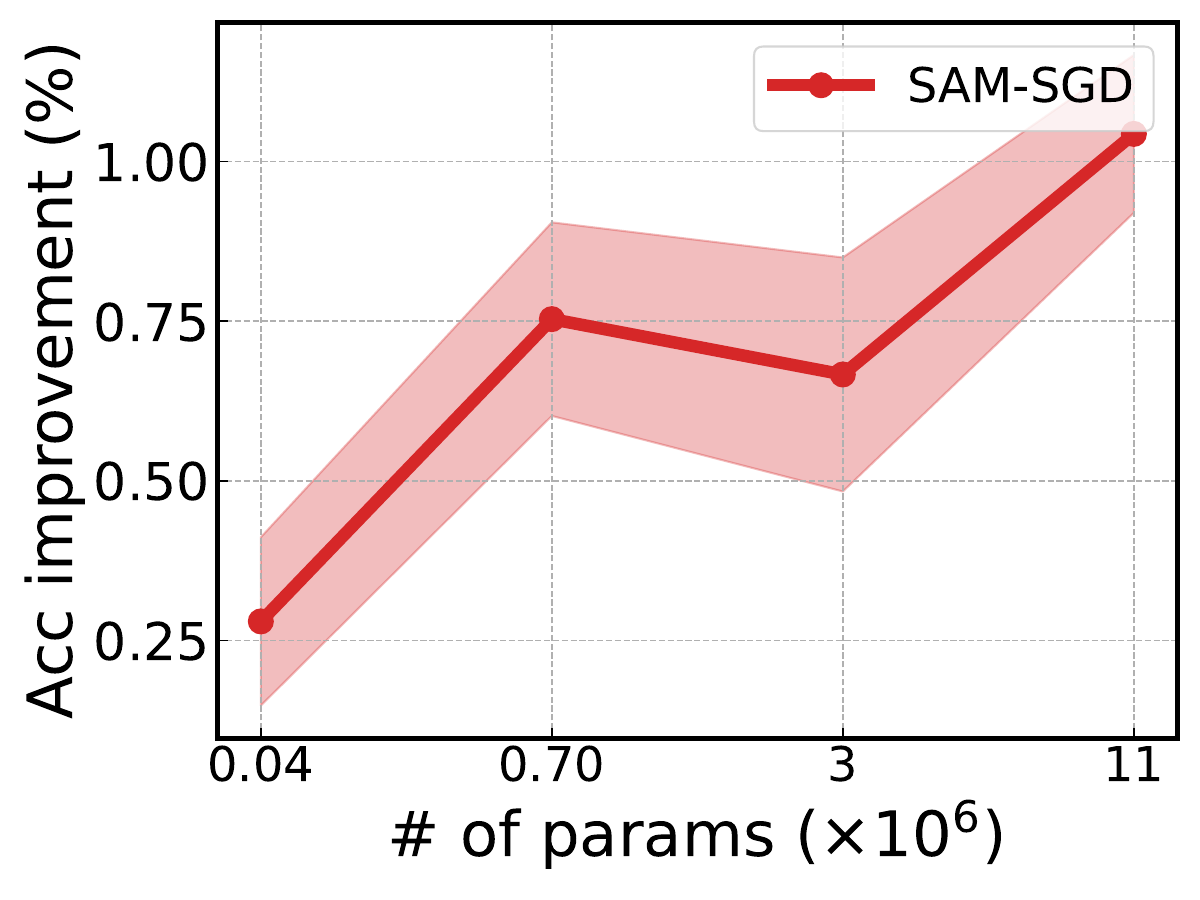}
  \caption{
    Effect of weight decay on validation accuracy of ResNet-18 trained on CIFAR-10 with SAM and SGD over various model scales and the improvement in validation metrics by SAM when considering weight decay. 
    Even after being given much larger values of weight decay, SGD isn't able to outperform SAM on any model size.
  }
  \label{fig:sam_vs_wd}
\end{figure}

We conduct experiments on Cifar-10/ResNet-18 for four different model sizes and five values of weight decay. 
The results are provided in \cref{fig:sam_vs_wd}. 
We find that SGD with stronger weight decay does not compete to replace SAM for overparameterized models; for overparameterized models, using larger weight
decay rather degrades the performance for SGD. This potentially indicates that a generic regularization strategy may not suffice for overparameterized models relatively compared to SAM.

\subsection{Results on SAM under Linearized Regime}

\label{app:exp-linear}

Recent studies suggest that highly overparameterized models can behave like linearized networks \citep{jacot2018neural}, while such implicit linearization phenomenon can coincide independently of overparameterization \citep{chizat2019lazy}.
One might wonder if the increased effectiveness of SAM directly comes from the overparameterization itself or is rather due to linearization.
\begin{wrapfigure}{r}{0.45\linewidth}
    \centering
    \begin{subfigure}{\linewidth}
        \centering
        \includegraphics[width=0.49\linewidth]{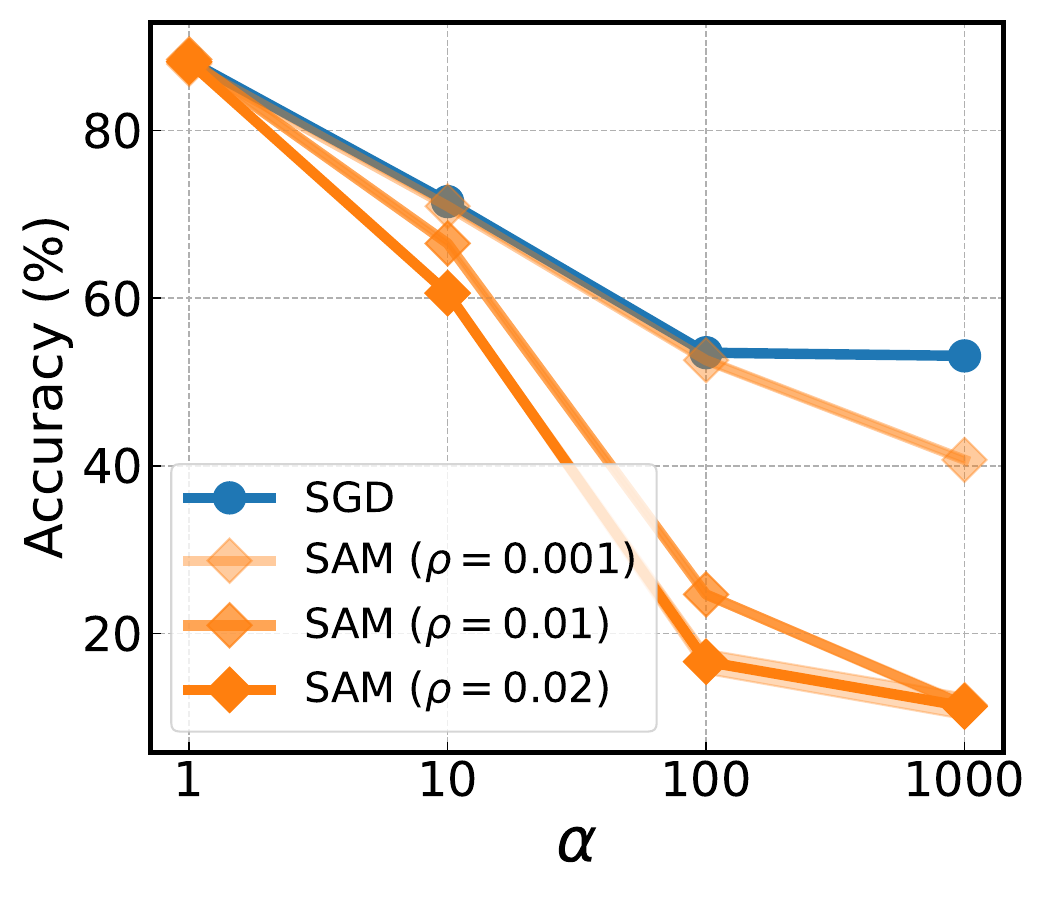}
        \includegraphics[width=0.49\linewidth]{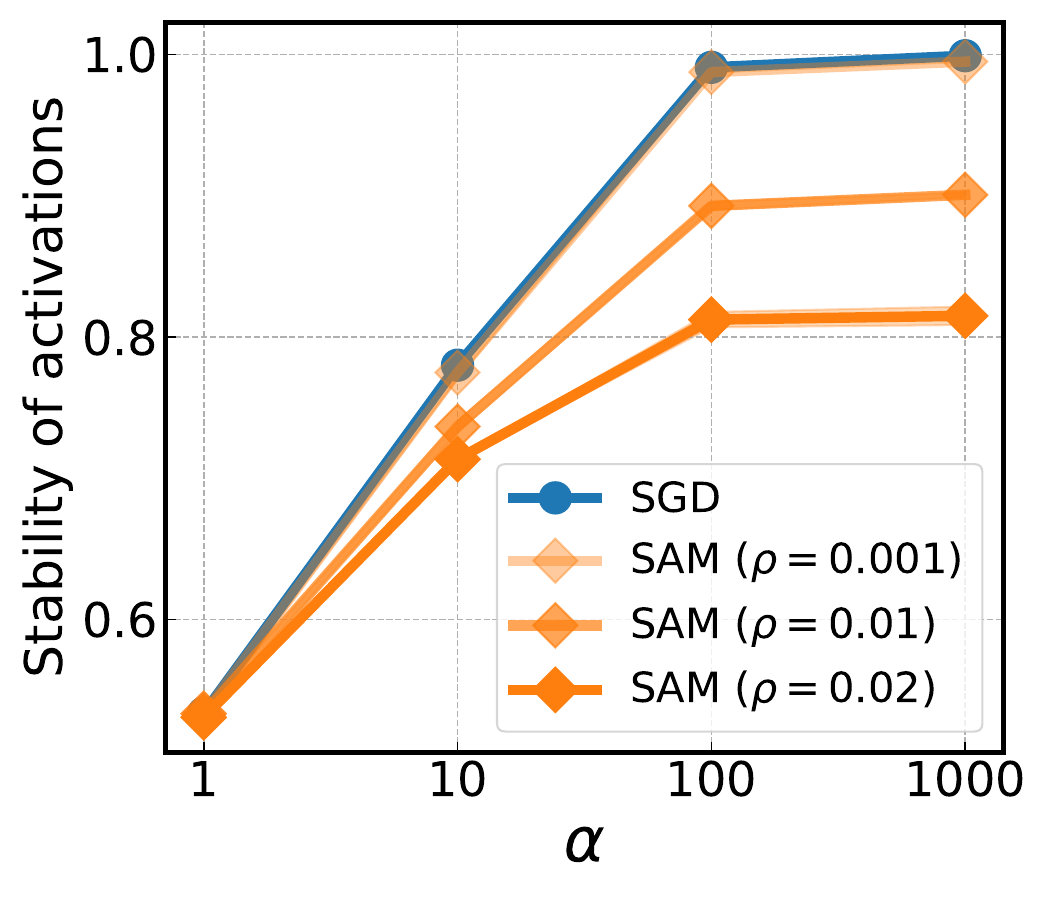}
        \vspace{-2em}
    \end{subfigure}
  \caption{
    Effect of linearization on SAM.
    Here, $\alpha$ controls the degree of linearization.
    High linearization does not yield improvement, and in fact, SAM ($\rho=0.001$) underperforms SGD in the linearized regime (left), although both achieve effective linearization at $\alpha = 1000$ with stability close to $1$ (right).
  }
  \label{fig:result_overparam_linearization}
  \vspace{-1em}
\end{wrapfigure}
To verify, we reproduce experiments in \citet{chizat2019lazy} and see how SAM performs in the linearized regimes while fixing the number of parameters.
Specifically, we train VGG-11 \citep{simonyan2014very} on the Cifar-10 with the $\alpha$-scaled squared loss $L(x, y) = \| f(x) - y/\alpha \|^2$ and use the centered model whose initial output is set to $0$.
Here, a large value of $\alpha$ leads to a higher degree of linearization of the models.

The results are reported in \cref{fig:result_overparam_linearization}.
We observe that SAM underperforms SGD in the linearized regimes;
while SAM ($\rho=0.001$) and SGD both achieve effective linearization at $\alpha=1000$, SAM underperforms SGD by more than $10\%$.
This indicates that linearization is not the main factor, and overparameterization itself is what leads to the improvement of SAM in previous experiments.

\subsection{SGD with twice the epochs}
\label{app:sgd-twice}

\begin{wrapfigure}{r}{0.45\linewidth}
  \vspace{-3em}
    \centering
    \begin{subfigure}{\linewidth}
        \centering
        \includegraphics[width=0.49\linewidth]{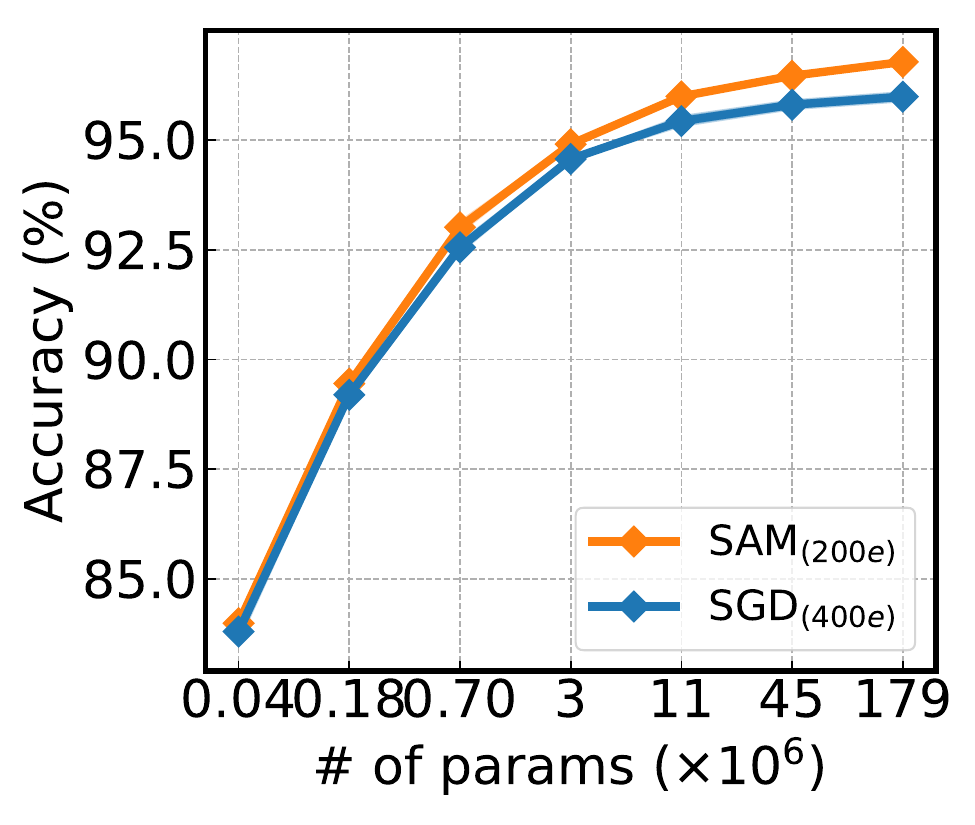}
        \includegraphics[width=0.49\linewidth]{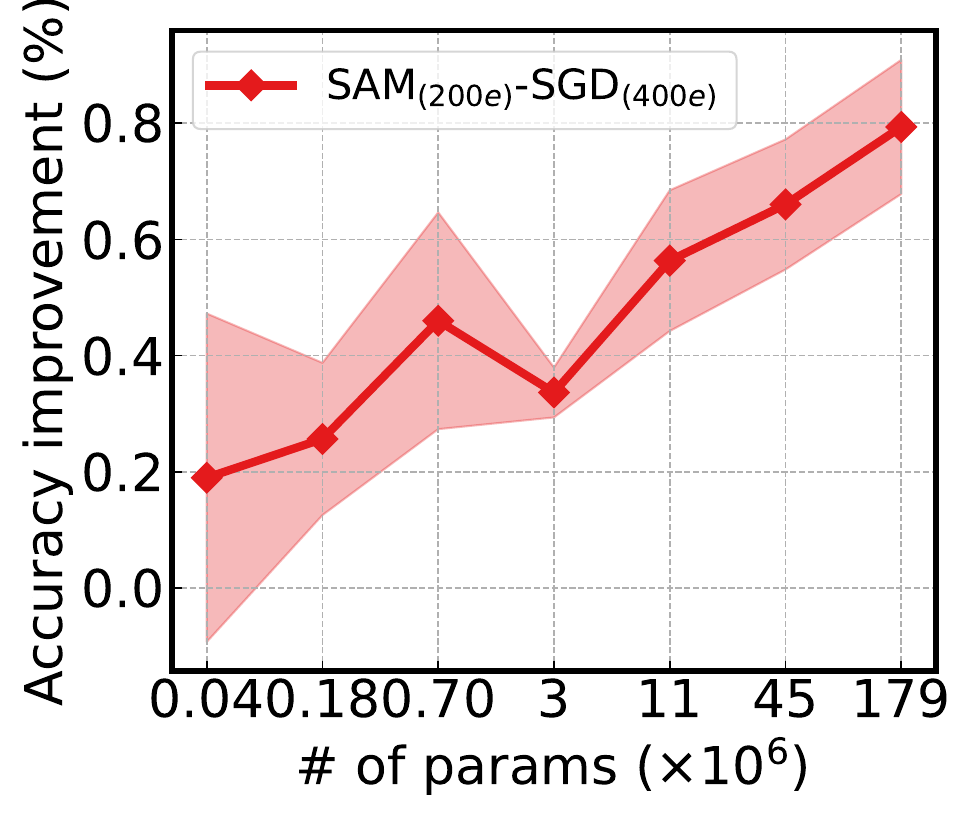}
        \vspace{-1.5em}
    \end{subfigure}
  \caption{
    SGD with twice the epochs. SAM still increasingly outperforms SGD with further overparameterization.
  }
  \label{fig:sgd_twice}
  \vspace{-1em}
\end{wrapfigure}

We compare SGD, trained for twice the number of epochs as SAM, to account for the additional gradient computation of SAM. 
Precisely, we report the validation accuracy of SGD trained for 400 epochs and compare it to SAM trained for the original 200 epochs on ResNet-18/CIFAR-10 in \cref{fig:sgd_twice}.
Similar to our observation in \cref{sec:experiments-main}, we find that the generalization benefit of SAM improves with overparameterization.
This outcome is expected: granting SGD additional training iterations can lead to overfitting and degrade generalization, potentially placing it at a disadvantage.
Thus, while providing a larger training budget to SGD may appear fair, it is not necessarily a more equitable comparison, and may in fact be unfair.

\section{Empirical Measurement of Lipschitz Smoothness and PL Constants} \label{app:emp-measure}

\begin{wrapfigure}{r}{0.45\linewidth}
  \centering
  \vspace{-1.3em}
  \begin{subfigure}{0.46\linewidth}
      \centering
      \includegraphics[width=\linewidth]{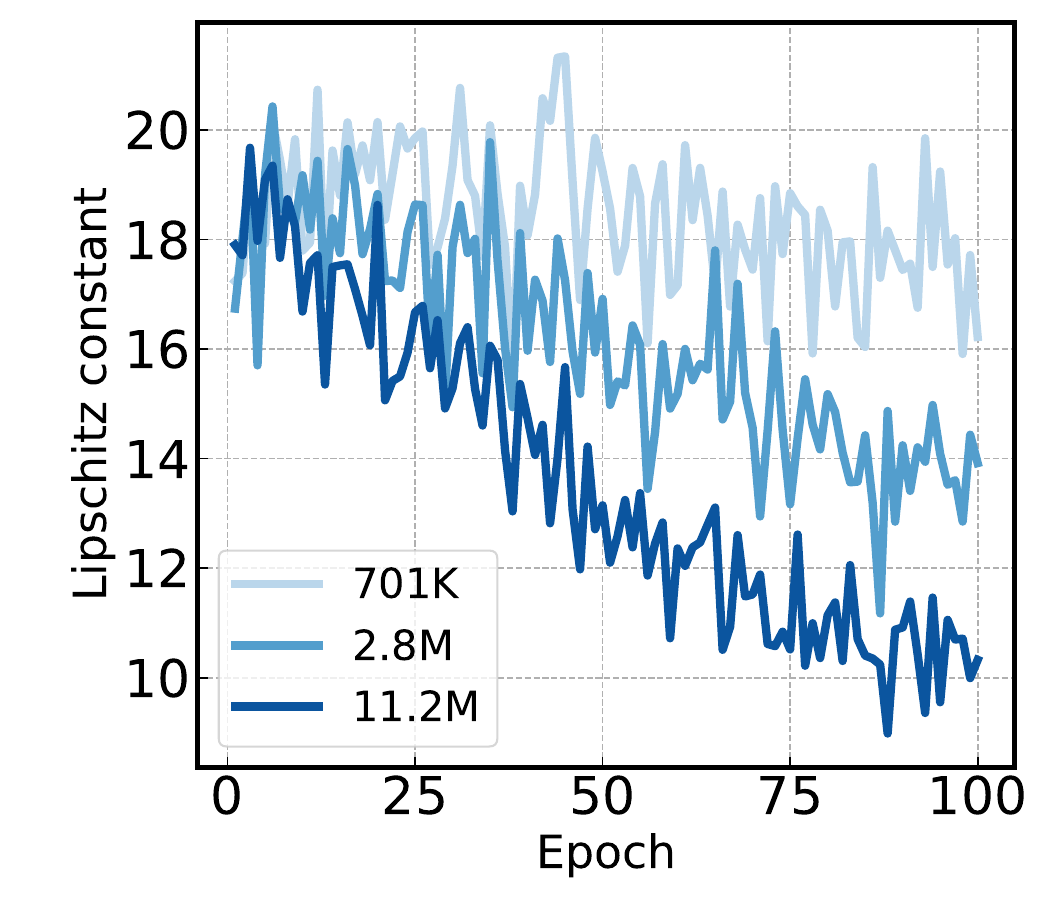}
      \caption{Lipschitz smoothness}
      \label{fig:result_empirical_lipschitz}
  \end{subfigure}
  \begin{subfigure}{0.46\linewidth}
      \centering
      \includegraphics[width=\linewidth]{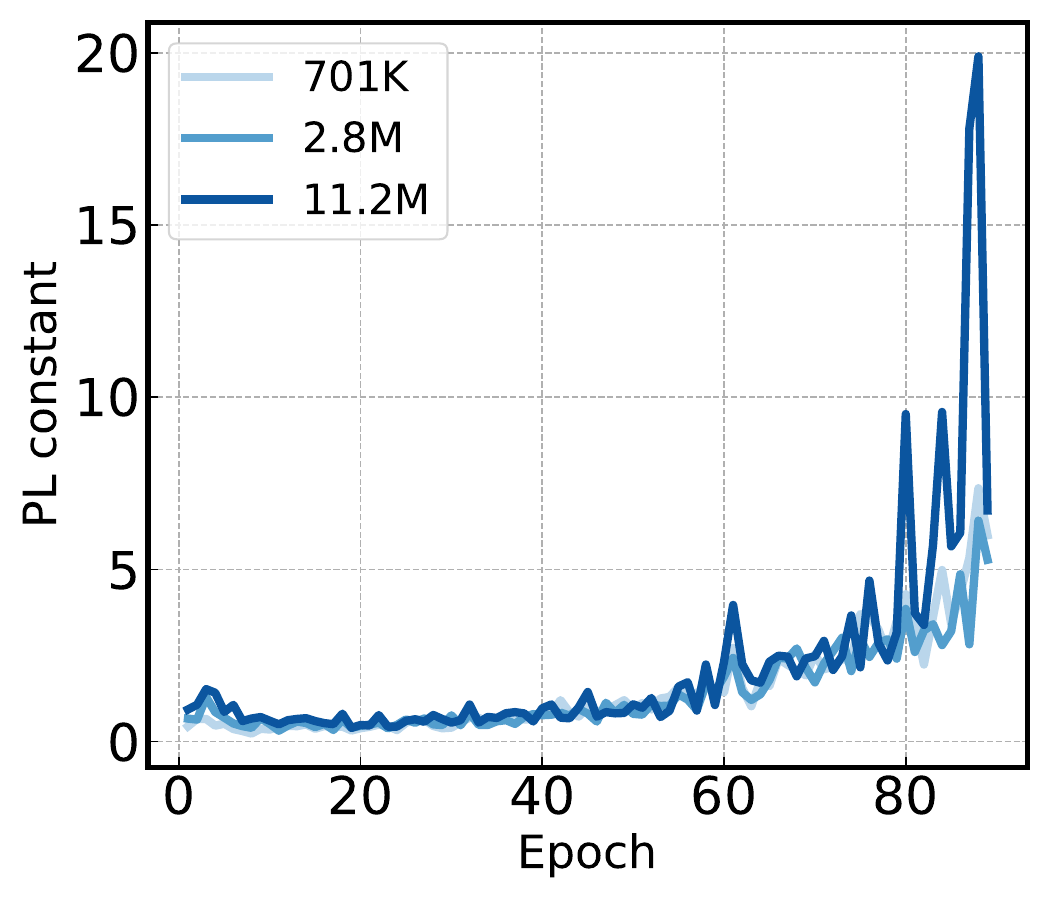}
      \caption{PL-ness}
      \label{fig:resut_empirical_pl}
  \end{subfigure}
  \vspace{-0.5em}
  \caption{
    The empirical measurement of Lipschitz smoothness (a) and PL-ness (b) for CIFAR-10 and ResNet-18.
    The Lipschitz smoothness becomes smaller and PL constant becomes larger as the model size increases.
  }
  \label{fig:result_overparam_empirical_measure}
\end{wrapfigure}

We measure the empirical Lipschitz smoothness constant and PL constant based on \citet{zhang2019which}.
Specifically, the empirical smoothness $\hat\beta(x_k)$ and empirical PL constant $\hat\alpha(x_k)$ at iteration $k$ is computed as follows:
\begin{align}
    \hat{\beta}(x_k) &= \max_{\gamma \in \{\delta, 2\delta, \dots, 1\}} \frac{\|\nabla f(x_k+\gamma \mathbf{d})-\nabla f(x_k)\|_2}{\|\gamma \mathbf{d}\|_2},\\
    \hat{\alpha}(x_k) &= \min_{\gamma \in \{\delta, 2\delta, \dots, 1\}} \frac{\|\nabla f(x_k+\gamma \mathbf{d})\|_2^2}{f(x_k+\gamma \mathbf{d})-f(x^\star)}.
\end{align}
where $\mathbf{d}=x_{k+1}-x_{k}$ and $\delta \in (0, 1)$ where we choose $\delta = 0.1$.
We measure these quantities at the end of every epoch throughout training.
The results are shown in \cref{fig:result_overparam_empirical_measure}.

\section{Proof of Theorem \ref{thm:sam-stability}}

\label{app:sam-stability}

Here, we provide the detailed proof of \cref{thm:sam-stability}.

We first define a linearized stochastic SAM, which is derived from applying first-order Taylor approximation on a stochastic SAM update given as follows:
\begin{definition} (Linearized stochastic SAM)
    We define a linearized stochastic SAM as
    \begin{equation} \label{eq:quad-linearized-SAM}
        x_{t+1} = x_t - \eta H_{\xi_t} (x_{t+1/2} - x^\star),
    \end{equation}
    where $x_{t+1/2} = x_t + \rho H_{\xi_t} (x_t - x^\star)$ is the linearized ascent step and $H_{\xi_t}$ is the Hessian estimation at step $t$.
\end{definition}
This actually corresponds to using SAM for the quadratic approximation of $f$ near $x^\star$, and we use this fact in the experiment setup.
We assume without loss of generality that the fixed point $x^\star$ satisfies $x^\star = 0$.

Then, we are ready to present the proof of \cref{thm:sam-stability}. Our goal is to derive a bound of the form $\E\| x_{t}\|^2\leq C \|x_0\|^2$.
We first apply (\ref{eq:quad-linearized-SAM}) to $\expec{}{\|x_{t+1}\|^2 \,\lvert\, x_t}$ and continue expanding the terms as follows:

\begin{align*}
    \expec{}{\|x_{t+1}^2\| \,\lvert\, x_t}
    &= \E \|x_t - \eta H_{\xi_t} (x_t + \rho H_{\xi_t}x_t)\|^2\\
    &= x_t^\top \expec{}{\left(I - \eta H_{\xi_t} - \eta\rho H_{\xi_t}^2\right)^2 \,\Big\lvert\, x_t}x_t\\
    &= x_t^\top \expec{}{ I - 2\eta(H_{\xi_t} + \rho H_{\xi_t}^2) + \eta^2 \left( H_{\xi_t} + \rho H_{\xi_t}^2\right)^2 \,\Big\lvert\, x_t}x_t\\
    &= x_t^\top \expec{}{ I - 2\eta(H_{\xi_t} + \rho H_{\xi_t}^2) + \eta^2 \left( H_{\xi_t}^2 + 2\rho H_{\xi_t}^3 + \rho^2 H_{\xi_t}^4\right)  \,\Big\lvert\, x_t}x_t\\
    &= x_t^\top \expec{}{I - 2\eta H_{\xi_t} + \eta(\eta-2\rho)H_{\xi_t}^2 + 2\eta^2\rho H_{\xi_t}^3 + \eta^2\rho^2 H_{\xi_t}^4 \,\Big\lvert\, x_t}x_t\\
    &= x_t^\top \Big( I - 2\eta H + \eta(\eta-2\rho) \E H_{\xi_t}^2 + 2\eta^2\rho \E H_{\xi_t}^3 + \eta^2\rho^2 \E H_{\xi_t}^4 \Big) x_t \\
    &= x_t^\top \Big( I - 2\eta H  + \eta(\eta-2\rho) H^2 + 2\eta^2\rho H^3 + \eta^2\rho^2 H^4  \\
    & \hspace{3em} + \eta(\eta-2\rho) (\E H_{\xi_t}^2-H^2) + 2\eta^2\rho (\E H_{\xi_t}^3-H^3) + \eta^2\rho^2 (\E H_{\xi_t}^4 -H^4) \Big) x_t \\
    &= x_t^\top \Big( \left(I - \eta H - \eta\rho H^2\right)^2  \\
    & \hspace{4em} + \eta(\eta-2\rho) (\E H_{\xi_t}^2-H^2) + 2\eta^2\rho (\E H_{\xi_t}^3-H^3) + \eta^2\rho^2 (\E H_{\xi_t}^4 -H^4) \Big) x_t
\end{align*}

Since $x^\top A x \leq \lambda_{\text{max}}(A)\|x\|^2$ always holds for any $x$ and any matrix $A$ with the maximum eigenvalue $\lambda_{\text{max}}(A)$, applying this inequality and taking the total expectation gives the following;

\begin{align*}
    \expec{}{\|x_{t+1}\|^2 }
    &\leq \lambda_{\text{max}}\bigg( \left(I - \eta H - \eta\rho H^2\right)^2 + \eta(\eta-2\rho) (\E H_{\xi}^2-H^2)  \\
    & \hspace{5em} + 2\eta^2\rho (\E H_{\xi}^3-H^3) + \eta^2\rho^2 (\E H_{\xi}^4 -H^4) \bigg) \expec{}{\|x_t\|^2}.
\end{align*}

Recursively applying this bound gives

\begin{align*}
    \E\| x_{t}\|^2
    &\leq \lambda_{\text{max}}\bigg( \left(I - \eta H - \eta\rho H^2\right)^2 + \eta(\eta-2\rho) (\E H_{\xi}^2-H^2) \\
    & \hspace{5em} + 2\eta^2\rho (\E H_{\xi}^3-H^3) + \eta^2\rho^2 (\E H_{\xi}^4 -H^4) \bigg)^t \|x_0\|^2.
\end{align*}

Here, we can see that $x^\star$ is linearly stable if

\begin{align*}
    \lambda_{\text{max}}\bigg((I - \eta H - \eta \rho H^2)^2  + \eta(\eta-2\rho) (\E H_{\xi}^2-H^2) + 2\eta^2\rho (\E H_{\xi}^3-H^3) + \eta^2\rho^2 (\E H_{\xi}^4 -H^4) \bigg) \leq 1. \\ && \qedsymbol{}
\end{align*}

\section{Proof of Theorem \ref{thm:main-PL-stochSAM}} \label{app:prooflinconv}
In this section, we show that a stochastic SAM converges linearly under an overparameterized regime.
To put into perspective, this is the rate of convergence of gradient descent for a family of functions satisfying the PL-condition and smoothness assumptions \citep{karimi2016linear}.
We first make several remarks on this result below.
\begin{itemize}[leftmargin=*]
    \item Crucially, this result shows that with overparameterization, a stochastic SAM can converge as fast as the deterministic gradient method at a linear convergence rate.
    It is much faster than the well-known sublinear rate of $\mathcal{O}(1/t)$ for SAM \citep{andriushchenko2022towards}.
    \item When $\rho=0$, we recover the well-known convergence rate for SGD in interpolated regime \citep{bassily2018exponential}.
    \item This result does not require the bounded variance assumption \citep{andriushchenko2022towards} since the interpolation provides necessary guarantees.
    This suggests that overparameterization can ease the convergence of SAM.
\end{itemize}

We prove the convergence for an unnormalized mini-batch SAM given as
$$
x_{t+1} = x_t - \eta g_{t}^{_B}(x_t+\rho g_{t}^{_B}(x_t)),
$$
where $g_{t}^{_B}(x)=\frac{1}{B}\sum_{i \in I_t^B} \nabla f_i(x)$ and $I_t^B \subseteq \{1,...,n\}$ is a set of indices for data points in the mini-batch of size $B$ sampled at step $t$.
This is a more general stochastic variant of SAM where a stochastic SAM in Section \ref{sec:convergence} is a particular case of a mini-batch SAM with mini-batch size $B=1$.

We first make the following assumptions:
\begin{description}
    \item[\textbf{(A1)}] \hypertarget{assump:betasmoo}{($\beta$-smothness of $f_i$)}. \textit{There exists  $\beta > 0$ s.t. $ 
    \| \nabla f_i (x) - \nabla f_i (y) \| \leq \beta \| x-y \|$ for all $x,y  \in \mathbb R ^d$,}
    \item[\textbf{(A2)}] \hypertarget{assump:lambdasmoo}{($\lambda$-smothness of $f$)}. \textit{There exists  $\lambda\!>\!0$ such that $ 
    \| \nabla f (x) -\nabla f (y) \| \leq \lambda \| x-y \|$ for all $x,y  \in \mathbb R ^d$,} 
    \item [\textbf{(A3)}] \hypertarget{assump:PL}{($\alpha$-PLness of $f$)}. \textit{There exists $\alpha > 0$ s.t. $\| \nabla f (x) \|^2 \geq \alpha (f(x)-f(x^\star))$ for all $w,v  \in \mathbb R ^d$,}
    \item[\textbf{(A4)}] \hypertarget{assump:interpol}{(Interpolation)}. \textit{If $f(x^\star)=0$ and $~\nabla f(x^\star)=0$, then $f_i(x^\star)=0$ and $\nabla f_i(x^\star)=0$ for $i=1,\hdots,n$, where $n$ is the number of training data points.}
\end{description}

Before we prove the main theorem, we first introduce two lemma important to the proof.

\begin{lemma}\label{lemma:mini-batch-grad-align}
Suppose that Assumption \hyperlink{assump:betasmoo}{\textbf{(A1)}} holds.
Then
\begin{align}
   \langle \nabla f_i( x_{t+1/2}), \nabla f (x_t) \rangle & \geq \langle  \nabla f_i(x_t) , \nabla f (x_t)\rangle - \frac{{ \beta} \rho}{2} \| \nabla f_i (x_t) \|^2  - \frac{\beta\rho}{2} \| \nabla f (x_t)\|^2,
\end{align}
where $x_{t+1/2}=x_t + \rho \nabla f_i(x_t)$.
\end{lemma}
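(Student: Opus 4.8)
The plan is to isolate the perturbation-induced change in $\nabla f_i$, bound it via smoothness, and then symmetrize the resulting bilinear term with Young's inequality. Since $x_{t+1/2} = x_t + \rho\nabla f_i(x_t)$ differs from $x_t$ only through the perturbation step, the left-hand side should be close to $\langle \nabla f_i(x_t), \nabla f(x_t)\rangle$, and the whole statement is simply a quantitative version of this closeness.

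Concretely, I would first add and subtract $\nabla f_i(x_t)$ inside the first argument, writing
\[
\langle \nabla f_i(x_{t+1/2}), \nabla f(x_t)\rangle = \langle \nabla f_i(x_t), \nabla f(x_t)\rangle + \langle \nabla f_i(x_{t+1/2}) - \nabla f_i(x_t), \nabla f(x_t)\rangle.
\]
It then suffices to lower-bound the cross term. Applying Cauchy--Schwarz gives $\langle \nabla f_i(x_{t+1/2}) - \nabla f_i(x_t), \nabla f(x_t)\rangle \geq -\|\nabla f_i(x_{t+1/2}) - \nabla f_i(x_t)\|\,\|\nabla f(x_t)\|$, and the $\beta$-smoothness of $f_i$ (Assumption \textbf{(A1)}) controls the first factor via $\|\nabla f_i(x_{t+1/2}) - \nabla f_i(x_t)\| \leq \beta\|x_{t+1/2} - x_t\| = \beta\rho\|\nabla f_i(x_t)\|$, where the equality uses the definition of $x_{t+1/2}$. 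Combining these yields the lower bound $-\beta\rho\,\|\nabla f_i(x_t)\|\,\|\nabla f(x_t)\|$ on the cross term.

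The only remaining step --- and the one place where the specific form of the claimed inequality is pinned down --- is to split this product symmetrically. Using Young's inequality $ab \leq \tfrac12 a^2 + \tfrac12 b^2$ with $a = \|\nabla f_i(x_t)\|$ and $b = \|\nabla f(x_t)\|$ gives
\[
-\beta\rho\,\|\nabla f_i(x_t)\|\,\|\nabla f(x_t)\| \geq -\tfrac{\beta\rho}{2}\|\nabla f_i(x_t)\|^2 - \tfrac{\beta\rho}{2}\|\nabla f(x_t)\|^2,
\]
which is exactly the two negative terms in the statement. I do not anticipate any genuine obstacle here: the proof is elementary, and the only minor choice is the equal-weight splitting in Young's inequality, which is forced by the symmetric $\tfrac12$ coefficients appearing in the target bound.
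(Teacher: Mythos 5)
Your proof is correct and follows essentially the same route as the paper: the same add-and-subtract decomposition of the inner product, the same use of $\beta$-smoothness with $\|x_{t+1/2}-x_t\|=\rho\|\nabla f_i(x_t)\|$, and the same final bound. The only cosmetic difference is that you invoke Cauchy--Schwarz followed by Young's inequality, whereas the paper expands the nonnegative binomial square $\tfrac{1}{2}\|\nabla f_i(x_{t+1/2})-\nabla f_i(x_t)+\beta\rho\nabla f(x_t)\|^2\geq 0$, which is just the standard derivation of the same Young-type estimate.
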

This lemma shows how well a stochastic SAM gradient $\nabla f_i( x_{t+1/2})$ aligns with the true gradient $\nabla f (x_t)$. 
The two gradients become less aligned as $\beta$ and $\rho$ grow bigger, \ie for sharper landscape and larger perturbation size. 

\begin{proof}
We first add and subtract $\nabla f_i^{}(x_t)$ on the left side of the inner product

\begin{align} \label{eq:step1bounding1}
   \langle \nabla f_i( x_{t+1/2}), \nabla f (x_t) \rangle =   \underbrace{\langle \nabla f_i( x_{t+1/2})- \nabla f_i(x_t) , \nabla f (x_t) \rangle}_{\tau_1} + \langle  \nabla f_i(x_t) , \nabla f (x_t)\rangle.
\end{align}

We here bound the term $\tau_1$ so that it becomes an equality when $\rho=0$. 
To achieve this, we start with the following binomial square, which is trivially lower bounded by 0.
\begin{align*}
   0 &\leq \frac{1}{2}\| \nabla f_i( x_{t+1/2})- \nabla f_i(x_t) +  \beta\rho \nabla f (x_t)\|^2
\end{align*}
We then expand the above binomial square so that the term containing $\tau_1$ appears.

\begin{align*}
   0 &\leq \frac{1}{2}\| \nabla f_i( x_{t+1/2}) - \nabla f_i(x_t) \|^2 + \underbrace{\langle \nabla f_i( x_{t+1/2})- \nabla f_i(x_t) ~,~  \beta\rho \nabla f (x_t) \rangle}_{\beta\rho  \tau_1} \, + \,\frac{1}{2}\| \beta\rho \nabla f (x_t)\|^2
\end{align*}

We subtract the term $\beta\rho  \tau_1$ on both sides of the inequality which gives
\begin{align*}
   -  \langle \nabla f_i( x_{t+1/2})- \nabla f_i(x_t) ~,~  \beta\rho \nabla f (x_t) \rangle
    &\leq  \frac{1}{2}\| \nabla f_i( x_{t+1/2})- \nabla f_i(x_t) \|^2  +\frac{\beta^2\rho^2 }{2}\| \nabla f (x_t)\|^2. 
\end{align*}

Then we upper bound the right-hand side using the Assumption \hyperlink{assump:betasmoo}{\textbf{(A1)}}: 

\begin{align*}
   -   \langle \nabla f_i( x_{t+1/2})- \nabla f_i(x_t) ~,~  \beta\rho \nabla f (x_t) \rangle
    &\leq  \frac{{ \beta}^2}{2} \|  x_{t+1/2}- x \|^2 +\frac{\beta^2\rho^2 }{2}\| \nabla f (x_t)\|^2 \\
    &=  \frac{{ \beta}^2 \rho^2}{2} \| \nabla f_i (x_t) \|^2  +\frac{\beta^2\rho^2 }{2} \| \nabla f (x_t)\|^2. 
\end{align*}

We divide both sides with $\beta\rho$, obtaining: 

   \begin{align*}
   -   \langle \nabla f_i( x_{t+1/2})- \nabla f_i(x_t) , \nabla f (x_t) \rangle 
    &\leq  \frac{{ \beta} \rho}{2} \| \nabla f_i (x_t) \|^2  + \frac{\beta\rho}{2} \| \nabla f (x_t)\|^2.
   \end{align*}
   
Applying this to (\ref{eq:step1bounding1}) gives the bound in the lemma statement.
\end{proof}

\begin{lemma}\label{lemma:mini-batch-grad-norm-bound}
    Suppose that Assumption \hyperlink{assump:betasmoo}{\textbf{(A1)}} holds.
    Then
    \begin{equation}
    \left\|  \nabla f_i(x_{t+{1/2}})\right\|^2 \leq (\beta\rho+1)^2\| \nabla f_i ( x_t) \|^2,
    \end{equation}
    where $x_{t+1/2}=x_t + \rho \nabla f_i(x_t)$.
\end{lemma}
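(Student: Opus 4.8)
The plan is to prove this bound by a direct combination of the triangle inequality and the $\beta$-smoothness assumption \hyperlink{assump:betasmoo}{\textbf{(A1)}}, without needing the more delicate binomial-square manipulation used in \cref{lemma:mini-batch-grad-align}. The essential observation is that the perturbed iterate satisfies $x_{t+1/2} - x_t = \rho \nabla f_i(x_t)$ exactly, so smoothness directly controls the discrepancy between the gradient at $x_{t+1/2}$ and the gradient at $x_t$ in terms of $\|\nabla f_i(x_t)\|$ itself.

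First I would add and subtract $\nabla f_i(x_t)$ inside the norm, writing $\nabla f_i(x_{t+1/2}) = \big(\nabla f_i(x_{t+1/2}) - \nabla f_i(x_t)\big) + \nabla f_i(x_t)$, and apply the triangle inequality to obtain $\|\nabla f_i(x_{t+1/2})\| \leq \|\nabla f_i(x_{t+1/2}) - \nabla f_i(x_t)\| + \|\nabla f_i(x_t)\|$. Next I would bound the first summand using \hyperlink{assump:betasmoo}{\textbf{(A1)}}: since $\|x_{t+1/2} - x_t\| = \rho\|\nabla f_i(x_t)\|$, smoothness gives $\|\nabla f_i(x_{t+1/2}) - \nabla f_i(x_t)\| \leq \beta\|x_{t+1/2}-x_t\| = \beta\rho\|\nabla f_i(x_t)\|$. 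Substituting back yields $\|\nabla f_i(x_{t+1/2})\| \leq (\beta\rho + 1)\|\nabla f_i(x_t)\|$, and squaring both sides of this nonnegative inequality produces the stated factor $(\beta\rho+1)^2$.

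Honestly, I expect essentially no obstacle here: the argument is a two-line application of standard inequalities. The only points worth stating carefully are that squaring preserves the inequality because both sides are nonnegative, and that $\rho \geq 0$ ensures the multiplier $\beta\rho+1$ is positive. The role of this lemma is structural rather than difficult: together with the gradient-alignment estimate of \cref{lemma:mini-batch-grad-align}, it lets one replace the perturbed SAM gradient norm by the unperturbed one, which is exactly what is needed to close the one-step descent recursion driving the linear rate in \cref{thm:main-PL-stochSAM}.
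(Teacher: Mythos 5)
Your proof is correct and essentially the same as the paper's: both decompose $\nabla f_i(x_{t+1/2})$ by adding and subtracting $\nabla f_i(x_t)$ and apply \hyperlink{assump:betasmoo}{\textbf{(A1)}} to the exact displacement $x_{t+1/2}-x_t=\rho\nabla f_i(x_t)$. The paper merely squares first, expanding $\left\|\nabla f_i(x_{t+1/2})\right\|^2$ as a binomial and bounding the cross term via Cauchy--Schwarz, which is precisely the squared form of your triangle-inequality step, so the two arguments coincide term by term.
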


This second lemma shows that the norm of a stochastic SAM gradient is bounded by the norm of the stochastic gradient.
Similar to the Lemma \ref{lemma:mini-batch-grad-align}, as $\beta$ and $\rho$ grow bigger the norm for a stochastic SAM gradient can become larger than the norm of the true gradient.

\begin{proof}
    We use the following binomial squares:
    \begin{equation*}\label{eq:binomsq}
        \begin{split}
            & \| \nabla f_i ( x_{t+1/2}) \|^2  \\
            &= \| \nabla f_i ( x_{t+1/2}) - \nabla f_i(x_t) \|^2  +2 \langle \nabla f_i(  x_{t+1/2}  )-\nabla f_i ( x_t) , \nabla f_i (x_t) \rangle + \| \nabla f_i ( x_t) \|^2.
        \end{split}
    \end{equation*}
    
    We bound the right-hand side using Cauchy-Schwarz inequality and Assumption \hyperlink{assump:betasmoo}{\textbf{(A1)}}, which gives
    \begin{align*}\label{eq:mb-norm_bound}
    & \left\|  \nabla f_i(x_{t+{1/2}})\right\|^2 \\
    &= \| \nabla f_i ( x_{t+1/2}) - \nabla f_i(x_t) \|^2 + 2 \langle \nabla f_i(  x_{t+1/2}  )-\nabla f_i ( x_t), \nabla f_i (x_t) \rangle + \| \nabla f_i ( x_t) \|^2\\
    &\underset{\text{C.S.}}{\leq}
    \| \nabla f_i ( x_{t+1/2}) - \nabla f_i(x_t) \|^2 + 2\| \nabla f_i(  x_{t+1/2}  )-\nabla f_i ( x_t) \|  \|\nabla f_i (x_t)\| + \| \nabla f_i ( x_t) \|^2\\
    &\underset{\hyperlink{assump:betasmoo}{\textbf{(A1)}}}{\leq}
    ~ \beta^2\|  x_{t+1/2}-x_t \|^2 +  2 \beta\|  x_{t+1/2}-x_t \|  \|\nabla f_i (x_t)\| + \| \nabla f_i ( x_t) \|^2 ~\\
    &~=~ 
    ~ \beta^2\rho^2\| \nabla f_i(x_t) \|^2 +  2 \beta\rho\| \nabla f_i(x_t) \|^2 + \| \nabla f_i ( x_t) \|^2 ~\\
    &~=~
    (\beta\rho+1)^2\| \nabla f_i ( x_t) \|^2
    \end{align*}
\end{proof}

These two lemmas essentially show how similar a stochastic SAM gradient is to the stochastic gradient, where the two become more similar as $\beta$ and $\rho$ decrease, which aligns well with our intuition.
Using Lemma \ref{lemma:mini-batch-grad-align} and \ref{lemma:mini-batch-grad-norm-bound}, we provide the convergence result in the following theorem.

\begin{theorem}\label{thm:PL-mini-batch-SAM}
    Suppose that Assumptions \hyperlink{assump:betasmoo}{\textbf{(A1-4)}} holds.
    For any mini-batch size $B \in \mathbb{N}$ and  $\rho\leq \frac{1}{(\beta/\alpha + 1/2)\beta}$, unnormalized mini-batch SAM with constant step size $\eta^\star_B \defeq \frac{1-(\kappa_B +1/2)\beta\rho}{2\lambda\kappa_B(\beta\rho+1)^2}$ gives the following guarantee at step $t$:
    \begin{equation*}\label{eq:theorem-PL-mini-batch-SAM}
    \ex{x_t}{f(x_t)}\leq
    \left(1 -  \frac{\alpha\,\eta^\star_B}{2}  \Big(1-\Big(\kappa_B +\frac{1}{2}\Big)\beta\rho\Big)\right)^t\,f(x_0),
    \end{equation*}
    where $\kappa_B = \frac{1}{B}\left(\frac{B-1}{2}+\frac{\beta}{\alpha}\right)$. 
\end{theorem}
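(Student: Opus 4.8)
The plan is to run the standard Polyak--{\L}ojasiewicz descent argument, using \cref{lemma:mini-batch-grad-align,lemma:mini-batch-grad-norm-bound} to control the discrepancy between the SAM gradient and the plain stochastic gradient, and using interpolation to remove the additive noise floor that normally blocks linear rates. Throughout I condition on $x_t$ and average over the mini-batch $I_t^B$ sampled at step $t$. The first observation is that $g_t^B=\frac1B\sum_{i\in I_t^B}\nabla f_i$ is itself $\beta$-smooth, being an average of $\beta$-smooth maps, so both lemmas apply verbatim with $g_t^B$ in place of $\nabla f_i$ and with $x_{t+1/2}=x_t+\rho g_t^B(x_t)$: this gives the alignment bound $\langle g_t^B(x_{t+1/2}),\nabla f(x_t)\rangle\ge\langle g_t^B(x_t),\nabla f(x_t)\rangle-\frac{\beta\rho}{2}\|g_t^B(x_t)\|^2-\frac{\beta\rho}{2}\|\nabla f(x_t)\|^2$ and the norm bound $\|g_t^B(x_{t+1/2})\|^2\le(\beta\rho+1)^2\|g_t^B(x_t)\|^2$.

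Next I would start from the $\lambda$-smoothness descent inequality for $f$ applied to the update $x_{t+1}-x_t=-\eta\,g_t^B(x_{t+1/2})$, take the conditional expectation, and substitute the two bounds above. Unbiasedness $\E[g_t^B(x_t)\mid x_t]=\nabla f(x_t)$ turns the surviving alignment term into $\|\nabla f(x_t)\|^2$, and collecting everything leaves a one-step inequality in which $\|\nabla f(x_t)\|^2$ appears with coefficient $-\eta(1-\frac{\beta\rho}{2})$ and the second moment $\E\|g_t^B(x_t)\|^2$ appears with coefficient $\frac{\eta\beta\rho}{2}+\frac{\lambda\eta^2}{2}(\beta\rho+1)^2$.

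The crux is then the second-moment bound $\E\|g_t^B(x_t)\|^2\le 2\kappa_B\|\nabla f(x_t)\|^2$. For this I would use the variance decomposition of the mini-batch gradient, $\E\|g_t^B(x)\|^2=\frac{B-1}{B}\|\nabla f(x)\|^2+\frac1B\cdot\frac1n\sum_{i=1}^n\|\nabla f_i(x)\|^2$, then the interpolation self-bounding inequality $\frac1n\sum_i\|\nabla f_i(x)\|^2\le 2\beta f(x)$ (each $f_i$ is nonnegative, $\beta$-smooth, and minimized at $x^\star$ with value $0$ by \hyperlink{assump:interpol}{(A4)}), and finally the PL inequality in the form $f(x)\le\|\nabla f(x)\|^2/\alpha$; these combine to give precisely $\kappa_B=\frac1B(\frac{B-1}{2}+\frac{\beta}{\alpha})$. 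Substituting, the one-step inequality becomes $\E[f(x_{t+1})]\le f(x_t)-\eta(A-\eta D)\|\nabla f(x_t)\|^2$ with $A=1-(\kappa_B+\frac12)\beta\rho$ and $D=\lambda\kappa_B(\beta\rho+1)^2$, where the hypothesis on $\rho$ together with $\kappa_B\le\kappa_1=\beta/\alpha$ ensures $A\ge0$. Maximizing the scalar map $\eta\mapsto\eta(A-\eta D)$ yields the stated $\eta^\star_B=A/(2D)$, at which $A-\eta^\star_B D=A/2$; applying PL once more as $\|\nabla f(x_t)\|^2\ge\alpha f(x_t)$ and iterating the resulting contraction over $t$ produces the claimed geometric bound, with the $B=1$ case recovering \cref{thm:main-PL-stochSAM}.

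The main obstacle is obtaining the second-moment bound with the exact constant $\kappa_B$: only the precise chaining of the mini-batch variance identity, the interpolation-based self-bounding inequality, and the PL conversion from $f$ back to $\|\nabla f\|^2$ reproduces the numerator $\frac{B-1}{2}+\frac{\beta}{\alpha}$, and care is needed that the sampling scheme matches the $\frac{B-1}{B},\frac1B$ weights. The remaining pieces---the descent-lemma bookkeeping in the second paragraph and the one-dimensional optimization in $\eta$---are routine once this bound is in place.
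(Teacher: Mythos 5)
Your proposal is correct and follows essentially the same route as the paper's proof: the same two lemmas applied to the mini-batch gradient $g_t^B$ (which is $\beta$-Lipschitz as an average of $\beta$-Lipschitz gradient maps), the same $\lambda$-smoothness descent bookkeeping with unbiasedness, the same mini-batch variance identity, the same interpolation self-bounding inequality $\|\nabla f_i(x)\|^2\leq 2\beta f_i(x)$ and PL conversions, and the same one-dimensional optimization in $\eta$ yielding $\eta^\star_B$ and the stated geometric rate. The only difference is organizational: you fold interpolation and PL into a single strong-growth-type bound $\E\bigl[\|g_t^B(x_t)\|^2\mid x_t\bigr]\leq 2\kappa_B\|\nabla f(x_t)\|^2$ before collecting terms, whereas the paper carries the $f(x_t)$ term along and applies PL at the end (under its condition $\tau_2\geq 0$, which your condition $A\geq 0$ at $\eta^\star_B$ implies, both resting on $\beta/\alpha\geq 1/2$) --- the two orderings produce identical constants.
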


This theorem states that mini-batch SAM converges at a linear rate under overparameterization.

\begin{proof}
Proof can be outlined in 3 steps.\\

\begin{framed}
\vspace{-4mm}
    \begin{description}
        \item[step 1.] Handle terms containing mini-batch SAM gradient $g_{t}^{_B}(x_t + \rho g_{t}^{_B}(x_t))$ using bounds from \hyperlink{assump:betasmoo}{\textbf{(A1)}}.
        \item[step 2.] Take conditional expectation $\expec{}{\,\cdot \, | x_t}$ and substitute expectation of function of mini-batch gradient $g_t^{_B}$ with terms containing $\| \nabla f(x_t)\|$ and $\expec{}{\| \nabla f_i(x_t)\|^2 ~\Big\lvert~ x_t}$.
        \item[step 3.] Bound the two terms from \textbf{step 2}, one using Assumptions \hyperlink{assump:betasmoo}{\textbf{(A1)}} and \hyperlink{assump:interpol}{\textbf{(A4)}} and the other using Assumption \hyperlink{assump:PL}{\textbf{(A3)}} and \hyperlink{assump:interpol}{\textbf{(A4)}} which results in all the terms to contain $f(x_t)$. 
        Then finally we take total expectations to derive the final descent bound.
    \end{description}
\vspace{-3.5mm}
\end{framed}

\enspace

We start from the quadratic upper bound derived from Assumption \hyperlink{assump:lambdasmoo}{\textbf{(A2)}};
$$f(x_{t+1})\leq f(x_{t})+\langle \nabla f(x_t),~ x_{t+1}-x_t\rangle+\frac{\lambda}{2}\|x_{t+1}-x_{t}\|^2.$$

Applying mini-batch SAM update, we then have

$$f(x_{t})-f(x_{t+1})\geq \eta \left\langle~ \nabla f(x_t)~,~ g_{t}^{_B}(x_{t+{1/2}})
~\right\rangle -\frac{\eta^2\lambda}{2}\left\|  g_{t}^{_B}(x_{t+{1/2}})\right\|^2,$$ 
where $x_{t+1/2}=x_t+\rho g_{t}^{_B}(x_t)$.

\fbox{\textbf{step 1.}} \ \ We can see that there are two terms that contain a mini-batch SAM gradient $g_{t}^{_B}(x_{t+1/2})$.
We see that each can be bounded directly using Lemma \ref{lemma:mini-batch-grad-align} and \ref{lemma:mini-batch-grad-norm-bound}, which gives

\begin{align*}
    &f(x_{t})-f(x_{t+1}) \\
    &\geq
    \eta \left(\langle  g_{t}^{_B}(x_t) , \nabla f (x_t)\rangle - \frac{{ \beta} \rho}{2} \| g_{t}^{_B} (x_t) \|^2  - \frac{\beta\rho}{2} \| \nabla f (x_t)\|^2\right) \\
    &~~~~~~~~~
    -\frac{\eta^2\lambda}{2}~(\beta\rho+1)^2~\| g_{t}^{_B} ( x_t) \|^2\\
    &~~~=~~
    \eta\langle  g_{t}^{_B}(x_t) , \nabla f (x_t)\rangle - \frac{\eta \beta\rho}{2} \| \nabla f (x_t)\|^2 
    - \frac{\eta}{2} \left(\eta\lambda~(\beta\rho+1)^2 + \beta\rho \right)~\| g_{t}^{_B} ( x_t) \|^2.
\end{align*}

\fbox{\textbf{step 2.}} \ \ Now we apply $\expec{}{\, \cdot \, | x_t}$ to all the terms.

\begin{align*}
    &\expec{}{f(x_{t})-f(x_{t+1}) ~\big\lvert~ x_t}\\
    &=
    f(x_{t})-\expec{}{f(x_{t+1}) ~\big\lvert~ x_t}\\
    &\geq 
    \eta\expec{}{\langle  g_{t}^{_B}(x_t) , \nabla f (x_t)\rangle ~\Big\lvert~ x_t} - \frac{\eta \beta\rho}{2} \expec{}{\| \nabla f (x_t)\|^2  ~\big\lvert~ x_t} \\
    &\ \ \ \ \ \ 
    - \frac{\eta}{2} \left(\eta\lambda~(\beta\rho+1)^2 + \beta\rho \right)~ \expec{}{\| g_{t}^{_B} ( x_t) \|^2 ~\Big\lvert~ x_t}\\
    &=
    \eta\left( 1 - \frac{\beta\rho}{2}\right) \| \nabla f (x_t)\|^2 
    - \frac{\eta}{2} \left(\eta\lambda~(\beta\rho+1)^2 + \beta\rho \right)~ \expec{}{\| g_{t}^{_B} ( x_t) \|^2 ~\Big\lvert~ x_t}.
\end{align*}

Here we expand the term $\expec{}{\| g_{t}^{_B} ( x_t) \|^2 ~\Big\lvert~ x_t}$ by expanding the mini-batched function into individual function estimators as follows.

\begin{equation}\label{eq:mb-norm}
    \begin{split}
        &\expec{g^{_B}_t}{\norm{g_{t}^{_B}(x_t)}^2 ~\Big\lvert~ x_t}\\
        &= 
        \expec{I_t^B}{\left\langle \frac{1}{B}
        \sum_{i\in I_t^B} \nabla f_i(x_t)~,~ \frac{1}{B}
        \sum_{j \in I_t^B} \nabla f_j(x_t) \right\rangle ~\Bigg\lvert~ x_t}\\
        &=
        \frac{1}{B^2} \left\lbrace
        \sum_{i\in I_t^B} \expec{f_i}{\norm{\nabla f_i (x_t)}^2 ~\Big\lvert~ x_t}
        + \sum_{i\in I_t^B} \sum_{\substack{j\in I_t^B\\\text{(}j\neq i\text{)}}}
        \expec{f_i, f_j}{\iprod{\nabla f_i (x_t)} {\nabla f_j (x_t)} ~\Big\lvert~ x_t}
        \right\rbrace\\
        &=
        \frac{1}{B} \expec{}{\norm{\nabla f_i(x_t)}^2 ~\Big\lvert~ x_t} + \frac{B-1}{B} \norm{\nabla f(x_t)}^2.
    \end{split}
\end{equation}

Using (\ref{eq:mb-norm}), we get
\begin{align}
    &f(x_{t})-\expec{}{f(x_{t+1}) ~\big\lvert~ x_t}\\
    &\geq
    \eta\left( 1 - \frac{\beta\rho}{2}\right) \| \nabla f (x_t)\|^2 \nonumber\\
    &~~~
    - \frac{\eta}{2} \left(\eta\lambda~(\beta\rho+1)^2 + \beta\rho \right)  \left( \frac{1}{B}\expec{}{\norm{\nabla f_i(x_t)}^2 ~\Big\lvert~ x_t} + \frac{B-1}{B} \norm{\nabla f(x_t)}^2\right) \nonumber\\
    &=
    \eta \left( \left(1-\frac{\beta\rho}{2}\right) - \frac{B-1}{2B}\left(\eta\lambda(\beta\rho+1)^2+\beta\rho\right) \right) \| \nabla f (x_t) \|^2 \nonumber \\
    &~~~
    -\frac{\eta}{2B} \left(\eta\lambda(\beta\rho+1)^2+\beta\rho\right) \expec{}{\norm{\nabla f_i(x_t)}^2 ~\Big\lvert~ x_t} \label{eq:step2-result}.
\end{align}

\fbox{\textbf{step 3.}} \ \ In this step, we bound the two terms and take the total expectation to derive the final descent bound.

We first derive a bound for $\expec{}{\norm{\nabla f_i(x_t)}^2 ~\Big\lvert~ x_t}$.
We start from the following bound derived from Assumption \hyperlink{assump:betasmoo}{\textbf{(A1)}}:
\begin{equation*}
\|\nabla f_i(x_t)-\nabla f_i(x^\star)\|^2 \leq 2\beta (f_i(x_t) - f_i(x^\star)).
\end{equation*}

By Assumption \hyperlink{assump:interpol}{\textbf{(A4)}}, this reduces to
\begin{equation*}
\|\nabla f_i(x_t)\|^2 \leq 2\beta f_i(x_t).
\end{equation*}

Applying this to (\ref{eq:step2-result}) gives
\begin{align}
f(x_{t})-\expec{}{f(x_{t+1}) ~\big\lvert~ x_t}
&\geq
\eta \left( \left(1-\frac{\beta\rho}{2}\right) - \frac{B-1}{2B}\left(\eta\lambda(\beta\rho+1)^2+\beta\rho\right) \right) \| \nabla f (x_t) \|^2 \nonumber\\
&~~~~~
-\frac{\eta\beta}{B} \left(\eta\lambda(\beta\rho+1)^2+\beta\rho\right) \expec{}{f_i(x_t) \lvert x_t} \nonumber\\
&=
\eta \underbrace{\left( \left(1-\frac{\beta\rho}{2}\right) - \frac{B-1}{2B}\left(\eta\lambda(\beta\rho+1)^2+\beta\rho\right) \right)}_{\tau_2} \| \nabla f (x_t) \|^2 \nonumber\\
&~~~~~
-\frac{\eta\beta}{B} \left(\eta\lambda(\beta\rho+1)^2+\beta\rho\right)  f(x_t).
\label{eq:step3-1}
\end{align}

Next, to bound $\| \nabla f(x_t) \|^2$, we use the following bound derived from applying $f(x^*)=0$ from \hyperlink{assump:interpol}{\textbf{(A4)}} to \hyperlink{assump:PL}{\textbf{(A3)}}:
\begin{equation} \label{eq:final_pl}
    \| \nabla f (x) \|^2 \geq \alpha f(x).
\end{equation}
Assuming $\tau_2\geq0$ which we provide a sufficient condition at the end of the proof, we apply (\ref{eq:final_pl}) to (\ref{eq:step3-1}) which gives
\begin{align*}
& f(x_{t})-\expec{}{f(x_{t+1}) ~\big\lvert~ x_t}\\
&\geq
\eta\alpha \left( \left(1-\frac{\beta\rho}{2}\right) - \frac{B-1}{2B}\left(\eta\lambda(\beta\rho+1)^2+\beta\rho\right) \right)  f (x_t) -\frac{\eta\beta}{B} \left(\eta\lambda(\beta\rho+1)^2+\beta\rho\right)  f(x_t) \\
&=
\eta
\left( 
    \alpha-\alpha\bigg(\underbrace{\frac{1}{B}\Big(\frac{B-1}{2}+\frac{\beta}{\alpha}\Big)}_{\kappa_B} +\frac{1}{2} \bigg)\beta\rho
    - \eta (\beta\rho+1)^2 \underbrace{\frac{\lambda}{B}\Big(\alpha\frac{B-1}{2} + \beta\Big)}_{\lambda\alpha\kappa_B}
\right)  f(x_t) \\
&=
\eta\alpha\left(1-\left(\kappa_B +\frac{1}{2}\right)\beta\rho- \eta\lambda (\beta\rho+1)^2 \kappa_B\right)  f(x_t).
\end{align*}

Hence, we get
\begin{align*}
\expec{}{f(x_{t+1}) ~\big\lvert~ x_t}&\leq \left(1-\eta\alpha\Big(1-\Big(\kappa_B +\frac{1}{2}\Big)\beta\rho\Big) + \eta^2\alpha\lambda (\beta\rho+1)^2 \kappa_B \right)f(x_t).
\end{align*}

Applying total expectation on both sides gives

\begin{align}
    \expec{}{f(x_{t+1})} \leq \left(1-\eta\alpha\Big(1-\Big(\kappa_B +\frac{1}{2}\Big)\beta\rho\Big) + \eta^2\alpha\lambda (\beta\rho+1)^2 \kappa_B \right)\expec{}{f(x_t)}.\label{eq:final_bound_PL-SGD}
\end{align}

Optimizing the multiplicative term in (\ref{eq:final_bound_PL-SGD}) with respect to $\eta$ gives $\eta = \frac{1-(\kappa_B +1/2)\beta\rho}{2\lambda\kappa_B(\beta\rho+1)^2}$, which is $\eta^\star_B$ in the theorem statement. 
With assumption of $\rho\leq \frac{1}{(\beta/\alpha + 1/2)\beta}$ so that we have $\eta^\star_B \geq 0$, applying this to (\ref{eq:final_bound_PL-SGD}) gives 
$$\ex{}{f(x_{t+1})}\leq
\left(1 -  \frac{\alpha\,\eta^\star_B}{2}  \Big(1-\Big(\kappa_B +\frac{1}{2}\Big)\beta\rho\Big)\right)\,\ex{}{f(x_t)},$$
which provides the desired convergence rate.

Last but not least, we calculate the upper bound for $\rho$ to satisfy the assumption $\tau_2\geq 0$ by substituting $\eta$ for $\eta^\star_B$ in $\tau_2$, yielding $\rho\leq\frac{2B\kappa_B+2\beta/\alpha}{(2B-1)\kappa_B+\beta/\alpha} \frac{1}{\beta}$. 
Minimizing this upper bound with respect to $B$ gives $\rho\leq\frac{1}{\beta}$, which is a looser bound than $\rho\leq \frac{1}{(\beta/\alpha + 1/2)\beta}$. 
\end{proof}

\section{Test error of SAM can decrease with overparameterization}
\label{sec:theory-gen-main}

Recent works have shown that overparameterization can even improve generalization both empirically and theoretically \citep{neyshabur2017exploring,brutzkus2019larger}.
Here, we present that overparameterization also improves generalization for SAM in the sense that test error can decrease with larger network widths (and thus more parameters).

We follow the same setting of \citet{allen2019learning}.
Specifically, we consider a risk minimization over some unknown data distribution $\mathcal{D}$ using a one-hidden-layer ReLU network with a smooth convex loss function (\eg, cross entropy).
The network is assumed to be initialized with Gaussian and take bounded inputs.
Then, we characterize a generalization property of a stochastic SAM as below.

\begin{theorem}\label{thm:sam-genbound-main}
(Informal)
Suppose we train a network having $m$ hidden neurons with training data sampled from $\D$.
Then, for every $\varepsilon$ in some open interval, there exists $M \propto 1/\varepsilon$ such that for every $m \geq M$, with appropriate values of $\eta, \rho, T$, a stochastic SAM gives the following guarantee on the test loss with high probability:
\begin{equation*}\label{eq:sam-genbound}
\ex{x_0, \cdots, x_{T-1}}{ \frac{1}{T} \sum_{t=0}^{T-1} \mathbb{E}_{\mathcal{D}} [f(x_t)] } \leq \varepsilon.
\end{equation*}
\end{theorem}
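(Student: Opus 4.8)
The plan is to follow the online-optimization and online-to-batch framework of \citet{allen2019learning}, adapting each step to accommodate the ascent perturbation of stochastic SAM. First I would invoke the overparameterization guarantee of that setting: once $m$ is large enough (concretely $m \gtrsim \poly(1/\varepsilon)$), the one-hidden-layer ReLU network stays in the near-linear ``lazy'' regime throughout training, so that (i) there is a reference parameter $W^\star$ in a small ball around the Gaussian initialization whose population loss is at most $O(\varepsilon)$, and (ii) the empirical objective restricted to this neighborhood behaves like an almost-convex function of the weights. This supplies both an approximation target and the local convexity structure that a regret argument requires.

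Next I would establish a regret bound for the SAM iterates against $W^\star$. Because the SAM update uses the gradient evaluated at the perturbed point $x_t + \rho\,\nabla f(x_t)/\norm{\nabla f(x_t)}_2$, I would write this as the unperturbed gradient plus an error term and control the error by $\beta$-smoothness, exactly as in the non-asymptotic estimate already used in \cref{sec:understanding-implicitbias}, where the deviation is shown to be at most $\beta\rho$. Treating SAM as perturbed SGD, a standard telescoping argument over the almost-convex landscape then yields, in expectation, $\frac{1}{T}\sum_{t=0}^{T-1} \mathbb{E}_{\D}[f(x_t)] \le \mathbb{E}_{\D}[f(W^\star)] + \frac{\norm{x_0 - W^\star}_2^2}{2\eta T} + O(\eta + \rho)$, where the last group collects the smoothness and perturbation residuals accumulated over the trajectory. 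Choosing $\eta$ and $\rho$ small and $T$ large drives the optimization contribution below $\varepsilon$; the linear-rate guarantee of \cref{thm:main-PL-stochSAM} can be invoked here to certify that the empirical loss along the trajectory is genuinely small under interpolation.

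Finally I would pass from this averaged training bound to the population statement by online-to-batch. Since each $x_t$ is a function only of the first $t$ samples, its loss on a fresh independent sample is an unbiased estimate of $\mathbb{E}_{\D}[f(x_t)]$, so taking expectation over the data draw gives $\mathbb{E}_{x_0,\dots,x_{T-1}}\!\left[\frac{1}{T}\sum_{t=0}^{T-1}\mathbb{E}_{\D}[f(x_t)]\right]$ controlled by the averaged expected training loss plus a generalization gap, the latter bounded by a Rademacher-complexity estimate of order $\poly/\sqrt{T}$ that shrinks in the overparameterized regime. Combining the approximation term $\mathbb{E}_{\D}[f(W^\star)] = O(\varepsilon)$, the optimization error, and the generalization gap, and setting $M \propto 1/\varepsilon$ with $\eta,\rho,T$ tuned accordingly, yields the claimed bound.

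The main obstacle I anticipate is controlling the SAM ascent step inside the lazy-training coupling. The normalized perturbation $\rho\,\nabla f/\norm{\nabla f}_2$ is delicate when $\norm{\nabla f}_2$ is small near stationarity, so I would need to verify that the perturbed iterate never leaves the neighborhood of initialization on which the network remains close to its linearization, and that the $\beta\rho$ error accumulated over $T$ steps stays subdominant to the $1/\sqrt{T}$ and $1/(\eta T)$ terms. Pinning down the joint scaling of $\rho$, $\eta$, $T$, and $m$ so that the approximation, optimization, and generalization errors are \emph{simultaneously} $O(\varepsilon)$—while keeping $\rho$ within the bound required by \cref{thm:main-PL-stochSAM}—is the careful bookkeeping the full proof must carry out.
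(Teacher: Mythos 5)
Your high-level skeleton matches the paper's: both follow \citet{allen2019learning}, using a reference point $\refw$ in a small ball around the Gaussian initialization, a regret/telescoping argument via $\|W_{t+1}-\refw\|_F^2 = \|W_t-\refw\|_F^2 - 2\eta\langle\nabla L_F(z_t, W_{t+1/2}), W_t-\refw\rangle + \eta^2\|\nabla L_F\|_F^2$, and then passing to the population loss (the paper inherits the algorithm-independent Rademacher-complexity part of \citet{allen2019learning} over the $N$ training samples, rather than your fresh-sample online-to-batch conversion, but that is a cosmetic difference). However, your central technical step has a genuine gap: you control the discrepancy between the SAM gradient at $W_{t+1/2}$ and the SGD gradient at $W_t$ by invoking $\beta$-smoothness of the objective in parameter space, giving an error of $\beta\rho$ per step. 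For a ReLU network this smoothness simply does not hold---the loss as a function of the weights is piecewise smooth with kinks wherever a neuron's activation sign flips---so the bound you import from \cref{sec:understanding-implicitbias} (and likewise the appeal to \cref{thm:main-PL-stochSAM}, whose smoothness, PL, and interpolation hypotheses are not available in this setting) cannot be used. This is precisely where the paper's new content lives: \cref{lemma:sam-coupling-samupdate} bounds (a) the \emph{fraction of neurons whose sign pattern flips} under the ascent step, shown to be $\wt{O}(\varepsilon_a\rho\sqrt{km})$, and then separately bounds the function change and the gradient difference $\|\nabla L_F(\cdot;W_{t+1/2}) - \nabla L_F(\cdot;W_t)\|_{2,1}$ by splitting neurons according to whether they flip, using only the $1$-smoothness of the loss $L$ with respect to the network \emph{output} together with the coupling to the convex pseudo-network $G$ (\cref{corollary:sam-coupling-main}). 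The resulting error terms scale polynomially in $m$ (\eg $\varepsilon_a^3 k^{3/2}\rho m^2$), which is why the paper must take $\rho = \wttheta(1/(\varepsilon^3 k m^3))$, shrinking rapidly with width---your dimension-free ``$O(\eta+\rho)$ residual, choose $\rho$ small'' bookkeeping would not surface this requirement and would not close the argument.

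Two smaller corrections. First, the anticipated obstacle you raise about the normalized perturbation $\rho\,\nabla f/\|\nabla f\|_2$ near stationarity is moot here: the theorem is proved for the \emph{unnormalized} SAM update of \cref{eq:samdef-gen} (as the paper notes in \cref{sec:discussion}, all the theory is for unnormalized $n$-SAM), so no division by a vanishing gradient norm occurs. Second, the convexity you need is not an ``almost-convex'' property of the empirical ReLU objective near initialization, but exact convexity of $L_G$ in the weights, since the pseudo-network $G$ of \cref{eq:def-g} is linear in $W_t$; the regret bound is run on $L_G$ and transferred to $L_F$ through the coupling bounds, which is a cleaner decomposition than the perturbed-SGD-on-an-almost-convex-landscape argument you sketch.
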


We present a formal version of the theorem and its proof in \cref{app:sam-genbound}.

This result suggests that to achieve $\varepsilon$-test accuracy from running $T$ iterations of SAM requires a minimum width $M$ proportional to $1/\varepsilon$.
This indicates that a network with a larger width can achieve a lower test error, and hence, overparameterization can improve generalization for SAM.

\paragraph{Experiment}

We support this result empirically on synthetic data for a simple regression task.
\begin{wrapfigure}{r}{0.25\linewidth}
  \vspace{-1em}
  \resizebox{\linewidth}{!}{
  \centering
  \includegraphics[width=\linewidth, trim={0.8em 0.8em 0.8em 0}, clip]{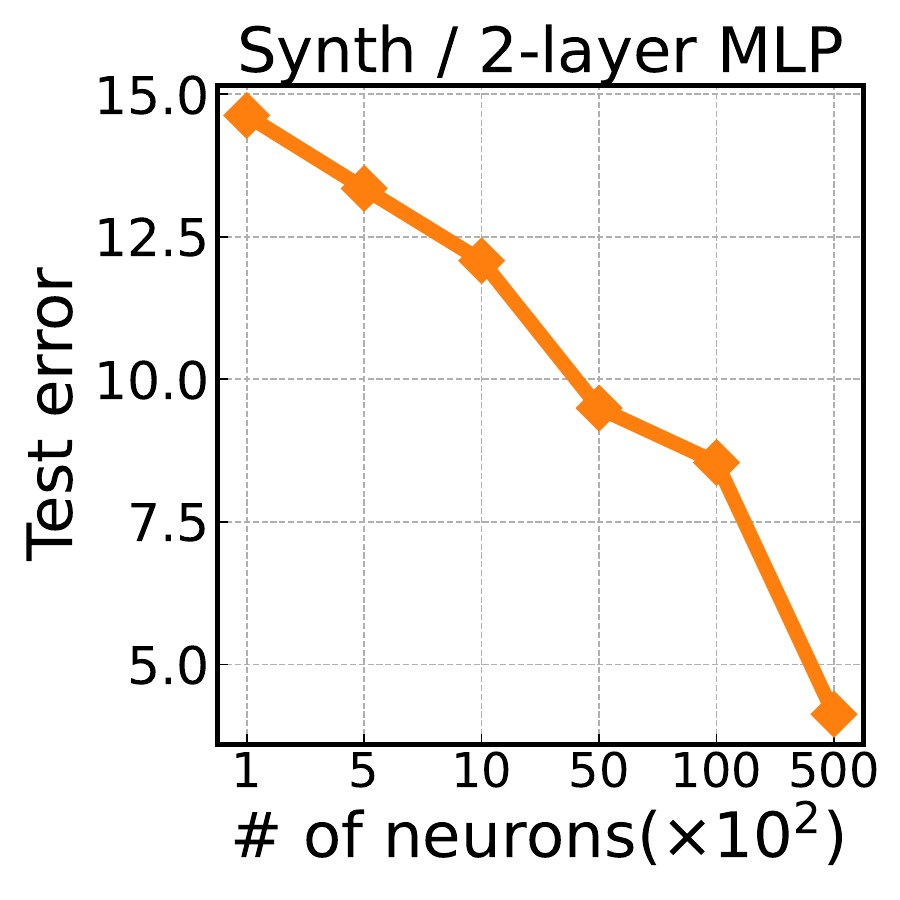}
  }
  \caption{
    Test error keeps on decreasing with a larger number of neurons.
  }
  \label{fig:generror_experiments}
  \vspace{-1.0em}
\end{wrapfigure}
Here, each element of the input $x = (x_1, x_2, x_3, x_4) \in \R^4$ for synthetic data is sampled from random Gaussian distribution and then normalized to satisfy $\| x \|_2 = 1$, and target $y$ is calculated as $y = (\sin(3x_1) + \sin(3x_2) + \sin(3x_3)-2)^2 \cdot \cos (7x_4)$. 
The weights and biases of the first layer are initialized from $\mathcal{N}(0, 1/m)$ where $m$ is the number of hidden neurons, and the weights of the second layer are initialized from $\mathcal{N}(0, 1)$.
Specifically, following the setup of \citet{allen2019learning}, we train $2$-layer ReLU networks with synthetic data.
We only train the weights of the first layer for $800$ epochs, while the biases of the first layer and the weights of the second layer are frozen to initialized values.
We use $1000$ and $5000$ data points for training and testing respectively.
We use a batch size of $50$ without weight decay and decay learning rate by $0.1$ after $50\%$ of the total epochs.
We perform the grid search over learning rate and $\rho$ from $\{10^{-k} \vert 2 \leq k \leq 7\}$ and $\{10^{-k} \vert 1 \leq k \leq 5\}$ respectively.
The results are reported in \cref{fig:generror_experiments}.

\section{Proof of Theorem \ref{thm:sam-genbound-main}}

\label{app:sam-genbound}

In this section, we provide the formal version of \cref{thm:sam-genbound-main} and its proof.

\subsection{Notation and setup}

Throughout this section, we use the same notations and setups as  \citet{allen2019learning}.
We remark that the notations are different from those used in \cref{sec:theory-gen-main,app:prooflinconv,app:sam-stability}.

First, let us assume the unknown data distribution $\D$ where each data $z = (x, y)$ consists of the input $x \in \R^d$ and the corresponding label $y \in \Y$.
We also assume, without loss of generality, that $\| x\|_2 = 1$ and $x_d = 1/2$.
The loss function  $L: \R^k \times \Y \rightarrow \R$ is assumed to be non-negative, convex, $1$-Lipschitz continuous, and $1$-smooth with respect to its first argument.

Next, we define the target network $F^* = (f^*_1, \cdots, f^*_k): \R^d \rightarrow \R^k$ as 
\begin{equation} \label{eq:targetfn}
    f^*_r(x) \eqdef \sum_{i=1}^p a^*_{r,i} \phi_i(\ip{w^*_{1,i}}{x}) \ip{w^*_{2,i}}{x}
\end{equation}
where each $\phi_i: \R \rightarrow \R$ is an infinite-order smooth function.
Here, we assume that $\|w^*_{1,i} \|_2 = \|w^*_{2,i}\|_2 = 1, | a^*_{r,i} | \leq 1$ hold for all $i \in \{1, \cdots, p\}$.
We denote the sample and network complexity of $\phi$ as $\compsam$  and $\compnet$ respectively (see Section 2 of \citet{allen2019learning} for the formal definitions).
Suppose we have a concept class $\C$ that consists of all functions $F^*$ with bounded number of parameters $p$ and complexity $\mathfrak{C}$.
We also denote the population risk achieved by the best target function $F^*$ in this concept class as $\opt$, \ie, $\opt = \underset{F^\star \in \C}{\min} \E_{(x, y) \sim \D}[L(F^*(x), y]$

Then, we define the learner network $F = (f_1, \cdots, f_k): \R^d \rightarrow \R^k$ as below.
\begin{equation} \label{eq:relu-net}
    f_r(x) \eqdef \sum_{i=1}^m \init{a}_{r,i} \operatorname{ReLU} (\ip{w_i}{x} + \init{b}_i).
\end{equation}
Note that the learner network is a $2$-layer ReLU network with $m$ neurons.
We train the network with $n$ sampled data sampled from $\D$ and denote it as $\Z = \{z_1, \cdots, z_N\}$.
We only train the weights $W = (w_1, \cdots, w_m) \in \R^{m \times d}$ and freeze the values of $a, b$ during the training.
We denote the initial value of the weight and its value at time $t$ as $\initw$ and $ \initw + W_t$ respectively.
Each element of $\initw$ and $\init{b}$ are initialized from $\mathcal{N}(0, 1/m)$ while each element of $\init{a}_r$ are initialized from $\mathcal{N}(0, \varepsilon_a^2)$ for some fixed $\varepsilon_a \in (0, 1]$.
At each step $t$, we sample a single data point $z = (x, y)$ from $\Z$ and update $W$ using un-normalized version of SAM:
\begin{align} \label{eq:samdef-gen}
    W_{t+1} &= W_t - \eta \nabla L(F(x; \initw + W_{t + 1/2}), y) \notag \\
    &= W_t - \eta \nabla L(F(x; \initw + \rho \nabla L(F(x; \initw + W_t), y)), y).
\end{align}

\subsection{Formal theorem}

Now, we are ready to present the formal version of \cref{thm:sam-genbound-main} below.

\begin{theorem} \label{thm:sam-genbound-formal} (SAM version of Theorem 1 in \citet{allen2019learning})
    For every $\varepsilon \in \left(0, \frac{1}{p k \compsam (\phi, 1)}\right)$, there exists $M_0 = \poly(\compnet(\phi, 1), 1/\varepsilon)$ and  $N_0 = \poly(\compsam(\phi, 1), 1/\varepsilon)$ such that for every $m \geq M_0$ and every $N \geq \widetilde{\Omega}(N_0)$, by choosing $\varepsilon_a = \varepsilon / \widetilde{\Theta}(1)$ for the initialization and $\eta = \wttheta(\frac{1}{\varepsilon k m}), \rho = \wttheta(\frac{1}{\varepsilon^3 k m^3}), T = \wttheta \left( \frac{(\compsam(\phi, 1))^2 \cdot k^3p^2}{\varepsilon^2} \right)$, running $T$ iterations of stochastic SAM defined in \cref{eq:samdef-gen} gives the following generalization bound with high probability over the random initialization.
    \begin{equation} \label{eq:sam-gen-formal}
        \E_{\text{SAM}} \left [ \frac{1}{T} \sum_{t=0}^{T-1} \E _{(x, y) \sim \D} L(F(x; \initw + W_t), y) \right] \leq \opt + \varepsilon.
    \end{equation}
\end{theorem}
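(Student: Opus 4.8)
The plan is to follow the three-part structure of the proof of Theorem 1 in \citet{allen2019learning} --- existence of a good pseudo-network, coupling between the true ReLU network and the pseudo-network near initialization, and an online/convex-optimization regret bound --- and to show that the SAM perturbation, with the prescribed $\rho = \wttheta(1/(\varepsilon^3 k m^3))$, contributes only an $o(\varepsilon)$ error that is absorbed into the final bound. The point of the very small $\rho$ is precisely that SAM is then provably a negligible perturbation of SGD along the entire trajectory, so that the existing SGD analysis carries over.

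First I would import the existence lemma: there is a weight matrix $W^\star$ inside a small ball of the initialization $\initw$ such that the associated \emph{pseudo-network} (the learner with its ReLU activation patterns frozen at initialization) approximates the best target $F^\star \in \C$ well enough that its population risk is at most $\opt + O(\varepsilon)$. This is exactly where the sample and network complexities $\compsam(\phi,1)$ and $\compnet(\phi,1)$ enter, and it is inherited verbatim from \citet{allen2019learning} since it concerns only the target and the initialization, not the optimizer. Next I would establish the coupling estimates: as long as every iterate $\initw + W_t$ stays inside a radius-$\tau$ ball (guaranteed for all $t \le T$ by the choice of $\eta$ and $T$), the true network value and gradient are uniformly close to those of the pseudo-network, and the per-sample loss is approximately convex in $W$ along the trajectory.

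The new, SAM-specific step is to control the discrepancy between the SAM gradient $\nabla L(F(x; \initw + W_{t+1/2}), y)$ and the plain stochastic gradient $\nabla L(F(x; \initw + W_t), y)$, where $W_{t+1/2} = W_t + \rho \nabla L(F(x; \initw + W_t), y)$. Because the per-sample gradient is bounded in the near-initialization region (by the $1$-Lipschitzness and $1$-smoothness of $L$ together with the bounded network output), the perturbed point $W_{t+1/2}$ stays in the same good ball and lies within distance $O(\rho G)$ of $W_t$, where $G$ is the gradient-norm bound. Invoking the smoothness of the loss in the weights over this region then gives $\| \nabla L(F(x; \initw + W_{t+1/2}), y) - \nabla L(F(x; \initw + W_t), y) \| = O(\rho G S)$, with $S$ the weight-space smoothness constant. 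Treating SAM as SGD plus this controllable error, I would run the standard descent/telescoping argument: expand $\| W_{t+1} - W^\star \|^2$, use convexity to lower-bound the cross term by the gap between the current loss and the pseudo-network's loss, sum over $t = 0, \dots, T-1$, divide by $T$, and absorb the SAM error into the $O(\varepsilon)$ budget. A final generalization bound --- valid because the weights never leave the small ball, so the effective hypothesis class has bounded Rademacher complexity --- then converts this trajectory-averaged training guarantee into the population bound \eqref{eq:sam-gen-formal}.

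The main obstacle I anticipate is the bookkeeping that makes the SAM error truly negligible uniformly over the trajectory: one must ensure that both $W_t$ and the intermediate $W_{t+1/2}$ remain inside the region where the coupling and smoothness estimates hold, and reconcile the several distinct powers of $m$ appearing in the gradient-norm bound $G$, the weight-space smoothness $S$, the step size $\eta = \wttheta(1/(\varepsilon k m))$, and $\rho$. The choice $\rho = \wttheta(1/(\varepsilon^3 k m^3))$ is calibrated so that the accumulated SAM error $\eta \rho G S$ is $o(\varepsilon/T)$ per step; verifying this scaling --- rather than the high-level coupling argument, which transfers directly from \citet{allen2019learning} --- is where the real work lies.
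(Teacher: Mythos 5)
Your skeleton matches the paper's proof: reuse the algorithm-independent parts of \citet{allen2019learning} (existence of a good pseudo-network $W^{\text{\textreferencemark}}$, coupling between $F$ and the frozen-activation pseudo-network $G$, and the online convex regret/telescoping argument on $\|W_t - W^{\text{\textreferencemark}}\|_F^2$), and isolate as the only new ingredient a bound on the discrepancy between the SAM gradient at $W_{t+1/2}$ and the plain stochastic gradient at $W_t$. The paper does exactly this via \cref{lemma:sam-coupling-samupdate} and then \cref{lemma:sam-gen-optim}, with the half-step radius folded into the coupling via $\tau = \varepsilon_a(\eta+\rho)t$ as you anticipate.

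However, there is a genuine gap at precisely the step you flag as the crux. You bound the SAM--SGD gradient gap by ``invoking the smoothness of the loss in the weights over this region,'' claiming $\|\nabla L(F(x;\initw+W_{t+1/2}),y) - \nabla L(F(x;\initw+W_t),y)\| = O(\rho G S)$ for a weight-space smoothness constant $S$. No such constant exists for the ReLU learner: $\frac{\partial}{\partial w_i} f_r = \init{a}_{r,i}\, x\, \Id[\ip{w_i}{x} + \init{b}_i \geq 0]$ is piecewise constant and \emph{discontinuous} in $w_i$, so the map $W \mapsto \nabla_W L(F(x;W),y)$ has jumps whenever an activation crosses zero; $L$ being $1$-smooth in the network \emph{output} does not transfer to smoothness in the weights. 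The paper's \cref{lemma:sam-coupling-samupdate} replaces your smoothness step with a two-regime argument: part (a) shows, using anti-concentration of $\ip{\init{w}_i}{x}+\init{b}_i$ at Gaussian initialization, that the half-step flips the activation indicator for at most a $\wt{O}(\varepsilon_a \rho \sqrt{km})$ fraction of neurons; for the non-flipped set $\calH$ the activation pattern is unchanged, so the chain rule plus $1$-smoothness of $L$ in the output applies, while the flipped neurons are handled crudely via the uniform per-neuron gradient bound, contributing the dominant $\wt{O}(\varepsilon_a^4 k^2 \rho^2 m^{5/2} + \varepsilon_a^3 k^{3/2} \rho m^2)$ terms in part (c). These $m$-powers, which a smoothness heuristic cannot produce, are exactly what calibrates $\rho = \wttheta(1/(\varepsilon^3 k m^3))$; so your final bookkeeping step (``verify that $\eta\rho G S$ is $o(\varepsilon/T)$'') cannot be carried out as stated, and the flipped-activation lemma is the missing idea needed to close the argument.
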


Here, the notation of $\wt{O}(\cdot)$ ignores the factor of $\textsf{polylog}(m)$.

\subsection{Proof of Theorem \ref{thm:sam-genbound-formal}}

We here present the proof of \cref{thm:sam-genbound-formal}.

First, note that we can directly use the algorithm-independent part from \citet{allen2019learning}. 
Thus, it is sufficient to show that the similar version of Lemma B.4 in \citet{allen2019learning} also holds for SAM.

We first define the function $G = (g_1, \cdots, g_k): \R^d \rightarrow \R^k$ as similar to \citet{allen2019learning}. 
\begin{equation} \label{eq:def-g}
    g_r(x; W_t) \eqdef \sum_{i=1}^m \init{a}_{r,i} (\ip{\itert{w}_i}{x} + \init{b}_i) \Id [\ip{\init{w}_i}{x} + \init{b}_i \geq 0].
\end{equation}

Then, the following corollary holds for a stochastic SAM from Lemma B.3 of \citet{allen2019learning}.
The corollary presents an upper bound on the norm of differences between $\frac{\partial}{\partial W} L(F(\cdot), y)$ and $\frac{\partial}{\partial W} L(G(\cdot), y)$.
\begin{corollary} \label{corollary:sam-coupling-main}
    (SAM version of Lemma B.3 in \citet{allen2019learning})
    Let $\tau = \varepsilon_a (\eta + \rho) t$.
    Then, for every $x$ satisfying $\| x \|_2 = 1$, and for every time step $t \geq 1$, the following are satisfied with high probability over the random initialization. \\
    (a) For every $r \in [k]$, 
    \begin{equation*}
        \left \lvert f_r(x; \init{W} + W_t) - g_r (x; \init{W} + W_t) \right \rvert = \wt{O}(\varepsilon_a k \tau^2 m^{3/2})
    \end{equation*} 
    (b) For every $y \in \Y$, 
    \begin{equation}
        \left \| \frac{\partial}{\partial W} L(F(x; \initw + W_t), y) - \frac{\partial}{\partial W} L(G(x; \initw + W_t), y) \right \|_{2,1} \leq \wt{O}(\varepsilon_a k \tau m^{3/2} + \varepsilon_a^2 k^2 \tau^2 m^{5/2})
    \end{equation}
\end{corollary}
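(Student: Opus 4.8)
The plan is to reduce the statement to the algorithm-independent coupling of \citet{allen2019learning} (their Lemma B.3), which bounds $|f_r - g_r|$ and the gradient gap \emph{uniformly} over all weight matrices within a fixed radius of initialization. Concretely, that lemma asserts that for any displacement $W'$ with $\norm{W'}_{2,\infty} \le \tau$, the true ReLU network $F(x; \initw + W')$ and the sign-frozen pseudo-network $G(x; \initw + W')$ of \cref{eq:def-g} obey exactly the bounds claimed in (a) and (b). Since this is a statement about an arbitrary point in a ball, with no reference to how that point was reached, it transfers verbatim to the SAM iterate. Thus the entire content of the Corollary is to certify that $\tau = \varepsilon_a(\eta+\rho)t$ is a valid bound on the displacement of every iterate---and, crucially, every ascent-perturbed iterate---produced by \cref{eq:samdef-gen} up to step $t$.

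First I would establish a uniform per-step gradient bound: for every $W'$ within the coupling radius and every sampled $z=(x,y)$, the unnormalized loss gradient satisfies $\norm{\nabla_W L(F(x;\initw+W'),y)}_{2,\infty} = \wt{O}(\varepsilon_a \sqrt m)$. This follows because the row gradient is $\init{a}_{r,i}\,(\partial L/\partial f_r)\,\Id[\cdots]\,x$ summed over $r$, the loss is $1$-Lipschitz, $\norm{x}_2 = 1$, and $|\init{a}_{r,i}| = \wt{O}(\varepsilon_a)$ at initialization. Given this bound, the outer step moves each row by at most $\eta\cdot\wt{O}(\varepsilon_a\sqrt m)$, so $\norm{W_{t'}}_{2,\infty} = \wt{O}(\varepsilon_a \eta t')$ after $t'$ steps, while the ascent half-step $W_{t'+1/2} = W_{t'} + \rho\,\nabla_W L(F(x;\initw+W_{t'}),y)$ adds at most $\rho\cdot\wt{O}(\varepsilon_a\sqrt m)$, giving $\norm{W_{t'+1/2}}_{2,\infty} = \wt{O}(\varepsilon_a(\eta t' + \rho))$. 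Taking the maximum over $t' \le t$ of both quantities yields the single clean radius $\tau = \varepsilon_a(\eta+\rho)t$ that simultaneously contains $W_{t'}$ and its perturbation $W_{t'+1/2}$ for every intermediate step; this is precisely where the extra additive $\rho$ (as opposed to the pure-SGD radius $\varepsilon_a\eta t$) enters.

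With the displacement bound in hand, parts (a) and (b) are immediate: invoking Lemma B.3 of \citet{allen2019learning} at the point $W_t$ with radius $\tau$ gives $|f_r(x;\initw+W_t) - g_r(x;\initw+W_t)| = \wt{O}(\varepsilon_a k \tau^2 m^{3/2})$ for (a), and the $\norm{\cdot}_{2,1}$ gradient gap $\wt{O}(\varepsilon_a k \tau m^{3/2} + \varepsilon_a^2 k^2 \tau^2 m^{5/2})$ for (b). The underlying counting argument---that Gaussian anti-concentration of $\init{w}_i$ limits, in terms of $\tau$, the number of neurons flipping their activation pattern between $\initw$ and $\initw + W_t$, each contributing $\wt{O}(\varepsilon_a\tau)$ to the function value and correspondingly to the smooth loss gradient---is unchanged from the SGD analysis, since it depends only on $\tau$ and not on the optimizer.

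The main obstacle is the self-consistency of the displacement argument. The uniform gradient bound above is only valid inside the ball of radius $\tau$, yet the bound on $\norm{W_{t'+1/2}}_{2,\infty}$ relies on evaluating the gradient at $W_{t'}$ and then ensuring the perturbed point still lies inside that ball---a potential circularity, since the SAM gradient is computed at a point that itself depends on the gradient. I would resolve this by induction on $t'$: assuming $W_{t''}$ and $W_{t''+1/2}$ lie within radius $\tau$ for all $t'' < t'$, the per-step gradient bound holds at both $W_{t'}$ and $W_{t'+1/2}$, which keeps $W_{t'+1}$ and $W_{t'+1/2}$ inside the ball, closing the induction. This is exactly the step that forces $\eta$, $\rho$, and $T$ to be small: the choices $\eta = \wttheta(1/(\varepsilon k m))$, $\rho = \wttheta(1/(\varepsilon^3 k m^3))$, and $T = \wttheta((\compsam(\phi,1))^2 k^3 p^2 / \varepsilon^2)$ keep $\tau$ in the regime where the coupling constants of \citet{allen2019learning} are controlled, so the induction never escapes the admissible radius.
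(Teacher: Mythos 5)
Your proposal is correct and matches the paper's route: the paper likewise treats the Corollary as an immediate consequence of the algorithm-independent Lemma B.3 of \citet{allen2019learning}, with the only SAM-specific content being that the ascent half-step enlarges the displacement radius from $\varepsilon_a \eta t$ to $\wt{O}(\varepsilon_a(\eta t + \rho)) \leq \varepsilon_a(\eta+\rho)t$, exactly the reduction (and inductive self-consistency check) you carry out. One slip that does not propagate: the per-row gradient bound should be $\wt{O}(\varepsilon_a \sqrt{k})$, not $\wt{O}(\varepsilon_a \sqrt{m})$ --- by Cauchy--Schwarz over the $k$ outputs with the $1$-Lipschitz loss and $\lVert x \rVert_2 = 1$, no factor of $m$ enters a single row, and indeed your subsequent displacement bounds $\wt{O}(\varepsilon_a \eta t')$ and $\wt{O}(\varepsilon_a(\eta t' + \rho))$ silently use the correct scaling, consistent with the paper's own $\lVert W_t - \refw \rVert_{2,\infty} = \wt{O}(\sqrt{k}\,\varepsilon_a(\eta+\rho)t + \tfrac{kpC_0}{\varepsilon_a m})$.
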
 

Next, we present the key lemma integral to our proof.
The part $(c)$ will be directly used in the proof and presents an upper bound on the norm of differences between SAM gradient and SGD gradient for $F$.

\begin{lemma} \label{lemma:sam-coupling-samupdate}
    For every $x$ satisfying $\| x \|_2 = 1$, and for every time step $t \geq 1$, the following are satisfied with high probability over the random initialization. \\
    (a) For at most $\wt{O}(\varepsilon_a \rho \sqrt{km})$ fraction of $i \in [m]$: we have 
    \begin{equation*}
        \Id [\ip{\iterthalf{w}_i}{x} + \init{b}_i \geq 0] \neq \Id [ \ip{\itert{w}_i}{x} + \init{b}_i \geq 0].
    \end{equation*}
    (b) For every $r \in [k]$, 
    \begin{equation*}
        \left \lvert f_r(x; \init{W} + W_{t + 1/2}) - f_r (x; \init{W} + W_t) \right \rvert = \wt{O}(\varepsilon_a^3 k \rho^2 m^{3/2} + \varepsilon_a^2 \sqrt{k} \rho m)
    \end{equation*} 
    (c) For every $y \in \Y$, 
    \begin{align}
        & \left \| \frac{\partial}{\partial W} L(F(x; \initw + W_{t + 1/2}), y) - \frac{\partial}{\partial W} L(F(x; \initw + W_t), y) \right \|_{2,1} \notag \\ 
        & \leq \wt{O}(\varepsilon_a^2 k \rho m^{3/2} + \varepsilon_a^4 k^2 \rho^2 m^{5/2} + \varepsilon_a^3 k^{3/2} \rho m^2)
    \end{align}
\end{lemma}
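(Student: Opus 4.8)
The plan is to treat the half-step $\iterthalf{W}-\itert{W}=\rho\,\nabla_W L(F(x;\initw+\itert{W}),y)$ as a single SAM perturbation of size $\rho$ and to funnel all three parts through one quantity: the per-neuron displacement. Writing the learner gradient explicitly, $\frac{\partial L}{\partial w_i}=\big(\sum_{r}\frac{\partial L}{\partial f_r}\,\init{a}_{r,i}\big)\,\Id[\ip{\itert{w}_i}{x}+\init{b}_i\ge 0]\,x$, and using that $L$ is $1$-Lipschitz (so $\|\nabla_F L\|_2\le 1$), Cauchy--Schwarz gives $\big|\sum_r\frac{\partial L}{\partial f_r}\init{a}_{r,i}\big|\le\|\init{a}_{\cdot,i}\|_2$. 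Since each $\init{a}_{r,i}\sim\mathcal N(0,\varepsilon_a^2)$, concentration gives $\|\init{a}_{\cdot,i}\|_2=\wt{O}(\varepsilon_a\sqrt{k})$ with high probability, so $\|\iterthalf{w}_i-\itert{w}_i\|_2=\rho\,\|\tfrac{\partial L}{\partial w_i}\|_2=\wt{O}(\rho\varepsilon_a\sqrt{k})$ for every $i$. I will denote this bound $\delta\defeq\wt{O}(\rho\varepsilon_a\sqrt{k})$; it drives everything.

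For (a), a neuron flips only if $|\ip{\itert{w}_i}{x}+\init{b}_i|\le|\ip{\iterthalf{w}_i-\itert{w}_i}{x}|\le\delta$. I would bound the fraction of such $i$ by a small-ball estimate on the preactivations: at initialization $\ip{\init{w}_i}{x}+\init{b}_i\sim\mathcal N(0,\Theta(1/m))$, whose density near the origin is $O(\sqrt m)$, so the probability of lying in an interval of width $2\delta$ is $O(\delta\sqrt m)=\wt{O}(\varepsilon_a\rho\sqrt{km})$. Transporting this to the iterate $\itert{w}_i$ relies on the movement bound of \citet{allen2019learning} (the displacement scale $\tau$ underlying \cref{corollary:sam-coupling-main}), which shifts the preactivations by a controlled amount without destroying the anti-concentration estimate; a Chernoff bound over $i\in[m]$ then yields the claimed fraction.

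Parts (b) and (c) follow by splitting $[m]$ into the flipping set $S$ (of fractional size $\wt{O}(\varepsilon_a\rho\sqrt{km})$, so $|S|=\wt{O}(\varepsilon_a\rho\sqrt k\,m^{3/2})$, by (a)) and its complement. For (b), on non-flipping neurons the ReLU acts linearly, each term $\init{a}_{r,i}\ip{\iterthalf{w}_i-\itert{w}_i}{x}=\wt{O}(\varepsilon_a\delta)=\wt{O}(\varepsilon_a^2\rho\sqrt k)$ and summing over $\le m$ of them gives the $\varepsilon_a^2\sqrt k\,\rho m$ term; on each flipping neuron the ReLU value sits within $\delta$ of the kink, contributing $\wt{O}(\varepsilon_a\delta)$, and $|S|\cdot\wt{O}(\varepsilon_a^2\rho\sqrt k)$ gives the $\varepsilon_a^3 k\rho^2 m^{3/2}$ term. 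For (c), I would decompose the per-neuron gradient difference into a loss-derivative term $\big(\sum_r(\tfrac{\partial L}{\partial f_r}|_{t+1/2}-\tfrac{\partial L}{\partial f_r}|_t)\init{a}_{r,i}\big)\Id_{t+1/2}\,x$ and an activation-flip term $\big(\sum_r\tfrac{\partial L}{\partial f_r}|_t\,\init{a}_{r,i}\big)(\Id_{t+1/2}-\Id_t)\,x$. The first is controlled by $1$-smoothness of $L$, which bounds $\|\nabla_F L|_{t+1/2}-\nabla_F L|_t\|_2\le\|F(\cdot;\iterthalf{W})-F(\cdot;\itert{W})\|_2$; aggregating (b) over $r$ (a $\sqrt k$ factor) and summing the $\|\init{a}_{\cdot,i}\|_2=\wt{O}(\varepsilon_a\sqrt k)$ factors over $\le m$ neurons produces the two higher-order terms $\varepsilon_a^4 k^2\rho^2 m^{5/2}+\varepsilon_a^3 k^{3/2}\rho m^2$. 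The second is supported on $S$ with each term $\le\|\nabla_F L\|_2\,\|\init{a}_{\cdot,i}\|_2=\wt{O}(\varepsilon_a\sqrt k)$, so $|S|\cdot\wt{O}(\varepsilon_a\sqrt k)$ yields the leading term $\varepsilon_a^2 k\rho m^{3/2}$. Summing in the $\|\cdot\|_{2,1}$ norm gives the stated bound.

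The main obstacle I anticipate is part (a): making the small-ball estimate rigorous at the \emph{running iterate} $\itert{w}_i$ rather than at initialization, since the cumulative displacement of $W$ over training interacts with the preactivations. This is exactly where I would lean on the movement and coupling bounds of \citet{allen2019learning} to guarantee that $\itert{W}$ stays in a neighborhood (of scale $\tau$) where the Gaussian anti-concentration of the initial preactivations still applies up to $\poly\log m$ factors. Once (a) is secured, parts (b) and (c) are essentially careful bookkeeping of the two error sources (linear ReLU drift versus activation flips) against the single displacement scale $\delta$.
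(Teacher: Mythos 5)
Your proposal is correct and follows essentially the same route as the paper's proof: the same per-neuron displacement scale $\delta = \rho\,\lVert \partial L/\partial w_i\rVert_2 = \wt{O}(\rho\varepsilon_a\sqrt{k})$, the same split of $[m]$ into flipping and non-flipping neurons (the paper's set $\calH$ with margin $2\sqrt{k}B\tau$, $\tau=\varepsilon_a\rho$), and for (c) the same chain-rule decomposition handled by $1$-smoothness of $L$ plus part (b) aggregated over $r$ for the loss-derivative term and the flip set times $\wt{O}(\varepsilon_a\sqrt{k})$ for the indicator term. The only cosmetic difference is that where you re-derive the anti-concentration at the running iterate for part (a), the paper simply invokes Lemma B.3(a) of \citet{allen2019learning} as a direct corollary, which packages exactly the Gaussian small-ball plus movement-transport argument you sketch.
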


\begin{proof}
    Recall that the following hold from the definition of $F$ (see Lemma B.3 of \citet{allen2019learning} for the details). \\
    \begin{equation} \label{eq:gen-gradient-bound}
        \left \| \frac{\partial}{\partial w_i} f_r (x; \initw + W_t) \right \|_2 \leq \varepsilon_a B \quad \text{and} \quad \left\| \frac{\partial}{\partial w_i} L(F(x; \initw + W_t), y) \right \|_2 \leq \sqrt{k} \varepsilon_a B
    \end{equation}
    (a) 
    Let $\tau = \varepsilon_a \rho$ and define $\calH \eqdef \left \{ i \in [m] \bigg\Vert \left \lvert \ip{\itert{w}_i}{x} + \init{b}_i \right \rvert \geq 2 \sqrt{k} B \tau \right \}$.
    Then, the lemma is a direct corollary from Lemma B.3 (a) of \citet{allen2019learning}. \\
    (b) 
    We divide $i$ into two cases.
    First, when $i \notin \calH$, we can directly utilize Lemma B.3.(b) of \citet{allen2019learning} and the total difference from these $i$'s is $\wt{O}(\varepsilon_a^3 k \rho^2 m^{3/2})$.
    Next, we consider the differences from $i \in \calH$.
    \begin{align*}
        &\quad \left \lvert \init{a}_{r,i} \left( \left\langle \iterthalf{w}_i, x \right \rangle + \init{b}_i \right) \Id \left[\left\langle \iterthalf{w}_i, x \right \rangle + \init{b}_i \geq 0\right] \right. \\ 
        & \hspace{2em} \left. - \init{a}_{r,i} \left( \left\langle \itert{w}_i, x \right \rangle + \init{b}_i \right) \Id \left[\left\langle \itert{w}_i, x \right \rangle + \init{b}_i \geq 0\right] \right \rvert \\
        &\leq \left \lvert \init{a}_{r,i} \left( \left\langle \iterthalf{w}_i - \itert{w}_i, x \right \rangle \right) \right \rvert \\
        &= \left \lvert \init{a}_{r,i} \left (\left \langle \rho \cdot \frac{\partial}{\partial w_i} L(F(x; \initw + W_t), y), x \right \rangle \right ) \right \rvert \\
        & \leq \rho \left \lvert \init{a}_{r,i} \right \rvert \cdot \left \| \frac{\partial}{\partial w_i} L(F(x; \initw + W_t), y)  \right \|_2 \cdot \| x \|_2 \\
        &\leq \rho (\varepsilon_a B) \cdot (\sqrt{k} \varepsilon_a B) \\
        &= \wt{O}(\varepsilon_a^2 \sqrt{k} \rho)
    \end{align*}
    The first inequality is from the fact that $i \in \calH$ and thus $\Id \left[\left\langle \iterthalf{w}_i, x \right \rangle + \init{b}_i \geq 0\right] = \Id \left[\left\langle \itert{w}_i, x \right \rangle + \init{b}_i \geq 0\right]$.
    Then, we have utilized the definition of SAM (\ref{eq:samdef-gen}) and Cauchy-Schwartz inequality.
    Since there can be at most $m$ number of $i \in \calH$, the total differences from $i \in \calH$ amount to $\wt{O}(\varepsilon_a^2 \sqrt{k} \rho m)$.
    Combining the two cases proves the (b). \\
    (c)
    By the chain rule, we have 
    \begin{equation*}
        \frac{\partial}{\partial w_i} L(F(x; \initw + W_t), y) = \nabla L(F(x; \initw + W_t), y) \frac{\partial}{\partial w_i} F(x; \initw + W_t).
    \end{equation*}
    Since $L$ is $1$-smooth, applying the above lemma (b) gives 
    \begin{align} \label{eq:gen-lipschitz-smoothness}
        & \hspace{1.1em} \left \| \nabla L(F(x; \initw + W_{t + 1/2}), y) - \nabla L(F(x; \initw + W_t), y) \right \|_2 \notag \\
        &\leq \left \| F(x; \initw + W_{t + 1/2}) - F(x; \initw + W_t) \right \|_2 \notag \\
        &\leq \wt{O} \left(\varepsilon_a^3 k^{3/2} \rho^2 m^{3/2} + \varepsilon_a^2 k \rho m \right).
    \end{align}
    
    For $i \in \calH$, we have $\Id [\ip{\iterthalf{w}_i}{x} + \init{b}_i \geq 0] = \Id [ \ip{\itert{w}_i}{x} + \init{b}_i \geq 0]$ and thus $\frac{\partial}{\partial w_i} F(x; \initw + W_{t + 1/2}) = \frac{\partial}{\partial w_i} F(x; \initw + W_t)$.
    Then, combining (\ref{eq:gen-lipschitz-smoothness}) with (\ref{eq:gen-gradient-bound}) and using the fact that there can be at most $m$ number of $i \in \calH$, this amounts to $\wt{O} \left(\varepsilon_a^4 k^2 \rho^2 m^{5/2} + \varepsilon_a^3 k^{3/2} \rho m^2 \right)$.

    Next, for $i \notin \calH$, we can directly use the result from Lemma B.3.(c) of \citet{allen2019learning} and this contributes to $\wt{O}(\varepsilon_a^2 k \rho m^{3/2})$.
    Summing these together, we prove the bound.
\end{proof}

Finally, we show that the following lemma holds, which is a SAM version of Lemma B.4 in \citet{allen2019learning}.
Combined with the algorithm-independent parts presented in \citet{allen2019learning}, proving the following lemma concludes the proof of \cref{thm:sam-genbound-formal}.
We use the notation of $L_F(\Z; W)$ for $L_F(\Z; W) \eqdef \frac{1}{|\Z|} \sum_{(x, y) \in \Z} L (F(x; W + \initw), y)$ and similarly define $L_G(\Z; W)$.

\begin{lemma} \label{lemma:sam-gen-optim} (SAM version of Lemma B.4 in \citet{allen2019learning})
    For every $\varepsilon \in \left(0, \frac{1}{p k \compsam(\phi, 1}\right)$, letting $\varepsilon_a = \varepsilon / \wttheta(1)$, $\eta = \wttheta(\frac{1}{\varepsilon k m})$, and $\rho = \wttheta(\frac{1}{\varepsilon^3 k m^3})$, there exists $M = \poly (\compnet(\phi, 1), 1/\varepsilon)$ and $T = \Theta\left(\frac{k^3 p^2 \cdot \compsam(\phi, 1)^2}{\varepsilon^2}\right)$
    such that if $m \geq M$, the following holds with high probability over random initialization.
    \begin{equation} \label{eq:gen-optimbound}
        \frac{1}{T} \sum_{t=0}^{T-1} L_F (\Z, W_t) \leq \opt + \varepsilon.
    \end{equation}
\end{lemma}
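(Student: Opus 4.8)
The plan is to adapt the proof of Lemma~B.4 in \citet{allen2019learning} by viewing the SAM update \eqref{eq:samdef-gen} as a small perturbation of pseudo-network gradient descent and absorbing the discrepancy through the two coupling estimates already established. The crucial structural fact is that the pseudo-network $G$ of \eqref{eq:def-g} is \emph{linear} in the trainable weights, since its activation pattern is frozen at initialization; hence $W \mapsto L(G(x;\initw+W),y)$ is convex for each sample. I would import verbatim the algorithm-independent ingredients of \citet{allen2019learning}: a comparator weight $\refw$ of small norm with $L_G(\Z;\refw)\le \opt+O(\varepsilon)$, the initialization-scale gradient bounds \eqref{eq:gen-gradient-bound}, and the $G$-vs-$F$ value coupling (Corollary~\ref{corollary:sam-coupling-main}(a)). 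These reduce the whole statement to an online-convex-optimization regret bound for the \emph{actual} SAM iterates run against the convex surrogate $L_G$.

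Second, I would carry out the standard descent telescoping. Writing $v_t \defeq \nabla_W L(F(x_t;\initw+W_{t+1/2}),y_t)$ for the direction actually taken in \eqref{eq:samdef-gen}, the identity
\[
\langle v_t,\, W_t-\refw\rangle
= \frac{\|W_t-\refw\|^2-\|W_{t+1}-\refw\|^2}{2\eta}+\frac{\eta}{2}\|v_t\|^2
\]
telescopes over $t=0,\dots,T-1$. The key move is to replace the intractable $v_t$ with the convex-surrogate gradient via the decomposition
\[
v_t = \nabla_W L(G(x_t;\initw+W_t),y_t) + \Delta_t,
\]
where $\Delta_t$ splits into the SAM-to-SGD discrepancy $\nabla_W L(F(\cdot;W_{t+1/2}))-\nabla_W L(F(\cdot;W_t))$, bounded in $\|\cdot\|_{2,1}$ by Lemma~\ref{lemma:sam-coupling-samupdate}(c), and the $F$-to-$G$ discrepancy $\nabla_W L(F(\cdot;W_t))-\nabla_W L(G(\cdot;W_t))$, bounded by Corollary~\ref{corollary:sam-coupling-main}(b). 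Convexity of $L_G$ gives $\langle \nabla_W L(G(x_t;\cdot),y_t),\, W_t-\refw\rangle \ge L(G(x_t;\initw+W_t),y_t)-L(G(x_t;\initw+\refw),y_t)$, so abbreviating $\ell^G_t(W)\defeq L(G(x_t;\initw+W),y_t)$, summing and rearranging yields
\[
\frac{1}{T}\sum_{t}\big(\ell^G_t(W_t)-\ell^G_t(\refw)\big)
\le \frac{\|W_0-\refw\|^2}{2\eta T}+\frac{\eta}{2T}\sum_t\|v_t\|^2+\frac{1}{T}\sum_t\langle -\Delta_t,\,W_t-\refw\rangle .
\]

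Third, I would take expectation over the per-step sampling (online-to-batch) to turn the left side into $\frac{1}{T}\sum_t L_G(\Z;W_t)-L_G(\Z;\refw)$, invoke the imported bound $L_G(\Z;\refw)\le \opt+O(\varepsilon)$, and finally convert $L_G$ back to $L_F$ on every iterate through Corollary~\ref{corollary:sam-coupling-main}(a). What remains is pure bookkeeping: show that with $\varepsilon_a=\varepsilon/\wttheta(1)$, $\eta=\wttheta(1/(\varepsilon k m))$, $\rho=\wttheta(1/(\varepsilon^3 k m^3))$, and $T=\Theta(k^3p^2\compsam(\phi,1)^2/\varepsilon^2)$, each of the three right-hand terms is $O(\varepsilon)$ --- the regret term through the choice of $\eta T$, the gradient-norm term through \eqref{eq:gen-gradient-bound} (whose SAM version follows since $\|v_t\|$ differs from the SGD gradient norm by the negligible amount in Lemma~\ref{lemma:sam-coupling-samupdate}(c)), and the error term through the $\|\cdot\|_{2,1}$ bounds on $\Delta_t$ times the $\|\cdot\|_{2,\infty}$-bounded comparator distance $\|W_t-\refw\|$.

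I expect the main obstacle to be controlling the cumulative SAM-perturbation contribution $\frac{1}{T}\sum_t\langle-\Delta_t^{\mathrm{SAM}},\,W_t-\refw\rangle$, which is the genuinely new term relative to \citet{allen2019learning} and is precisely what forces $\rho$ to be taken as small as $\wttheta(1/(\varepsilon^3 k m^3))$. Its per-step size from Lemma~\ref{lemma:sam-coupling-samupdate}(c) carries the unfavorable powers $m^{3/2}$, $m^{5/2}$, and $m^2$; the delicate point is to verify that, once multiplied by the bounded comparator distance and summed over the $\Theta(1/\varepsilon^2)$ iterations, the chosen $\rho$ drives the dominant $m^{5/2}$ and $m^2$ terms down to $O(\varepsilon)$ while not interfering with the $\eta$-dependent regret balance inherited from the SGD analysis. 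Once this single estimate is in hand, the rest of the proof is a transcription of the overparameterized SGD argument.
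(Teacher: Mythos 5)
Your proposal is correct and follows essentially the same route as the paper's proof: the identical decomposition of the SAM direction $\nabla L_F(\cdot;W_{t+1/2})$ into the convex pseudo-network gradient $\nabla L_G(\cdot;W_t)$ plus the $F$-to-$G$ discrepancy (bounded via \cref{corollary:sam-coupling-main}(b)) and the SAM-to-SGD discrepancy (bounded via \cref{lemma:sam-coupling-samupdate}(c)), the same telescoping identity from the update rule producing the regret term $\|W_0-\refw\|_F^2/(2\eta T)$ and the gradient-norm term $\wt{O}(\eta\varepsilon_a^2 km)$, convexity of $L_G$, and the conversion back to $L_F$ through the value coupling. You also correctly pinpoint the genuinely new ingredient --- the cumulative SAM-perturbation term, whose $m^{3/2}$, $m^{5/2}$, $m^2$ powers force $\rho = \wttheta(1/(\varepsilon^3 k m^3))$ --- which is exactly the paper's term $(B)$ with its bound $\Delta' = \wt{O}(1/(m^{3/2}\varepsilon) + \sqrt{k}/m)$ under the stated parameter choices.
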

\begin{proof}
    Let $\refw$ be the weights constructed from the Corollary B.2 in \citet{allen2019learning}.
    By the convexity of $L$ and Cauchy-Schwartz inequality, we have 
    \begin{align*}
        L_G(\Z, W_t) - L_G(\Z; \refw) &\leq \ip{\nabla L_G(\Z; W_t)}{W_t - \refw} \\
        &= \ip{\nabla L_G(\Z; W_t) - \nabla L_F (\Z; W_t)}{W_t - \refw} \\
        &\quad + \ip{\nabla L_F(\Z; W_t) - \nabla L_F(\Z; W_{t+1/2})}{W_t - \refw} \\
        &\quad + \ip{\nabla L_F(\Z; W_{t+1/2})}{W_t - \refw} \\
        &\leq \| \nabla L_G(\Z; W_t) - \nabla L_F (\Z; W_t) \|_{2,1} \| W_t - \refw\|_{2,\infty} \\
        &\quad + \| \nabla L_F(\Z; W_t) - \nabla L_F(\Z; W_{t+1/2}) \|_{2,1} \| W_t - \refw\|_{2,\infty} \\
        &\quad + \ip{\nabla L_F(\Z; W_{t+1/2})}{W_t - \refw}
    \end{align*}

    From the SAM update rule (\ref{eq:samdef-gen}), we have the following equality. 
    \begin{align*}
        \| W_{t+1} - \refw \|_F^2 &= \| W_t - \eta \nabla L_F (\itert{z}, W_{t+1/2}) - \refw \|_F^2 \\
        &= \| W_t - \refw\|_F^2 -2 \eta \ip{\nabla L_F(\itert{z}, W_{t + 1/2})}{W_t - \refw} + \eta^2 \| \nabla L_F (\itert{z}, W_{t + 1/2})\|_F^2.
    \end{align*}

    Thus, we have 
    \begin{align*}
        L_G(\Z; W_t) - L_G(\Z; \refw) 
        & \leq \underbrace{\| \nabla L_G(\Z; W_t) - \nabla L_F (\Z: W_t) \|_{2,1} \|W_t - \refw\|_{2, \infty}}_{(A)} \\
        &\quad + \underbrace{\| \nabla L_F(\Z; W_t) - \nabla L_F(\Z; W_{t+1/2}) \|_{2,1} \| W_t - \refw\|_{2,\infty}}_{(B)} \\
        &\quad + \frac{\| W_t - \refw\|_F^2 - \E_{\itert{z}}[\| W_{t+1} - \refw\|_F^2]}{2\eta} \\
        &\quad + \underbrace{\frac{\eta}{2} \| \nabla L_F (W_{t+1/2}, \itert{z})\|_F^2}_{(C)}.
    \end{align*}

    Since $\| W_t - \refw\|_{2,\infty} = \wt{O}(\sqrt{k}\varepsilon_a (\eta + \rho) t + 
    \frac{k p C_0}{\varepsilon_a m})$, $(A)$ is bounded as 
    \begin{equation*}
        (A) = \wt{O}\left(\sqrt{k}\varepsilon_a (\eta + \rho) T \Delta + \frac{kpC_0}{\varepsilon_a m} \Delta \right)
    \end{equation*}
    where $\Delta = \wt{O}\left(\varepsilon_a^2 k (\eta + \rho) T m^{3/2} + \varepsilon_a^4 k^2 (\eta + \rho)^2 T^2 m^{5/2}\right)$.

    Next, we can bound $(B)$ from \cref{lemma:sam-coupling-samupdate}(c) as follows.
    \begin{equation*}
        (B) = \wt{O}(\sqrt{k} \varepsilon_a (\eta + \rho) T \Delta' + \frac{k p C_0}{\varepsilon_a m}\Delta'),
    \end{equation*}
    where $ \| \nabla L_F(\Z; W_t) - \nabla L_F(\Z; W_{t+1/2}) \|_{2,1} \leq \Delta' = \varepsilon_a^2 k \rho m^{3/2} + \varepsilon_a^4 k^2 \rho^2 m^{5/2} + \varepsilon_a^3 k^{3/2} \rho m^2$.

    We also have
    \begin{equation*}
        (C) = \wt{O}(\eta \varepsilon_a^2 k m)
    \end{equation*}
    since the norm of $\nabla L_F$ is always bounded as $\| \nabla L_F (\cdot, \itert{z})\|_F^2 = \wt{O}(\varepsilon_a^2 k m)$.

    Then, by telescoping, we have 
    \begin{align*}
        \frac{1}{T} \sum_{t=0}^{T-1} \E_{\text{SAM}} [L_G(\Z; W_t)] - L_G(\Z; \refw) 
        &\leq \wt{O}\left(\sqrt{k}\varepsilon_a (\eta + \rho) T \Delta + \frac{kpC_0}{\varepsilon_a m} \Delta \right) \\
        &+ \wt{O}\left(\sqrt{k} \varepsilon_a (\eta + \rho) T \Delta' + \frac{k p C_0}{\varepsilon_a m}\Delta'\right) \\
        &\quad + \underbrace{\frac{\| W_0 - \refw\|_F^2 }{2 \eta T}}_{(D)} + \wt{O}(\eta \varepsilon_a^2 k m).
    \end{align*}

    We can bound $(D)$ in the same way as \citet{allen2019learning}, 
    \begin{equation*}
        (D) = \frac{\| W_0 - \refw\|_F^2 }{2\eta T} = \wt{O}\left(\frac{k^2 p^2 \compsam(\phi, 1)^2}{\varepsilon_a^2 m} \cdot \frac{\1}{\eta T} \right).
    \end{equation*}

    By setting $\eta = \wttheta(\frac{\varepsilon}{km\varepsilon_a^2}), \rho = \wttheta(\frac{\varepsilon}{km^3 \varepsilon_a^4}), T = \wttheta(k^3 p^2 \compsam(\phi, 1)^2 / \varepsilon^2)$, we have 
    $\Delta = \wt{O}(\frac{k^6 p^4 \compsam(\phi, 1)^4}{m^{3/2} \varepsilon^4})$ and
    $\Delta' = \wt{O}(\frac{1}{m^{3/2} \varepsilon} + \frac{\sqrt{k}}{m})$.
    Hence, with large enough $m$, we obtain the following inequality and prove \cref{lemma:sam-gen-optim}, combined with the remaining parts from \citet{allen2019learning}.
    \begin{equation}
        \frac{1}{T} \sum_{t=0}^{T-1} \E_{\text{SAM}} [L_G(\Z; W_t)] - L_G(\Z; \refw) \leq O(\varepsilon).
    \end{equation}
\end{proof}

\end{document}